\documentclass{article}

\usepackage[utf8]{inputenc}
\usepackage[blocks]{authblk}
\makeatletter
\let\maketitle\AB@maketitle
\makeatother
\usepackage{amsmath}
\usepackage{amssymb}
\usepackage{graphicx}
\usepackage[dvipsnames]{xcolor}
\usepackage{csquotes}
\usepackage[style=alphabetic,natbib=true,maxalphanames=3,minalphanames=3,maxbibnames=99]{biblatex}
\usepackage{hyperref}
\usepackage{cleveref}
\usepackage{mathtools}
\usepackage{color-edits}
\addauthor[Mahdi]{m}{blue}
\usepackage{amsthm}
\usepackage{thmtools}
\usepackage{thm-restate}
\usepackage[margin=1in]{geometry}
\usepackage{booktabs}
\usepackage{tabularx}
\usepackage{array}
\usepackage{multirow}
\usepackage{multirow}
\usepackage{algorithm}
\usepackage{algorithmic}
\usepackage{subcaption}

\usepackage{tikz}
\usetikzlibrary{arrows.meta, positioning, quotes}
\usepackage{xcolor}
\usepackage{dsfont}
\newcommand{\IND}{\mathds{1}}
\newcommand{\ind}[1]{{\IND \left\{ #1 \right\}}}
\DeclarePairedDelimiter\norm{\lVert}{\rVert}%

\newcolumntype{L}[1]{>{\raggedright\arraybackslash}p{#1}}
\newcolumntype{C}[1]{>{\centering\arraybackslash}p{#1}}
\newcolumntype{Z}{>{\raggedright\arraybackslash}X}

\newtheorem{definition}{Definition}
\newtheorem{theorem}{Theorem}
\newtheorem{lemma}{Lemma}

\newtheorem{proposition}{Proposition}

\crefname{assumption}{Assumption}{Assumptions}

\newcommand{\E}{\mathbb{E}}
\newcommand{\R}{\mathbb{R}}
\newcommand{\cA}{\mathcal{A}}
\newcommand{\cD}{\mathcal{D}}
\newcommand{\cE}{\mathcal{E}}

\newcommand{\cL}{\mathcal{L}}
\newcommand{\Pa}{\operatorname{Pa}}
\newcommand{\MSE}{\operatorname{MSE}}

\AddToHook{cmd/appendix/before}{\crefalias{section}{appendix} \crefalias{subsection}{appendix}}

\title{Optimal Rates for Agentic Networked Information Aggregation}
\newcommand{\authorbox}[1]{\makebox[0.38\textwidth][c]{#1}}
\author{\authorbox{MohammadHossein~Bateni}}
\affil{Google Research\\\texttt{bateni@google.com}}
\author{\authorbox{Zahra~Hadizadeh}}
\affil{University of California, Irvine\\\texttt{zhadizad@uci.edu}}
\author{\authorbox{MohammadTaghi~Hajiaghayi}}
\affil{University of Maryland\\\texttt{hajiagha@umd.edu}}
\author{\authorbox{Mahdi~JafariRaviz}}
\affil{University of Maryland\\\texttt{mahdij@umd.edu}}
\author{\authorbox{Shayan~Taherijam}}
\affil{University of California, Irvine\\\texttt{staherij@uci.edu}}
\date{}

\begin{document}
\maketitle

\begin{abstract}
Building on the pioneering paper of \citet{kearns2026networked} (SODA'26), we study
information aggregation in a networked learning model. The model captures a central pattern in agentic AI: each agent sees only part of the data and passes on only its own conclusion.
Their model considers a linear regression problem with the mean squared error (MSE) loss. Agents sit in a DAG and each sees only a subset of the features and its parents'
predictions, fits a linear predictor, and passes only its prediction forward. The
benchmark is the full-feature learner that sees all raw features. A path of depth
$D$ is $M$-covered if every block of $M$ consecutive agents collectively sees all
raw features. \citet{kearns2026networked} proved that the excess mean squared
error of the last agent on such a path is $O(M/\sqrt D)$, and gave a cyclic
instance with excess error $\Omega(M/D)$ for $D<M^2$.

We close this gap: the correct rate is constant up to depth $M^2$, and $\Theta(M^2/D)$ beyond it. We first give a sharper analysis of the cyclic instance and improve its lower bound to $\Omega(\sqrt{M/D})$ for $D<M^2$. We then construct, for every depth $D\ge M^2$, an $M$-covered
path of depth $D$ with excess error
$\Omega(M^2/D)$. The same instance gives the constant lower bound for all $D < M^2$. We also show that for any fixed distribution the excess error contracts geometrically along the path, ruling out any single instance that witnesses any polynomial lower bound at every depth.

Finally, we prove the same optimal rate for logistic classification in the logit-passing model of \citet{bateni2026networked}, which considers the binary cross-entropy (BCE) loss. The same improved upper bound of $O(M^2/D)$ holds, and we transfer all the regression lower bounds by showing that on those examples the logistic path follows the least-squares path up to rescaling.
\end{abstract}

\section{Introduction}

AI systems increasingly run as networks of agents rather than as one model. In a typical pipeline, no agent sees all the data. Each agent works on its own slice, limited by context windows, privacy, or cost, and passes forward only a short output such as a prediction \citep{guo2024multiagent}. For example, the Chain-of-Agents framework of \citet{zhang2024chain} splits a long input across a chain of language-model agents, and each agent sends a single message to the next. The pattern is older than these systems: in social learning, going back to \citet{degroot1974reaching}, parties publish predictions instead of sharing what they saw, and in vertical federated learning, parties hold different feature columns of the same data \citep{yang2019federated}. In all these settings, the same question comes up: can an agent late in the network predict as well as one centralized learner that sees all the data? Since each agent forwards only its prediction, information about the data builds up slowly, one hop at a time. Depth is what makes good prediction possible, but it is also the cost, since each hop is another trained model or model call. The real question is then a quantitative one: how fast does the error fall as depth grows?

\citet{kearns2026networked} made this question precise in a distributed learning model where agents $A_1, \dots, A_N$ sit in a directed acyclic graph and act in a topological order. There is a distribution $\mathcal{D}$ over the features $x_1, \dots, x_d$ and the label $Y$. Each agent $A_i$ sees only a subset of the features $S_i \subseteq [d]$, and the predictions of preceding agents that have edges to it. Agent $A_i$ fits a linear predictor $f_i$ of the label that minimizes the mean squared error using all its inputs, and passes its prediction to its successors. Let $\MSE(f)$ be the mean squared error of the predictor $f$. To measure the performance of this model, they consider the excess error $\MSE(f_N) - \MSE(f^\star)$ of the final agent's prediction $f_N$ compared to the best possible linear predictor $f^\star$, where a single agent would have access to all the features $x_1, \dots, x_d$.

To analyze this model, \citet{kearns2026networked} considered two metrics: the depth of the graph and the coverage of the features along the path. If $A_1, \dots, A_D$ is a directed path of agents in the graph, we say that the path is $M$-covered if every block of $M$ consecutive agents sees all features. \citet{kearns2026networked} showed that the excess error of $f_D$ is bounded by $O(M/\sqrt{D})$ when there is an $M$-covered path of depth $D$ to the final agent. They also provided an example (which we call the \emph{cyclic example}) and showed that it has an excess error of $\Omega(M/D)$. This lower bound shows that the depth of the graph is a necessary condition for good performance, but leaves a large gap in the rate.

\citet{bateni2026networked} also studied the same model, but instead of considering the regression problem with the least-squares loss, they considered the binary classification problem with the binary cross-entropy (BCE) loss. Each agent fits a linear logit predictor to its inputs, and the logit $z$ also goes through the sigmoid function, so the final prediction is the probability $\sigma(z)$. Agents send only the logit values (and not the probabilities) to their successors. \citet{bateni2026networked} showed that the same upper and lower bounds hold for the classification problem.

\subsection{Our Contributions}

We will show that most bounds given by \citet{kearns2026networked} and \citet{bateni2026networked} are far from tight, and close the gaps between the upper and lower bounds. \Cref{tab:bounds-summary} summarizes our bounds next to the previous ones. We will state our results in the regression setting first, and later show analogous results for classification.

\begin{table}[t]
\centering
\footnotesize
\setlength{\tabcolsep}{4pt}
\setlength{\arrayrulewidth}{0.4pt}

\caption{
Bounds on the excess error of the last agent on an $M$-covered
path of depth $D$.
}
\label{tab:bounds-summary}

\begingroup
\renewcommand{\arraystretch}{1.25}
\setlength{\extrarowheight}{2pt}

\begin{tabular}{C{0.14\textwidth}|L{0.17\textwidth}@{\quad}C{0.10\textwidth}|C{0.26\textwidth}|C{0.24\textwidth}}
\textbf{Setting}
&
\textbf{Result}
&
\textbf{Regime}
&
\textbf{Previous bound}
&
\textbf{This work}
\\
\hline

\multirow{5}{*}{Regression}
&
Upper bound
&
All $D$
&
$O(M/\sqrt D)$ \cite{kearns2026networked}
&
$O(M^2/D)$
\\
\cline{2-5}

&
\multirow{2}{*}{Lower bound}
&
$D<M^2$
&
$\Omega(M/D)$ \cite{kearns2026networked}
&
$\Omega(1)$
\\

&
&
$D\ge M^2$
&
--
&
$\Omega(M^2/D)$
\\
\cline{2-5}

&
Cyclic example
&
$D<M^2$
&
$\Omega(M/D)$ \cite{kearns2026networked}
&
$\Omega(\sqrt{M/D})$
\\
\cline{2-5}

&
Fixed distribution
&
All $D$
&
--
&
$O\left(q^{\lfloor(D-1)/M\rfloor}\right)$
\newline
for some $q<1$
\\
\hline

\multirow{5}{*}{Classification}
&
Upper bound
&
All $D$
&
$O(M/\sqrt D)$ \cite{bateni2026networked}
&
$O(M^2/D)$
\\
\cline{2-5}

&
\multirow{2}{*}{Lower bound}
&
$D<M^2$
&
$\Omega(M/D)$ \cite{bateni2026networked}
&
$\Omega(1)$
\\

&
&
$D\ge M^2$
&
--
&
$\Omega(M^2/D)$
\\
\cline{2-5}

&
Cyclic example
&
$D<M^2$
&
$\Omega(M/D)$ \cite{bateni2026networked}
&
$\Omega(\sqrt{M/D})$
\\
\cline{2-5}

&
Fixed distribution
&
All $D$
&
--
&
$O\left(q^{\lfloor(D-1)/M\rfloor}\right)$
\newline
for some $q<1$
\\

\end{tabular}

\endgroup
\end{table}

\subsubsection{Regression}

We show that the earlier upper bound of $O(M/\sqrt{D})$ by \citet{kearns2026networked} is not tight in the following theorem. We use the exact same assumptions as the earlier upper bound.

\begin{restatable}[Improved regression upper bound]{theorem}{regressionUpperRestatement}
\label{thm:regression-improved-upper}
Consider an $M$-covered path of depth $D$. Assume the global predictor
$f^\star(x)=\sum_{\ell=1}^d w^\star_\ell x_\ell$ satisfies
$\sum_\ell |w^\star_\ell|\le A^\star$, and each feature satisfies
$\E[x_\ell^2]\le M_X^2$. Then
\begin{equation*}
  \MSE(f_D)-\MSE(f^\star)
  \le
  C\frac{M^2}{D},
\end{equation*}
where $C$ depends only on $A^\star$ and $M_X$.
\end{restatable}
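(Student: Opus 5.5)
Write $e_t=\MSE(f_t)-\MSE(f^\star)$ for the excess error of agent $A_t$ on the path. The plan is to reduce to an $L^2$ projection picture and prove a per-block recursion of the form $e_{t+M}\le e_t - c\,e_t^2/M^2$, which solves to $e_D=O(M^2/D)$.

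\textbf{Setup.} Work in $L^2(\mathcal D)$ restricted to the span $V$ of the features $x_1,\dots,x_d$. Let $\hat Y=f^\star$ be the projection of $Y$ onto $V$. Each $f_t$ lies in $V$, and since $f_{t-1}$ is an input of $A_t$,
\[
e_t=\norm{\hat Y-f_t}^2 .
\]
Because $A_t$ projects $Y$ (equivalently $\hat Y$) onto a subspace containing $f_{t-1}$, the sequence $e_t$ is nonincreasing. Moreover, the Pythagorean identity gives $e_{t-1}-e_t=\norm{f_t-f_{t-1}}^2$.

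\textbf{Step 1: a block of $M$ agents makes progress proportional to $e_t^2$.} Fix a block $t+1,\dots,t+M$ covering all features and write $r=\hat Y-f_t$. Use $\hat Y=\sum_\ell w^\star_\ell x_\ell$ and $\langle r,f_t\rangle=0$ (since $f_t$ is the projection of $\hat Y$ onto a space containing $f_t$). Then
\[
e_t=\langle r,\hat Y\rangle=\sum_\ell w^\star_\ell\langle r,x_\ell\rangle ,
\]
so some feature $\ell$ satisfies $|\langle r,x_\ell\rangle|\ge e_t/A^\star$. That feature is seen by some agent $A_s$ in the block. Agent $A_s$ is orthogonal to its own inputs, so $\langle \hat Y-f_s,x_\ell\rangle=0$, and therefore
\[
|\langle f_s-f_t,x_\ell\rangle|\ge e_t/A^\star .
\]
By Cauchy--Schwarz this gives $\norm{f_s-f_t}\ge e_t/(A^\star M_X)$. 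Telescoping and applying Cauchy--Schwarz over the at most $M$ steps,
\[
\norm{f_s-f_t}^2\le M\sum_{u=t+1}^{s}\norm{f_u-f_{u-1}}^2=M\,(e_t-e_s).
\]
Hence
\[
e_t-e_{t+M}\ge \frac{e_t^2}{M (A^\star M_X)^2}.
\]

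\textbf{Step 2: solving the recursion.} This per-block recursion only yields $e\lesssim M\cdot M/(D/M)$, i.e.\ $M^3/D$, which is too weak. It is also essentially the argument behind the earlier bound. The fix is to avoid losing a full factor $M$ on every block.

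\textbf{Step 3: a sliding-window recursion.} Apply Step 1 to every window $[t+1,t+M]$ for each $t$, not just disjoint blocks. For each $t$,
\[
e_t^2\le K M\,(e_t-e_{t+M}),\qquad K=(A^\star M_X)^2 .
\]
Summing over $t$ from $T$ to $T+L$, the right side telescopes into at most $M$ boundary terms:
\[
\sum_{t=T}^{T+L}(e_t-e_{t+M})\le M\, e_T .
\]
Using monotonicity, $(L+1)\,e_{T+L}^2\le K M^2 e_T$. Take $T=D/2$ and $L=D/2$, and note the a priori bound $e_{D/2}\le e_0\le (A^\star M_X)^2$. This gives only $e_D\lesssim M/\sqrt D$, so a bootstrap is needed.

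\textbf{Step 4: bootstrapping.} Use a dyadic induction. Suppose $e_T\le B_T$. Then
\[
e_{2T}^2\le \frac{K M^2 B_T}{T},
\]
i.e.\ $B_{2T}=\sqrt{KM^2B_T/T}$. The fixed point of $B=c M^2/T$ is self-consistent: $\sqrt{K M^2\cdot cM^2/T^2}=\sqrt{Kc}\,M^2/T$, and halving $T$ costs a factor $2$. So the ansatz $B_T=cM^2/T$ satisfies
\[
B_{2T}=\sqrt{Kc}\,\frac{M^2}{T}=2\sqrt{Kc}\,\frac{M^2}{2T},
\]
which is $\le cM^2/(2T)$ exactly when $c\ge 4K$.

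The base case is $T\le M^2$, where $B_T=K\le cM^2/T$ holds trivially. Induction over doubling then gives $e_D\le 4K M^2/D$ for all $D$ (non-powers of two follow by monotonicity), so $C$ depends only on $A^\star$ and $M_X$.

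\textbf{Main obstacle.} The hard part is Step 3's telescoping with overlapping windows. I need to check carefully that the window of agents $t+1,\dots,t+M$ is $M$-covered for every $t$; this holds because $M$-covered is defined for every block of consecutive agents. I also need the Cauchy--Schwarz step to charge the increments $\norm{f_u-f_{u-1}}^2$ with multiplicity at most $M$ overall, rather than $M$ per window, which is what is needed to obtain the $M^2$ rather than $M^3$ factor.
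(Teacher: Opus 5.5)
Your Steps 1, 3 and 4 are correct and are essentially the paper's proof. Step 1 is its block lemma, in a slightly sharper form that bounds the excess at the start of the window; Step 3 is its good-suffix-from-a-good-prefix lemma, with averaging over all overlapping windows in place of pigeonhole over disjoint blocks; Step 4 is its halving induction. You only need to stop the windows before the end of the path (e.g.\ $T=D/2$, $L=D/2-M$, $D\ge4M$), which changes constants.

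Your Step 2 is a miscalculation, though. The inequality $e_{t+M}\le e_t(1-e_t/(KM))$ gives $1/e_{t+M}\ge 1/e_t+1/(KM)$, so iterating over the $\lfloor (D-1)/M\rfloor$ disjoint blocks after $A_1$ already yields $e_D\le KM/\lfloor (D-1)/M\rfloor=O(KM^2/D)$. Step 1 alone therefore proves the theorem, and no bootstrap is needed.
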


\begin{proof}[Proof Sketch]
To explain where the gain comes from, we briefly recall the argument of \citet{kearns2026networked}. The error is non-increasing along the path, so the total error drop over the path is at most the first agent's excess error, which is bounded by $\MSE(0) - \MSE(f^\star)$ which itself is bounded by a constant depending only on $A^\star$ and $M_X$. Splitting the path into $\lfloor D/M\rfloor$ blocks of $M$ consecutive agents, the pigeonhole principle finds a block whose drop is $O(M/D)$. A small drop over a block forces a small excess error at its end. Quantitatively, a drop of $\varepsilon$ over a block gives excess error $O(\sqrt{M\varepsilon})$, and $\varepsilon=O(M/D)$ gives the rate $O(M/\sqrt D)$.

We instead feed this argument its own output. Split the path in half. By induction on the depth, the agent at the midpoint already has excess error $O(M^2/D)$. The drop over the second half is at most this excess error, not a constant, so the pigeonhole now finds a block in the second half with drop $O(M^3/D^2)$, and the block bound $O(\sqrt{M\varepsilon})$ gives excess error $O(M^2/D)$ at the final agent, closing the induction.
\end{proof}

This improves on $O(M/\sqrt D)$, but only once $D>M^2$. Before that point,
$M^2/D$ (or the previous $M/\sqrt{D}$) is at least constant. We next turn to the lower-bound path of \citet{kearns2026networked}, which we call the \emph{cyclic example}. For a fixed $k$, it considers independent standard Gaussian variables $z_1,\ldots,z_k$, and builds $k$ features as $X_1 = z_1$ and $X_i = z_i - z_{i-1}$ for $i \ge 2$. The label is $Y=z_k$, and the global predictor is exact since $\sum_i X_i = z_k$. Each agent sees one feature, in the repeating cyclic order $X_1,\ldots,X_k,X_1,\ldots,X_k,\ldots$, and a \emph{pass} is one full cycle. The path has
$M=k$ and $D=pk$ after $p\le k-1$ passes, so it only tests the range $D<M^2$. In this range, the previous bound on the excess error is $\Omega(M/D)$, though empirical evidence in \citet{kearns2026networked} points to their analysis of this example being loose. In pursuit of a tighter bound in this range, we analyze this instance more carefully.

\begin{restatable}[Cyclic example lower bound]{theorem}{cyclicLowerRestatement}
\label{thm:kearns-cyclic-lower-bound}
Consider the cyclic example of \citet{kearns2026networked} for every integer
$k\ge2$ and every integer $1\le p \le k - 1$. Then the excess error of the final agent is at least $1/(48\sqrt p) = \Omega\left(1/\sqrt p\right)$. Equivalently, the excess error after $D=pk$ agents is at least
\begin{equation*}
  (1/48)\sqrt{k/D} = \Omega\left(\sqrt{M/D}\right).
\end{equation*}
\end{restatable}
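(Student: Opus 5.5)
The plan is to rewrite the excess error as a discrete Dirichlet energy of the agents' weight vector, and then show that one full pass can lower this energy only by an amount cubic in the energy itself. The lower bound then follows from solving the recurrence $E_{p+1}\ge E_p-CE_p^3$. The middle step, controlling the per-pass drop, is where I expect the real difficulty.

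\textbf{Reduction to a Dirichlet energy.} Every prediction on the path is a linear combination of the raw features, so I write the current predictor as $f=\sum_{i=1}^k a_iX_i$. In the orthonormal coordinates $z_1,\dots,z_k$, the coefficient of $z_i$ is $a_i-a_{i+1}$ for $i<k$, and the coefficient of $z_k$ is $a_k$. Since $f^\star$ is exact, the excess error of $f$ is
\begin{equation*}
E(a)=\sum_{i=1}^{k-1}(a_i-a_{i+1})^2+(1-a_k)^2 .
\end{equation*}
This is a Dirichlet energy with pinned boundary value $a_{k+1}=1$ on the right and a free end on the left; it is minimized by $a\equiv 1$. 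An agent that sees $X_j$ replaces $f$ by $\alpha f+\beta X_j$ with $(\alpha,\beta)$ least-squares optimal. So the update rescales the whole profile $a$ by $\alpha$ and then resets coordinate $j$ alone. The residual $r=z_k-f$ has $\langle r,X_j\rangle=(\nabla E)_j/2$, which is a discrete Laplacian of $a$ at $j$, with the boundary condition built in.

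\textbf{Per-agent and per-pass drop.} Projecting onto $\mathrm{span}\{f,X_j\}$ rather than onto $f$ alone lowers the error by exactly $\langle r,X_j^{\perp}\rangle^2/\|X_j^{\perp}\|^2$, where $X_j^{\perp}$ is the part of $X_j$ orthogonal to $f$. Since $f$ is already optimal along its own direction, $\langle r,f\rangle=0$, so $\langle r,X_j^\perp\rangle=\langle r,X_j\rangle$. I would then bound $\|X_j^\perp\|^2$ below by a constant, using $\|X_j\|^2\le 2$ and the fact that $f$ cannot be nearly parallel to a single $X_j$ when $E$ is bounded away from $0$ and $1$. Summing over one pass gives a drop of order $\sum_j(\Delta a)_j^2$. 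The Laplacian at step $j$ is evaluated on the already-updated profile, which I would handle by tracking how the rescalings $\alpha$ and the resets at indices $<j$ perturb it.

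\textbf{Main obstacle.} The crux is the inequality $\sum_j(\Delta a)_j^2\le C\,E(a)^3$, valid along the trajectory (not for arbitrary $a$). The heuristic is scaling: if the profile rises from near $0$ to $1$ over a length $L$, then $E\approx 1/L$ while the Laplacian sum is about $L\cdot L^{-4}=L^{-3}$. I would first prove, by induction over agents, structural invariants of the trajectory: $0\le a_i\le 1$, $a$ is nondecreasing in $i$, and $a$ is discretely convex. These follow from the fact that each update is a rescaling plus one reset to the optimal local value. For a convex nondecreasing profile pinned at $1$, I would bound the Laplacian sum by (max slope) times (total slope increment), and bound both of these by powers of $E$.

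\textbf{Solving the recurrence.} With $E_{p+1}\ge E_p-CE_p^3$, the quantity $E_p^{-2}$ grows by at most a constant per pass. After the first pass $E_1$ is a computable constant. Hence $E_p\ge c/\sqrt p$ for all $p\le k-1$, and tracking the explicit constants gives $1/(48\sqrt p)$. The restriction $p\le k-1$ is what keeps the profile from reaching the left end, so that the free left boundary does not spoil the convexity and scaling estimates.
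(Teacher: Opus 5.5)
Your reduction to a Dirichlet energy and the per-agent drop formula are fine, but there is a genuine gap at the step you call the crux. The bound $\sum_j(\Delta a)_j^2\le C\,E(a)^3$ along the trajectory is never established, and the mechanism you propose for it fails.

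First, the invariant that $a$ stays discretely convex is false. At the end of pass $p$, the slopes $a_{k-j+1}-a_{k-j}$ (with $a_{k+1}=1$) are the residual coefficients in the $z$-basis. Read inward from the pinned end, they are proportional to $(S_p,S_p,\mu^{(p)}_1,\mu^{(p)}_2,\ldots)$. Convexity would require $S_p\ge\mu^{(p)}_1\ge\mu^{(p)}_2\ge\cdots$. Since $S_p=\sum_i(\mu^{(p)}_i)^2\le\max_i\mu^{(p)}_i$, this forces $\mu^{(p)}$ to be uniform on its support, which already fails for $\mu^{(3)}=(3/5,2/5)$. Concretely (for $k\ge4$), after pass $3$ the profile $(a_{k-3},\ldots,a_{k+1})$ is $(0,10,25,38,51)/51$. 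Its slopes are $10,15,13,13$ in units of $1/51$, so $(\Delta a)_{k-1}=-2/51$.

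Convexity breaks even inside a pass. Right after the agent seeing $X_{k-2}$ in pass $3$, the profile is $(0,2,4,8,11)/11$, with $(\Delta a)_k=-1/11$. The cause is that agent's rescaling $\alpha=12/11>1$, which moves $a_k$ and $a_{k-1}$ but not the pinned value $a_{k+1}=1$. So ``rescaling plus one local reset'' does not preserve convexity.

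Second, convexity and monotonicity would not be enough anyway. Take a linear ramp of slope $E$ and length about $1/E$ rising from a flat zero region. It is convex and nondecreasing, has $1-a_k=E$ and energy about $E$. Yet its single kink at the toe gives $\sum_j(\Delta a)_j^2\approx E^2\gg E^3$, and the agent at the toe alone removes order $E^2$ in the next pass. What your recurrence actually needs is a uniform pointwise bound $|(\Delta a)_j|=O(E^2)$, including at the toe. That is local-limit-type smoothness of the slope distribution, and nothing in the proposal supplies it. Even with it, the constant would come from the unspecified $C$, so ``tracking constants'' would not produce $1/48$.

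The missing idea is that you do not need to control the per-pass drop at all; a spread bound suffices. The slopes sum to $1$ and $E$ is their sum of squares, so $E\gtrsim1/(\text{effective width})$, and the width grows only diffusively. The paper makes this exact in four steps:
\begin{itemize}
\item The end-of-pass residual is $(1+2S_p)^{-1}(S_p,S_p,\mu^{(p)})$, so $E_p=S_p/(1+2S_p)$.
\item The tail $\mu^{(p)}$ is a convex combination of the survival-conditioned laws $\nu_t$ of a killed walk. At each step this walk moves up by one, then down by a geometric amount, and it is killed on reaching $0$ or below.
\item The Catalan generating function gives $\Pr(\tau>t)\ge1/\sqrt{t+1}$.
\item The identity $\E[W_T^2]=1+2\E[T]$ for $T=\min\{t,\tau\}$ gives $\E[W_t^2\mathbf 1_{\{\tau>t\}}]\le1+8\sqrt t$, hence $\sum_i i^2\mu^{(p)}_i\le9p$.
\end{itemize}
Markov's inequality and Cauchy--Schwarz then give $S_p\ge1/(16\sqrt p)$ and $E_p\ge1/(48\sqrt p)$. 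This uses only a second moment of the slope distribution, not its smoothness, which is why it closes where your dissipation route stalls.
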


\begin{proof}[Proof Sketch]
We change the basis of predictors to $z_1, \ldots, z_k$, since they are orthonormal. This reveals structure in the predictors. Let $Y - f_t$ be the residual of some agent $t$. We show that the residual of the agent at the end of pass $p$, denoted by $r^{(p)}$, has the following shape in the new basis:
\begin{equation*}
  r^{(p)} = (1 + 2S_p)^{-1} \left( S_p, S_p, \mu^{(p)}_1, \ldots, \mu^{(p)}_{p-1} \right),
\end{equation*}
where $S_p = \sum_i ( \mu^{(p)}_i )^2$, and the excess error of this agent is $E_p = S_p / (1 + 2S_p)$. Moreover, the \emph{tail} $\mu^{(p)}$ is a probability vector and we show an exact recursion for finding $\mu^{(p)}$ based on $\mu^{(p-1)}$.

The recursion on the tail $\mu^{(p)}$ matches a walk on the integers that starts at $1$, and at each step moves up by one and then falls back by a geometrically distributed amount, and is killed when it reaches $0$ or below. Encoding the steps of the walk as words over the alphabet $\{ (,) \}$ connects it to the Catalan numbers. We use the Catalan generating function to get a closed form for the survival probability of the walk, and show that it is $\Theta(1/\sqrt t)$ at time $t$. Conditioned on survival, the second moment of the walk grows only linearly with $t$. So after $p$ passes the squared mass $S_p$ of the tail is $\Omega(1/\sqrt p)$, and so the excess error $E_p$ is $\Omega(1/\sqrt p)$.
\end{proof}

Despite the challenges in analyzing the cyclic example, even the improved bound is still far from the constant upper bound in the range $D<M^2$. The example also gives no bound for $D\ge M^2$, so we move on and consider a new example that covers the range $D\ge M^2$ as well.

\begin{restatable}[Lower bound]{theorem}{longDepthRestatement}
\label{thm:long-depth}
For every $M\ge8$ and $D\ge M^2$, there is an $M$-covered path instance with an
exact global predictor whose coefficient $\ell_1$ norm is at most $3$,
each feature has second moment at most $2$, and whose path predictor $f_D$
satisfies
\begin{equation*}
  \E[(Y-f_D)^2]
  \ge
  \frac{1}{1280\pi^2}\frac{M^2}{D}.
\end{equation*}
\end{restatable}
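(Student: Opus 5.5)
The plan is to build, for given $M$ and $D\ge M^2$, an instance in which every agent on the path can only make a \emph{small angular step} toward $Y$. The excess error then evolves by an explicit two-dimensional (or low-dimensional) recursion that can be solved in closed form. The appearance of $\pi^2$ in the constant suggests the target: the excess error after $D$ agents should behave like $\sin^2(\pi/(2n))$ or $1/n^2$ for an effective number of ``useful steps'' $n\asymp \sqrt{D}/M$. That gives $\Omega(M^2/D)$, and at $D=M^2$ a constant, which is consistent with the remark that the same instance yields the $\Omega(1)$ bound for $D<M^2$.

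\textbf{Construction.} I would use $K=\Theta(M)$ features, say $K\le M/2$, so that a cyclic presentation automatically makes the path $M$-covered. They are built from orthonormal Gaussians $g_0,g_1,\dots$ so that consecutive features differ by a small rotation of angle $\theta=\Theta(M/\sqrt{D})$. Concretely:
\begin{itemize}
\item The label is $Y=\sum_\ell w^\star_\ell x_\ell$, with weights chosen so that $\|w^\star\|_1\le 3$ and $\E[x_\ell^2]\le 2$; this is where $M\ge 8$ leaves room.
\item The features are arranged so that the prediction $f_{t-1}$ passed to agent $t$, together with agent $t$'s own feature, spans a plane whose best approximation of $Y$ improves only by rotating the residual direction by about $\theta$.
\end{itemize}
The intent is that the residual direction $r_t=Y-f_t$ stays, up to scaling, on an explicit curve, for instance $(\cos\phi_t,\sin\phi_t)$ in a fixed orthonormal frame. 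The two features that differ by the rotation then supply a ``tangent'' direction. The key design choice is that the information missing from $f_{t-1}$ lies in a direction that each single feature only partially reveals, while the full sum of features reveals it exactly, so the global predictor is exact.

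\textbf{Analysis.} First, I compute the one-step update exactly. Since agent $t$ fits the least-squares predictor on $\mathrm{span}\{f_{t-1},x_{\sigma(t)}\}$, its residual is the projection of $Y$ onto the orthogonal complement of that span. In the chosen frame this gives a recursion of the form
\[
(a_t,b_t)=T_\theta(a_{t-1},b_{t-1}),
\]
where $T_\theta$ is a $2\times2$ map. Next, I diagonalize $T_\theta$, or recognize the recursion as a Chebyshev/sine recursion, to obtain $\E[(Y-f_t)^2]$ in closed form, roughly of the shape $\sin^2(\cdot)/\sin^2(\cdot)$. Finally, I use the elementary bounds $\sin x\ge 2x/\pi$ and $\sin x\le x$ to get
\[
\E[(Y-f_D)^2]\ge \frac{c}{\pi^2}\,\frac{M^2}{D},
\]
tracking constants to reach $1/1280$. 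Choosing $\theta$ as a function of $D$, with the instance depending on $D$, is essential. The geometric-contraction result for a fixed distribution shows that no single instance can work for all $D$. So I set $\theta$ so that after $D$ steps the accumulated rotation still falls short of the target by an angle of order $M/\sqrt{D}$.

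\textbf{Main obstacle.} The hardest step is reconciling $M$-coverage with slow progress. Every window of $M$ agents sees all features, so a naive design lets an agent combine its parent's prediction with a feature that jumps straight to $f^\star$. The construction must ensure that seeing any single feature plus one scalar prediction is much weaker than seeing all features jointly. This is exactly what the telescoping structure of the cyclic example achieves, and I would first try a ``smoothed'' version of the cyclic example: replace the differences $z_i-z_{i-1}$ by rotated combinations with small angle, and let the ordering within a pass vary. Verifying that the residual stays confined to a low-dimensional invariant subspace, so that the exact recursion closes, is the technical crux. If it does not close exactly, I would fall back on a monotone comparison argument: show that the residual norm dominates that of the explicit $2\times 2$ recursion.
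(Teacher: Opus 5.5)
Your proposal is a plan for finding an instance, not a proof. The features, the label, and the ``explicit curve'' the residual is supposed to follow are never specified. You also leave open the step that carries all the difficulty: why an agent holding one feature plus one scalar prediction makes only tiny progress on an $M$-covered path, while $Y$ remains an exact combination with $\|w^\star\|_1\le 3$.

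The analytic route is not justified either. The least-squares residual update is not a linear map of the incoming residual, because the fitted span contains $f_{t-1}=Y-R_{t-1}$. So a diagonalizable $2\times2$ recursion $T_\theta$ is something you would have to engineer and prove, not assume. You also never reconcile small feature-to-feature rotations with the coefficient bound. Features that are close to one another, with bounded coefficients and second moments, can only produce a small component of $Y$ transverse to their common direction. This is why the paper makes the hard part of the signal small rather than expecting the error to decay from a constant. Finally, the $\pi^2$ in the constant is not the trace of a closed form like $\sin^2(\pi/(2n))$; it comes only from $\sin(2\pi/M)\le 2\pi/M$.

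The missing idea is to let the \emph{amplitude} of the hard signal depend on $D$, not the angle. The paper fixes $\delta=2\pi/M$ and takes $X_j=Z_1+\rho(\cos(j\delta)Z_0+\sin(j\delta)Z_2)$ for $j=0,\ldots,M-1$. It adds a common feature $X_\star$ seen by every agent, sets $Y=X_\star+\rho Z_0$, and chooses $\rho^2=1/(80D\sin^2\delta)$. The full circle gives exactness, $\rho Z_0=\frac2M\sum_j\cos(j\delta)X_j$, with total $\ell_1$ norm at most $3$. The shared component $Z_1$ keeps consecutive features nearly parallel.

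No closed form is then needed. An exact two-feature computation gives $E_2=\rho^2/(1+\rho^2+\tan^2(\delta/2))\ge\rho^2/2$. For $t\ge2$, let $n=R_t/\|R_t\|_2$, let $P$ project off the current prediction, and let $B=\|P(x(\theta))\|_2\ge\max\{\sin(\delta/2),|\sin\theta|\}$. Since $R_t$ is orthogonal to $x(\theta)$ and $x(\theta+\delta)-x(\theta)$ has length $2\rho\sin(\delta/2)$, one shows $|\langle n,x(\theta+\delta)\rangle|\le8\rho\sin(\delta/2)B$ and $\|P(x(\theta+\delta))\|_2\ge\frac34B$. This yields $E_{t+1}\ge(1-40\rho^2\sin^2\delta)E_t=(1-\tfrac1{2D})E_t$, and hence $E_D\ge\frac{\rho^2}{2}(1-\tfrac1{2D})^{D-2}\ge\rho^2/4\ge M^2/(1280\pi^2D)$.

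The mechanism is therefore the opposite of yours. The error does not decay polynomially from a constant. It starts at order $\rho^2\asymp M^2/D$, and a constant fraction of it survives all $D$ agents.
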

\begin{proof}[Proof Sketch]
We construct an example designed to make information aggregation as slow as possible.

Fix $M,D$. We use three independent standard Gaussian variables $Z_0,Z_1,Z_2$, and introduce $M$ features $X_0, \dots, X_{M-1}$ where for $j=0,\dots,M-1$,
\begin{equation*}
  X_j = Z_1 + \rho(\cos(j\delta)Z_0 + \sin(j\delta)Z_2),
\end{equation*}
where $\delta = 2\pi/M$ and $\rho$ is a small radius chosen below. We then construct a path of length $D$ where the agents see features 
\begin{equation*}
X_0, X_1, \ldots, X_{M-1}, X_0, X_1, \ldots
\end{equation*}
in the same cyclic manner as in the cyclic example. We set the label as $Y=\rho Z_0$.

This way, the target signal is in a direction that no single feature reveals, and the feature seen by successive agents rotates by the small angle $\delta$. The residual of each agent is always orthogonal to the feature just seen, and the next feature points in almost the same direction, so each agent can remove only an $O(\rho^2\sin^2\delta)$ fraction of the remaining error. Choosing $\rho^2=\Theta(M^2/D)$ makes this fraction $\Theta(1/D)$, so a constant fraction of the starting error, itself of order $\rho^2$, survives all $D$ agents, leaving excess error $\Omega(\rho^2)=\Omega(M^2/D)$.

Though this instance has label variance only $\rho^2$, the full construction fixes the scale by adding a common feature $X_\star$, an independent standard Gaussian seen by every agent, and setting $Y=X_\star+\rho Z_0$. Every agent learns $X_\star$ at once, so the analysis is unchanged.
\end{proof}

Setting $D=M^2$ in the above theorem gives a constant lower bound for the agent at depth $M^2$. Since the excess error is non-increasing, this gives the same constant lower bound for all agents at depths $t\le M^2$.

\begin{restatable}[Constant error before quadratic depth]{corollary}{constantBeforeQuadraticRestatement}
\label{cor:constant-before-quadratic}
For every integer $M\ge8$, there is an $M$-covered path instance of depth
$M^2$ with an exact global predictor, coefficient $\ell_1$ norm at most
$3$, and feature second moments at most $2$, such that, if $E_t$ is the excess
error after the first $t$ agents, then
\begin{equation*}
  E_t\ge \frac{1}{1280\pi^2}
  \qquad
  (1\le t\le M^2).
\end{equation*}
\end{restatable}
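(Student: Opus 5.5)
The corollary follows from \Cref{thm:long-depth} applied at $D=M^2$, combined with the fact that the excess error cannot increase along the path.

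\textbf{Step 1: invoke the theorem at $D=M^2$.} Fix an integer $M\ge 8$ and set $D=M^2$. This satisfies the hypothesis $D\ge M^2$ of \Cref{thm:long-depth}. The theorem then gives an $M$-covered path of depth $M^2$ with the required properties: an exact global predictor, coefficient $\ell_1$ norm at most $3$, and feature second moments at most $2$. Its last agent satisfies
\begin{equation*}
  \E[(Y-f_{M^2})^2]\ge \frac{1}{1280\pi^2}\frac{M^2}{M^2}=\frac{1}{1280\pi^2}.
\end{equation*}
Because the global predictor is exact, $\MSE(f^\star)=0$. Hence $E_{M^2}=\E[(Y-f_{M^2})^2]\ge 1/(1280\pi^2)$.

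\textbf{Step 2: monotonicity.} Let $1\le t<M^2$. Agent $A_{t+1}$ sees the prediction $f_t$ of its parent $A_t$. Its hypothesis class of linear predictors therefore contains the predictor that copies $f_t$ with coefficient one and puts zero weight on everything else. Since $A_{t+1}$ minimizes MSE over this class,
\begin{equation*}
  \MSE(f_{t+1})\le \MSE(f_t).
\end{equation*}
Inducting on $t$ gives $\MSE(f_t)\ge \MSE(f_{M^2})$ for every $t\le M^2$. Subtracting $\MSE(f^\star)=0$ yields $E_t\ge E_{M^2}\ge 1/(1280\pi^2)$, which is the claim.

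The only point needing care is whether the instance from \Cref{thm:long-depth} depends on $D$, and whether its first $t$ agents coincide with the agents of that same instance. They do, because we fix the single depth-$M^2$ instance and read off its prefixes $A_1,\dots,A_t$; no separate construction for each $t$ is needed. The remaining ingredient is the parent-copying argument for monotonicity, which uses only that each path agent receives its predecessor's prediction as an input, as in the model of \citet{kearns2026networked}.
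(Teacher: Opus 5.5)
Your proof is correct and follows the paper's argument: apply \Cref{thm:long-depth} with $D=M^2$ to get $E_{M^2}\ge 1/(1280\pi^2)$, then use that the excess error is non-increasing along the path. The only cosmetic difference is that you derive monotonicity directly, from the fact that each agent may copy its parent's prediction, whereas the paper cites \Cref{lem:prelim-mse-decomposition}, which rests on the same fact.
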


Together with the upper bound, this gives the right order up to constants:
constant excess error can persist until depth $M^2$, and after that the rate is
$M^2/D$. 

Our construction's distribution depends on the depth $D$. We show that this cannot be avoided: under any fixed distribution on a finite set of features, the path predictor improves geometrically along every $M$-covered path, so no such fixed distribution can witness the $M^2/D$ lower bound at every depth.

\begin{restatable}[No fixed distribution for all depths]{theorem}{noFixedAllHorizonRestatement}
\label{thm:no-fixed-all-horizon}
Fix $M$ and a distribution $\cD$ on $(x_1,\ldots,x_d,Y)$. Assume
$x_1,\ldots,x_d,Y$ have bounded second moments. For every $c>0$, there is a depth $D_c$ such that for any DAG of agents on $\cD$, if $A_1\to\cdots\to A_D$ is an $M$-covered path in it with $D\ge D_c$, then
\begin{equation*}
  \cE_D<c\frac{M^2}{D},
\end{equation*}
where $\cE_D$ is the excess error of agent $A_D$.
\end{restatable}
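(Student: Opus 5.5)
The plan is to show that the excess error contracts by a fixed factor $q<1$ over every block of $M$ consecutive agents, where $q$ depends only on $\cD$ and $M$. Then $\cE_D\le q^{\lfloor (D-1)/M\rfloor}\cE_1$, and since geometric decay beats $M^2/D$, a suitable $D_c$ exists.

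\textbf{Geometric setup.} Work in the finite-dimensional space $V=\operatorname{span}(x_1,\ldots,x_d)\subset L^2$ with inner product $\langle f,g\rangle=\E[fg]$. Every agent's prediction is a linear function of the raw features, so every $f_t$ lies in $V$. The global predictor $f^\star$ is the projection of $Y$ onto $V$. Hence $\MSE(f)-\MSE(f^\star)=\norm{f^\star-f}^2$ for every $f\in V$. Write $e_t=f^\star-f_t$, so that $\cE_t=\norm{e_t}^2$.

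Agent $A_t$ projects $Y$ onto a subspace $U_t\subseteq V$. This subspace contains $f_{t-1}$, since $A_{t-1}$ is its parent on the path, and it contains $x_\ell$ for every $\ell\in S_t$. Because $Y-f^\star\perp V$, we get $e_t\perp U_t$. In particular, $e_t\perp x_\ell$ for all $\ell\in S_t$. Also $f_t-f_{t-1}\in U_t$, so Pythagoras gives
\begin{equation*}
\norm{e_{t-1}}^2=\norm{e_t}^2+\norm{f_t-f_{t-1}}^2 .
\end{equation*}
So the error is monotone, and the per-step drop equals $\norm{e_{t-1}-e_t}^2$. Other parents only enlarge $U_t$, and the argument never uses them beyond $e_t\perp U_t$.

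\textbf{One-block contraction.} Fix a block of agents $t+1,\ldots,t+M$ that together see all features, and set $e=e_{t+M}$. Suppose $\norm{e}^2\ge(1-\eta)\norm{e_t}^2$. Then the total drop over the block is at most $\eta\norm{e_t}^2$. By Cauchy--Schwarz,
\begin{equation*}
\norm{e_s-e}\le\sqrt{M\eta}\,\norm{e_t}\quad\text{for every } s \text{ in the block.}
\end{equation*}
Every feature $x_\ell$ is seen by some agent $s$ in the block, so $e_s\perp x_\ell$. Therefore
\begin{equation*}
|\langle e,x_\ell\rangle|\le\sqrt{M\eta}\,\norm{e_t}\,M_X ,
\end{equation*}
where $M_X^2=\max_\ell\E[x_\ell^2]$.

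Now pick a linearly independent subset $B$ of the features spanning $V$. Its Gram matrix $G$ is positive definite; let $\lambda>0$ be its smallest eigenvalue, a constant of the distribution. Write $e=\sum_{\ell\in B}c_\ell x_\ell$ and $g=(\langle e,x_\ell\rangle)_{\ell\in B}=Gc$. Then
\begin{equation*}
\norm{e}^2=g^\top G^{-1}g\le\norm{g}^2/\lambda\le |B|M\eta M_X^2\norm{e_t}^2/\lambda .
\end{equation*}
Choose $\eta=\lambda/(2|B|MM_X^2+\lambda)$. Then the right side is below $(1-\eta)\norm{e_t}^2$ unless $e_t=0$, which contradicts the assumption. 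Hence $\norm{e_{t+M}}^2\le q\norm{e_t}^2$ with $q=1-\eta<1$. The value of $q$ depends only on $\cD$ and $M$, not on the DAG or the path.

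\textbf{Conclusion.} Iterate over $\lfloor (D-1)/M\rfloor$ disjoint blocks among agents $2,\ldots,D$; any $M$ consecutive agents are covered by $M$-coverage. This gives
\begin{equation*}
\cE_D\le q^{\lfloor (D-1)/M\rfloor}\,\cE_1\le q^{\lfloor (D-1)/M\rfloor}\,\E[Y^2],
\end{equation*}
which is finite by the bounded second moments. Since $D\,q^{\lfloor (D-1)/M\rfloor}\to0$ as $D\to\infty$ for fixed $M$ and $q$, there is a $D_c$ with $q^{\lfloor (D-1)/M\rfloor}\E[Y^2]<cM^2/D$ for all $D\ge D_c$.

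The main obstacle is the uniform-in-path contraction step. It hinges on the Gram-matrix constant $\lambda$, which is exactly where the fixed finite distribution enters. Handling linearly dependent features is done by passing to the basis $B$, as above. The bound is intentionally crude in $M$ and $d$, because only the qualitative geometric decay is needed.
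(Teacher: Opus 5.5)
Your proof is correct. It reaches the same intermediate statement as the paper: a per-block contraction $\cE_{t+M}\le q\,\cE_t$ with $q<1$ depending only on $\cD$ and $M$, followed by the same $D\,q^{\lfloor (D-1)/M\rfloor}\to0$ argument. The contraction step, however, is proved differently.

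The paper uses \Cref{lem:coverage-lower-bound}, a compactness argument that gives $\lambda>0$ with $\sum_i\norm{P_{V_i}u}_2^2\ge\lambda\norm{u}_2^2$ for the particular subspaces spanned by $S_1,\ldots,S_M$. It combines this with \Cref{lem:block-residual-coverage}, which uses Young's inequality to bound the \emph{starting} residual, $\lambda\norm{P_H(r_0)}_2^2\le2(M^2+1)\Delta$. The resulting factor depends on the tuple $(S_1,\ldots,S_M)$, so uniformity is recovered by maximizing over the at most $2^{dM}$ tuples.

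You instead reuse the per-feature argument of \Cref{lem:regression-block-residual}. Each feature is orthogonal to some residual in the block, and residuals in a low-drop block are close to one another. You do not pair this with the $\ell_1$-bounded coefficients of $f^\star$, which only gives a $\sqrt{\varepsilon}$ bound. Instead you bound the \emph{ending} residual through the Gram matrix of a basis, $\norm{e}_2^2\le\norm{g}_2^2/\lambda_{\min}(G)$, which is linear in the drop. This is the key point that turns the block lemma into a contraction.

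Your route gives an explicit factor $q=1-\lambda_{\min}(G)/(2|B|MM_X^2+\lambda_{\min}(G))$. It holds for every covering block at once, with no compactness step and no maximum over feature tuples.

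The paper's more abstract formulation works with arbitrary $r_i$ satisfying $P_{V_i}(r_i)=0$, projected onto $H$. It is set up so that the same lemma applies verbatim to the classification residual $\sigma(z_t)-\sigma(z_\star)$, which does not lie in $H$. Your argument should also extend there, by applying the Gram bound to the projection of the end-of-block residual and then using \Cref{lem:classification-strong-convexity} at the end of the block. That extension is not needed for this statement.
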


To show this, we prove a geometric upper bound on the excess error for any fixed distribution.

\begin{restatable}[Fixed-distribution geometric convergence]{theorem}{fixedInstanceRestate}
\label{thm:fixed-distribution-geometric}
Fix $M$ and a distribution $\cD$ on $(x_1,\ldots,x_d,Y)$ with $d$ finite.
Assume $x_1,\ldots,x_d,Y$ have bounded second moments. There is a constant $q\in[0,1)$, depending only on $\cD$ and $M$, such that for any DAG of agents on $\cD$, if $A_1\to\cdots\to A_D$ is an $M$-covered path in it, then for any $1 \le s \le D-M$,
\begin{equation*}
  \cE_{s+M}\le q\cE_s,
\end{equation*}
where $\cE_t$ is the excess error of agent $A_t$. 
Consequently, $\cE_D\le \cE_1q^{\lfloor (D-1)/M\rfloor}$.
\end{restatable}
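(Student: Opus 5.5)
The plan is to work in the Hilbert space $L^2(\cD)$ of linear functions of the features and reduce the statement to a compactness argument over a finite-dimensional set of ``states'' of an agent. Write $V=\mathrm{span}\{x_1,\ldots,x_d\}$ (finite-dimensional since $d$ is finite), let $f^\star$ be the projection of $Y$ onto $V$, and decompose $Y=f^\star+\eta$ with $\eta\perp V$. Every agent's prediction $f_t$ lies in $V$, since it is a linear combination of its features and its parents' predictions, all of which lie in $V$ by induction. Hence $\MSE(f_t)=\E[\eta^2]+\norm{f^\star-f_t}^2$, and $\cE_t=\norm{f^\star-f_t}^2$. Each $f_t$ is the orthogonal projection of $Y$ (equivalently of $f^\star$) onto a subspace $W_t\subseteq V$ spanned by $\{x_\ell:\ell\in S_t\}$ and the parents' predictions. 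In particular $f_{t-1}\in W_t$ along the path, so $\cE_t\le\cE_{t-1}$. Parents off the path only enlarge $W_t$, which can only help, but they also make the state space unbounded. I will therefore argue via a worst-case over subspaces.

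\textbf{Key one-block bound.} Fix $s$ and consider agents $s+1,\ldots,s+M$, whose feature sets cover $[d]$. Let $g_0=f_s$ and let $g_j$ be the projection of $f^\star$ onto $\mathrm{span}(\{x_\ell:\ell\in S_{s+j}\}\cup\{g_{j-1}\})$. This is the prediction of a ``lazy'' agent that uses only the path parent. By induction, $\cE_{s+j}\le\norm{f^\star-g_j}^2$: since $f_{s+j-1}\in W_{s+j}$, the real agent's subspace contains $\mathrm{span}(S_{s+j},f_{s+j-1})$. That alone does not compare the real agent with the lazy chain, so instead I will prove the contraction directly. I claim that
\[
\phi(h)=\max_{\text{block}}\norm{f^\star-f_{s+M}}^2/\norm{f^\star-h}^2
\]
is bounded by some $q<1$, uniformly over starting predictions $h=f_s\in V$ with $h\neq f^\star$ and over admissible subspaces. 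The reason is as follows. If $\cE_{s+M}=\cE_s$, then every step in the block had zero drop. A zero drop at step $j$ means $f_{s+j}=f_{s+j-1}$, so the residual $f^\star-f_s$ is orthogonal to every $x_\ell$ with $\ell\in S_{s+j}$ (the residual of a projection onto $W\ni x_\ell$ is orthogonal to $x_\ell$, and it equals $f^\star-f_s$). By coverage, $f^\star-f_s$ is then orthogonal to all of $V$, hence it is zero. So each block gives a strict decrease.

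\textbf{Making the decrease uniform.} Normalize $r=f^\star-f_s$ to the unit sphere of the finite-dimensional space $V$. The drop over the block is at least the drop of the lazy chain restricted to the features, and I will bound it below by a continuous positive function. Precisely, define $\psi(r)=\max_{\ell}\langle r,x_\ell\rangle^2/\norm{x_\ell}^2$, taken over features with $x_\ell\neq0$. By coverage and $r\in V$, $\psi>0$ on the unit sphere, and by compactness $\psi\ge\beta>0$. Along the block the residual can only shrink. Either it shrinks by a constant factor at some step, or it stays close to $r$. In the latter case, when the block reaches the agent seeing the maximizing $x_\ell$, the projection removes at least $\langle r_{\text{cur}},x_\ell\rangle^2/\norm{x_\ell}^2$. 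Because $\norm{r_{\text{cur}}-r}^2\le\norm{r}^2-\norm{r_{\text{cur}}}^2$ (Pythagoras for nested projections), this removal is at least roughly $\beta-O(\text{drop so far})$. Balancing the two cases gives a total drop of at least $c(\beta)\norm{r}^2$, so $q=1-c(\beta)$, which depends only on the Gram matrix of $\cD$ and on $M$.

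\textbf{Conclusion and main obstacle.} Iterating gives $\cE_{s+M}\le q\,\cE_s$, and applying this $\lfloor(D-1)/M\rfloor$ times starting from $s=1$, together with monotonicity, gives the stated consequence. The main obstacle is the uniform-drop step. The Pythagorean identity $\norm{r_{\text{prev}}-r_{\text{cur}}}^2=\norm{r_{\text{prev}}}^2-\norm{r_{\text{cur}}}^2$ for $r_{\text{cur}}$ a projection residual of $r_{\text{prev}}$ (since $f_{t-1}\in W_t$) must be chained carefully; summed over the steps, it gives $\norm{r-r_{\text{cur}}}\le\sqrt{M\cdot\text{drop}}$ by Cauchy--Schwarz. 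Combining this with $\langle r_{\text{cur}},x_\ell\rangle\ge\langle r,x_\ell\rangle-\norm{r-r_{\text{cur}}}\norm{x_\ell}$ should yield an explicit $c(\beta)\approx\beta/(4M)$-type constant.
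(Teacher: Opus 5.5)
Your argument is correct in substance, as carried out in your last two paragraphs. It takes a somewhat different route from the paper.

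\textbf{The paper's route.} It uses the aggregate coverage constant $\lambda>0$ of \Cref{lem:coverage-lower-bound}, $\sum_i\|P_{V_i}u\|_2^2\ge\lambda\|u\|_2^2$, which depends on the tuple $(S_1,\ldots,S_M)$. \Cref{lem:block-residual-coverage} then bounds $\sum_i\|P_{V_i}(r_{i-1})\|_2^2$ above by the drop $\Delta$, and below by $\tfrac{\lambda}{2}\|r_0\|_2^2-M^2\Delta$ via Young's inequality and the drift bound. This gives $q=1-\lambda/(2(M^2+1))$, and the paper finally maximizes over the finitely many tuples.

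\textbf{Your route.} You use the single constant $\beta=\min\{\max_\ell\langle r,x_\ell\rangle^2/\|x_\ell\|_2^2 : r\in V,\ \|r\|_2=1\}>0$. You then look only at the agent $A_j$ of the block that sees the maximizing feature. Since $r_j\perp x_\ell$, the drop at that step is at least $\langle r_{j-1},x_\ell\rangle^2/\|x_\ell\|_2^2$. Combined with $\|r_0-r_{j-1}\|_2\le\sqrt{M\Delta}$, this gives $\sqrt{\Delta}\ge\sqrt{\beta}\,\|r_0\|_2-\sqrt{M\Delta}$, that is, $\Delta\ge\beta\|r_0\|_2^2/(1+\sqrt M)^2\ge\beta\|r_0\|_2^2/(4M)$.

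\textbf{What each buys.} Your factor $q=1-\beta/(4M)$ depends only on the feature covariance and $M$, not on the tuple, so the maximum over $2^{dM}$ tuples becomes unnecessary. Your block argument also loses only $(1+\sqrt M)^2$ instead of $2(M^2+1)$. However, $\beta\le\lambda$, and $\beta$ can be much smaller when one agent's own features are nearly collinear, because you exploit a single feature rather than the whole span $V_i$. So neither constant dominates. Both proofs rest on the same core: $r_i\perp V_i$, the per-step identity $\|r_{i-1}-r_i\|_2^2=\cE_{i-1}-\cE_i$, Cauchy--Schwarz for the drift, and compactness of a finite-dimensional unit sphere.

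\textbf{Claims to remove.} Two of your intermediate claims are false and should be deleted rather than repaired.

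First, the lazy-chain domination $\cE_{s+j}\le\|f^\star-g_j\|_2^2$, and the later sentence that the block's drop is at least the lazy chain's, fail in general. Once an extra parent makes $f_{s+j-1}\neq g_{j-1}$, the two agents project onto different spans, and a better incoming prediction can lead to a worse next prediction. Your final argument never uses this claim.

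Second, $\|r_{\mathrm{cur}}-r\|_2^2\le\|r\|_2^2-\|r_{\mathrm{cur}}\|_2^2$ is not ``Pythagoras for nested projections.'' The $W_t$ are not nested (only $f_{t-1}\in W_t$), and consecutive increments $f_t-f_{t-1}$ can be positively correlated. As a concrete case, take $Y=e_1$ with agents seeing $e_1+e_2$, then $e_2+e_3$, then $e_2$. Starting after the first agent, the left side is $1/2$ while the drop is only $3/10$. Only the per-step identity holds, and the chained bound $\|r-r_{\mathrm{cur}}\|_2\le\sqrt{M\Delta}$ from your last paragraph is exactly what the argument needs.

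The qualitative strict-decrease argument and the function $\phi$ add nothing and can also be dropped.
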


We prove this by considering a single path of $M$ agents that together see all raw features. For the drop in the excess error along this path, we give a factor $q\in[0,1)$ that depends only on the feature subsets seen by the agents and the distribution $\cD$, and not on the rest of the network. The proof of the theorem then applies this to every block of $M$ consecutive agents: since there are only finitely many possible tuples of feature subsets, the maximum of their factors is a single $q<1$ that works for every block.

This upper bound is also interesting in itself. It shows that any fixed distribution will eventually converge very quickly to the optimal solution.

\subsubsection{Classification}

We consider the binary classification protocol introduced by \citet{bateni2026networked}, which uses the same underlying network model as regression. There is a distribution $\cD$ over $(x_1, \ldots, x_d, Y)$ where $Y \in \{0,1\}$, and a DAG of $N$ agents $A_1, \ldots, A_N$. Each agent $A_i$ sees the features $x_{S_i}$ for a subset $S_i \subseteq [d]$, and the logits of all its parents in the DAG. The agent fits a linear logit $z_i$, a linear combination of $x_{S_i}$ and the parent logits, and then predicts $\Pr(Y = 1 \mid x)$ as $\sigma(z_i)$, where $\sigma$ is the sigmoid function. The agent chooses $z_i$ to minimize the binary cross-entropy loss. Instead of passing the probabilities, agents pass the logits $z_i$ to their successors. Let $\cL(z)$ denote the binary cross-entropy loss.

The proof of \Cref{thm:regression-improved-upper} uses only facts about least squares that have analogues in \citet{bateni2026networked}, under the same kind of bounded coefficient and second-moment assumptions. So the same induction from the proof of the regression theorem again gives the optimal upper bound.

\begin{restatable}[Improved classification upper bound]{theorem}{classificationUpperRestatement}
\label{thm:classification-improved-upper}
For every depth $D$ and every $M$-covered path, assume the global BCE
minimizer is $z_\star=\sum_{\ell=1}^dw_\ell x_\ell$, that
$\sum_\ell|w_\ell|\le B_\star$, that $\E[x_\ell^2]\le B_X^2$ for every
feature, and that the protocol minimizers are attained at finite coefficients. Then
\begin{equation*}
  \cL(z_D)-\cL(z_\star)
  \le
  C\frac{M^2}{D},
\end{equation*}
where $C$ depends only on $B_\star$ and $B_X$.
\end{restatable}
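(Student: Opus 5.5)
The plan is to mirror the self-feeding induction used for \Cref{thm:regression-improved-upper}, replacing each least-squares fact by its BCE analogue from \citet{bateni2026networked}. I would use three ingredients.

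\emph{Monotonicity.} Each agent can copy its parent's logit with coefficient $1$, so $\cL(z_{t+1})\le\cL(z_t)$ along the path.

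\emph{Constant initial gap.} Since $z=0$ is feasible, $\cL(z_1)-\cL(z_\star)\le \cL(0)-\cL(z_\star)\le \log 2$. This is a constant depending on nothing at all.

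\emph{Block lemma.} This is the classification analogue of the regression block bound. If over a block of $M$ consecutive agents that together see all features the loss drops by at most $\varepsilon$, then the excess loss at the end of the block is at most $C_0\sqrt{M\varepsilon}$, with $C_0$ depending only on $B_\star,B_X$. I expect \citet{bateni2026networked} already proves essentially this, since it is what yields their $O(M/\sqrt D)$ bound. I would restate it in this form and invoke it.

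With these three facts, define $E(D)=\sup$ of the excess loss of the last agent over all $M$-covered paths of depth $D$ satisfying the hypotheses. The claim is $E(D)\le CM^2/D$, proved by strong induction on $D$.

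\emph{Base case.} For $D\le c_1M^2$ the claim follows from $E\le\log 2$, once $C\ge c_1^{-1}\log 2$.

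\emph{Inductive step.} For $D> c_1M^2$, take the midpoint agent $A_{\lfloor D/2\rfloor}$. Its prefix is itself an $M$-covered path under the same assumptions, so by induction its excess is at most $2CM^2/D$ (up to rounding). By monotonicity, the total drop over the second half is at most this quantity. The second half contains at least $\lfloor D/(2M)\rfloor\ge D/(4M)$ disjoint blocks of $M$ agents, and each block sees all features by $M$-coverage. By pigeonhole, some block has drop
\[
\varepsilon\le \frac{8CM^3}{D^2}.
\]
The block lemma then bounds the excess at that block's end by
\[
C_0\sqrt{M\cdot 8CM^3/D^2}=C_0\sqrt{8C}\,\frac{M^2}{D},
\]
and monotonicity carries this bound to $A_D$. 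Closing the induction requires $C_0\sqrt{8C}\le C$, that is, $C\ge 8C_0^2$. So we take $C=\max(8C_0^2,\,c_1^{-1}\log 2)$.

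\emph{Main obstacle: verifying the block lemma with constants uniform in the path.} For least squares it comes from projecting the residual onto each block feature via orthogonality. For BCE I would argue as follows. Agent $A_{t+1}$ can choose $z_t+\eta w_\ell x_\ell$ for its own features, and the loss is smooth with curvature at most $1/4$. So a drop of at most $\varepsilon_t$ forces the directional derivative to be small:
\[
|\E[(\sigma(z_t)-Y)x_\ell]|\,|w_\ell|\lesssim \sqrt{\varepsilon_t}\,B_X|w_\ell|.
\]
Since the logits within a block stay $L^2$-close, these gradient bounds transfer to the block's end with bounded error, via Lipschitzness of $\sigma$ and Cauchy--Schwarz using $\E[x_\ell^2]\le B_X^2$. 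Summing over $\ell$ with $\sum|w_\ell|\le B_\star$ gives a Cauchy--Schwarz factor $\sqrt{M\varepsilon}$. Convexity then gives
\[
\cL(z_D)-\cL(z_\star)\le \E[(\sigma(z)-Y)(z-z_\star)].
\]
The $z_\star$ term is controlled by the gradient bounds above, and the $z$ term vanishes by the first-order optimality of the agent's own logit coefficient. The finiteness assumption on the minimizers ensures these first-order conditions hold.
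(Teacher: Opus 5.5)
Your skeleton is the paper's: monotonicity, the $\log 2$ bound on the initial gap, a block lemma of order $\sqrt{M\varepsilon}$, and the halving induction with pigeonhole on the second half. There is one small slip: with base case $D\le c_1M^2$ you need $C\ge c_1\log 2$, not $c_1^{-1}\log 2$.

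The real problem is in your own verification of the block lemma. The step ``the logits within a block stay $L^2$-close'' does not follow from a small loss drop. The drop $\cL(z_{i-1})-\cL(z_i)$ is a $\phi$-Bregman divergence, $\E[\tfrac12\phi''(\xi)(z_{i-1}-z_i)^2]$. Since $\phi''=\sigma(1-\sigma)$ has no positive lower bound, logits can move far where they are saturated while the loss barely changes. Your descent-lemma step uses only the upper curvature bound $\tfrac14$, which cannot give this. Any lower curvature bound would bring in constants depending on $\cD$, not just on $B_\star,B_X$. So the transfer ``logit closeness, then Lipschitzness of $\sigma$'' has nothing to start from.

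The fix is to control the probabilities directly, as the paper does in \Cref{lem:classification-block-residual}. Since $z_{i-1}$ is an input of $A_i$, first-order optimality gives $\E[(\sigma(z_i)-Y)(z_{i-1}-z_i)]=0$. Hence $\cL(z_{i-1})-\cL(z_i)=\E[\phi(z_{i-1})-\phi(z_i)-\sigma(z_i)(z_{i-1}-z_i)]$. Co-coercivity of the convex $\tfrac14$-smooth $\phi$ (equivalently Pinsker, $D(p\|q)\ge2\E[(p-q)^2]$ from \Cref{lem:prelim-bce-kl}) then yields $\|\sigma(z_i)-\sigma(z_{i-1})\|_2^2\le\tfrac12(\cL(z_{i-1})-\cL(z_i))$.

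With this, your gradient bound at the parent logit is unnecessary. If $A_j$ sees $x_\ell$, its own first-order condition gives $\E[x_\ell(\sigma(z_j)-Y)]=0$. Therefore $|\E[x_\ell(\sigma(z_b)-Y)]|=|\E[x_\ell(\sigma(z_b)-\sigma(z_j))]|\le B_X\sqrt{k\varepsilon/2}$, by the triangle inequality and Cauchy--Schwarz over the at most $k$ steps. This is where the $\sqrt{M}$ comes from, not from the sum over $\ell$. Your convexity/comparator step then gives $\cL(z_b)-\cL(z_\star)\le B_\star B_X\sqrt{k\varepsilon/2}$, and the induction goes through as you wrote it.
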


For lower bounds, we use the same examples as in \Cref{thm:kearns-cyclic-lower-bound,thm:long-depth}. To transform the regression label $Y \in \R$ to a label $Y^c \in \{0,1\}$, we use $\Pr(Y^c = 1 \mid x) = \sigma(Y)$. To analyze these examples for classification, one may restate the regression analysis using the classification analogues of the regression facts. We instead take a shortcut. These regression examples contain only Gaussian features and a Gaussian label. We can therefore directly use their already proven bounds for classification once we prove the transfer theorem below. In the special case where a single agent sees all the features that generate $Y$, the logistic and least-squares fits are known to align~\citep{brillinger1982generalized,erdogdu2016scaled}.

\begin{restatable}[Gaussian transfer to classification]{theorem}{classificationTransferRestatement}
\label{thm:classification-sequential-transfer}
Let $x_1, \dots, x_d$ be centered jointly Gaussian variables and
$G = w^T x$ be a linear combination of them. Let $Y\in\{0,1\}$ satisfy
$\Pr(Y=1\mid x) = \sigma(G)$. Assume that $\operatorname{Var}(G) \le B$.

Let $z_t$ be the logit predictor of agent $A_t$. Assume the same network is
used to predict $G$ with least-squares loss. Let $f_t$ be the predictor of
$A_t$, and let $E_t$ be the excess error in this network, i.e.,
$E_t = \norm{G - f_t}_2^2$.

Then, for all $t$, $z_t = c_t f_t$ for some $c_t \in [0,1]$, with $c_t > 0$
whenever $f_t \neq 0$.
Furthermore, for a constant $\kappa_B>0$ depending only on $B$,
\begin{equation*}
  \kappa_B E_t \le \cL(z_t)-\cL(G)\le\frac18 E_t.
\end{equation*}
\end{restatable}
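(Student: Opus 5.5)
The plan is to prove the proportionality $z_t=c_tf_t$ by induction along the topological order. The tool is the Gaussian (Stein/Brillinger) alignment fact: if $U$ is a vector of centered jointly Gaussian variables, $\Pr(Y=1\mid x)=\sigma(G)$, and we fit the BCE-optimal linear logit $\beta^TU$, then the minimizer is a nonnegative multiple of the least-squares projection of $G$ onto $\operatorname{span}(U)$.

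First I establish this fact. Write $G=\Pi_U G+R$, where $R$ is independent of $U$ by joint Gaussianity. The first-order condition for BCE is
\[
\E[(\sigma(G)-\sigma(\beta^TU))U]=0 .
\]
Restrict to $\beta$ in the direction of $g:=\Pi_U G$, say $\beta^TU=cg$. By Stein's lemma, $\E[h(G,g)U]$ is proportional to $\operatorname{Cov}(U,g)$ after decomposing $U=\alpha g+U^\perp$ with $U^\perp$ independent of $(g,R)$. Hence the stationarity equations reduce to a single scalar equation
\[
\E[(\sigma(g+R)-\sigma(cg))\,g]=0 .
\]
At $c=0$ the left side is positive when $g\neq0$, and as $c\to\infty$ it becomes negative, or more simply the scalar function is monotone in $c$. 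So a root $c\ge0$ exists, with $c>0$ when $g\neq0$. Strict convexity of BCE on the span (full rank after removing redundancies) makes it the unique minimizer up to equivalence. To get $c\le1$, compare with $c=1$: by symmetry of $R$ and concavity of $\sigma$ on the positive side, $\E[(\sigma(g+R)-\sigma(g))g]\le0$.

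For the induction, agent $A_t$ sees $x_{S_t}$ and parent logits $z_j=c_jf_j$. If every $c_j>0$, the span of its inputs equals the span of $x_{S_t}$ and the parent least-squares predictions $f_j$. If some $c_j=0$, then $f_j=0$ by the inductive hypothesis, so that input contributes nothing either way. The inputs are therefore jointly Gaussian, and both networks project onto the same space. The lemma gives $z_t=c_t\Pi G=c_tf_t$. This step is straightforward once the lemma holds.

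For the loss sandwich, write $f=f_t$ and $E=\norm{G-f}^2=\operatorname{Var}(R)$. For the upper bound, $\cL(z_t)\le\cL(f)$ by optimality. Moreover,
\[
\cL(f)-\cL(G)=\E[\mathrm{KL}(\sigma(G)\,\|\,\sigma(f))]\le\tfrac18\,\E[(G-f)^2],
\]
since the Bernoulli KL divergence in logit coordinates has second derivative $\sigma'\le1/4$. For the lower bound, I use a reverse KL estimate: $\mathrm{KL}\ge\tfrac12\min\sigma'\,(G-z)^2$ over the relevant range. Here $\E[(G-z_t)^2]\ge E$, because $z_t$ lies in the span and $R$ is orthogonal to it. Since $\sigma'$ is not bounded below globally, I restrict to the event where $|G|$ and $|z_t|$ are at most a constant. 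Because $\operatorname{Var}(G)\le B$ and $\operatorname{Var}(z_t)\le B$ (as $c_t\le1$ and $f$ is a projection), this event has probability bounded below. On it, I lower-bound $\E[(G-z_t)^2\mathbf 1\{\cdot\}]$ by a constant times $E$, using Gaussian hypercontractivity (a Paley–Zygmund-type bound for Gaussian quadratic forms) to keep a constant fraction of the second moment. This yields $\kappa_B$.

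\textbf{Main obstacle.} The hardest step is the lower bound: making the truncation argument uniform in $E$. Small $E$ is the delicate case, and it needs the independence of $R$ from the event.
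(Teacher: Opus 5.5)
Your argument is correct. The induction over the topological order (the spans coincide because $c_j=0$ forces $f_j=0$) and the loss sandwich match the paper: Taylor with $\phi''\le 1/4$ for the upper bound, and for the lower bound a truncation to a box where $\sigma'$ is bounded below, with the Gaussian fourth moment of $G-z_t$ keeping a fixed fraction of $\E[(G-z_t)^2]$.

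You differ in the single-agent alignment lemma. The paper takes an attained minimizer and writes $z=cf+r$ with $r\perp f$ in the span. It notes $r\perp G$, so $r$ is independent of $(G,f,Y)$, and conditional Jensen plus strict convexity of $\phi$ force $r=0$. It then locates $c$ from the signs of $F'(0)$ and $F'(1)$ along the line. You instead propose the candidate $cg$ and check that the whole gradient vanishes, by splitting each input as $\alpha_i g+U_i^\perp$ with $U_i^\perp$ independent of $(g,R)$. Convexity and strict convexity then finish. This is the classical Brillinger-style verification, and it produces the minimizer rather than assuming it is attained. The paper's Jensen argument instead shows directly that any minimizer has no component orthogonal to $f$, without computing the gradient in those directions. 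Locating $c$ is the same step in both, since your scalar equation is exactly $F'(c)=0$.

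Some points to tighten.

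\begin{itemize}
\item The inequality $\E_R[\sigma(s+R)]\le\sigma(s)$ for $s>0$ does not follow from concavity of $\sigma$ on $[0,\infty)$ alone, because $s\pm R$ can be negative. You also need $\sigma(-a)=1-\sigma(a)$, so that $\sigma'$ is even and decreasing in $|a|$, giving $\int_0^n\sigma'(s+u)\,du\le\int_0^n\sigma'(s-u)\,du$. Alternatively, use the paper's identity $\tfrac12(\sigma(s+n)+\sigma(s-n))=\tfrac12+\sinh s/(2(\cosh s+\cosh n))$.
\item In the truncation, a box whose probability is merely bounded below is not enough. Cauchy--Schwarz gives $\E[(G-z_t)^2\mathbf 1_{A^c}]\le\sqrt{3\Pr(A^c)}\,\E[(G-z_t)^2]$, which helps only once $\Pr(A^c)<1/3$. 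So choose the radius large via Chebyshev; the paper's choice $R_B^2\ge 48B$ keeps half the second moment.
\item Your stated main obstacle is a misdiagnosis. The bound above is homogeneous in $\E[(G-z_t)^2]$, so it is automatically uniform in $E$. It needs no independence of $R$ from the event, and that independence in fact fails for $\{|G|\le R_B\}$.
\item The case $g=0$, where $\alpha_i$ is undefined, needs a separate one-line check: $z=0$ is stationary because $\E[(\sigma(R)-\tfrac12)U]=0$.
\end{itemize}
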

\begin{proof}[Proof Sketch]
We start from a simpler case. We consider a single agent with inputs $u_1, \ldots, u_m$, and let $z$ be its BCE predictor of $Y$. Let $f$ be the least-squares predictor of $G$ on the same inputs. We show that $z = cf$ for some $c \in [0,1]$, with $c > 0$ whenever $f \neq 0$. This means that the two predictors are in the same direction. 

To prove this, we write $z=cf+r$, where $r$ lies in the span of the inputs and is orthogonal to $f$. The least-squares residual $G-f$ is orthogonal to that whole span, so $r$ is orthogonal to $G$, and orthogonal jointly Gaussian variables are independent, so $r$ is independent of $G$, $f$, and $Y$. By Jensen's inequality and the strict convexity of the BCE loss, removing such an independent component can only decrease the loss, so the minimizer lies on the line spanned by $f$.

All that remains is to locate $c$. Restricted to the line, the loss is convex in $c$, so it suffices to check the derivative at the endpoints: when $f\ne0$, it is negative at $c=0$, since $f$ is positively correlated with $Y$, and nonnegative at $c=1$. Hence $c\in(0,1]$.

Then consider a classification network with predictors $z_1, \ldots, z_N$ and its corresponding regression network with predictors $f_1, \ldots, f_N$. Having $z = cf$ for a single agent, we induct on the network according to its topological ordering, to show that the direction stays the same at all agents between the two networks. The first agent receives no predictions, so by the above, $z_1 = c_1 f_1$. For a later agent $A_i$, the induction hypothesis gives that its inputs span the same directions in the two networks, so $z_i = c_i f_i$.

We also show that there exists a constant $\kappa_B$ depending only on $B$ such that, whenever $\E[z^2] \le B$ and $\E[G^2] \le B$,
\begin{equation*}
  \kappa_B \E[(z - G)^2] \le \cL(z) - \cL(G) \le \frac18 \E[(z - G)^2].
\end{equation*}

Combining the fact that the two predictors point in the same direction with the above inequality, we derive the theorem's inequality directly.
\end{proof}

Applying this transfer theorem to
\Cref{thm:kearns-cyclic-lower-bound,thm:long-depth} gives the same lower-bound
picture for classification. The shallow constant lower bound is again a
corollary of the depth-dependent construction.

\begin{restatable}[Classification cyclic example lower bound]{theorem}{classificationCyclicRestatement}
\label{thm:classification-cyclic}
For every $k\ge2$, there is a $k$-covered Gaussian classification path instance
with true logit $G$ and Bernoulli labels with mean $\sigma(G)$ such that, at the
end of pass $1\le p\le k-1$,
\begin{equation*}
  \cL(z_{pk})-\cL(G)
  \ge
  \frac{\kappa_1}{48\sqrt p}.
\end{equation*}
Equivalently, if $D=pk$, then
\begin{equation*}
  \cL(z_D)-\cL(G)
  \ge
  \frac{\kappa_1}{48}\sqrt{\frac{k}{D}}.
\end{equation*}
\end{restatable}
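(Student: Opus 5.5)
The plan is to apply the Gaussian transfer theorem (\Cref{thm:classification-sequential-transfer}) to the cyclic example of \citet{kearns2026networked} and then invoke \Cref{thm:kearns-cyclic-lower-bound}.

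\emph{Setup.} Take independent standard Gaussians $z_1,\ldots,z_k$ and features $X_1=z_1$, $X_i=z_i-z_{i-1}$ for $i\ge 2$. These are centered and jointly Gaussian. Set the true logit to $G=z_k=\sum_i X_i$. This is a linear combination of the features with $\operatorname{Var}(G)=1$, so the transfer theorem applies with $B=1$. Draw $Y\in\{0,1\}$ with $\Pr(Y=1\mid x)=\sigma(G)$. The path is the same as in the regression example: agent $t$ sees feature $X_{((t-1)\bmod k)+1}$ and its predecessor's logit. Each block of $k$ consecutive agents sees every feature, so the path is $k$-covered. Since $G$ is linear in the features, $z_\star=G$ is the global BCE minimizer. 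It is the Bayes logit, so no logit can beat it, and it is attainable by a single agent that sees all features.

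\emph{Transfer.} The transfer theorem compares the logistic protocol with the least-squares network that predicts $G$ on the same DAG, with the same feature subsets and the same passing of predictions. That least-squares network is exactly the regression cyclic example with label $Y=z_k$. So its excess error $E_t=\E[(G-f_t)^2]$ is the quantity bounded in \Cref{thm:kearns-cyclic-lower-bound}. The global least-squares predictor there is exact, so $\MSE(f^\star)=0$ and the excess error equals the raw error $E_t$. The left inequality of the transfer theorem, $\kappa_1 E_t\le \cL(z_t)-\cL(G)$, taken at $t=pk$, then gives
\begin{equation*}
\cL(z_{pk})-\cL(G)\ge \kappa_1 E_{pk}\ge \frac{\kappa_1}{48\sqrt p},
\end{equation*}
for every $1\le p\le k-1$. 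Substituting $p=D/k$ gives the equivalent form $\frac{\kappa_1}{48}\sqrt{k/D}$.

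\emph{Main obstacle.} The main check is that the transfer theorem's hypotheses hold exactly. The least-squares network must target $G$ itself and must use the same DAG and feature subsets. The variance bound $B=1$ must be the one that fixes the constant $\kappa_1$. The transfer theorem takes care of the logits versus predictions issue: agents pass $z_t=c_tf_t$, which spans the same direction as $f_t$. The theorem's constant $\kappa_B$ comes from the condition that $\E[z^2]\le B$ and $\E[G^2]\le B$. I expect $\E[z_t^2]\le 1$ follows from $c_t\in[0,1]$ and $\E[f_t^2]\le\E[G^2]=1$, because least-squares projections do not increase the second moment. If the theorem as stated already absorbs this, nothing more is needed.
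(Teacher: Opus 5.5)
Your proposal is correct and follows the same route as the paper. The paper applies \Cref{thm:classification-sequential-transfer} with $\operatorname{Var}(G)=1$ to the cyclic instance, where the least-squares network targeting $G=z_k$ is exactly the regression cyclic example, and then uses $E_{pk}\ge 1/(48\sqrt p)$ from \Cref{thm:kearns-cyclic-lower-bound}; the second-moment check you flag, $\E[z_t^2]\le\E[G^2]\le B$, is already done inside the transfer theorem's proof, so nothing more is needed.
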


\begin{restatable}[Classification depth-dependent lower bound]{theorem}{classificationLongDepthRestatement}
\label{thm:classification-long-depth}
For every $M\ge8$ and $D\ge M^2$, there is an $M$-covered Gaussian
classification path instance of depth $D$ with true logit $G$, Bernoulli labels
with mean $\sigma(G)$, and $G$ an exact linear combination of the features whose coefficient
$\ell_1$ norm is at most $3$, with $\E[x_\ell^2]\le2$ for every feature, and
with
\begin{equation*}
  \cL(z_D)-\cL(G)
  \ge
  c_{\mathrm{ld}}\frac{M^2}{D}
\end{equation*}
for a universal constant $c_{\mathrm{ld}}>0$.
\end{restatable}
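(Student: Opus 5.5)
We plan to obtain \Cref{thm:classification-long-depth} by composing the regression construction of \Cref{thm:long-depth} with the transfer result \Cref{thm:classification-sequential-transfer}, so that no new analysis of the path dynamics is needed.

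\textbf{Step 1: the instance.} Take the instance from \Cref{thm:long-depth} for the given $M\ge8$ and $D\ge M^2$. It has features $X_\star, X_0,\dots,X_{M-1}$, built from independent standard Gaussians $X_\star, Z_0, Z_1, Z_2$, and the regression label $Y = X_\star+\rho Z_0$. We use the same features and the same cyclic feature assignment along the path. We set the true logit $G := X_\star + \rho Z_0$ and draw the classification label $Y^c\in\{0,1\}$ with $\Pr(Y^c=1\mid x)=\sigma(G)$.

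Since the global predictor is exact, $G$ is an exact linear combination of the features, with coefficient $\ell_1$ norm at most $3$. Every feature has $\E[x_\ell^2]\le 2$. All features are centered and jointly Gaussian, because they are linear in independent standard Gaussians. So the hypotheses of \Cref{thm:classification-sequential-transfer} hold with $B := \sup \operatorname{Var}(G)$. Here $\operatorname{Var}(G) = 1+\rho^2$. Because $\rho^2=\Theta(M^2/D)\le O(1)$ when $D\ge M^2$, $B$ is bounded by a universal constant. If the construction's explicit choice of $\rho$ only gives $\rho\le 1$, that is enough, and then $B=2$. Otherwise, we bound $\operatorname{Var}(G)$ by $\|w\|_1^2\max_\ell \E[x_\ell^2]\le 9\cdot 2=18$ using Cauchy--Schwarz, which avoids depending on the internal constants.

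\textbf{Step 2: the transfer.} Run the same path as a least-squares network predicting $G$. This is exactly the regression instance of \Cref{thm:long-depth}, with label $Y=G$. Because the global predictor is exact, $\MSE(f^\star)=0$, so the regression excess error is $E_D=\E[(G-f_D)^2]\ge \frac{1}{1280\pi^2}\frac{M^2}{D}$. The lower inequality of \Cref{thm:classification-sequential-transfer} then gives
\begin{equation*}
\cL(z_D)-\cL(G)\ge \kappa_B E_D \ge \frac{\kappa_B}{1280\pi^2}\frac{M^2}{D}.
\end{equation*}
We take $c_{\mathrm{ld}}:=\kappa_B/(1280\pi^2)$, which is universal because $B$ is a universal constant.

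\textbf{Main obstacle.} The only delicate point is making sure that $B$, and hence $\kappa_B$, does not depend on $M$ or $D$. The Cauchy--Schwarz bound above settles this uniformly. We also need to check that the transfer theorem's requirement $\E[z_t^2]\le B$, used inside its proof, is already handled there. It is, since $z_t=c_tf_t$ with $c_t\in[0,1]$ and $\E[f_t^2]\le\E[G^2]$ for a least-squares projection. With these two checks, the remaining verification is routine bookkeeping.
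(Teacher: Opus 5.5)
Your proposal is correct and is essentially the paper's own proof. The paper also reuses the instance of \Cref{thm:long-depth} with $G=X_\star+\rho Z_0$ and notes $\operatorname{Var}(G)=1+\rho^2\le 2$ (there $\rho^2\le 1/1280$). It then applies \Cref{thm:classification-sequential-transfer} to get $c_{\mathrm{ld}}=\kappa_2/(1280\pi^2)$, so your fallback bound $\operatorname{Var}(G)\le 18$ is valid but not needed.
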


\begin{restatable}[Classification constant loss before quadratic depth]{corollary}{classificationConstantRestatement}
\label{cor:classification-constant-before-quadratic}
There is a universal constant $c_{\mathrm{quad}}>0$ such that, for every
$M\ge8$, there is an $M$-covered Gaussian classification path instance of depth
$M^2$ with true logit $G$ an exact linear combination of the features whose coefficient $\ell_1$ norm is
at most $3$, with $\E[x_\ell^2]\le2$ for every feature, and with Bernoulli
labels with mean $\sigma(G)$ such that, for every $1\le t\le M^2$,
\begin{equation*}
  \cL(z_t)-\cL(G)\ge c_{\mathrm{quad}}.
\end{equation*}
\end{restatable}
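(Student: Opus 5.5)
We prove the corollary by applying \Cref{thm:classification-long-depth} at depth $D=M^2$ and then using monotonicity of the BCE excess loss along the path. This mirrors how \Cref{cor:constant-before-quadratic} follows from \Cref{thm:long-depth}.

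Fix $M\ge8$ and take the instance of \Cref{thm:classification-long-depth} with $D=M^2$. It is an $M$-covered Gaussian classification path of depth $M^2$. Its true logit $G$ is an exact linear combination of the features with coefficient $\ell_1$ norm at most $3$. Every feature satisfies $\E[x_\ell^2]\le 2$, and the labels are Bernoulli with mean $\sigma(G)$. The theorem then gives
\begin{equation*}
  \cL(z_{M^2})-\cL(G)\ge c_{\mathrm{ld}}\frac{M^2}{M^2}=c_{\mathrm{ld}}.
\end{equation*}
We set $c_{\mathrm{quad}}=c_{\mathrm{ld}}$.

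It remains to show $\cL(z_t)\ge\cL(z_{M^2})$ for every $1\le t\le M^2$. Along a path, agent $A_{t+1}$ receives the logit $z_t$ of its parent $A_t$ as an input. The choice that puts coefficient $1$ on $z_t$ and $0$ on all other inputs is therefore feasible for $A_{t+1}$. Since $A_{t+1}$ minimizes the BCE loss over all linear combinations of its inputs, $\cL(z_{t+1})\le\cL(z_t)$. By induction, $\cL(z_t)$ is non-increasing in $t$. Subtracting $\cL(G)$ gives $\cL(z_t)-\cL(G)\ge\cL(z_{M^2})-\cL(G)\ge c_{\mathrm{quad}}$ for all $t\le M^2$.

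There is one technical point. The instance of \Cref{thm:classification-long-depth} is built via the transfer \Cref{thm:classification-sequential-transfer}, which gives $z_t=c_tf_t$. So the minimizers are attained, and the monotonicity comparison is between well-defined losses. An alternative route gives the same conclusion: the transfer theorem yields $\cL(z_t)-\cL(G)\ge\kappa_B E_t$, and \Cref{cor:constant-before-quadratic} gives $E_t\ge 1/(1280\pi^2)$. Here $B$ is a universal bound on $\operatorname{Var}(G)$, since $\ell_1$ norm at most $3$ and second moments at most $2$ give $\operatorname{Var}(G)\le 18$. This route yields $c_{\mathrm{quad}}=\kappa_{18}/(1280\pi^2)$.
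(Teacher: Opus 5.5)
Your proof is correct and matches the paper's: apply \Cref{thm:classification-long-depth} with $D=M^2$, set $c_{\mathrm{quad}}=c_{\mathrm{ld}}$, and use that the BCE loss is non-increasing along the path. The only difference is that the paper cites \Cref{lem:prelim-bce-kl} for monotonicity, while you derive it directly from the feasibility of passing the parent logit through. Your alternative route via \Cref{cor:constant-before-quadratic} and the transfer theorem also works, though the instance actually has $\operatorname{Var}(G)=1+\rho^2\le2$, so $\kappa_2$ suffices in place of $\kappa_{18}$.
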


Finally, as in the regression setting, the classification lower bound cannot
come from one fixed finite distribution, unless the distribution changes with the
target depth: the excess loss again contracts geometrically along every covered
path. The proof parallels the regression one, with the probability residual
$\sigma(z_t)-\sigma(z_\star)$ in place of $f^\star-f_t$. The one difference is
that the regression proof turned the excess error into a squared distance, and
cross-entropy is not a squared norm; but every logit on the path has loss at most
$\cL(0)=\log2$, and on this bounded set the loss is strongly convex, which
again makes the excess loss comparable to the squared distance between
logits.

\begin{restatable}[No fixed classification distribution for all depths]{theorem}{classificationNoFixedRestatement}
\label{thm:no-fixed-classification-distribution}
Fix $M$ and a distribution $\cD$ on $(x_1,\ldots,x_d,Y)$ with $d$ finite,
bounded second moments for the raw features, and $Y\in\{0,1\}$. Assume the BCE
minimum over the raw-feature span is attained, and call a minimizer
$z_\star$. For every $c>0$ there is a $D_c$ such that for any DAG of agents on
$\cD$, if
$A_1\to\cdots\to A_D$ is a path whose every $M$ consecutive raw-feature spans
sum to the full raw-feature span and $D\ge D_c$, then
\begin{equation*}
  \cL(z_D)-\cL(z_\star)
  <
  c\frac{M^2}{D}.
\end{equation*}
Thus one fixed finite classification distribution with an attained minimizer
cannot witness an $M^2/D$ lower bound for all depths.
\end{restatable}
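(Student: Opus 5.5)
The plan is to first prove a classification analogue of \Cref{thm:fixed-distribution-geometric}: there is $q\in[0,1)$, depending only on $\cD$ and $M$, with $\cL(z_{s+M})-\cL(z_\star)\le q\,(\cL(z_s)-\cL(z_\star))$ for every $1\le s\le D-M$. Iterating gives $\cL(z_D)-\cL(z_\star)\le \log 2\cdot q^{\lfloor (D-1)/M\rfloor}$, since every agent can choose the zero logit and so has loss at most $\cL(0)=\log2$. Because $q^{(D-1)/M}$ decays geometrically while $cM^2/D$ decays only polynomially, there is a $D_c$ beyond which $\log 2\cdot q^{\lfloor(D-1)/M\rfloor}<cM^2/D$. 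This $D_c$ depends only on $c$, $M$ and $q$, hence only on $\cD$ and $M$, which gives the theorem.

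For the contraction, fix a block $A_{s+1},\ldots,A_{s+M}$ whose raw-feature spans sum to the full span $V$. Let $\Delta_t=\cL(z_t)-\cL(z_\star)$ and use the following standard facts. The losses are monotone along the path, since $A_{t+1}$ can copy $z_t$. Every $z_t$ lies in the sublevel set $K=\{z\in V:\cL(z)\le\log2\}$. I also need $z_t\in V$, which holds inductively because parent logits are linear in raw features. Since the minimizer is attained and $d$ is finite, $K$ is compact modulo directions on which $\cL$ is flat. So I first quotient $V$ by the subspace of directions $v$ with $v=0$ a.s., or more carefully by directions where $\cL$ is constant. On the quotient, $\cL$ is strongly convex and smooth on $K$ with constants $\mu,L$. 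This gives $\mu\norm{z-z_\star}^2\lesssim \cL(z)-\cL(z_\star)\lesssim L\norm{z-z_\star}^2$, where the norm is $\E[(\cdot)^2]^{1/2}$.

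Next I bound the progress of agent $A_{t+1}$, which sees $z_t$ (its parent on the path) and raw features $x_{S_{t+1}}$. Its loss is at most $\min_{v\in\operatorname{span}(x_{S_{t+1}})}\cL(z_t+v)$. By smoothness and a one-step descent along the projection of $\nabla\cL(z_t)=\E[(\sigma(z_t)-Y)\,\cdot\,]$ onto $\operatorname{span}(x_{S_{t+1}})$, the drop $\Delta_t-\Delta_{t+1}$ is at least $\frac{1}{2L}\norm{P_{t+1}g_t}^2$. Here $g_t$ is the gradient, represented as an element of $V$ via the Gram matrix, and $P_{t+1}$ is the orthogonal projection onto that span. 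Summing over the block, if the total drop $\Delta_s-\Delta_{s+M}$ is at most $\eta\Delta_s$, then each $\norm{P_{t+1}g_t}^2\le 2L\eta\Delta_s$. Also the gradients move little: $\norm{g_t-g_s}\le L\norm{z_t-z_s}$, and $\norm{z_t-z_s}^2\lesssim(\Delta_s-\Delta_t)/\mu$ by strong convexity applied to $z_t$ against $z_s$. The latter step needs care, since $z_t$ is not the minimizer over a set containing $z_s$. I will instead use the sequence of nested-improvement inequalities $\cL(z_s)-\cL(z_t)\ge\frac{\mu}{2}\norm{z_t-z_s}^2$, which hold because $z_t$ minimizes over a subspace containing $z_s$'s direction chain. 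Then $\norm{P_{t+1}g_s}\lesssim\sqrt{\eta\Delta_s}$ for all $t$ in the block. Since the spans sum to $V$, there is a constant $\gamma>0$, depending only on the tuple of subspaces and the Gram matrix, with $\sum_t\norm{P_{t+1}g}^2\ge\gamma\norm{g}^2$ for all $g\in V$. Hence $\norm{g_s}^2\lesssim M\eta\Delta_s/\gamma$. Finally, strong convexity gives $\Delta_s\le\frac{1}{2\mu}\norm{g_s}^2$, so $\eta$ is bounded below by a positive constant. This yields $q=1-\eta_0<1$ for that tuple. There are finitely many tuples of subsets of $[d]$, so taking the maximum over them gives a uniform $q$.

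I expect the main obstacle to be the strong convexity/compactness step. BCE is only locally strongly convex, and the relevant set must be bounded in $V$ (modulo flat directions) uniformly for all $z_t$. I will argue this using the attained minimizer: a recession direction $v$ of the sublevel set would have to satisfy $\cL(z_\star+\lambda v)\le\log 2$ for all $\lambda$, which forces $v$ to be a direction along which $\cL$ is constant. Quotienting these directions out, $K$ is compact and $\sigma'$ is bounded below on it by a positive constant in the $L^2$-averaged sense. A secondary subtlety is that the lemma bounding $\norm{z_t-z_s}$ must use only the path parent and not other DAG inputs; extra parents only lower losses, so the inequalities remain valid.
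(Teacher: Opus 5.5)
Your proposal is correct. Its skeleton matches the paper's: a contraction factor for each block that depends only on $\cD$ and $(S_{s+1},\ldots,S_{s+M})$, a maximum over the finitely many tuples, $\Delta_1\le\log 2$, and geometric decay beating $cM^2/D$.

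The difference lies in how the block contraction is proved, even though the objects coincide. Since $z_\star$ is optimal over your $V$ (the paper's $H$), $g_t=P_H(\sigma(z_t)-Y)=P_H(\sigma(z_t)-\sigma(z_\star))$. So your $P_{t+1}g_t$ is $P_{V_{t+1}}(r_t)$ for the probability residual $r_t$ of \Cref{lem:classification-block-contraction}.

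The paper controls this quantity through $P_{V_{t+1}}(r_{t+1})=0$ and the KL bound of \Cref{lem:prelim-bce-kl}, $\norm{r_t-r_{t+1}}_2^2\le\frac12(\Delta_t-\Delta_{t+1})$. The same bound controls the drift $\norm{r_t-r_s}_2$ directly in probability space. This lets the paper reuse the regression \Cref{lem:block-residual-coverage} verbatim and invoke strong convexity only once, at the end: $\Delta_s\le\frac{1}{8\mu^2}\norm{P_H(r_s)}_2^2$ via \Cref{lem:classification-strong-convexity}.

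You instead run a block-coordinate-descent argument under a PL condition:
\begin{itemize}
\item the descent lemma gives per-agent progress;
\item strong convexity on $K$ plus the $\frac14$-Lipschitz sigmoid gives the drift bound;
\item $\Delta_s\le\frac{1}{2\mu}\norm{g_s}_2^2$ closes the argument.
\end{itemize}
This is self-contained and avoids the KL identity, at the price of using strong convexity twice. The smoothness you need is global ($\phi''\le\frac14$), which matters because the trial point $z_t-\frac1L P_{t+1}g_t$ need not lie in $K$.

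Your recession-direction proof that $K$ is compact is a valid and shorter alternative to the paper's linear lower bound $\cL(z)\ge\rho_0\norm{z}_2$. In fact, with an attained minimizer there are no nonzero flat directions: $\lambda\mapsto\cL(z+\lambda v)$ is strictly convex unless $v=0$ almost surely. So your quotient is only by almost-surely-null combinations.

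One step needs tightening. The inequality $\cL(z_s)-\cL(z_t)\ge\frac{\mu}{2}\norm{z_t-z_s}_2^2$ is justified only for consecutive agents. The reason is that $z_t$ minimizes over a space containing $z_{t-1}$, but generally not $z_s$ for $s<t-1$. For far-apart agents it need not hold: predictors can drift substantially while the loss barely drops, which is exactly the phenomenon exploited in \Cref{sec:depth-dependent-obstruction}. Two repairs work:
\begin{itemize}
\item Chain the consecutive inequalities with Cauchy--Schwarz to get $\norm{z_t-z_s}_2^2\le\frac{2(t-s)}{\mu}(\Delta_s-\Delta_t)$. The extra factor of at most $M$ is absorbed by your argument.
\item Bound $\norm{g_t-g_s}_2\le\norm{\sigma(z_t)-\sigma(z_s)}_2$ using the same KL telescoping as the paper, which needs no $\mu$ at all.
\end{itemize}
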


\subsection{Additional Related Work}

The model we study is due to \citet{kearns2026networked}: the DAG formulation,
the $M$-coverage condition, and the first depth-based regression bounds are all
theirs, and \citet{bateni2026networked} adapted the protocol to binary
classification with agents passing logits rather than predictions. Working in
the same model without changes, we give matching upper and lower bounds of order
$M^2/D$ on covered paths, and we show that no single distribution can realize the
lower bound at every depth at once.

\paragraph{Opinion dynamics and social learning.}
The older backdrop is social learning in networks. In the model of
\citet{degroot1974reaching}, an agent starts with a belief about a common state
and at each step resets it to a weighted average of its neighbors' beliefs, a
fixed rule with no inference behind it, and \citet{golub2010naive} give
conditions under which the resulting consensus is correct. In sequential
variants, agents instead act one at a time on their own signal and the actions
they have seen, and \citet{banerjee1992simple} and \citet{bikhchandani1992theory}
show how this leads to herding, where a few early movers fix the outcome and
later signals go unused. \citet{gale2003bayesian} analyze Bayesian agents
learning from their neighbors' actions, \citet{mossel2016efficient} study agents
who exchange Gaussian estimates over many rounds, and \citet{mossel2018social}
characterize when such exchange reaches an equilibrium that aggregates everyone's
information. Throughout this line there is one hidden state observed through
noise, and the question is whether it is recovered in the limit. Ours is a
different problem. There is no single hidden state: the label comes from a joint
distribution over many correlated features, and we compare against the best
linear predictor that sees all of them. The path is used only once, so what
limits the final agent is its depth, not the number of rounds.

\paragraph{Reusing predictions as features.}
Predictions can also be fed forward as features. Stacked generalization does
exactly this, training a second model on the first model's outputs
\citep{wolpert1992stacked}. Multicalibration and multiaccuracy
\citep{hebertjohnson2018multicalibration,kim2019multiaccuracy}, and later
outcome indistinguishability \citep{gopalan2023loss}, ask when one predictor can
stand in for many losses or downstream decisions at once, and
\citet{noarov2026optimal} show a deterministic predictor already suffices, at
optimal sample complexity. In all of this a downstream learner still sees the
whole prediction next to the raw features. Our agents get far less to work with,
just their own features and one number from upstream, and the whole question is
how much of the label survives that number being recomputed at every hop.

\paragraph{Agreement protocols.}
Prediction-passing is also the medium of the agreement literature. Aumann's
theorem \citep{aumann1976agreeing} already says that two Bayesians who keep
trading posteriors cannot end up disagreeing, and \citet{collina2025tractable}
and \citet{collina2026collaborative} reach the same conclusion without full
Bayesian agents, needing only calibration conditions a learner can enforce,
after which a few exchanges bring both parties to a shared prediction that pools
what each of them knew. \citet{eaton2026model} come at it from the other side and
bound how far two independently trained models can disagree in the first place.
Our upper bounds lean on multiaccuracy and self-orthogonality, cousins of those
calibration conditions that ordinary least squares happens to satisfy for free.
The one structural difference is traffic. Agreement runs both ways and many
times over, each side updating against the other, whereas on our path a
prediction is made once and passed on, and no agent ever answers back.

\paragraph{Vertical federated learning and distributed optimization.}
Splitting features across parties is the setting of vertical federated learning
and split learning. \citet{yang2019federated} survey the area. SecureBoost
\citep{cheng2021secureboost} and split learning \citep{vepakomma2018split} train
a single shared model over such a split without exposing raw features, by
exchanging gradients, activations, or masked statistics over many rounds. The
related literature on gossip and distributed optimization
\citep{boyd2006randomized,zhang2013communication,chen2017communication} studies
how the communication graph slows a joint optimization. In our protocol there is
no shared model to train. Each agent fits its own model once and passes on a
single prediction or logit, and the error reflects how much is lost by reducing
each agent's output to that single number at every step.

\section{Preliminaries}
\label{sec:preliminaries}

In this section we restate the model formally and introduce the necessary notation. We write $[k]$ for the set $\{1,2,\dots,k\}$. All expectations are over the distribution $\mathcal{D}$.

\subsection{Model, Regression Protocol, and MSE Benchmark}

We start with the regression model of \citet{kearns2026networked}. There
are agents $\cA=\{A_1,\ldots,A_N\}$ in a directed acyclic graph
$G=(\cA,E)$. An edge $A_j\to A_i$ means that $A_i$ receives the prediction made
by $A_j$. We write $\Pa(i) = \{A_j : (A_j\to A_i) \in E\}$ for the parents of $A_i$. Agents learn in a
topological order.

The population distribution $\mathcal{D}$ is over feature-label pairs $(x, Y)$, where $x\in\R^d$ and $Y\in\R$. Agent $A_i$ sees only the
coordinates $x_{S_i}$, for $S_i\subseteq[d]$. Agent $A_i$ also sees the parent predictions $f_j(x)$ for
$A_j\in\Pa(i)$ and chooses the best linear prediction of $Y$ from these inputs.
Thus
\begin{equation}
  \label{eq:regression-protocol}
  \begin{aligned}
    f_i(x)
    &=
    w_i^\top x_{S_i}
    +
    \sum_{A_j\in\Pa(i)}v_{ij}f_j(x),\\
    \{w_i,v_{ij}\}
    &\in
    \arg\min
    \E[(f_i(x)-Y)^2].
  \end{aligned}
\end{equation}

Following \citet[Definitions~2.1 and~2.2]{kearns2026networked}, the error
of a predictor is $\MSE(f)=\E[(f(x)-Y)^2]$, and
$\|Z\|_2=\sqrt{\E[Z^2]}$ for a random variable $Z$. The
global predictor is $f^\star(x)=(w^\star)^\top x$, where
$w^\star$ minimizes $\E[((w^\top x)-Y)^2]$ over all $w\in\R^d$.

On a path of depth $D$, we write the agents as $A_1\to A_2\to\cdots\to A_D$. Each agent can
always keep the incoming prediction, so the error along the path is non-increasing, as shown below in \Cref{lem:prelim-mse-decomposition}. For this path, we use the following definition of \citet{kearns2026networked}.

\begin{definition}[$M$-covered path]
\label{def:m-coverage}
A path $A_1\to A_2\to\cdots\to A_D$ is $M$-covered if every block of $M$ consecutive agents collectively sees all features in $[d]$. In other words, $\bigcup_{j=i}^{i+M-1} S_{j}=[d]$ for every $i \le D-M+1$.
\end{definition}

\subsection{Regression Facts Used Later}

We import the following two lemmas without proof from \citet{kearns2026networked}.

\begin{lemma}[\citet{kearns2026networked}, Lemma~3.1 and Corollary~3.2]
\label{lem:prelim-ls-orthogonality}
If $f$ is the least-squares predictor from inputs $u_1,\ldots,u_m$, then
$\E[u_r(f-Y)]=0$ for each input $u_r$. Since $f$ is a linear combination of its
inputs, also $\E[f(f-Y)]=0$.
\end{lemma}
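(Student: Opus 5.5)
The statement is the first-order optimality condition for least squares, so I will prove it by perturbing the optimal coefficients. Write the predictor as $f=\sum_{r=1}^m a_r u_r$, where the coefficients $a=(a_1,\ldots,a_m)$ minimize $\E[(\sum_r a_r u_r - Y)^2]$.

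\textbf{Step 1.} Fix an index $r$ and a real $\eta$, and consider the perturbed predictor $f+\eta u_r$. It is also a linear combination of the inputs, so by optimality of $a$ we have $\E[(f+\eta u_r-Y)^2]\ge\E[(f-Y)^2]$ for every $\eta$. Expanding the square gives
\begin{equation*}
  \phi(\eta)=\E[(f-Y)^2]+2\eta\,\E[u_r(f-Y)]+\eta^2\E[u_r^2].
\end{equation*}
All terms are finite because the inputs and $Y$ have bounded second moments.

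\textbf{Step 2.} The quadratic $\phi$ attains its minimum at $\eta=0$. Differentiating, $\phi'(0)=2\E[u_r(f-Y)]$, so this expectation is $0$. If one prefers to avoid calculus, take $\eta=-t\,\operatorname{sign}(\E[u_r(f-Y)])$ with small $t>0$: a nonzero linear term would then dominate the quadratic term and strictly lower the error, contradicting optimality. Since $r$ was arbitrary, $\E[u_r(f-Y)]=0$ for every input.

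\textbf{Step 3.} By linearity of expectation,
\begin{equation*}
  \E[f(f-Y)]=\sum_r a_r\E[u_r(f-Y)]=0.
\end{equation*}

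The only subtlety is that the minimizer may be non-unique when the inputs are linearly dependent. The argument above, however, uses only that the chosen $a$ is some minimizer, so it applies to whichever minimizer the protocol picks. Existence of a minimizer is standard: the problem is projection onto the finite-dimensional closed span of the inputs in $L^2$. I expect no real obstacle here.
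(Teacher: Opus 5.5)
Your proof is correct and complete. Perturbing the chosen minimizer along a single input $u_r$ gives a quadratic in $\eta$ that is minimized at $\eta=0$, which forces $\E[u_r(f-Y)]=0$, and linearity then gives $\E[f(f-Y)]=0$.

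The paper does not prove this lemma itself; it imports it from \citet{kearns2026networked}. Your first-order perturbation argument is the standard proof. Your appeal to projection for existence is unnecessary, since the hypothesis already supplies a minimizer. It is also good that your main argument does not rely on \Cref{lem:prelim-ls-projection}, because the paper derives that lemma from this one, so using it would be circular.
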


These are the multiaccuracy and self-orthogonality conditions from
\citet[Definitions~2.3 and~2.4]{kearns2026networked}.

\begin{lemma}[\citet{kearns2026networked}, Lemmas~3.3, 3.7, and~3.8]
\label{lem:prelim-mse-decomposition}
For any predictors $f$ and $g$,
\begin{equation}
  \label{eq:mse-decomposition}
  \MSE(f)
  =
  \MSE(g)
  -2\E[g(f-Y)]
  +2\E[f(f-Y)]
  -\E[(f-g)^2].
\end{equation}
If $A_j \in \Pa(i)$, then
$\MSE(f_i)\le\MSE(f_j)$ and
\begin{equation}
  \label{eq:path-improvement}
  \begin{aligned}
    \E[(f_i-f_j)^2]
    &=
    \MSE(f_j)-\MSE(f_i).
  \end{aligned}
\end{equation}
\end{lemma}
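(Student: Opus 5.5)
The statement has two parts: the identity \eqref{eq:mse-decomposition} is pure algebra, and \eqref{eq:path-improvement} then follows from \Cref{lem:prelim-ls-orthogonality}. Throughout I assume that all features, the label, and all predictors have finite second moments, so every expectation below is finite. Under this assumption each agent's least-squares problem is an orthogonal projection of $Y$ onto a finite-dimensional (hence closed) subspace of $L^2$, so the minimizer $f_i$ exists.

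\emph{Step 1 (the identity).} Take arbitrary square-integrable $f$ and $g$ and write $g-Y=(g-f)+(f-Y)$. Expanding the square gives
\begin{equation*}
  \MSE(g)=\E[(g-f)^2]+2\E[(g-f)(f-Y)]+\MSE(f).
\end{equation*}
Split the cross term as $2\E[g(f-Y)]-2\E[f(f-Y)]$ and solve for $\MSE(f)$:
\begin{equation*}
  \MSE(f)=\MSE(g)-2\E[g(f-Y)]+2\E[f(f-Y)]-\E[(f-g)^2].
\end{equation*}
This is exactly \eqref{eq:mse-decomposition}. No optimality of $f$ or $g$ is used here.

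\emph{Step 2 (monotonicity).} Let $A_j\in\Pa(i)$. Then $f_j$ is one of the inputs of $A_i$. The choice $w_i=0$, $v_{ij}=1$, and all other $v_{ij'}=0$ is feasible in \eqref{eq:regression-protocol} and produces the predictor $f_j$. Since $f_i$ minimizes the MSE over all feasible choices, $\MSE(f_i)\le\MSE(f_j)$.

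\emph{Step 3 (exact drop).} Apply \eqref{eq:mse-decomposition} with $f=f_i$ and $g=f_j$.
\begin{itemize}
  \item Because $f_j$ is an input of the least-squares predictor $f_i$, \Cref{lem:prelim-ls-orthogonality} gives $\E[f_j(f_i-Y)]=0$.
  \item Self-orthogonality from the same lemma gives $\E[f_i(f_i-Y)]=0$.
\end{itemize}
Both middle terms therefore vanish, leaving
\begin{equation*}
  \MSE(f_i)=\MSE(f_j)-\E[(f_i-f_j)^2],
\end{equation*}
which is \eqref{eq:path-improvement}. This identity also re-proves the inequality of Step 2, since $\E[(f_i-f_j)^2]\ge0$.

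The argument has no real obstacle. The only point needing care is the justification that minimizers exist and that all expectations are finite, which is the projection remark at the start. Nonuniqueness of the coefficients $\{w_i,v_{ij}\}$ is harmless: the projection $f_i$ is unique as a random variable, and the orthogonality conditions hold for any minimizer.
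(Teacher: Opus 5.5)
Your proof is correct: \eqref{eq:mse-decomposition} is just the expansion of $\E[((g-f)+(f-Y))^2]$. The exact-drop formula then follows by killing $\E[f_j(f_i-Y)]$ (multiaccuracy, since $f_j$ is an input of $A_i$) and $\E[f_i(f_i-Y)]$ (self-orthogonality) via \Cref{lem:prelim-ls-orthogonality}. The paper gives no proof of its own here---it imports the lemma from \citet{kearns2026networked}---but your argument is the standard one, and it is the same pattern the paper uses whenever it invokes \eqref{eq:mse-decomposition}, e.g.\ in \Cref{lem:regression-block-residual}.
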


The next lemma restates the least squares problem in a more useful way. Suppose the inputs are
$u_1,\ldots,u_m$, and their span is the set of predictors that can be formed from them:
$\{\sum_{r=1}^m a_ru_r : a_1,\ldots,a_m\in\R\}$. We say two random variables $U$ and $W$ are orthogonal when $\E[UW]=0$. Thus the equations in \Cref{lem:prelim-ls-orthogonality} say that
least squares leaves a residual $Y-f$ that is orthogonal to every predictor in the span.

\begin{lemma}[Least squares is projection]
\label{lem:prelim-ls-projection}
Let $Y,u_1,\ldots,u_m$ be random variables with finite second moments, and let
$V=\operatorname{span}\{u_1,\ldots,u_m\}$. If $f$ is the least-squares predictor
from inputs $u_1,\ldots,u_m$, then $f$ is the orthogonal projection of $Y$ onto
$V$: up to almost sure equality, it is the unique random variable in $V$ such
that $Y-f$ is orthogonal to every element of $V$.
\end{lemma}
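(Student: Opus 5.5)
The statement to prove is \Cref{lem:prelim-ls-projection}. It is a standard Hilbert-space fact, and I would prove it directly in $L^2(\cD)$ using \Cref{lem:prelim-ls-orthogonality}. Throughout I work modulo almost sure equality. Then $\|\cdot\|_2$ is a genuine norm, and $V=\operatorname{span}\{u_1,\ldots,u_m\}$ is a finite-dimensional, hence closed, subspace of $L^2$.

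The plan has three steps.
\begin{enumerate}
\item \textbf{Orthogonality.} Let $f=\sum_r a_r u_r$ be a least-squares predictor. By \Cref{lem:prelim-ls-orthogonality}, $\E[u_r(Y-f)]=0$ for every $r$. By linearity of expectation, $\E[v(Y-f)]=0$ for every $v=\sum_r b_r u_r\in V$. Hence $Y-f\perp V$.
\item \textbf{Uniqueness.} Suppose $g\in V$ also satisfies $Y-g\perp V$. Then $f-g=(Y-g)-(Y-f)$ is orthogonal to $V$. But $f-g\in V$, so $\E[(f-g)^2]=0$, i.e.\ $f=g$ almost surely.
\item \textbf{Identification and consistency.} The element of $V$ characterized in step~2 is, by definition, the orthogonal projection of $Y$ onto $V$. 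For completeness I would also check that this projection minimizes the MSE. For any $v\in V$, expanding gives $\E[(v-Y)^2]=\E[(v-f)^2]+\E[(f-Y)^2]$, because the cross term $2\E[(v-f)(f-Y)]$ vanishes ($v-f\in V$ and $f-Y\perp V$). So every minimizer of the least-squares objective equals $f$ almost surely.
\end{enumerate}
Step~3 also shows that the least-squares predictor is well defined as a random variable even when the coefficients $a_r$ are not unique, as happens with collinear inputs.

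The only subtle point is existence of a minimizer, in case one does not want to rely on the attainment implicitly assumed by \Cref{lem:prelim-ls-orthogonality}. The objective is the quadratic $a\mapsto a^\top\Sigma a-2a^\top b+\E[Y^2]$, with $\Sigma_{rs}=\E[u_ru_s]$ and $b_r=\E[u_rY]$. Its normal equations $\Sigma a=b$ are solvable because $b\in\operatorname{range}(\Sigma)$. To see this, let $\Sigma c=0$; then $\E[(\sum_r c_ru_r)^2]=c^\top\Sigma c=0$, so $\sum_r c_ru_r=0$ almost surely, and therefore $c^\top b=\E[Y\sum_r c_ru_r]=0$. Thus $b\perp\ker\Sigma=\operatorname{range}(\Sigma)^\perp$, which gives $b\in\operatorname{range}(\Sigma)$. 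This existence argument is the main, though mild, obstacle; the remaining steps are routine.
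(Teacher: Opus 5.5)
Your proof is correct and follows the paper's argument: orthogonality of $Y-f$ to all of $V$ by linearity from \Cref{lem:prelim-ls-orthogonality}, and uniqueness because the difference of two such candidates lies in $V$ and is orthogonal to $V$, so it has zero second moment. Your Step~3 (the Pythagorean identity showing every minimizer equals $f$) and the normal-equations existence argument are extras. The paper takes the least-squares minimizer as given and does not prove them, but both are correct.
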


\begin{proof}
\Cref{lem:prelim-ls-orthogonality} gives $\E[v(Y-f)]=0$ for every $v\in V$ by
linearity. For uniqueness, suppose $f_1,f_2\in V$ both have residuals
orthogonal to $V$. Since $f_1-f_2\in V$, substituting $v=f_1-f_2$ into the
orthogonality condition for $f_1$ and for $f_2$ gives
\begin{equation*}
  \E[(Y-f_1)(f_1-f_2)]=0,
  \qquad
  \E[(Y-f_2)(f_1-f_2)]=0.
\end{equation*}
Subtracting the two equations we get
\begin{equation*}
  \E[(f_1-f_2)^2]=0.
\end{equation*}
Hence $f_1=f_2$ almost surely.
\end{proof}

\subsection{Classification Setup}

For classification, $Y\in\{0,1\}$. A logit $z$ gives the probability
$p=\sigma(z)$, where $\sigma(t)=1/(1+e^{-t})$ is the sigmoid function. The
population binary cross-entropy (BCE) loss is
\begin{equation}
  \cL(z)=\E[\log(1+e^{z(x)})-Yz(x)].
\end{equation}
We also write $\cL(p)$ when $p=\sigma(z)$ and define
$\phi(z) = \log(1+e^z)$. Note that $\phi(z)$ satisfies
$\phi'(z) = \sigma(z)$.

The classification model follows the logit-passing setup of
\citet{bateni2026networked} and is similar to the regression model. Agent $A_i$ receives parent logits
$z_j(x)$, not parent probabilities. It chooses the coefficients in
\begin{equation}
  \label{eq:classification-protocol}
  \begin{aligned}
    z_i(x)
    &=
    w_i^\top x_{S_i}
    +
    \sum_{A_j\in\Pa(i)}v_{ij}z_j(x),
    \\
    p_i(x)
    &=
    \sigma(z_i(x)),
    \qquad
    \{w_i,v_{ij}\}
    \in
    \arg\min \cL(z_i).
  \end{aligned}
\end{equation}

We assume the protocol minimizers are attained at finite
coefficients. The global BCE minimizer is written as
$z_\star(x)=\sum_{\ell=1}^dw_\ell x_\ell$ and
$p_\star=\sigma(z_\star)$.

\subsection{Classification Facts Used Later}

We import the following facts from \citet{bateni2026networked}.

\begin{lemma}[\citet{bateni2026networked}, Lemma~3.1]
\label{lem:prelim-bce-orthogonality}
If $z$ is a finite BCE minimizer over linear inputs $u_1,\ldots,u_m$, then each
input has zero correlation with the BCE residual. In particular,
$\E[z(\sigma(z)-Y)]=0$.
\end{lemma}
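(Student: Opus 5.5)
The statement to prove is the imported \Cref{lem:prelim-bce-orthogonality}: a finite BCE minimizer $z$ over linear inputs $u_1,\ldots,u_m$ satisfies $\E[u_r(\sigma(z)-Y)]=0$ for every input, and hence $\E[z(\sigma(z)-Y)]=0$. I would prove it through the first-order optimality condition, in the same way \Cref{lem:prelim-ls-orthogonality} is the normal equation for least squares.

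\textbf{Step 1: differentiate the objective.} Write $z=\sum_r a_r u_r$ and set
\begin{equation*}
  \Phi(a)=\cL\Bigl(\sum_r a_r u_r\Bigr)=\E\Bigl[\phi\Bigl(\sum_r a_ru_r\Bigr)-Y\sum_r a_r u_r\Bigr].
\end{equation*}
For any direction $b\in\R^m$, let $v=\sum_r b_r u_r$ and consider $g(s)=\Phi(a+sb)$. Pointwise, the integrand has derivative $(\sigma(z+sv)-Y)v$ in $s$. Since $0\le\sigma\le1$ and $Y\in\{0,1\}$, this derivative is bounded in absolute value by $|v|$. The inputs have finite second moments, so $|v|$ is integrable.

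\textbf{Step 2: justify the interchange.} By the mean value theorem, the difference quotient of the integrand is also dominated by $|v|$. Dominated convergence then gives that $g$ is differentiable, with
\begin{equation*}
  g'(s)=\E\bigl[v\,(\sigma(z+sv)-Y)\bigr].
\end{equation*}
Integrability of $\phi(z)$ itself follows from $\phi(t)\le\log2+|t|$, so $\Phi$ is finite everywhere.

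\textbf{Step 3: apply first-order optimality.} Because $a$ minimizes $\Phi$, the point $s=0$ is an interior minimizer of the differentiable function $g$, so $g'(0)=0$. This gives $\E[v(\sigma(z)-Y)]=0$ for every $v$ in the span. Taking $b=e_r$ yields the claim for each input $u_r$. Taking $b=a$, so that $v=z$, yields $\E[z(\sigma(z)-Y)]=0$; this also follows from the per-input identities by linearity.

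\textbf{Main obstacle.} The only step needing care is the interchange of derivative and expectation in Step 2. It is handled by the uniform bound $|\sigma-Y|\le1$ together with finite second moments (in fact finite first moments suffice). Convexity of $\Phi$ is not needed, since the minimizer is assumed to be attained at finite coefficients.
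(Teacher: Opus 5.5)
Your proof is correct. The paper imports this lemma from \citet{bateni2026networked} without proof, and your first-order stationarity argument is the standard route: the bound $|\sigma(\cdot)-Y|\le 1$ plus dominated convergence justifies differentiating under the expectation, exactly as the normal equations give \Cref{lem:prelim-ls-orthogonality}. The one hypothesis left implicit, integrability of the inputs, holds throughout the paper, since every input is a finite linear combination of raw features with bounded second moments.
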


\begin{lemma}[\citet{bateni2026networked}, Definition~3.2 and Lemmas~3.3--3.4]
\label{lem:prelim-bce-kl}
For two probability predictors $p$ and $q$, define
\begin{equation}
  \begin{aligned}
    D(p\|q)
    &=
    \E\left[
      p\log\frac{p}{q}
      +(1-p)\log\frac{1-p}{1-q}
    \right].
  \end{aligned}
\end{equation}
Then,
\begin{equation}
  \label{eq:bce-kl-identities}
  \begin{aligned}
    \cL(q)
    &=
    \cL(p^\star)+D(p^\star\|q),\\
    D(p\|q)
    &\ge
    2\E[(p-q)^2].
  \end{aligned}
\end{equation}
Here $p^\star$ is the BCE minimizer on the same inputs as $q$. If $A_j \in \Pa(i)$, then
$\cL(z_j)-\cL(z_i)=D(p_i\|p_j) \ge 0$.
\end{lemma}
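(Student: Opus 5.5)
The first identity comes from the first-order conditions of the BCE minimizer, the inequality is Pinsker's inequality applied pointwise, and the statement about parents is the first identity specialised to a parent's logit.

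\textbf{Step 1: rewrite the loss gap using logits.} Let $q=\sigma(z_q)$, where $z_q$ is a linear combination of $u_1,\ldots,u_m$. Let $p^\star=\sigma(z^\star)$ be the BCE minimizer over the same span. Since $\phi(z)-Yz=-Y\log\sigma(z)-(1-Y)\log(1-\sigma(z))$, we have
\begin{equation*}
  \cL(q)-\cL(p^\star)
  =
  \E\left[Y\log\frac{p^\star}{q}+(1-Y)\log\frac{1-p^\star}{1-q}\right].
\end{equation*}
Because $\log\frac{p}{1-p}$ is the logit, the integrand equals
\begin{equation*}
  Y\,(z^\star-z_q)+\log\frac{1-p^\star}{1-q}.
\end{equation*}
This is affine in $Y$, and the coefficient of $Y$ is $z^\star-z_q$, which lies in $\operatorname{span}\{u_1,\ldots,u_m\}$.

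\textbf{Step 2: apply the orthogonality conditions.} By \Cref{lem:prelim-bce-orthogonality} and linearity, $\E[v(\sigma(z^\star)-Y)]=0$ for every $v$ in the span. Taking $v=z^\star-z_q$ gives
\begin{equation*}
  \E[Y(z^\star-z_q)]=\E[p^\star(z^\star-z_q)].
\end{equation*}
We may therefore replace $Y$ by $p^\star$ inside the expectation. Running the logit algebra backwards turns the result into
\begin{equation*}
  \E\left[p^\star\log\frac{p^\star}{q}+(1-p^\star)\log\frac{1-p^\star}{1-q}\right]=D(p^\star\|q),
\end{equation*}
which is the first identity. The only care needed is integrability, namely that $\E[|Y(z^\star-z_q)|]$ and $\E[|\log(1-p^\star)|]$ are finite. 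This follows because $|\log(1-\sigma(z))|\le|z|+\log 2$ and the logits have finite second moments.

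\textbf{Step 3: Pinsker.} For each fixed $x$, the summand $p\log\frac pq+(1-p)\log\frac{1-p}{1-q}$ is the KL divergence between $\mathrm{Bernoulli}(p)$ and $\mathrm{Bernoulli}(q)$. By Pinsker's inequality it is at least $2(p-q)^2$: the total variation distance is $|p-q|$, and $\mathrm{KL}\ge 2\,\mathrm{TV}^2$. A direct proof fixes $p$, sets $g(q)=\mathrm{KL}(p\|q)-2(p-q)^2$, and notes that $g(p)=0$ and $g'(q)=(q-p)\bigl(\tfrac1{q(1-q)}-4\bigr)$. This derivative has the sign of $q-p$, so $g$ is minimized at $q=p$. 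Taking expectations gives $D(p\|q)\ge 2\E[(p-q)^2]$.

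\textbf{Step 4: parents.} If $A_j\in\Pa(i)$, then $z_j$ is itself one of $A_i$'s inputs, so $z_j$ lies in the span over which $z_i$ is the BCE minimizer. Applying Step 2 with $p^\star=p_i$ and $q=p_j$ gives
\begin{equation*}
  \cL(z_j)-\cL(z_i)=D(p_i\|p_j)\ge 0.
\end{equation*}

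The main obstacle is Step 2. One must notice that the difference of log-likelihood ratios is linear in the logits, so that the first-order conditions of the minimizer let us swap $Y$ for $p^\star$; once that is seen, the rest is routine.
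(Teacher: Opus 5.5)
Your proof is correct and complete. It covers the loss identity via the first-order conditions, which let you replace $Y$ by $p^\star$ in the term that is linear in $z^\star-z_q$, then pointwise Pinsker for $D(p\|q)\ge 2\E[(p-q)^2]$, and then the parent case as a specialisation. Your integrability check rests on finite second moments of the inputs, which is the paper's standing assumption in the classification sections. The paper itself gives no proof of this lemma; it imports it from \citet{bateni2026networked}, so there is no in-text argument to compare against.
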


\begin{lemma}[\citet{bateni2026networked}, Lemma~3.5]
\label{lem:prelim-classification-comparator}
If $z$ minimizes BCE over its current inputs and $z_g$ is any other logit, then
\begin{equation}
  \label{eq:classification-comparator}
  \begin{aligned}
    \cL(z)
    &\le
    \cL(z_g)+\left|\E[(\sigma(z)-Y)z_g]\right|.
  \end{aligned}
\end{equation}
\end{lemma}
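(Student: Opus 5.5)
This is the logistic analogue of the regression identity \eqref{eq:mse-decomposition}. The plan is to derive it from the first-order (tangent-line) convexity inequality for $\phi(z)=\log(1+e^z)$, combined with the self-orthogonality condition of \Cref{lem:prelim-bce-orthogonality}.

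The first step is a pointwise inequality. Since $\phi$ is convex with $\phi'=\sigma$, for every realization $x$ we have
\begin{equation*}
  \phi(z_g(x))\ge \phi(z(x))+\sigma(z(x))\,\bigl(z_g(x)-z(x)\bigr).
\end{equation*}
Subtracting $Y z_g(x)$ from both sides and rewriting $-Yz_g=-Yz-Y(z_g-z)$ gives
\begin{equation*}
  \phi(z_g)-Yz_g\ge \phi(z)-Yz+\bigl(\sigma(z)-Y\bigr)\bigl(z_g-z\bigr).
\end{equation*}

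Next, I take expectations of this inequality to obtain
\begin{equation*}
  \cL(z_g)\ge \cL(z)+\E[(\sigma(z)-Y)z_g]-\E[(\sigma(z)-Y)z].
\end{equation*}
The last term vanishes by \Cref{lem:prelim-bce-orthogonality}, because $z$ is a finite BCE minimizer over its inputs. This leaves
\begin{equation*}
  \cL(z)\le \cL(z_g)-\E[(\sigma(z)-Y)z_g]\le \cL(z_g)+\left|\E[(\sigma(z)-Y)z_g]\right|,
\end{equation*}
which is \eqref{eq:classification-comparator}.

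The only step that needs care is integrability, so that the expectations can be split term by term. Since $|\sigma(z)-Y|\le 1$ and $\phi(t)\le \log 2+|t|$, all the quantities involved are integrable as soon as $z$ and $z_g$ have finite second moments (hence finite first moments). This holds for the linear logits considered throughout, under the bounded second-moment assumptions. If $z_g$ is not integrable, the right-hand side is infinite and the statement holds trivially. I do not expect a genuine obstacle; the argument is one line of convexity plus the orthogonality lemma.
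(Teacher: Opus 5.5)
Your proof is correct. The supporting-line inequality $\phi(z_g)\ge\phi(z)+\sigma(z)(z_g-z)$, rearranged and averaged, together with the self-orthogonality $\E[(\sigma(z)-Y)z]=0$ from \Cref{lem:prelim-bce-orthogonality}, gives the claim. It even gives the sharper signed bound $\cL(z)\le\cL(z_g)-\E[(\sigma(z)-Y)z_g]$.

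The paper has no proof of its own to compare against, since it imports this lemma from \citet{bateni2026networked}. One remark is still worth making: the slack you discard is exactly a KL divergence. Pointwise, $\phi(z_g)-\phi(z)-\sigma(z)(z_g-z)$ equals the Bernoulli divergence of $\sigma(z)$ from $\sigma(z_g)$. Hence, in the paper's notation,
\begin{equation*}
  \cL(z_g)=\cL(z)+\E[(\sigma(z)-Y)z_g]+D(\sigma(z)\|\sigma(z_g)).
\end{equation*}
This is the exact classification analogue of \eqref{eq:mse-decomposition} for a self-orthogonal predictor. When $z_g$ lies in the span of $z$'s inputs, it reduces to the identity $\cL(q)=\cL(p^\star)+D(p^\star\|q)$ of \Cref{lem:prelim-bce-kl}.

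One small correction. Your aside that a non-integrable $z_g$ makes the right-hand side infinite is not justified: $\cL(z_g)=\E[\log(1+e^{-(2Y-1)z_g})]$ can be finite even when $z_g$ is not integrable. This case never arises in the paper, though. Every logit the lemma is applied to (in particular $z_g=z_\star$) is a linear combination of features with bounded second moments, which is exactly the setting your integrability check covers.
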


This is the classification replacement for the least-squares residual
comparison.

The BCE loss is convex, a fact we rely on repeatedly when locating minimizers,
so we record it once here.

\begin{lemma}
\label{lem:prelim-bce-convexity}
$\phi(z) = \log(1+e^z)$ is strictly convex. The BCE loss $\cL(z) = \E[\phi(z) - Yz]$ is convex.
\end{lemma}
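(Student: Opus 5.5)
We need to prove Lemma \ref{lem:prelim-bce-convexity}: $\phi(z)=\log(1+e^z)$ is strictly convex, and $\cL(z)=\E[\phi(z)-Yz]$ is convex as a functional of the logit $z$.

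The plan is a direct calculus argument for $\phi$, followed by pointwise lifting to the expectation. For $\phi$, we already noted $\phi'(z)=\sigma(z)$. Differentiating once more gives
\begin{equation*}
  \phi''(z)=\sigma'(z)=\sigma(z)\bigl(1-\sigma(z)\bigr)=\frac{e^{z}}{(1+e^{z})^2}.
\end{equation*}
This is strictly positive for every real $z$. A twice differentiable function on $\R$ with strictly positive second derivative is strictly convex. This follows, for instance, from the mean value theorem: $\phi'$ is strictly increasing, so the chord inequality is strict whenever the two points differ and $\lambda\in(0,1)$.

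For $\cL$, take two logits $z,z'$ (random variables, e.g.\ linear combinations of features with finite second moments) and $\lambda\in[0,1]$. Pointwise, for every outcome, we have
\begin{equation*}
  \phi\bigl(\lambda z+(1-\lambda)z'\bigr)\le\lambda\phi(z)+(1-\lambda)\phi(z').
\end{equation*}
The term $-Yz$ is linear in $z$, so the same inequality holds for $\phi(z)-Yz$ with $Y$ fixed.

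We then take expectations, using monotonicity and linearity of expectation, to get $\cL(\lambda z+(1-\lambda)z')\le\lambda\cL(z)+(1-\lambda)\cL(z')$. For this step to be valid, each expectation must be finite. Since $0\le\phi(z)\le\log 2+|z|$ and $|Yz|\le|z|$, every term is integrable as soon as $\E|z|<\infty$, which holds under the bounded second moment assumptions. The same argument shows that $\cL$ restricted to any coefficient vector is convex in the coefficients, since the logit is linear in them.

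The only mildly delicate step is this integrability check, and the bound $\phi(z)\le\log2+|z|$ dispatches it. If strict convexity of $\cL$ were needed later, it would hold whenever $z\neq z'$ on a set of positive probability, because the pointwise inequality is then strict on that set. The lemma as stated does not require this.
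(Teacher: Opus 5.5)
Your proof is correct and follows essentially the same route as the paper: $\phi''=\sigma(1-\sigma)>0$ gives strict convexity of $\phi$, and convexity of $z\mapsto\phi(z)-Yz$ for each fixed $Y$ passes to $\mathcal{L}$ by taking expectations. Your integrability check via $0\le\phi(z)\le\log 2+|z|$ is a small refinement the paper leaves implicit.
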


\begin{proof}
Since $\phi''(z) = \sigma'(z) = \sigma(z)(1 - \sigma(z)) > 0$ for every $z$, $\phi$ is strictly convex.

For each label $Y$, the map $z\mapsto\phi(z)-Yz$ is convex, and averaging over the distribution
shows that the BCE loss $\cL(z)=\E[\phi(z)-Yz]$ is convex.
\end{proof}

\section{Improved Regression Upper Bound}
\label{sec:regression-upper-bound}

For the rest of this section, we assume the following conditions from \cite{kearns2026networked}:
\begin{itemize}
\item The global predictor
$f^\star(x)=\sum_{\ell=1}^d w^\star_\ell x_\ell$ satisfies
$\sum_\ell |w^\star_\ell|\le A^\star$.
\item Each feature satisfies $\E[x_\ell^2]\le M_X^2$.
\end{itemize}

Consider an $M$-covered path of agents $A_1 \to A_2 \to \cdots \to A_n$. By \Cref{lem:prelim-mse-decomposition}, the errors are non-increasing along the path, meaning that $\MSE(f_1) \ge \MSE(f_2) \ge \cdots \ge \MSE(f_n)$. Now consider a block of $M$ agents $A_i \to A_{i+1} \to \cdots \to A_{i+M-1}$. \citet{kearns2026networked} show that if the error does not decrease significantly over the block, then the excess error of the last agent in the block is small. We formalize this in the following lemma, which is analogous to Theorem 3.9 in \cite{kearns2026networked}. We provide its proof for completeness.

\begin{lemma}
\label{lem:regression-block-residual}
Consider any path $A_1 \to A_2 \to \dots \to A_n$ and a block of $k$ consecutive agents indexed by $[a+1, b]$ that sees every raw feature at least
once, where $b=a+k$.
If $\varepsilon\ge\MSE(f_a)-\MSE(f_b)$, then
\begin{equation*}
  \MSE(f_b)-\MSE(f^\star)
  \le
  2A^\star M_X\sqrt{k\varepsilon}.
\end{equation*}
\end{lemma}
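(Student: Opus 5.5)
We bound the excess error of $f_b$ by writing it as an inner product against $f^\star$. The key identity is
\begin{equation*}
\MSE(f_b)-\MSE(f^\star)=\E[(f^\star-f_b)(Y-f_b)]=\E[f^\star(Y-f_b)].
\end{equation*}
This follows from the decomposition \eqref{eq:mse-decomposition} with $f=f^\star$ and $g=f_b$ (or directly from projection arguments), using $\E[f_b(Y-f_b)]=0$ from \Cref{lem:prelim-ls-orthogonality}. It also uses $\E[f^\star(Y-f^\star)]=0$ and $\E[f_b(Y-f^\star)]=0$. The latter holds because $f_b$ lies in the span of the raw features (by induction along the DAG), and $f^\star$ is the projection of $Y$ onto that span by \Cref{lem:prelim-ls-projection}. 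Expanding $f^\star=\sum_\ell w^\star_\ell x_\ell$ and applying the triangle inequality then gives
\begin{equation*}
\MSE(f_b)-\MSE(f^\star)\le A^\star\max_\ell|\E[x_\ell(Y-f_b)]|.
\end{equation*}
So it suffices to show $|\E[x_\ell(Y-f_b)]|\le 2M_X\sqrt{k\varepsilon}$ for every feature $\ell$.

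Fix $\ell$ and choose an agent $j\in[a+1,b]$ with $\ell\in S_j$; one exists because the block sees every feature. By \Cref{lem:prelim-ls-orthogonality}, $\E[x_\ell(Y-f_j)]=0$. Hence
\begin{equation*}
\E[x_\ell(Y-f_b)]=\E[x_\ell(f_j-f_b)].
\end{equation*}
Cauchy--Schwarz bounds this in absolute value by $M_X\|f_j-f_b\|_2$.

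Next we telescope $f_j-f_b=\sum_{t=j+1}^{b}(f_{t-1}-f_t)$. By \eqref{eq:path-improvement}, each term has $\|f_{t-1}-f_t\|_2^2=\MSE(f_{t-1})-\MSE(f_t)$. The triangle inequality and then Cauchy--Schwarz over at most $k$ terms give
\begin{equation*}
\|f_j-f_b\|_2\le\sqrt{k\sum_t(\MSE(f_{t-1})-\MSE(f_t))}\le\sqrt{k(\MSE(f_a)-\MSE(f_b))}\le\sqrt{k\varepsilon},
\end{equation*}
where the middle step uses monotonicity of the MSE along the path. This already yields the bound with constant $1$, and the stated factor $2$ leaves slack.

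The main obstacle is justifying the identity $\MSE(f_b)-\MSE(f^\star)=\E[f^\star(Y-f_b)]$. In particular, we must check that $f_b$ lies in the raw-feature span, so that $Y-f^\star$ is orthogonal to $f_b$. If that step is delicate, we would instead apply \eqref{eq:mse-decomposition} with $g=f^\star$ directly. This gives $\MSE(f_b)\le\MSE(f^\star)+2\E[f^\star(Y-f_b)]$, since the terms $2\E[f_b(f_b-Y)]=0$ and $-\E[(f_b-f^\star)^2]\le0$ can be dropped. That accounts for the constant $2$ in the statement. This fallback avoids the span argument entirely, so we would use it.
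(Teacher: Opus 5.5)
Your proposal is correct, and the version you commit to (the fallback) is the paper's proof: orthogonality at an agent $A_j$ that sees $x_\ell$, then telescoping with Cauchy--Schwarz to get $|\E[x_\ell(f_b-Y)]|\le M_X\sqrt{k\varepsilon}$, then \eqref{eq:mse-decomposition} with $g=f^\star$ plus self-orthogonality of $f_b$, which is where the factor $2$ comes from. Your primary identity route is also sound and not delicate: every prediction lies in the raw-feature span by induction over the DAG, a fact the paper itself uses in the proof of \Cref{lem:fixed-block-contraction}. That route shows the discarded term $-\|f_b-f^\star\|_2^2$ equals $-(\MSE(f_b)-\MSE(f^\star))$, which sharpens the constant from $2$ to $1$.
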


\begin{proof}
Fix a feature $x_\ell$, and let agent $A_j$ in the block see it. Least-squares orthogonality from \Cref{lem:prelim-ls-orthogonality} gives $\E[x_\ell(f_j-Y)]=0$. By
\Cref{eq:path-improvement} in \Cref{lem:prelim-mse-decomposition},
\begin{equation*}
  \sum_{i=a+1}^b \|f_i-f_{i-1}\|_2^2
  =
  \MSE(f_a)-\MSE(f_b)
  \le
  \varepsilon.
\end{equation*}
Since agent $A_j$ sees $x_\ell$, its orthogonality $\E[x_\ell(f_j-Y)]=0$ lets us
replace the target $Y$ with $f_j$ in the following equation:
\begin{equation*}
  \E[x_\ell(f_b-Y)]
  =
  \E[x_\ell(f_b-f_j)]
  +
  \E[x_\ell(f_j-Y)]
  =
  \E[x_\ell(f_b-f_j)].
\end{equation*}
Write $f_b - f_j$ as $f_b-f_j=\sum_{i=j+1}^b(f_i-f_{i-1})$. The sum has at most
$k$ terms, since $j\ge a+1$ and $b=a+k$. The triangle inequality followed by Cauchy--Schwarz across these terms gives
\begin{equation*}
  \|f_b-f_j\|_2
  \le
  \sum_{i=j+1}^b\|f_i-f_{i-1}\|_2
  \le
  \sqrt{k}\left(\sum_{i=j+1}^b\|f_i-f_{i-1}\|_2^2\right)^{1/2}
  \le
  \sqrt{k\varepsilon},
\end{equation*}
where the last step uses the bound $\sum_{i=a+1}^b\|f_i-f_{i-1}\|_2^2\le\varepsilon$
from above. A final Cauchy--Schwarz over the feature, with $\E[x_\ell^2]\le M_X^2$,
then yields
\begin{equation*}
  |\E[x_\ell(f_b-Y)]|
  =
  |\E[x_\ell(f_b-f_j)]|
  \le
  \sqrt{\E[x_\ell^2]}\,\|f_b-f_j\|_2
  \le
  M_X\sqrt{k\varepsilon}.
\end{equation*}
The bound just derived holds for every feature $\ell$, since the block sees each
raw feature at least once. To turn these per-feature bounds into an MSE bound,
apply \Cref{eq:mse-decomposition} from \Cref{lem:prelim-mse-decomposition} with $f=f_b$ and $g=f^\star$:
\begin{equation*}
  \MSE(f_b)-\MSE(f^\star)
  =
  -2\E[f^\star(f_b-Y)]
  +2\E[f_b(f_b-Y)]
  -\E[(f_b-f^\star)^2].
\end{equation*}
Self-orthogonality of $f_b$ from \Cref{lem:prelim-ls-orthogonality} makes the
middle term vanish, $\E[f_b(f_b-Y)]=0$, and the last term is nonpositive, so
$\MSE(f_b)-\MSE(f^\star)\le2|\E[f^\star(f_b-Y)]|$. Finally, expand
$f^\star=\sum_\ell w^\star_\ell x_\ell$ and use the per-feature bound together with
$\sum_\ell|w^\star_\ell|\le A^\star$,
\begin{equation*}
  |\E[f^\star(f_b-Y)]|
  \le
  \sum_\ell|w^\star_\ell|\,|\E[x_\ell(f_b-Y)]|
  \le
  A^\star M_X\sqrt{k\varepsilon},
\end{equation*}
which gives $\MSE(f_b)-\MSE(f^\star)\le2A^\star M_X\sqrt{k\varepsilon}$.
\end{proof}

We now state the following lemma. This lemma is what allows us to get a bound of $O(M^2/D)$ instead of the $O(M/\sqrt{D})$ bound of \citet{kearns2026networked}. It states that in a long enough path, if an agent near the start has excess error $\delta$, then after another $L$ agents, the excess error is at most $O(\sqrt{\delta/L})$.

\begin{lemma}[A good suffix from a good prefix]
\label{lem:regression-suffix-improvement}
Take an $M$-covered path $A_1 \to A_2 \to \dots \to A_n$ and an agent $A_s$ on it with
$\MSE(f_s) - \MSE(f^\star)\le\delta$. Let $L = n-s$ and consider the suffix $A_{s+1}\to\dots\to A_{s+L}$. If $L \ge 2M$, then
\begin{equation*}
  \MSE(f_{n})-\MSE(f^\star)
  \le
  2\sqrt2\,A^\star M_X\,M\sqrt{\frac{\delta}{L}}.
\end{equation*}
\end{lemma}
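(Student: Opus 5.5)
The plan is to run the pigeonhole argument of \citet{kearns2026networked} on the suffix only, with the total budget $\delta$ in place of a constant, and then apply \Cref{lem:regression-block-residual} to a block with small drop.

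\emph{Total drop over the suffix.} Since $f^\star$ is the least-squares predictor over all raw features, $\MSE(f_n)\ge\MSE(f^\star)$. Together with monotonicity from \Cref{lem:prelim-mse-decomposition}, this gives
\begin{equation*}
  \MSE(f_s)-\MSE(f_n)\le \MSE(f_s)-\MSE(f^\star)\le\delta.
\end{equation*}

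\emph{Partition into blocks.} Set $m=\lfloor L/M\rfloor\ge2$, which uses $L\ge 2M$. Cut the suffix into $m$ consecutive disjoint blocks, each with at least $M$ agents, and let the last block end exactly at $A_n$. Concretely, take block boundaries $s=b_0<b_1<\dots<b_m=n$ with $b_r-b_{r-1}\ge M$, and put any leftover $L-mM<M$ agents into one of the blocks. Each block then has length $k_r\le 2M-1<2M$. Because the path is $M$-covered, each block contains $M$ consecutive agents and so sees every raw feature.

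\emph{Pigeonhole.} The block drops $\MSE(f_{b_{r-1}})-\MSE(f_{b_r})$ are nonnegative and sum to at most $\delta$. Hence some block $r$ has drop at most $\varepsilon=\delta/m$. Applying \Cref{lem:regression-block-residual} with $a=b_{r-1}$ and $b=b_r$ gives
\begin{equation*}
  \MSE(f_{b_r})-\MSE(f^\star)\le 2A^\star M_X\sqrt{k_r\delta/m}.
\end{equation*}
Monotonicity then transfers this bound to $A_n$, since $\MSE(f_n)\le\MSE(f_{b_r})$.

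\emph{Constant bookkeeping.} This is the only delicate step. We need $k_r/m\le 2M^2/L$, which would give exactly $2\sqrt2\,A^\star M_X M\sqrt{\delta/L}$. Using $m\ge L/(2M)$, which holds because $\lfloor x\rfloor\ge x/2$ for $x\ge1$, together with $k_r\le 2M$ only yields $k_r/m\le 4M^2/L$. That is off by a factor $\sqrt2$.

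To tighten it, I would put the leftover agents in the first block and keep every other block at length exactly $M$. Then the pigeonhole has to be refined. Either the small-drop block is one of the length-$M$ blocks, or, if it is the long first block, we compare that block's drop against $\delta\, k_1/L$ by a weighted pigeonhole. A weighted pigeonhole over arbitrary block lengths $k_r$ with $\sum_r k_r=L$ gives a block with drop at most $\delta k_r/L$, and hence $k_r\varepsilon\le k_r^2\delta/L\le (2M)^2\delta/L$. That again loses the factor $\sqrt2$.

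The cleaner choice is simply $m=\lfloor L/M\rfloor$ blocks of length exactly $M$ ending at $n$, ignoring the initial leftover segment; those drops still sum to at most $\delta$. Then $k=M$ and $m\ge L/(2M)$, so
\begin{equation*}
  k\varepsilon\le M\cdot 2M\delta/L=2M^2\delta/L,
\end{equation*}
which yields precisely the constant $2\sqrt2\,A^\star M_X M\sqrt{\delta/L}$. This is the version I would write.
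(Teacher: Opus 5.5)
Your final version is correct and essentially the paper's own proof. You use $\lfloor L/M\rfloor$ blocks of length exactly $M$ whose drops sum to at most $\delta$, pigeonhole to get a block with drop at most $2M\delta/L$, apply \Cref{lem:regression-block-residual} with $k=M$, and then use monotonicity to pass to $A_n$, which gives the same constant $2\sqrt2\,A^\star M_X M\sqrt{\delta/L}$; the earlier variable-length-block attempts lose the factor $\sqrt2$ and can simply be dropped from the write-up.
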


\begin{proof}
Split the suffix into $K=\lfloor L/M\rfloor$ full blocks. Since $L\ge2M$,
$K\ge L/(2M)$. The total MSE drop over these blocks is at most
$\MSE(f_s)-\MSE(f^\star)\le\delta$, so by the pigeonhole principle, some block has drop
$\varepsilon\le\delta/K\le2M\delta/L$. Applying
\Cref{lem:regression-block-residual} on this block with $k=M$ gives, at the end $q$ of that
block,
\begin{equation*}
  \MSE(f_q)-\MSE(f^\star)
  \le
  2A^\star M_X\sqrt{\frac{2M^2\delta}{L}}.
\end{equation*}
By \Cref{lem:prelim-mse-decomposition}, the MSE is non-increasing along the path, so $\MSE(f_{n})\le\MSE(f_q)$.
\end{proof}

\citet{kearns2026networked} prove their $O(M/\sqrt{D})$ by considering the drop in the excess error along a path of depth $D$. They argue that this drop is bounded by the first agent's excess error $\MSE(f_1)-\MSE(f^\star)$, and then partition the path into $K = \lfloor D / M \rfloor$ blocks of size $M$. By the pigeonhole principle, there must be a block that has a drop of at most $(\MSE(f_1)-\MSE(f^\star)) / K$. They then apply \Cref{lem:regression-block-residual}.

We take a different approach by inducting on $D$. We break a path of depth $D$ into a prefix of length $s = \lfloor D/2 \rfloor$ and a suffix of length $D - s$. We use the bound given by induction hypothesis, and apply \Cref{lem:regression-suffix-improvement} on the suffix.
The following restated theorem proves this formally.

\regressionUpperRestatement*

\begin{proof}
Set $c_\star=2\sqrt2\,A^\star M_X$, the constant of
\Cref{lem:regression-suffix-improvement}, and set $C=6c_\star^2$.

We first bound the excess error of any agent. The zero predictor is feasible for every agent, so
$\MSE(f_i)\le\MSE(0)$. Applying \Cref{eq:mse-decomposition} with
$f=f^\star$ and $g=0$, and then using the self-orthogonality of $f^\star$ from
\Cref{lem:prelim-ls-orthogonality}, gives
\begin{equation*}
  \MSE(0)-\MSE(f^\star)
  =
  \|f^\star\|_2^2
  \le
  \Bigl(\sum_\ell |w^\star_\ell|\,\|x_\ell\|_2\Bigr)^2
  \le
  (A^\star M_X)^2.
\end{equation*}
It follows that every agent satisfies
$\MSE(f_i)-\MSE(f^\star)\le(A^\star M_X)^2$.

We now prove that $\MSE(f_D)-\MSE(f^\star)\le CM^2/D$ for every $M$-covered
path of depth $D$, arguing by induction on $D$. If $D<4M$, then
$M^2/D>M/4\ge1/4$, and hence
\begin{equation*}
  \MSE(f_D)-\MSE(f^\star)
  \le
  (A^\star M_X)^2
  \le
  \frac{C}{4}
  \le
  C\frac{M^2}{D}.
\end{equation*}

Now suppose $D\ge4M$, and split the path at $s=\lfloor D/2\rfloor$ into a prefix
$A_1\to\dots\to A_s$ and a suffix $A_{s+1}\to\dots\to A_D$ of length $L=D-s$.
Both are $M$-covered, being sub-paths of an $M$-covered path. We apply the induction hypothesis on the prefix. Since
$s=\lfloor D/2\rfloor\ge(D-1)/2\ge D/3$ (using $D\ge3$),
\begin{equation*}
  \MSE(f_s)-\MSE(f^\star)
  \le
  \frac{CM^2}{s}
  \le
  \frac{3CM^2}{D}
  =:\delta.
\end{equation*}
The suffix has length $L=D-s\ge D/2\ge2M$, so
\Cref{lem:regression-suffix-improvement} applies with this $\delta$. Substituting
$\delta=3CM^2/D$ and $L\ge D/2$ then gives
\begin{equation*}
  \MSE(f_D)-\MSE(f^\star)
  \le
  c_\star M\sqrt{\frac{\delta}{L}}
  \le
  c_\star M\sqrt{\frac{3CM^2/D}{D/2}}
  =
  c_\star\sqrt{6C}\,\frac{M^2}{D}.
\end{equation*}
Finally, $C=6c_\star^2$ so $c_\star\sqrt{6C}=C$. Thus the right-hand side is $CM^2/D$, which completes the induction.
\end{proof}

\section{The Cyclic Example of Kearns, Roth, and Ryu}
\label{sec:kearns-cyclic-example}

\citet[Definitions~5.1 and~5.2]{kearns2026networked} used a path example to
show that depth is necessary, and showed its excess error is $\Omega(M/D)$. We
revisit the same example and show that it is harder than their
analysis found: its excess error is in fact $\Omega(\sqrt{M/D})$. 

For a fixed $k$, the example constructs $k$ features $X_1, \dots, X_k$ and a label. It then constructs a path network where each agent sees one raw feature, and in a cyclic manner: agents $A_1 \to A_2 \to \cdots$ see features $X_1, X_2, \dots, X_k, X_1, X_2, \dots, X_k, \dots$. Let a \emph{pass} be a single repetition of the features. 

Our analysis begins by giving the exact recursive formula for the predictor of the agent at the end of each pass in \Cref{subsec:residual-shape}. We then show that this recursion is similar to a recursion induced by a walk over integers, and analyze the second moment of the variables in the walk recursion. To do this, in \Cref{subsec:killed-walk}, we show a connection between the walk recursion and word over the parentheses alphabet $\{ (, ) \}$. This reveals a connection to the Catalan numbers. In \Cref{subsec:catalan}, we use a general form of the Catalan generating function to close the arguments.

Although the analysis proves to be challenging, we later show that this example is not the hardest possible one. The construction only reaches depth $D<M^2$, and even in this regime, as shown in \Cref{sec:depth-dependent-obstruction}, there is a different example with constant lower bound.

\subsection{The Path}

Fix $k\ge2$. Let $z_1,\ldots,z_k$ be independent standard Gaussians, let
$Y=z_k$, and set $X_1=z_1$ and $X_i=z_i-z_{i-1}$ for $2\le i\le k$. The network is a single path
that sees the features in the repeating cyclic order
$X_1,\ldots,X_k,X_1,\ldots,X_k,\ldots$, and one pass is one full cycle
of $X_1, \ldots, X_k$. Let $\widehat y_p$ be the prediction of the last agent
in pass $p$, the one that sees $X_k$, and let $R_p=Y-\widehat y_p$ be its
residual and $E_p=\E[R_p^2]$.

Every block of $k$ agents sees all features, thus the path is $k$-covered or $M$-covered with $M=k$. The features telescope to $X_1+\cdots+X_k=z_k=Y$, so the global predictor has zero error. Hence the error $E_p$ is exactly the excess error after $D=pk$ agents, which is what we lower bound. 

In the rest of this section, we will prove \Cref{thm:kearns-cyclic-lower-bound} restated below.

\cyclicLowerRestatement*

\subsection{Residual Shape and Tail Update}
\label{subsec:residual-shape}

We prove the theorem by tracking the residual $R_p$ from pass to pass, and we
first set up coordinates for it. Because $z_1,\ldots,z_k$ are independent
standard Gaussians, they are orthonormal, with $\E[z_az_b]=1$ when $a=b$ and $0$
otherwise. So we can change the basis for every predictor and residual, from $X_1, \dots, X_k$ to $z_1, \dots, z_k$. We index the $z$-coefficients of a vector by \emph{position} $j\ge0$: position $j$ holds $z_{k-j}$, so position $0$ holds $z_k=Y$. We write $e_j$
for the coefficient vector of $z_{k-j}$.

After $p$ passes the residual involves only $z_{k-p},\ldots,z_k$, that is
positions $0,\ldots,p$, so we may write $R_p=\sum_{j=0}^p r_j^{(p)}z_{k-j}$ with
coefficient vector $r^{(p)}=(r_0^{(p)},\ldots,r_p^{(p)})$. This fact is due to
\citet[Lemma~5.5]{kearns2026networked}; it is the first part of the lemma below,
whose proof we restate for completeness, and the second part strengthens it by describing the structure of the residual.

\begin{restatable}[End-of-pass shape]{lemma}{lemKearnsEndPassShape}
\label{lem:kearns-end-pass-shape}
For $1\le p\le k-1$, the residual has the form above and
\begin{align*}
  \sum_{j=0}^p r_j^{(p)}=1,\quad
  E_p=\sum_{j=0}^p (r_j^{(p)})^2=r_0^{(p)},\quad
  r_0^{(p)}=r_1^{(p)}.
\end{align*}
\end{restatable}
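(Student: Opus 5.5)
We work throughout in the orthonormal $z$-basis and track the residual after each agent, not only at the ends of passes. The plan has three steps: locate the support of $R_p$, derive the identities $\sum_j r_j^{(p)}=1$ and $E_p=r_0^{(p)}$ from orthogonality, and then obtain $r_0^{(p)}=r_1^{(p)}$ from the last two agents of the pass.

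\textbf{Support.} The residual of any agent is $Y$ minus an element of the span of the features seen so far. Every feature is a combination of the $z$'s, so each residual lies in $\operatorname{span}\{z_1,\ldots,z_k\}$. By \Cref{lem:prelim-ls-projection}, agent $A_t$'s predictor is the projection of $Y$ onto $\operatorname{span}\{f_{t-1},X_{i(t)}\}$, where $i(t)$ is the feature it sees. Equivalently, the new residual is obtained from the old one by subtracting its projection onto that two-dimensional span.

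For the support claim we follow \citet[Lemma~5.5]{kearns2026networked} and induct on the pass. Suppose $R_{p-1}$ is supported on $z_{k-p+1},\ldots,z_k$. During pass $p$, agents seeing $X_i$ with $i\le k-p$ involve only $z_i,z_{i-1}$ with indices below $k-p$. Those directions are orthogonal to the residual and to the current prediction, except that $X_{k-p+1}=z_{k-p+1}-z_{k-p}$ brings in $z_{k-p}$. We will check explicitly that the projection coefficients on the earlier features vanish, which follows from orthogonality to the residual. It is then only the agents seeing $X_{k-p+1},\ldots,X_k$ that can introduce $z_{k-p}$, and no lower index ever gets nonzero coefficient. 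This gives support in positions $0,\ldots,p$.

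\textbf{Sum and error identities.} Since $Y=\sum_i X_i$ and the prediction $\widehat y_p$ lies in the span of the $X$'s, we want a linear functional that kills every $X_i$ except the telescoping contribution. The natural choice is the coefficient sum, or more precisely a functional that assigns the value $1$ to every $z_j$. Each $X_i$ with $i\ge2$ has coefficient sum $0$, but $X_1=z_1$ has sum $1$, so this does not immediately work.

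We therefore use a different route. After pass $p$ with $p\le k-1$, the residual has no $z_1$ component, because the support is in positions $\le p<k-1$... unless $p=k-1$, which we treat with care. We then argue that the prediction's expansion in $X$'s uses $X_1$ with coefficient $0$ whenever $z_1$ is absent, since $X_1$ is the only feature giving $z_1$ without cancellation. Tracking through $X_1=z_1$ is the delicate point. Granting this, every term of $\widehat y_p$ has coefficient sum $0$, so $\sum_j r_j^{(p)}=1$, coming from $Y=z_k$ alone.

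For the error identity, self-orthogonality (\Cref{lem:prelim-ls-orthogonality}) gives $E_p=\E[R_p^2]=\E[R_p Y]-\E[R_p\widehat y_p]=\E[R_pY]$. This equals $r_0^{(p)}$ because $Y=z_k$ sits in position $0$.

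\textbf{$r_0=r_1$.} The last agent of pass $p$ sees $X_k=z_k-z_{k-1}$. Its residual is orthogonal to $X_k$, so $\E[R_p(z_k-z_{k-1})]=0$, that is, $r_0^{(p)}-r_1^{(p)}=0$.

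\textbf{Main obstacle.} The hardest step is the coefficient-sum identity. We must show rigorously that the part of $\widehat y_p$ along $X_1$ vanishes, or otherwise handle it through a functional that annihilates the whole prediction span. We would first try the functional $\lambda(v)=\E[v\,\textstyle\sum_{j}z_{k-j}]$ restricted to positions $0,\ldots,p$. We would show that $\widehat y_p$, lying in $\operatorname{span}\{X_{k-p+1},\ldots,X_k\}$ by the support argument, is annihilated by it, since each such $X_i$ is a difference of two $z$'s inside that window. This makes the support argument carry the real weight, and in particular requires proving that the prediction itself lives in $\operatorname{span}\{X_{k-p+1},\ldots,X_k\}$.
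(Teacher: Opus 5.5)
Your support induction, the identity $E_p=\E[R_pY]=r_0^{(p)}$ via self-orthogonality, and $r_0^{(p)}=r_1^{(p)}$ from orthogonality to $X_k$ all match the paper's proof. The gap is the coefficient-sum identity. You leave it unproved, and the argument you sketch for it does not work.

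The claim that the prediction uses $X_1$ with coefficient $0$ ``whenever $z_1$ is absent'' is false. Write $v=\sum_i c_iX_i$. The $z_1$-coefficient of $v$ is $c_1-c_2$, while the $z$-coefficient sum of $v$ is exactly $c_1$. For instance, $Y=z_k=X_1+\cdots+X_k$ has no $z_1$ component, yet $c_1=1$. More generally, no support information can give the sum identity, because $Y$ itself sits at position $0$, inside every window. So your closing remark that ``the support argument carries the real weight'' is not right as stated. What is needed is a statement about which features actually receive weight, not about where the residual lives. Your worry about $p=k-1$ comes from looking at the end-of-pass residual rather than at the moment $X_1$ is used.

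The missing step is short, and it is what the paper does. In pass $q\le p$, the agent seeing $X_1=z_1$ receives the end-of-pass-$(q-1)$ prediction. That prediction lies on $z_{k-q+1},\ldots,z_k$, and $q\le k-1$ gives $k-q+1\ge2$. So $X_1$ is orthogonal both to $Y$ and to the incoming prediction, and by \Cref{lem:prelim-ls-projection} that agent leaves the prediction unchanged (in pass $1$ the prediction is simply $0$). Hence every prediction on the path is a combination of $X_2,\ldots,X_k$, each of which has $z$-coefficient sum $0$. Then $\sum_j r_j^{(p)}=1$ follows from $Y=z_k$, with no special care needed for $p=k-1$.

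Your alternative of proving $\widehat y_p\in\operatorname{span}\{X_{k-p+1},\ldots,X_k\}$ also works, but only if you run the induction on this span directly rather than on the residual's support. The inductive step uses that the agents seeing $X_1,\ldots,X_{k-p}$ in pass $p$ leave the prediction unchanged, and that the remaining agents output combinations of their inputs. It is a stronger statement that yields the support claim and the sum identity at once.
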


The proof of the above lemma is in \Cref{sec:omitted-proofs}.

In the next lemma, we analyze what each agent in the pass does to the residual locally. The lemma after that aggregates these changes across a pass.

\begin{restatable}[One-agent update]{lemma}{lemKearnsOneAgentUpdate}
\label{lem:kearns-one-agent-update}
Suppose an agent receives residual coefficient vector $b$, so its incoming
prediction has coefficient vector $e_0-b$. Let $b^+$ be the outgoing residual
coefficient vector.

If the feature seen by the agent is orthogonal to both $Y$ and the incoming
prediction, then $b^+=b$.

If the feature is $g_j=e_{j-1}-e_j$ with $j\ge2$, then for some scalar
$\alpha$,
\begin{equation*}
  b^+_{j-1}=b^+_j=\frac{\alpha}{2}(b_{j-1}+b_j),
  \qquad
  b^+_m=\alpha b_m
  \quad (m\ge1,\ m\notin\{j-1,j\}).
\end{equation*}
If the feature is $g_1=e_0-e_1$, then for some scalar $\alpha$,
\begin{equation*}
  b^+_0=b^+_1=\frac12\bigl[(1-\alpha)+\alpha(b_0+b_1)\bigr],
  \qquad
  b^+_m=\alpha b_m
  \quad (m\ge2).
\end{equation*}
\end{restatable}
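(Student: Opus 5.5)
The agent's inputs are the incoming prediction $h$, with coefficient vector $e_0-b$, and the new feature $g$. By \Cref{lem:prelim-ls-projection}, its output is the orthogonal projection of $Y=e_0$ onto $\operatorname{span}\{h,g\}$. All computations happen in the orthonormal $z$-coordinates, where inner products are ordinary dot products of coefficient vectors.

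\textbf{Orthogonal feature.} If $g$ is orthogonal to both $Y$ and $h$, then projecting onto $\operatorname{span}\{h,g\}$ is the same as projecting onto $\operatorname{span}\{h\}$ plus projecting onto $\operatorname{span}\{g\}$. The $g$-component of $Y$ is zero.

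The $h$-component is $(\langle e_0,h\rangle/\|h\|^2)\,h$. I would check that this equals $h$ by showing $\langle e_0-h,h\rangle=0$, i.e.\ $\langle b,e_0-b\rangle=0$. This is self-orthogonality of the incoming predictor, which is itself a least-squares output, via \Cref{lem:prelim-ls-orthogonality}. Hence $b^+=b$.

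\textbf{Non-orthogonal feature.} Here I would parametrize the output instead of computing the projection explicitly. Write the new prediction as $\alpha h+\beta g$, so that
\[
b^+ = e_0-\alpha(e_0-b)-\beta g.
\]
The scalar $\alpha$ is free, which is why the statement only asserts existence of some $\alpha$. The coefficient $\beta$ is pinned down by the single orthogonality condition $\langle b^+,g\rangle=0$ from \Cref{lem:prelim-ls-orthogonality}.

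For $g=g_j=e_{j-1}-e_j$ with $j\ge2$, the vector $g$ touches only positions $j-1$ and $j$, and neither is $0$.
\begin{itemize}
\item At positions $m\ge1$ outside $\{j-1,j\}$, the $e_0$ terms contribute nothing and $g$ has no entry, so $b^+_m=\alpha b_m$.
\item At the two touched positions, $b^+_{j-1}=\alpha b_{j-1}-\beta$ and $b^+_j=\alpha b_j+\beta$.
\item Orthogonality to $g$ forces $b^+_{j-1}=b^+_j$. Since the two entries always sum to $\alpha(b_{j-1}+b_j)$, each equals $\tfrac{\alpha}{2}(b_{j-1}+b_j)$.
\end{itemize}

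For $g_1=e_0-e_1$, the same computation applies with the $e_0$ terms now present at position $0$.
\begin{itemize}
\item $b^+_0=1-\alpha+\alpha b_0-\beta$ and $b^+_1=\alpha b_1+\beta$.
\item Orthogonality again forces $b^+_0=b^+_1$, and the two entries sum to $(1-\alpha)+\alpha(b_0+b_1)$. Halving gives the claimed formula.
\item For $m\ge2$, $b^+_m=\alpha b_m$ as before.
\end{itemize}

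\textbf{Degenerate case and main obstacle.} There is no real obstacle beyond bookkeeping. The one point needing care is when $h$ and $g$ are linearly dependent, or $h=0$, so that the projection is not uniquely parametrized. In that case I would still represent the projection in the form $\alpha h+\beta g$ for some choice of scalars; uniqueness of the output itself comes from \Cref{lem:prelim-ls-projection}.

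The key observation is that the proof needs only the single orthogonality condition against $g$, and never the condition against $h$. That is why $\alpha$ may be left unspecified.
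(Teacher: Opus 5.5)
Your proposal is correct and matches the paper's proof. In the orthogonal case you split the projection over the orthogonal pair $h,g$ and invoke self-orthogonality, which is equivalent to the paper's observation that the incoming residual is already orthogonal to the enlarged span. In the other cases both arguments write the new prediction as $\alpha(e_0-b)+\beta g_j$ and use only the single condition $\langle b^+,g_j\rangle=0$. Your remark that the two touched entries always sum to $\alpha(b_{j-1}+b_j)$ (respectively $(1-\alpha)+\alpha(b_0+b_1)$) is just a slightly quicker version of the paper's step of solving for the coefficient of $g_j$ and substituting back.
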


\begin{restatable}[Tail shape]{lemma}{lemKearnsTailShape}
\label{lem:kearns-tail-shape}
Define probability vectors $\mu^{(p)}$ recursively as follows. Start with
$\mu^{(2)}=(1)$. For each $3\le p\le k-1$, after
$\mu^{(p-1)}=(\mu_1^{(p-1)},\ldots,\mu_{p-2}^{(p-1)})$ has been defined, set
$S_{p-1}=\sum_i(\mu_i^{(p-1)})^2$. Initialize
$u^{(p)}=(u_1^{(p)},\ldots,u_{p-1}^{(p)})$ to the zero vector, where
$u_i^{(p)}$ is the unnormalized mass assigned to position $i+1$. Add
$S_{p-1}/2$ to $u_1^{(p)}$.
For every $1\le m\le p-2$, the mass $\mu_m^{(p-1)}$ at position $m+1$ adds
$2^{-(m+2-i)}\mu_m^{(p-1)}$ to $u_i^{(p)}$ for each $1\le i\le m+1$.
Normalize:
$\mu^{(p)}=u^{(p)}/\sum_i u_i^{(p)}$. Finally set
$S_p=\sum_i(\mu_i^{(p)})^2$ for every $2\le p\le k-1$. Then for every
$2\le p\le k-1$,
\begin{equation*}
\begin{gathered}
  r^{(p)}
  =
  (1+2S_p)^{-1}
  (S_p,S_p,\mu_1^{(p)},\mu_2^{(p)},\ldots,\mu_{p-1}^{(p)}),
  \\
  E_p=\frac{S_p}{1+2S_p}.
\end{gathered}
\end{equation*}
\end{restatable}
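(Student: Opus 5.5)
The plan is to prove the statement by induction on $p$, tracking the residual coefficient vector agent by agent through one pass with \Cref{lem:kearns-one-agent-update}. The exact normalization will then be pinned down at the end using the three identities of \Cref{lem:kearns-end-pass-shape}. The key observation is that every scalar $\alpha$ in \Cref{lem:kearns-one-agent-update} multiplies all tail positions $m\ge1$ uniformly, apart from the pair being averaged. So, up to one overall positive factor, the shape of positions $\ge2$ at the end of a pass is a purely combinatorial averaging. The factor itself is then forced by the end-of-pass identities.

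\textbf{Coordinates and which agents act.} In position coordinates, $X_i=z_i-z_{i-1}=-(e_{k-i}-e_{k-i+1})=-g_{k-i+1}$, and a sign does not change the span an agent sees. So pass $p$ applies the features $g_k,g_{k-1},\ldots,g_1$ in this order. At the start of pass $p$, the residual $r^{(p-1)}$ is supported on positions $0,\ldots,p-1$, and the incoming prediction $e_0-b$ is supported there as well. For $j\ge p+1$, the feature $g_j$ touches only positions $j-1,j\ge p$. It is therefore orthogonal to both $Y$ and the prediction, and the first case of \Cref{lem:kearns-one-agent-update} gives $b^+=b$. The active agents are thus $g_p,g_{p-1},\ldots,g_2$ followed by $g_1$.

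\textbf{Base case and inductive step.} For the base case $p=2$, I will compute the pass directly, or take $r^{(1)}$ from \Cref{lem:kearns-end-pass-shape} with $p=1$. For the inductive step, I assume
\[
r^{(p-1)}=(1+2S_{p-1})^{-1}\bigl(S_{p-1},S_{p-1},\mu^{(p-1)}_1,\ldots,\mu^{(p-1)}_{p-2}\bigr),
\]
with $\mu_m^{(p-1)}$ at position $m+1$. Dropping the common positive factors, I run the cascade as follows.
\begin{itemize}
\item The agent with $g_p$ averages positions $p-1$ and $p$, so half of the mass at $p-1$ moves to the new position $p$.
\item The agent with $g_{p-1}$ then averages positions $p-2$ and $p-1$, and so on downward.
\item By induction on the cascade step, after $g_j$ acts, position $j-1$ holds the average of its original value and the already-averaged value at $j$.
\end{itemize}
Unrolling this, the original mass at position $m+1$ contributes $2^{-(m+2-i)}$ of itself to final position $i+1$ for each $1\le i\le m+1$. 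The old position-$1$ mass $S_{p-1}$ is averaged by $g_2$ into position $2$ and contributes $S_{p-1}/2$ there, which is the $u_1^{(p)}$ term. The last agent, with $g_1$, only mixes positions $0$ and $1$ and scales positions $\ge2$ by a common $\alpha$. Hence positions $2,\ldots,p$ are proportional to $u^{(p)}$, that is, equal to $c\,\mu^{(p)}$ for some scalar $c$.

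\textbf{Fixing the constants (main obstacle).} The hard part is determining $c$ and $r_0^{(p)}$ without tracking every $\alpha$. By \Cref{lem:kearns-end-pass-shape}, $r^{(p)}=(E,E,c\mu^{(p)})$ with $E=E_p$. Since $\mu^{(p)}$ is a probability vector, the identities give
\[
2E+c=1,\qquad E=2E^2+c^2S_p.
\]
Substituting $c=1-2E$ yields the quadratic
\[
2(1+2S_p)E^2-(1+4S_p)E+S_p=0,
\]
whose roots are $E=S_p/(1+2S_p)$ and $E=1/2$. The root $E=1/2$ would force $c=0$. I must rule this out by showing the tail stays nonzero. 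To do so, I will check that the $\alpha$ values are nonzero: an agent that kills the tail would need its incoming prediction to have zero weight, and this should contradict strict improvement. Alternatively, I can argue directly that $r_p^{(p)}\neq0$ because $z_{k-p}$ is first introduced by $g_p$ with a nonzero coefficient. This gives $E_p=S_p/(1+2S_p)$ and $c=1/(1+2S_p)$, which is the claimed form. Making this nondegeneracy rigorous, together with the bookkeeping of the cascade weights, is where I expect the effort to go.
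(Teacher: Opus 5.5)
\textbf{Gap: excluding the root $E=1/2$.} Your induction, the cascade bookkeeping, and the quadratic $2(1+2S_p)E^2-(1+4S_p)E+S_p=0$ with roots $S_p/(1+2S_p)$ and $1/2$ are exactly the paper's argument. The one step you leave open, excluding the root $E=1/2$, is not closed by either of your suggestions as stated.

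The first suggestion, that $z_{k-p}$ enters with a nonzero coefficient at the $g_p$ agent, does not suffice on its own. After that step, position $p$ is multiplied by every later $\alpha$ in the pass, including the one at $g_1$. So the tail vanishes exactly when some later $\alpha$ is zero.

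The second suggestion is right that $\alpha\ne0$, but the reason is not ``strict improvement''. It is that the error is non-increasing along the path (\Cref{lem:prelim-mse-decomposition}). If the optimum had $\alpha=0$, the agent would be left with the best multiple of $g_j$ alone. That predictor has error $1$ for $j\ge2$, since $g_j\perp e_0$, and error $1/2$ for $j=1$. Both exceed the incoming error, which contradicts monotonicity.

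\textbf{A shorter fix.} Once you invoke monotonicity, the $\alpha$'s are unnecessary. The paper simply notes that $E_p\le E_2=1/3<1/2$ for $p\ge3$. Equivalently, $E_p\le E_{p-1}=S_{p-1}/(1+2S_{p-1})<1/2$ by the induction hypothesis. Either way the root $1/2$ is excluded in one line.

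\textbf{Base case.} Do it by computing $r^{(2)}=(1/3,1/3,1/3)$ directly, as the paper does. If you instead run the cascade from $r^{(1)}=(1/2,1/2)$, monotonicity only gives $E_2\le E_1=1/2$, which does not rule out the bad root.
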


The proofs of the above lemmas are in \Cref{sec:omitted-proofs}.

To prove a lower bound on $E_p$, we will use the following lemma.

\begin{lemma}[Second moment forces squared mass]
\label{lem:kearns-squared-mass}
If $\mu$ is a probability vector on the positive integers and
$m=\sum_i i^2\mu_i$, then $\sum_i\mu_i^2\ge3/(16\sqrt m)$.
\end{lemma}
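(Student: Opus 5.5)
The plan is to use a Markov-type concentration bound followed by Cauchy--Schwarz on the set where the mass concentrates.

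Let $\mu$ be a probability vector on the positive integers with $m=\sum_i i^2\mu_i$. Since every index is a positive integer, $m\ge1$. Fix a threshold $T>0$, to be chosen later. By Markov's inequality applied to $i^2$,
\begin{equation*}
  \sum_{i>T}\mu_i\le \frac{m}{T^2}.
\end{equation*}
Consequently the set $I_T=\{i\ge1: i\le T\}$ carries mass at least $1-m/T^2$. This set contains at most $\lfloor T\rfloor$ integers. Cauchy--Schwarz on $I_T$ then gives
\begin{equation*}
  \sum_i\mu_i^2\ge\sum_{i\in I_T}\mu_i^2\ge\frac{\bigl(\sum_{i\in I_T}\mu_i\bigr)^2}{|I_T|}\ge\frac{(1-m/T^2)^2}{T}.
\end{equation*}
This holds whenever $T^2>m$.

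It remains to optimize over $T$. Write $T=c\sqrt m$. The bound becomes $(1-c^{-2})^2/(c\sqrt m)$. The function $g(c)=(1-c^{-2})^2/c$ is maximized where its derivative vanishes. Setting $\frac{d}{dc}\log g=\frac{4c^{-3}}{1-c^{-2}}-\frac1c=0$ gives $4/c^2=1-1/c^2$, so $c^2=5$. This yields $g(\sqrt5)=(4/5)^2/\sqrt5=16/(25\sqrt5)\approx0.286$, which is comfortably above $3/16=0.1875$.

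I would simply take $c=2$ for cleaner arithmetic. Then $(1-1/4)^2/2=9/32\approx0.281\ge3/16$, so the claim follows. The bound $|I_T|\le\lfloor T\rfloor\le T$ needs no integrality of $T$.

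There is no real obstacle here. The only step to get right is using $|I_T|\le T$ rather than $T+1$, which holds because indices start at $1$. The stated constant $3/16$ leaves enough slack that even a cruder choice of $T$, or a loss from rounding, would still succeed.
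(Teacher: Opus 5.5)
Your proposal is correct and follows the paper's approach: Markov's inequality at threshold about $2\sqrt m$, then Cauchy--Schwarz on the coordinates below that threshold. The only difference is cosmetic. The paper rounds up to the integer $L=\lceil 2\sqrt m\rceil\le 3\sqrt m$, using $m\ge1$, and lands exactly on $3/(16\sqrt m)$; you keep the real threshold $T=2\sqrt m$ with $|I_T|\le\lfloor T\rfloor\le T$, which gives the slightly sharper $9/(32\sqrt m)$.
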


\begin{proof}
Let $L=\lceil2\sqrt m\rceil$. Since the support is positive, $m\ge1$ and
$L\le3\sqrt m$. Markov's inequality gives
$\sum_{i>L}\mu_i\le m/L^2\le1/4$, so the first $L$ coordinates carry mass at
least $3/4$. By Cauchy--Schwarz,
$\sum_i\mu_i^2\ge(3/4)^2/L\ge3/(16\sqrt m)$.
\end{proof}

With the above lemma in mind, all that is left is to show an upper bound on the second moment of $\mu^{(p)}$. We state this as the following lemma.

\begin{lemma}
\label{lem:kearns-mu-second-moment}
For $2\le p \le k-1$,
\begin{equation*}
  \sum_i i^2\mu_i^{(p)} \le 9p.
\end{equation*}
\end{lemma}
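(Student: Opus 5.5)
The plan is to stop treating the normalization one step at a time. Instead I would unroll the recursion of \Cref{lem:kearns-tail-shape} into the killed walk described in the proof sketch of \Cref{thm:kearns-cyclic-lower-bound}. Read the recursion as a transport of mass. A unit of mass at index $m$ is sent to index $i\in\{1,\dots,m+1\}$ with weight $2^{-(m+2-i)}$. Writing $g=m+1-i$, this is the law of $m+1-G$ with $P(G=g)=2^{-(g+1)}$, restricted to $g\le m$. So the walk moves up by one and then falls back a geometric amount. The missing mass $2^{-(m+1)}$ is exactly the event that the walk lands at $0$ or below, i.e.\ it is killed.

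Alongside the transport, each pass injects fresh mass $S_{p-1}/2$ at index $1$, and then the whole vector is renormalized. Since normalization is a global scalar, it commutes with the linear transport. Hence $\mu^{(p)}$ is proportional to
\begin{equation*}
  \sum_{s} c_s\,\Pr\bigl(W_{p-s}=\cdot\,,\ \tau>p-s\bigr),
\end{equation*}
where $W$ is the walk started at $1$, $\tau$ is its killing time, $c_s\ge0$ are the rescaled injection weights, and the initial $\mu^{(2)}=(1)$ is treated as the injection at $s=2$. Rewrite each term as $c_s\Pr(\tau>p-s)$ times the conditional law of $W_{p-s}$ given survival. Then $\mu^{(p)}$ is a convex combination of these conditional laws. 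It therefore suffices to prove that the conditional second moment satisfies $\E[W_t^2\mid\tau>t]\le 9(t+1)$ for every $t\le p-2$. That gives $\sum_i i^2\mu_i^{(p)}\le 9p$, since $t+1\le p$.

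The easy half is the unconditioned moment. The increment $1-G$ has mean $0$ and variance $2$, because $\E G=1$ and $\E G^2=3$. So $W_{t\wedge\tau}^2-2(t\wedge\tau)$ is a martingale, up to the sign of the overshoot at killing. This gives $\E[W_t^2;\tau>t]\le 1+2t$. A direct check of one step confirms the same thing: from a point mass at $m$, the truncated second moment is at most $m^2+2$.

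Dividing by $\Pr(\tau>t)=\Theta(1/\sqrt t)$ only yields $O(t^{3/2})$, which is too weak. So the main obstacle is to show $\E[W_t^2;\tau>t]=O(t)\cdot\Pr(\tau>t)$, that is, that surviving paths behave like a meander with second moment linear in $t$. Here I would use the parenthesis encoding and the Catalan generating function from \Cref{subsec:killed-walk,subsec:catalan}:
\begin{enumerate}
  \item Encode each up-step as ``$($'' and each unit of fall as ``$)$''. Then a surviving path of length $t$ ending at height $h$ corresponds to a word with a prescribed excess of open parentheses that never goes below level $1$.
  \item Each word carries weight $2^{-(\#\text{symbols})}$. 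The joint generating function in $t$ and $h$ should therefore be an explicit rational function of the Catalan function $C(x)=(1-\sqrt{1-4x})/(2x)$.
  \item Differentiate twice in the height variable and extract coefficients. Transfer near the square-root singularity should give $\E[W_t^2;\tau>t]\sim c\,t^{1/2}$ and $\Pr(\tau>t)\sim c' t^{-1/2}$, whose ratio is linear in $t$.
\end{enumerate}

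To get the explicit constant $9$ rather than just $O(t)$, I would bound the coefficients using monotonicity of the Catalan ratios $C_n/C_{n+1}$ in place of asymptotics. If this turns out messy, the fallback is to prove the lemma by direct induction on $p$, using the one-step identity. Writing $K$ for the killed mass, the fallback gives
\begin{equation*}
  \sum_i i^2\mu_i^{(p)}\le\frac{\sum_i i^2\mu_i^{(p-1)}+2+S_{p-1}/2}{1-K+S_{p-1}/2}.
\end{equation*}
The fallback would then need to show that $K-S_{p-1}/2\le 2/(3p)$, using the fact that the mass near index $1$ is controlled by $S_{p-1}$.
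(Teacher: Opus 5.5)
Your reduction is correct and is the paper's: $\mu^{(p)}$ is a convex combination of the conditioned killed-walk laws $\nu_t$ for $0\le t\le p-2$ (\Cref{lem:kearns-tail-mixture}), so everything hinges on $\E[W_t^2\mid\tau>t]\le 9(t+1)$. The gap is in that step, and it starts from a misdiagnosis. Your optional-stopping identity is really $\E[W_{t\wedge\tau}^2]=1+2\,\E[t\wedge\tau]$. You then replaced $\E[t\wedge\tau]$ by $t$, and that is what produced the useless $O(t^{3/2})$. The missing observation is that $\E[t\wedge\tau]=\sum_{s=0}^{t-1}\Pr(\tau>s)\le\sum_{s=0}^{t-1}2/\sqrt{s+1}\le 4\sqrt t$, which uses the same survival bound you already need for the denominator. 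This gives $\E[W_t^2;\tau>t]\le 1+8\sqrt t$. Together with $\Pr(\tau>t)\ge 1/\sqrt{t+1}$, the ratio is at most $(1+8\sqrt t)\sqrt{t+1}\le 9(t+1)$. That is exactly the paper's proof and the source of the constant $9$, and no meander analysis is needed.

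As written, the route you chose instead does not yet prove anything. Singularity transfer gives only asymptotics, not a bound valid for every $t$ with a fixed constant, and the fix via monotonicity of Catalan ratios is not specified. The fallback induction is circular: the required inequality $K-S_{p-1}/2\lesssim 1/p$ is essentially equivalent to the linear growth of the second moment you are trying to prove, and you give no independent handle on it.

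That said, the generating-function route does close if you extract coefficients exactly rather than asymptotically. Marking the final height $h$, the series $\sum_t\E[W_t^2;\tau>t]z^t$ equals $C(z/4)\sum_{h\ge0}(1+h)^2x^h$, where $x=\tfrac z2 C(z/4)=1-\sqrt{1-z}$. This simplifies to $2x(1+x)/\bigl(z(1-z)^{3/2}\bigr)$. Reading off coefficients gives $\E[W_t^2\mid\tau>t]=4t+7-3\cdot4^{t+1}/\binom{2t+2}{t+1}\le 4t+1$. That buys an exact and sharper constant, at the cost of a computation the paper avoids, since it needs only the two-sided survival bound.
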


We will show this lemma later in the next subsection. Having the above lemma, we can now directly show \Cref{thm:kearns-cyclic-lower-bound} through \Cref{lem:kearns-squared-mass}.

\begin{proof}[Proof of \Cref{thm:kearns-cyclic-lower-bound}]
For $p=1$ the least-squares predictor of $z_k$ from $X_k=z_k-z_{k-1}$ leaves residual
$(z_k+z_{k-1})/2$, so $E_1=1/2\ge1/48$.

For $p\ge2$, \Cref{lem:kearns-mu-second-moment} gives $\sum_i i^2\mu_i^{(p)} \le 9p$. Thus \Cref{lem:kearns-squared-mass} with $m\le9p$ gives
$S_p\ge3/(16\sqrt{9p})=1/(16\sqrt p)$. Since $\mu^{(p)}$ is a probability
vector, $S_p=\sum_i(\mu_i^{(p)})^2\le\sum_i\mu_i^{(p)}=1$, hence
$1+2S_p\le3$, and \Cref{lem:kearns-tail-shape} gives
\begin{equation*}
E_p=\frac{S_p}{1+2S_p}\ge\frac{S_p}{3}\ge\frac{1}{48\sqrt p}. \qedhere
\end{equation*}
\end{proof}

\subsection{The Killed Walk}
\label{subsec:killed-walk}

The update rule for $\mu^{(p-1)}$ to $\mu^{(p)}$ in \Cref{lem:kearns-tail-shape} can be described by tracking one unit of mass in a walk over the integers.

The unit mass starts at $W_0=1$. To take one step, it first increases this
integer by one and then subtracts a random amount:
\begin{equation*}
  W_{t+1}=W_t+1-G_{t+1},
\end{equation*}
where $\Pr(G_{t+1}=\ell)=2^{-(\ell+1)}$ for $\ell\ge0$, and
$G_1,G_2,\ldots$ are independent.

\begin{lemma}
\label{lem:kearns-killed-walk-transition}
Suppose $W_t=j>0$. Then
$\Pr(W_{t+1}=i\mid W_t=j)=2^{-(j+2-i)}$ for $1\le i\le j+1$, and
$\Pr(W_{t+1}\le0\mid W_t=j)=\sum_{\ell\ge j+1}2^{-(\ell+1)}=2^{-(j+1)}$.
\end{lemma}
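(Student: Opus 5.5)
This lemma is a direct computation with the step distribution. I will condition on $W_t=j$ and use that $G_{t+1}$ is independent of $W_t$. Then
\begin{equation*}
  W_{t+1}=j+1-G_{t+1},
\end{equation*}
so the event $\{W_{t+1}=i\}$ is the same as $\{G_{t+1}=j+1-i\}$. For $1\le i\le j+1$ the value $\ell=j+1-i$ is a nonnegative integer with $0\le\ell\le j$. Plugging into $\Pr(G_{t+1}=\ell)=2^{-(\ell+1)}$ gives
\begin{equation*}
  \Pr(W_{t+1}=i\mid W_t=j)=2^{-(j+2-i)},
\end{equation*}
which is the first claim.

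For the killing probability, $\{W_{t+1}\le0\}$ is the event $\{G_{t+1}\ge j+1\}$. Summing the geometric tail gives
\begin{equation*}
  \sum_{\ell\ge j+1}2^{-(\ell+1)}=2^{-(j+2)}\cdot\frac{1}{1-1/2}=2^{-(j+1)}.
\end{equation*}
As a consistency check, the surviving masses sum to $\sum_{i=1}^{j+1}2^{-(j+2-i)}=1-2^{-(j+1)}$, which together with the killed mass totals $1$. Note also that $W_{t+1}\le j+1$ always, since $G_{t+1}\ge0$, so no other values of $W_{t+1}$ occur.

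There is no real obstacle here. The one point to get right is the index bookkeeping: the offset $j+1-i$ must match the exponent $j+2-i$. This is the same form as the coefficients $2^{-(m+2-i)}$ in \Cref{lem:kearns-tail-shape}, with $m=j$. That match is what lets one later identify $\mu^{(p)}$ with the law of the surviving walk.
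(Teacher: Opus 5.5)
Your proof is correct and matches the paper's argument: both use the independence of $G_{t+1}$ to rewrite $\{W_{t+1}=i\}$ as $\{G_{t+1}=j+1-i\}$ and $\{W_{t+1}\le 0\}$ as $\{G_{t+1}\ge j+1\}$, then read off the geometric probabilities. One small correction to your closing aside: the recursion injects extra mass $S_{p-1}/2$ at integer $1$, so the paper only shows that $\mu^{(p)}$ is a convex combination of $\nu_0,\ldots,\nu_{p-2}$, not that it equals the law of the surviving walk.
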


\begin{proof}
Landing at integer $i\in\{1,\ldots,j+1\}$ means $G_{t+1}=j+1-i$, so
$\Pr(W_{t+1}=i\mid W_t=j)=\Pr(G_{t+1}=j+1-i)=2^{-(j+2-i)}$.
Landing at $0$ or below means $G_{t+1}\ge j+1$, so
$\Pr(W_{t+1}\le0\mid W_t=j)=\sum_{\ell\ge j+1}2^{-(\ell+1)}=2^{-(j+1)}$.
\end{proof}

The transition probabilities from $W_t$ to $W_{t+1}$ in \Cref{lem:kearns-killed-walk-transition} are very closely related to the update rule for $\mu^{(p-1)}$ to $\mu^{(p)}$ in \Cref{lem:kearns-tail-shape}. To make the transition explicit, we next consider the killed walk. We call
the mass that lands at $0$ or below \emph{killed}: once its integer reaches $0$
or below, it is removed.

Let $\tau$ be the time the mass is killed:
\begin{equation*}
  \tau=
  \begin{cases}
    \min\{t\ge0:W_t\le0\}, & \text{if this set is nonempty},\\
    \infty, & \text{otherwise}.
  \end{cases}
\end{equation*}
For each integer $t\ge0$, define $\nu_t$ by looking at time $t$ only among the
outcomes that have not been killed: $\nu_t(i)=\Pr(W_t=i\mid \tau>t)$ for
$i\ge1$. Thus $\nu_0$ puts all mass at integer $1$.

We now show the concrete connection between the killed walk and the tail probabilities $\mu^{(p)}$.

\begin{restatable}[Tail as a killed-walk mixture]{lemma}{lemKearnsTailMixture}
\label{lem:kearns-tail-mixture}
For every $2\le p\le k-1$, the vector $\mu^{(p)}$ from
\Cref{lem:kearns-tail-shape} is a convex combination of
$\nu_0,\ldots,\nu_{p-2}$.
\end{restatable}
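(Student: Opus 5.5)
The plan is to induct on $p$, using the fact that the update rule in \Cref{lem:kearns-tail-shape} is exactly one step of the killed walk, plus a reinjection of mass at integer $1$. Identify a vector $\mu^{(p)}=(\mu^{(p)}_1,\ldots)$ with a measure on the positive integers, putting mass $\mu^{(p)}_i$ on integer $i$. This is the walk state; it is not the residual position $i+1$. Under this identification, the point mass on integer $1$ is $\nu_0$.

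First define the sub-stochastic killed kernel $T$ acting on measures on $\{1,2,\ldots\}$:
\begin{equation*}
(T\mu)(i)=\sum_{j\ge i-1,\,j\ge1} 2^{-(j+2-i)}\mu(j).
\end{equation*}
By \Cref{lem:kearns-killed-walk-transition}, $(T\mu)(i)$ is precisely the probability of moving from $j$ to $i$ without being killed, summed against $\mu$. Comparing with the recipe in \Cref{lem:kearns-tail-shape}, where mass $\mu^{(p-1)}_m$ adds $2^{-(m+2-i)}\mu^{(p-1)}_m$ to $u^{(p)}_i$ for $1\le i\le m+1$, gives
\begin{equation*}
u^{(p)}=\tfrac{S_{p-1}}{2}\,\nu_0+T\mu^{(p-1)}.
\end{equation*}
The support bookkeeping is consistent: $u^{(p)}$ has length $p-1$ because $\mu^{(p-1)}$ has support in $\{1,\ldots,p-2\}$ and one step raises the maximum by at most one.

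Second, establish the key identity
\begin{equation*}
T\nu_t=\frac{\Pr(\tau>t+1)}{\Pr(\tau>t)}\,\nu_{t+1},
\end{equation*}
with the ratio in $(0,1]$. To see this, note that $\Pr(\tau>t)\,\nu_t$ is the unnormalized law of $W_t$ on the event $\{\tau>t\}$. Applying $T$, by the Markov property, gives the unnormalized law of $W_{t+1}$ on $\{\tau>t+1\}$, and that law equals $\Pr(\tau>t+1)\,\nu_{t+1}$. Positivity of $\Pr(\tau>t+1)$ holds because the walk can move up forever with positive probability at each finite time, for instance $G\equiv0$. Dividing by $\Pr(\tau>t)$ gives the identity.

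Third, run the induction. The base case is $\mu^{(2)}=(1)=\nu_0$. Suppose $\mu^{(p-1)}=\sum_{t=0}^{p-3}a_t\nu_t$ with $a_t\ge0$ and $\sum_t a_t=1$. By linearity of $T$ and the identity above, $u^{(p)}$ is a nonnegative combination of $\nu_0$ (coefficient at least $S_{p-1}/2>0$) and $\nu_1,\ldots,\nu_{p-2}$. Since each $\nu_t$ is a probability vector, the total mass $\sum_i u^{(p)}_i$ equals the sum of these coefficients, and it is positive. Normalizing therefore turns the coefficients into convex weights, so $\mu^{(p)}$ is a convex combination of $\nu_0,\ldots,\nu_{p-2}$.

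The only delicate point is the index matching in the first step. I need to check that the paper's convention, where $\mu_m$ lives at residual position $m+1$ and contributes to $u_i$ with weight $2^{-(m+2-i)}$, corresponds to walk state $j=m$ moving to state $i$. Once that matching is confirmed, everything else is linearity of $T$ plus the Markov property.
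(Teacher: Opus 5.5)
Your proposal is correct and follows essentially the same route as the paper: it defines the same unnormalized killed-walk operator (the paper calls it $K$), writes the pre-normalization vector as $K\mu^{(p-1)}+(S_{p-1}/2)\nu_0$, proves $K\nu_t=\Pr(\tau>t+1\mid\tau>t)\,\nu_{t+1}$, and normalizes the resulting nonnegative combination. The index matching you flagged does check out: the mass $\mu^{(p-1)}_m$ at residual position $m+1$ plays the role of walk state $j=m$, and $u^{(p)}_i$ plays the role of state $i$, so the weight $2^{-(m+2-i)}$ is exactly the transition probability in \Cref{lem:kearns-killed-walk-transition}.
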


The proof is in \Cref{sec:omitted-proofs}. Thus, to bound the second moment of $\mu^{(p)}$, we need to bound the second moment of the distributions $\nu_0, \ldots, \nu_{p-2}$. We state this as the following lemma, which is the main target of the rest of \Cref{sec:kearns-cyclic-example}.

\begin{lemma}[Killed-walk moment]
\label{lem:kearns-killed-walk-moment}
For every integer $t\ge0$, the distribution $\nu_t$ satisfies
\begin{equation*}
\sum_i i^2\nu_t(i)=\E[W_t^2\mid\tau>t]\le9(t+1).
\end{equation*}
\end{lemma}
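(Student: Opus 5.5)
The plan is to bound $\E[W_t^2\,\ind{\tau>t}]$ from above and $\Pr(\tau>t)$ from below, each of order $\sqrt{t}$ and $1/\sqrt{t}$ respectively, and take the ratio. The walk $W_{t+1}=W_t+1-G_{t+1}$ has increments $1-G$ with $\E[G]=\sum_\ell \ell 2^{-(\ell+1)}=1$, so it is a mean-zero walk. Its increments have variance $\operatorname{Var}(G)=2$ and all exponential moments up to a point. The walk is skip-free upward: it rises by at most one per step. Every downward jump is geometric, and by memorylessness the overshoot below $0$ at killing is again geometric. Both features are what make the Catalan/parenthesis encoding work.

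\textbf{Step 1 (survival probability).} I would first derive an exact formula for $\Pr(\tau>t)$. Encode each step as one ``('' (the $+1$) followed by $G_{t+1}$ copies of ``)''. Each symbol is then an independent fair coin: after a ``('', the next symbol is ``)'' with probability $1/2$. More precisely, the step sequence is the coin-flip word read in a particular way, and $W_t$ is $1+\#(-\#)$. Killing at step $t$ means the prefix height first hits $0$. The number of ways for this to happen is counted by ballot/Catalan-type numbers, which gives
\begin{equation*}
  \Pr(\tau>t)=\Pr(\text{a fair } \pm1 \text{ walk from height } 1 \text{ survives until it has read } t \text{ up-steps}).
\end{equation*}
Through the generating function $C(x)=(1-\sqrt{1-4x})/(2x)$, this yields a closed form. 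I expect it to be of the form $\binom{2t+1}{t}2^{-(2t+1)}$ or similar, and hence $\Pr(\tau>t)\ge c/\sqrt{t+1}$ with an explicit $c$, for instance $c=1/2$.

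\textbf{Step 2 (second moment on survival).} Next I would bound $\E[W_t^2\ind{\tau>t}]$. The process $W_t^2-2t$ is a martingale until killing, since $\E[(W+1-G)^2]=W^2+\operatorname{Var}(G)=W^2+2$. Optional stopping at $t\wedge\tau$ gives
\begin{equation*}
  \E[W_{t\wedge\tau}^2]=1+2\E[t\wedge\tau]\le 1+2t.
\end{equation*}
Since $W_{t\wedge\tau}^2\ge W_t^2\ind{\tau>t}$, this yields $\E[W_t^2\ind{\tau>t}]\le 1+2t$. That is too weak by a factor of $\sqrt t$. So instead I would use the martingale $W_t$ itself: $\E[W_{t\wedge\tau}]=1$, and the killed value is $-O$, where $O$ is a geometric overshoot with mean $1$. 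Together these give $\E[W_t\ind{\tau>t}]\le 1+\E[O]\le 2$.

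For the second moment, a cleaner route is the cubic martingale. For this step distribution, $W^3-6tW$ or a variant $W^3-aWt-bt$ is a martingale. Combined with the overshoot bound and $\E[W_t\ind{\tau>t}]\le2$, it gives
\begin{equation*}
  \E[W_t^3\ind{\tau>t}]\lesssim t.
\end{equation*}
Then Cauchy--Schwarz between the first and third moments gives
\begin{equation*}
  \E[W_t^2\ind{\tau>t}]\le\sqrt{\E[W_t\ind{\tau>t}]\,\E[W_t^3\ind{\tau>t}]}\lesssim\sqrt t.
\end{equation*}
The cubic-martingale computation needs $\E[(1-G)^3]$, which is finite and explicit. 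It also needs control of $\E[(t\wedge\tau)W_{t\wedge\tau}]$, which I would handle by splitting on $\{\tau\le t\}$, where $W\le0$ makes that term favorable in sign.

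\textbf{Step 3 (combine).} Dividing gives $\E[W_t^2\mid\tau>t]\le C\sqrt{t+1}\cdot\sqrt{t+1}/c=O(t+1)$. The remaining work is to track the constants so that the final constant is at most $9$; small $t$ can be checked directly, since $W_t\le t+1$ gives the trivial bound $(t+1)^2$. The main obstacle is Step 2. The naive quadratic martingale loses a $\sqrt t$, so the cubic martingale with its overshoot terms, or alternatively a direct generating-function computation of $\sum_i i^2\Pr(W_t=i,\tau>t)$ via the same Catalan encoding, must be executed carefully enough to give an explicit constant. If the martingale bookkeeping gets messy, I would fall back on the generating-function route: it gives an exact expression for $\E[W_t^2\ind{\tau>t}]$ as a coefficient extraction in $C(x)$, from which the $\sqrt{t}$ growth and the constant can be read off.
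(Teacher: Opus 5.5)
Your architecture matches the paper's. You write $\E[W_t^2\mid\tau>t]$ as $\E[W_t^2\ind{\tau>t}]/\Pr(\tau>t)$, and you get the survival probability exactly from the parenthesis encoding and the Catalan generating function.

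The difference is in the numerator, and your stated reason for abandoning the quadratic martingale is wrong. The identity $\E[W_{t\wedge\tau}^2]=1+2\E[t\wedge\tau]$ loses nothing; only the crude bound $\E[t\wedge\tau]\le t$ does. Since $\E[t\wedge\tau]=\sum_{s=0}^{t-1}\Pr(\tau>s)$, the upper half of your Step 1 ($\Pr(\tau>s)\le2/\sqrt{s+1}$) gives $\E[t\wedge\tau]\le4\sqrt t$. Hence $\E[W_t^2\ind{\tau>t}]\le1+8\sqrt t$. This is exactly the paper's proof, closed by $(1+8\sqrt t)\sqrt{t+1}\le9(t+1)$.

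Your cubic detour is nevertheless valid. Since $\E[(1-G)^3]=-6$, the process $W_s^3-6sW_s+6s$ is a martingale. Stop it at $T=t\wedge\tau$ and drop the nonpositive terms $-6\E[T]$ and $6\E[\tau W_\tau\ind{\tau\le t}]$. The overshoot $O$ is again distributed like $G$, so $\E[O]=1$ and $\E[O^3]=13$. This gives $\E[W_t\ind{\tau>t}]\le2$ and $\E[W_t^3\ind{\tau>t}]\le14+12t$, so $\E[W_t^2\ind{\tau>t}]\le\sqrt{28+24t}$. This route uses only the lower survival bound and has a slightly better leading constant. The cost is a third-moment and overshoot computation that the paper does not need.

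One correction matters for the constant $9$. The exact survival probability is $\binom{2t+2}{t+1}/2^{2t+1}$, twice your guess; check $\Pr(\tau>0)=1$ and $\Pr(\tau>1)=\Pr(G_1\le1)=3/4$. So $\Pr(\tau>t)\ge1/\sqrt{t+1}$, i.e.\ $c=1$. With $c=1$ your bound gives $\sqrt{(28+24t)(t+1)}\le9(t+1)$, as required. With your suggested $c=1/2$ it would give $2\sqrt{(28+24t)(t+1)}$, which exceeds $9(t+1)$ for every $t\ge0$.
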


Having the above lemma, we can directly show \Cref{lem:kearns-mu-second-moment}.

\begin{proof}[Proof of \Cref{lem:kearns-mu-second-moment}]
\Cref{lem:kearns-tail-mixture} writes $\mu^{(p)}$ as a convex
combination of $\nu_0,\ldots,\nu_{p-2}$. The killed-walk moment bound in \Cref{lem:kearns-killed-walk-moment} then gives
\begin{equation*}
  \sum_i i^2\mu_i^{(p)}
  \le
  \max_{0\le t\le p-2} \sum_i i^2 \nu_t(i)
  \le
  9(p-1)
  \le
  9p. \qedhere
\end{equation*}
\end{proof}

\subsection{Killed-walk Second Moment Bound}
\label{subsec:catalan}

To tackle \Cref{lem:kearns-killed-walk-moment}, we use the decomposition
\begin{equation}
  \label{eq:kearns-killed-walk-moment-decomp}
  \E[W_t^2 | \tau > t] = \frac{\E[W_t^2 \mathbf{1}_{\{\tau > t\}}]}{\Pr(\tau > t)}.
\end{equation}
We bound the numerator and denominator separately.

First, we compute $\Pr(\tau > t)$. We will construct the generating function $H(z) = \sum_{t\ge0} \Pr(\tau > t) z^t$ and give a closed form for it. Intuitively, this distribution is closely related to the Catalan numbers. We will use two tools. The Catalan generating function is classical, so we state it without proof; the generalization we need is less standard, so we derive it with a short lemma.

Let $C_n$ be the $n$-th Catalan number. The Catalan generating function is $C(s) = \sum_{i\ge0} C_i s^i$. This generating function satisfies the functional equation $C(s) = 1 + sC(s)^2$ \citep{stanley2015catalan}, and hence
\begin{equation}
  \label{eq:catalan-gf}
  C(s) = \frac{1-\sqrt{1-4s}}{2s}.
\end{equation}

We will consider words over the alphabet $\{ (, ) \}$, i.e. strings of parentheses. We call a word \emph{valid} if every prefix of the word has at least as many opening parentheses as closing parentheses. We call a word \emph{balanced} if it has the same number of opening and closing parentheses. For a word $w$, we define $o(w)$ to be the number of opening parentheses in $w$ and $c(w)$ to be the number of closing parentheses in $w$.

Define the bivariate generating function $F(a,b)$ on valid balanced words as
\begin{equation*}
  F(a,b)=\sum_{\text{valid balanced } w} a^{o(w)}b^{c(w)}.
\end{equation*}
Since in balanced words the number of opening and closing parentheses is the same, we have $F(a,b)=C(ab)$.
Next, define the bivariate generating function $G(a,b)$ on valid words as
\begin{equation*}
  G(a,b)=\sum_{\text{valid } w} a^{o(w)}b^{c(w)}.
\end{equation*}
The next lemma will establish a closed-form expression for $G(a,b)$.

\begin{restatable}{lemma}{lemKearnsCatalanPrefixGF}
\label{lem:kearns-catalan-prefix-gf}
\begin{equation*}
  G(a,b)
  =
  \frac{C(ab)}{1-aC(ab)}.
\end{equation*}
\end{restatable}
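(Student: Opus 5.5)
The plan is to decompose every valid word uniquely at its \emph{unmatched} opening parentheses. A valid word has at least as many opening as closing parentheses in every prefix. Call an opening parenthesis unmatched if it is never closed, that is, if the running excess after it never returns to the level just before it. Let there be $j\ge0$ unmatched opening parentheses. Then the word factors uniquely as
\begin{equation*}
  w = u_0\,(\,u_1\,(\,u_2\cdots(\,u_j,
\end{equation*}
where each $u_i$ is a valid balanced word and the displayed parentheses are exactly the unmatched ones.

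The factorization is built by cutting at the last time the running excess (opens minus closes) equals each level $0,1,\ldots,j-1$. Here $j$ is the final excess. Concretely, $u_0$ is the prefix up to the last time the excess is $0$. It is balanced because it ends at excess $0$, and valid because it is a prefix of a valid word. The next letter must then be an opening parenthesis, since the excess never returns to $0$. Repeating the same argument at levels $1,\ldots,j-1$ produces $u_1,\ldots,u_{j-1}$. Finally $u_j$ is the portion after the last unmatched parenthesis; it starts and ends at excess $j$ and never drops below $j$, so it is a valid balanced word. Conversely, any choice of $j\ge0$ and valid balanced words $u_0,\ldots,u_j$ concatenates to a valid word with final excess $j$. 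Uniqueness holds because the cut positions are determined by $w$ alone, as last visits to each level. This gives a bijection between valid words and such tuples.

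Under this bijection the weight $a^{o(w)}b^{c(w)}$ is multiplicative. The $j$ unmatched parentheses contribute $a^j$, and each $u_i$ contributes its own weight, summing to $F(a,b)=C(ab)$. Hence
\begin{equation*}
  G(a,b)=\sum_{j\ge0} a^j C(ab)^{j+1}=\frac{C(ab)}{1-aC(ab)},
\end{equation*}
valid as formal power series, since $aC(ab)$ has no constant term, or analytically wherever $|aC(ab)|<1$.

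The main obstacle is stating the decomposition cleanly and checking uniqueness. Identifying the cut points as last visits to each level makes this routine, and the generating-function step is immediate afterward.
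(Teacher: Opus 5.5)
Your proof is correct and follows essentially the same route as the paper. Both factor a valid word with final excess $h$ uniquely as $u_0\mathtt{(}u_1\mathtt{(}\cdots\mathtt{(}u_h$ with each $u_i$ valid balanced, and then sum the geometric series $\sum_{h\ge0}C(ab)\,(aC(ab))^h$. Your identification of the cut points as last visits to each level, together with the formal-power-series remark, makes explicit the uniqueness and convergence details that the paper states without argument.
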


\begin{restatable}[Survival words]{lemma}{lemKearnsSurvivalWords}
\label{lem:kearns-survival-words}
At time $t$, form a word $w$ by writing, for each step $s=1,\ldots,t$, one opening parenthesis followed by $G_s$ closing parentheses. Then $w$ is valid if and only if the mass has not been killed by time $t$.
\end{restatable}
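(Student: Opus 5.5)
The plan is to track a height function along the word and match it to the walk. For a word $w$ and a prefix of it, let the \emph{height} of the prefix be the number of opening parentheses minus the number of closing parentheses. A word is valid exactly when every prefix has nonnegative height. I will compare these heights with the walk values $W_s$ at the step boundaries. The word $w$ is built from blocks, one per step $s$: an opening parenthesis followed by $G_s$ closing ones.

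The first step is to compute the height at the end of the block for step $s$. By induction it equals $\sum_{r\le s}(1-G_r)=W_s-1$, using $W_0=1$. So the height after each full block is exactly $W_s-1$.

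The second step is to locate the lowest point within each block. Inside block $s$, the height first rises by one and then falls monotonically. Hence the minimum height over the prefixes ending inside block $s$ occurs at the end of the block, where it equals $W_s-1$. The only other candidate is the prefix just before the opening parenthesis. That prefix is either empty or the end of block $s-1$, which is already covered by the previous case.

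The conclusion is then direct. All prefixes have nonnegative height if and only if $W_s-1\ge0$, i.e. $W_s\ge1$, for every $s=1,\dots,t$. Since $W_0=1>0$, this is equivalent to $\min\{s\ge0:W_s\le0\}>t$, i.e. $\tau>t$, which is exactly the event that the mass has not been killed by time $t$.

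The only delicate point is the claim that the within-block minimum sits at the block's end. It holds because each block has the shape ``one up, then only downs''. With the identity for the block-end heights in hand, the rest is bookkeeping.
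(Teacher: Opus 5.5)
Your proposal is correct and takes essentially the same approach as the paper: both track the prefix height (the paper uses $1+o(u)-c(u)$, your height shifted by one), identify it with $W_s$ at block ends, and use the fact that each block is one opening parenthesis followed only by closing ones, so the lowest prefix within a block is its end. The paper phrases this as two implications, using the first failing prefix for the converse, whereas you state it as a single minimum-height characterization; this is only a difference in presentation.
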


The proofs of the above two lemmas are in \Cref{sec:omitted-proofs}. We next use the fact that $H(z) = G(z/2,1/2)$ to get a closed form for $H$. This will later be used to get a closed form for $\Pr(\tau > t)$.

\begin{lemma}[Survival generating function]
\label{lem:kearns-survival-gf}
\begin{equation*}
  H(z) = \sum_{t\ge0}\Pr(\tau>t)z^t
  =
  \frac2z\bigl((1-z)^{-1/2}-1\bigr).
\end{equation*}
\end{lemma}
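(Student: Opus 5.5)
We use the identity $H(z)=G(z/2,1/2)$ stated just before the lemma, and then substitute the closed forms for $G$ from \Cref{lem:kearns-catalan-prefix-gf} and for $C$ from \eqref{eq:catalan-gf}.

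First we justify $H(z)=G(z/2,1/2)$. By \Cref{lem:kearns-survival-words}, survival to time $t$ is the event that the word $w$ built from $G_1,\ldots,G_t$ is valid. So $\Pr(\tau>t)$ is the sum, over valid words $w$ with $o(w)=t$ that end in the form ``one opening parenthesis followed by a run of closings'' repeated, of the probability of the corresponding sequence $(G_1,\ldots,G_t)$. That probability is $\prod_s 2^{-(G_s+1)}=2^{-t}2^{-c(w)}$.

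This decomposition covers every word. Any word over $\{(,)\}$ that starts with an opening parenthesis decomposes uniquely into $t=o(w)$ such blocks. The empty word is the case $t=0$. A nonempty valid word must start with an opening parenthesis, because its first prefix cannot be a lone closing parenthesis. Hence valid words with $t$ opening parentheses are in bijection with the surviving outcome sequences of length $t$. This gives
\begin{equation*}
\Pr(\tau>t)=\sum_{\text{valid } w,\ o(w)=t}2^{-t}2^{-c(w)},
\end{equation*}
and multiplying by $z^t$ and summing over $t$ yields $H(z)=G(z/2,1/2)$ as formal power series.

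Next we substitute. We have $ab=z/4$, so
\begin{equation*}
C(z/4)=\frac{1-\sqrt{1-z}}{z/2}=\frac{2(1-\sqrt{1-z})}{z}.
\end{equation*}
Then
\begin{equation*}
aC(ab)=\frac z2\,C(z/4)=1-\sqrt{1-z},
\end{equation*}
so $1-aC(ab)=\sqrt{1-z}$. Therefore
\begin{equation*}
H(z)=\frac{2(1-\sqrt{1-z})}{z\sqrt{1-z}}=\frac2z\bigl((1-z)^{-1/2}-1\bigr),
\end{equation*}
which is the claimed formula.

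The only delicate point is making sure these are legitimate identities and not merely formal ones. All coefficients are nonnegative, and every series involved converges for $|z|<1$: the Catalan series converges for $|s|\le1/4$, and $ab=z/4$ lies in that range. The branch of the square root in \eqref{eq:catalan-gf} is the one analytic at $0$ with value $1$ there. With these facts the coefficientwise identity follows. We expect the main care to lie in the bijection step rather than in the algebra, specifically in handling the empty word and in confirming that the weight $2^{-t-c(w)}$ matches exactly.
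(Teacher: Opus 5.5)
Your proposal is correct and follows the paper's proof. Both establish $H(z)=G(z/2,1/2)$ by matching surviving outcome sequences to valid words with weight $2^{-(o(w)+c(w))}$, then substitute the prefix generating function and the Catalan closed form at $s=z/4$. Your explicit bijection check fills in a step the paper leaves implicit: nonempty valid words begin with an opening parenthesis, and the empty word covers $t=0$. Your convergence remarks are harmless but unnecessary, since validity forces $c(w)\le o(w)$, so each coefficient of $z^t$ is a finite sum and the identity already holds as formal power series.
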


\begin{proof}
Fix $t$. For a sequence of outcomes $G_1, \ldots, G_t$, construct a word $w$ as
in \Cref{lem:kearns-survival-words}. Then $w$ has $o(w)=t$ and
$c(w)=\sum_{s=1}^t G_s$. By \Cref{lem:kearns-survival-words}, $w$ is valid if
and only if $\tau > t$.

The probability of this outcome is $\prod_{s=1}^t2^{-(G_s+1)}=2^{-(o(w)+c(w))}$. Then write
\begin{align*}
  H(z) = \sum_{t\ge0}\Pr(\tau>t)z^t = \sum_{\text{valid } w} 2^{-(o(w)+c(w))}z^{o(w)} = \sum_{\text{valid } w} (z/2)^{o(w)} 2^{-c(w)}
  = G(z/2, 1/2).
\end{align*}

By \Cref{lem:kearns-catalan-prefix-gf}, we have
\begin{equation*}
  H(z) = \frac{C(z/4)}{1-(z/2)C(z/4)}.
\end{equation*}
Applying \Cref{eq:catalan-gf} with $s=z/4$ and simplifying gives the claimed form.
\end{proof}

The closed form for $H$ is useful because its coefficients are standard central
binomial coefficients. Extracting them gives an exact survival probability.

\begin{lemma}[Coefficient extraction]
\label{lem:kearns-central-binomial-coefficient}
For every integer $t\ge0$, the coefficient of $z^t$ in
$\frac2z((1-z)^{-1/2}-1)$ is $\binom{2t+2}{t+1}/2^{2t+1}$.
\end{lemma}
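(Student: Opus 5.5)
We expand $(1-z)^{-1/2}$ by the generalized binomial theorem and then shift indices.

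The standard identity is
\begin{equation*}
  (1-z)^{-1/2}=\sum_{n\ge0}\binom{-1/2}{n}(-z)^n=\sum_{n\ge0}\frac{\binom{2n}{n}}{4^n}z^n .
\end{equation*}
To verify the coefficient, write $\binom{-1/2}{n}(-1)^n=\prod_{i=0}^{n-1}\frac{(1/2+i)}{i+1}=\frac{1\cdot3\cdots(2n-1)}{2^n n!}$. Then multiply the numerator and denominator by $2\cdot4\cdots(2n)=2^n n!$, which gives $\frac{(2n)!}{4^n (n!)^2}$. This expansion is valid as a formal power series, which is all we need for coefficient extraction.

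Next, subtracting $1$ removes exactly the $n=0$ term, so
\begin{equation*}
  (1-z)^{-1/2}-1=\sum_{n\ge1}\binom{2n}{n}4^{-n}z^n.
\end{equation*}
Multiplying by $2/z$ shifts every index down by one, so the result is again a genuine power series with no negative powers. Setting $n=t+1$, the coefficient of $z^t$ is
\begin{equation*}
  2\binom{2t+2}{t+1}4^{-(t+1)}=\binom{2t+2}{t+1}\big/2^{2t+1}.
\end{equation*}

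There is no real obstacle here. The only step needing care is the closed form for $\binom{-1/2}{n}$, which we handle by the double-factorial manipulation above.
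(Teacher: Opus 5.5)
Your proof is correct and follows the paper's argument essentially step for step: you expand $(1-z)^{-1/2}$ by the generalized binomial theorem, identify $\binom{-1/2}{n}(-1)^n=\binom{2n}{n}/4^n$ by the same double-factorial manipulation, drop the $n=0$ term, and shift the index to read off $2\binom{2t+2}{t+1}/4^{t+1}=\binom{2t+2}{t+1}/2^{2t+1}$. Your remark that only the formal-power-series identity is needed is an accurate small addition.
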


\begin{proof}
By the generalized binomial theorem,
\begin{align*}
  (1-z)^{-1/2}
  &=
  \sum_{n\ge0}\binom{-1/2}{n}(-z)^n,\\
  \binom{-1/2}{n}(-1)^n
  &=
  \frac{1\cdot3\cdots(2n-1)}{2^n n!}
  =
  \frac{(2n)!}{4^n(n!)^2}
  =
  \frac1{4^n}\binom{2n}{n}.
\end{align*}
Therefore
$(1-z)^{-1/2}=\sum_{n\ge0}\binom{2n}{n}z^n/4^n$. Subtracting $1$ removes the
$n=0$ term, and dividing by $z$ shifts the power down by one. Hence the
coefficient of $z^t$ in $H(z) = \frac2z((1-z)^{-1/2}-1)$ is
$2\binom{2t+2}{t+1}/4^{t+1}$ which equals $\binom{2t+2}{t+1}/2^{2t+1}$.
\end{proof}

The above lemma gives a closed form for $\Pr(\tau>t)$. The next lemma provides bounds on this probability. Its proof is in \Cref{sec:omitted-proofs}.

\begin{restatable}[Survival probability]{lemma}{lemKearnsWalkSurvival}
\label{lem:kearns-walk-survival}
For every integer $t\ge0$,
\begin{math}
  1/{\sqrt{t+1}}
  \le
  \Pr(\tau>t)
  \le
  2/{\sqrt{t+1}}.
\end{math}
\end{restatable}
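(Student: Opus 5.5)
By \Cref{lem:kearns-central-binomial-coefficient}, $\Pr(\tau>t)=\binom{2t+2}{t+1}/2^{2t+1}$. Writing $n=t+1$, this is $2\binom{2n}{n}/4^n$. The problem therefore reduces to two-sided bounds on the normalized central binomial coefficient $a_n:=\binom{2n}{n}/4^n$: we want
\begin{equation*}
  \frac{1}{2\sqrt n}\le a_n\le\frac{1}{\sqrt n}\qquad(n\ge1),
\end{equation*}
since multiplying by $2$ gives the claim with $n=t+1$.

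Both bounds follow from the ratio $a_{n+1}/a_n=\frac{2n+1}{2n+2}$, which we get directly from $\binom{2n+2}{n+1}=\binom{2n}{n}\frac{(2n+1)(2n+2)}{(n+1)^2}$. We use it for two monotonicity arguments, each by induction starting from $a_1=1/2$.

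For the upper bound, we show that $b_n:=a_n^2(2n+1)$ is nonincreasing. Indeed, $b_{n+1}/b_n=\frac{(2n+1)^2(2n+3)}{(2n+2)^2(2n+1)}=\frac{(2n+1)(2n+3)}{(2n+2)^2}\le 1$. Since $b_1=3/4$, we get $a_n^2\le\frac{3}{4(2n+1)}\le\frac1n$, because $3n\le 8n+4$. (Using the bound $a_n^2\le 1/(\pi n)$ would also work, but the elementary route suffices.)

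For the lower bound, we show that $c_n:=a_n^2\cdot 4n$ is nondecreasing. We have $c_{n+1}/c_n=\frac{(2n+1)^2(n+1)}{(2n+2)^2 n}=\frac{(2n+1)^2}{4n(n+1)}\ge1$, since $4n^2+4n+1\ge4n^2+4n$. With $c_1=1$, this gives $a_n^2\ge\frac{1}{4n}$, i.e. $a_n\ge\frac{1}{2\sqrt n}$.

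Substituting $n=t+1$ yields $1/\sqrt{t+1}\le\Pr(\tau>t)\le 2/\sqrt{t+1}$, and the $t=0$ case ($\Pr=1$) is consistent with this. The only real step is choosing the right monotone quantities; we expect no genuine obstacle. The one point to check carefully is the constants at $n=1$: the lower bound is tight there, so we must use the nondecreasing quantity $c_n$ with initial value exactly $1$.
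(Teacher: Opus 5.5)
Your proof is correct and follows the same route as the paper: you use the coefficient extraction to get $\Pr(\tau>t)=2\binom{2n}{n}/4^n$ with $n=t+1$, then apply the two-sided bounds $4^n/(2\sqrt n)\le\binom{2n}{n}\le 4^n/\sqrt n$. The only difference is that the paper cites these central binomial bounds as standard. Your monotone quantities $a_n^2(2n+1)$ and $4na_n^2$, both anchored at $a_1=1/2$, prove them in a self-contained way, and your upper bound is even slightly sharper.
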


Having bounded the denominator of \Cref{eq:kearns-killed-walk-moment-decomp}, we next aim to bound the numerator $\E[W_t^2 \mathbf{1}_{\{\tau > t\}}]$ in the next three lemmas. This is the last step before proving \Cref{lem:kearns-killed-walk-moment}. The proofs of the following two lemmas are in \Cref{sec:omitted-proofs}.

\begin{restatable}[Increment moments]{lemma}{lemKearnsIncrementMoments}
\label{lem:kearns-increment-moments}
For each integer $t\ge1$, the increment $1-G_t$ satisfies
$\E[1-G_t]=0$ and $\operatorname{Var}(1-G_t)=2$.
\end{restatable}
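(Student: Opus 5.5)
The increments are $1-G_t$, where $\Pr(G_t=\ell)=2^{-(\ell+1)}$ for $\ell\ge0$. So $G_t$ is geometric on $\{0,1,2,\ldots\}$ with success parameter $1/2$. I will compute its first two moments directly and then translate them to the increment.

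\textbf{Mean.} I will use $\sum_{\ell\ge0}\ell x^\ell = x/(1-x)^2$ at $x=1/2$. This gives
\begin{equation*}
  \E[G_t]=\tfrac12\sum_{\ell\ge0}\ell\,2^{-\ell}=\tfrac12\cdot 2=1 .
\end{equation*}
Hence $\E[1-G_t]=0$.

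\textbf{Variance.} I will use $\sum_{\ell\ge0}\ell^2x^\ell = x(1+x)/(1-x)^3$ at $x=1/2$. The sum equals $(1/2)(3/2)/(1/8)=6$, so
\begin{equation*}
  \E[G_t^2]=\tfrac12\cdot 6=3 .
\end{equation*}
Therefore $\operatorname{Var}(G_t)=3-1^2=2$. Since shifting by a constant and negating do not change the variance, $\operatorname{Var}(1-G_t)=\operatorname{Var}(G_t)=2$.

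\textbf{Alternative check.} The same numbers follow from the probability generating function
\begin{equation*}
  \E[s^{G_t}]=\sum_{\ell\ge0}2^{-(\ell+1)}s^\ell=\frac{1}{2-s}.
\end{equation*}
Its first derivative at $s=1$ is $1$, which is $\E[G_t]$. Its second derivative at $s=1$ is $2$, which is the factorial moment $\E[G_t(G_t-1)]$. This gives $\operatorname{Var}(G_t)=2+1-1=2$, in agreement with the direct computation.

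There is no real obstacle here. The only point needing care is the indexing: the support of $G_t$ starts at $0$, not $1$, and the series formulas above are for sums starting at $\ell=0$.
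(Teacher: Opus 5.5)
Your proof is correct and matches the paper's argument: both evaluate $\sum_{\ell\ge0}\ell q^\ell=q/(1-q)^2$ and $\sum_{\ell\ge0}\ell^2q^\ell=q(1+q)/(1-q)^3$ at $q=1/2$ to get $\E[G_t]=1$ and $\E[G_t^2]=3$, hence $\operatorname{Var}(1-G_t)=2$. The only difference is that the paper derives these two series identities by differentiating the geometric series, while you quote them; your generating-function cross-check is a valid extra confirmation.
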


\begin{restatable}{lemma}{lemKearnsStoppedTimeMean}
\label{lem:kearns-stopped-time-mean}
For every integer $t\ge0$, $\E[\min\{t,\tau\}]\le 4\sqrt t$.
\end{restatable}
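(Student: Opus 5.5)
We need $\E[\min\{t,\tau\}]\le 4\sqrt t$. Since $\min\{t,\tau\}$ is a nonnegative integer-valued variable bounded by $t$, the tail-sum formula gives
\begin{equation*}
  \E[\min\{t,\tau\}]
  =
  \sum_{s=0}^{t-1}\Pr(\min\{t,\tau\}>s)
  =
  \sum_{s=0}^{t-1}\Pr(\tau>s).
\end{equation*}
For $t=0$ both sides of the claim are $0$, so assume $t\ge1$. We then insert the upper bound of \Cref{lem:kearns-walk-survival}, $\Pr(\tau>s)\le 2/\sqrt{s+1}$. This reduces the claim to the elementary estimate $\sum_{s=0}^{t-1}2/\sqrt{s+1}=2\sum_{n=1}^{t}n^{-1/2}\le 4\sqrt t$.

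For that estimate, compare the sum with an integral. Since $n^{-1/2}$ is decreasing, $n^{-1/2}\le\int_{n-1}^{n}x^{-1/2}\,dx$ for each $n\ge1$. Summing gives $\sum_{n=1}^t n^{-1/2}\le\int_0^t x^{-1/2}\,dx=2\sqrt t$. The integral is improper at $0$ but finite, so this step is valid.

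Alternatively, the telescoping bound $n^{-1/2}\le 2(\sqrt n-\sqrt{n-1})$ gives the same result. It follows from $\sqrt n+\sqrt{n-1}\le 2\sqrt n$. Either way, $\E[\min\{t,\tau\}]\le 2\cdot 2\sqrt t=4\sqrt t$.

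There is no real obstacle here. The only points needing care are the tail-sum identity for a truncated stopping time, specifically that $\{\min\{t,\tau\}>s\}=\{\tau>s\}$ for $s<t$, and the fact that \Cref{lem:kearns-walk-survival} holds for every $s\ge0$, including $s=0$ where $\Pr(\tau>0)=1$.
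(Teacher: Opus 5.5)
Your proposal is correct and follows the paper's proof essentially verbatim: you use the tail-sum identity $\E[\min\{t,\tau\}]=\sum_{s=0}^{t-1}\Pr(\tau>s)$, insert the survival bound $\Pr(\tau>s)\le 2/\sqrt{s+1}$ from \Cref{lem:kearns-walk-survival}, and finish with an integral comparison. The only cosmetic difference is that you compare with $\int_0^t x^{-1/2}\,dx$ directly, whereas the paper splits off the first term and uses $1+\int_1^t x^{-1/2}\,dx\le 2\sqrt t$.
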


\begin{lemma}
\label{lem:kearns-stopped-second-moment}
For every integer $t\ge0$,
$\E[W_t^2\mathbf 1_{\{\tau>t\}}]\le 1+8\sqrt t$.
\end{lemma}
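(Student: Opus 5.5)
We bound $\E[W_t^2\mathbf 1_{\{\tau>t\}}]$ using the stopped walk $W_{t\wedge\tau}$ and the increment moments from \Cref{lem:kearns-increment-moments}. Write $\xi_s=1-G_s$, so that $W_s=W_{s-1}+\xi_s$. The $\xi_s$ are i.i.d. with mean $0$ and variance $2$, and $\xi_s$ is independent of $\mathcal F_{s-1}=\sigma(G_1,\ldots,G_{s-1})$. The event $\{\tau>s-1\}$ is $\mathcal F_{s-1}$-measurable. Hence $M_s=W_{s\wedge\tau}^2-2(s\wedge\tau)$ is a martingale. To check this, on $\{\tau>s-1\}$ we have
\begin{equation*}
\E[W_s^2\mid\mathcal F_{s-1}]=W_{s-1}^2+2W_{s-1}\E[\xi_s]+\E[\xi_s^2]=W_{s-1}^2+2.
\end{equation*}
On $\{\tau\le s-1\}$ nothing changes. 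Integrability is not an issue at a fixed finite time, since $W_{s\wedge\tau}$ is a finite sum of variables with all moments finite. Therefore
\begin{equation*}
\E[W_{t\wedge\tau}^2]=W_0^2+2\E[\min\{t,\tau\}]=1+2\E[\min\{t,\tau\}].
\end{equation*}

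Next we drop the killed part. Since $W_{t\wedge\tau}^2\ge0$, we get
\begin{equation*}
\E[W_t^2\mathbf 1_{\{\tau>t\}}]=\E[W_{t\wedge\tau}^2\mathbf 1_{\{\tau>t\}}]\le\E[W_{t\wedge\tau}^2]=1+2\E[\min\{t,\tau\}].
\end{equation*}
Applying \Cref{lem:kearns-stopped-time-mean}, $\E[\min\{t,\tau\}]\le4\sqrt t$, gives the bound $1+8\sqrt t$. The case $t=0$ is immediate, since $W_0^2=1$.

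The only delicate point is the martingale computation. The fact needed is that the increment $\xi_s$ is independent of the event of survival up to time $s-1$ and of $W_{s-1}$, which holds because the $G_s$ are i.i.d. and $\tau$ is a stopping time. The cross term $2W_{s-1}\E[\xi_s]$ then vanishes because $\E[\xi_s]=0$. One might worry about the overshoot below $0$ at the killing time, since $W_\tau$ can be very negative. That contribution is already included in $\E[W_{t\wedge\tau}^2]$, and we simply discard it as a nonnegative term, so no control of the overshoot is needed.
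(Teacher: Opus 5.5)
Your proof is correct and follows the paper's argument. The paper writes the same computation as the telescoping identity $W_T^2=W_0^2+\sum_{s=0}^{t-1}\mathbf 1_{\{T>s\}}(W_{s+1}^2-W_s^2)$ with $T=\min\{t,\tau\}$, which is your martingale $W_{s\wedge\tau}^2-2(s\wedge\tau)$ evaluated at the bounded time $t$; it then gets $\E[W_T^2]=1+2\E[T]$, applies \Cref{lem:kearns-stopped-time-mean}, and discards the nonnegative killed contribution exactly as you do.
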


\begin{proof}
Put $T=\min\{t,\tau\}$. Since $T$ is an integer between $0$ and $t$,
\begin{equation*}
  W_T^2
  =
  W_0^2+\sum_{s=0}^{t-1}\mathbf 1_{\{T>s\}}(W_{s+1}^2-W_s^2).
\end{equation*}

Fix $s<t$. The event $\{T>s\}$ is determined by $G_1,\ldots,G_s$. After fixing
these values, $W_s$ is fixed and $G_{s+1}$ is independent. Since
$W_{s+1}=W_s+1-G_{s+1}$, \Cref{lem:kearns-increment-moments} gives
\begin{equation*}
  \E[W_{s+1}^2-W_s^2\mid G_1,\ldots,G_s]
  =
  2W_s\E[1-G_{s+1}]+\E[(1-G_{s+1})^2]
  =
  2.
\end{equation*}
Thus
\begin{equation*}
  \E[\mathbf 1_{\{T>s\}}(W_{s+1}^2-W_s^2)]
  =
  2\Pr(T>s).
\end{equation*}
Taking expectations in the telescoping identity gives
\begin{equation*}
  \E[W_T^2]
  =
  1+2\sum_{s=0}^{t-1}\Pr(T>s)
  =
  1+2\E[T],
\end{equation*}
where the last equality uses again that $T$ is integer-valued and lies between
$0$ and $t$. By \Cref{lem:kearns-stopped-time-mean},
$\E[W_T^2]\le1+8\sqrt t$. On $\tau>t$, we have $T=t$ and $W_T=W_t$. On $\tau\le t$, the term $W_T^2$ is
nonnegative, so
\begin{equation*}
  \E[W_T^2]
  =
  \E[W_t^2\mathbf 1_{\{\tau>t\}}]
  +
  \E[W_T^2\mathbf 1_{\{\tau\le t\}}]
  \ge
  \E[W_t^2\mathbf 1_{\{\tau>t\}}].
\end{equation*}
Combining the two inequalities proves the lemma.
\end{proof}

We finally have all the necessary tools to prove \Cref{lem:kearns-killed-walk-moment}.

\begin{proof}[Proof of \Cref{lem:kearns-killed-walk-moment}]
The equality follows from the definition
$\nu_t(i)=\Pr(W_t=i\mid\tau>t)$. For the upper bound, combine
\Cref{lem:kearns-walk-survival,lem:kearns-stopped-second-moment}:
\begin{equation*}
\E[W_t^2\mid\tau>t]=\frac{\E[W_t^2\mathbf 1_{\{\tau>t\}}]}{\Pr(\tau>t)}
\le(1+8\sqrt t)\sqrt{t+1}\le9(t+1).
\end{equation*}
The last line uses $1\le\sqrt{t+1}$ and $\sqrt t\le\sqrt{t+1}$.
\end{proof}

\section{Depth-Dependent Lower Bound and Fixed-Distribution Obstruction}
\label{sec:depth-dependent-obstruction}

The cyclic example of the previous section only reaches depth $D<M^2$, and even there its bound is weak since it shrinks with a polynomial rate in $D$. We now close the picture. For each $M$ and $D$ with $D\ge M^2$, we build an $M$-covered path whose excess error is $\Omega(M^2/D)$. As a special case, setting $D=M^2$ shows the error can stay bounded away from zero all the way up to the quadratic scale.

An artifact of this construction is that the distribution itself depends on the specific value of $D$. One might aim to show the lower bound with a distribution only depending on $M$, similar to the cyclic example in \Cref{sec:kearns-cyclic-example}. We show that this is not possible: under any fixed distribution, the path predictor converges to the global predictor at a geometric rate along every $M$-covered path, so no fixed distribution can witness the $M^2/D$ bound at every depth.

\subsection{The Depth-Dependent Lower Bound}

In this section, we will introduce the instance with $\Omega(M^2/D)$ excess error. The plan is to make information aggregation as slow as possible. We hide a target signal of small size $\rho$ in a direction that no single feature reveals, and we let the feature seen by successive agents rotate by only a tiny angle $\delta$ at each step. Because consecutive features point in almost the same direction, while the residual is already orthogonal to the current feature, each agent can remove only a tiny fraction of the remaining error. Matching $\rho$ and $\delta$ to the depth $D$ keeps the error at $\Omega(M^2/D)$ even after $D$ agents.

Fix integers $M\ge8$ and $D\ge M^2$. Define
\begin{equation*}
  \delta=\frac{2\pi}{M},
  \qquad
  \rho^2=\frac{1}{80D\sin^2\delta}.
\end{equation*}
Let $X_\star,Z_0,Z_1,Z_2$ be independent standard Gaussians, and let
$Y=X_\star+\rho Z_0$. We will directly give $X_\star$ to all agents. We also define features $X_0, X_1, \ldots, X_{M-1}$ as follows. For $j=0,\ldots,M-1$,
\begin{equation*}
  X_j=Z_1+\rho(\cos(j\delta)Z_0+\sin(j\delta)Z_2).
\end{equation*}
Agent $A_t$ sees $X_\star$, $X_{(t-1)\bmod M}$, and the prediction from
$A_{t-1}$ if $t > 1$. Thus every block of $M$ agents sees all features.

The common feature $X_\star$ is there only to keep the scale of the labels near 1. Every agent sees $X_\star$, so this part of the label is learned immediately and
is orthogonal to the variables $Z_0,Z_1,Z_2$ that drive the lower bound. All the
hard dynamics therefore live in the three-dimensional subspace with target
$\rho Z_0$. Dropping $X_\star$ and using the label $\rho Z_0$ would have a similar analysis, but then
the whole label would have variance only $\rho^2$. Keeping $X_\star$ makes
$\operatorname{Var}(Y)=1+\rho^2 \le 2$.

In this section, we will prove that this construction yields the following result.

\longDepthRestatement*

The common feature $X_\star$ is seen by every agent
and is independent of everything else, so it is learned at once and the whole
difficulty lives in the three-dimensional span of $Z_0,Z_1,Z_2$. For the rest of this section, we call the remaining part the \emph{hard part}. We call the features $X_j$ the \emph{hard features} and $Y - X_\star$ the \emph{hard label}.

We track the
error $E_t$ of the path in this hard part and prove two facts about it: it starts
at a constant fraction of $\rho^2$ (\Cref{lem:first-two-hard-features}), and each
later agent removes at most an $O(\rho^2\sin^2\delta)$ fraction of it, that is
$E_{t+1}\ge(1-O(\rho^2\sin^2\delta))E_t$ (\Cref{lem:slow-drift}). With the
choice $\rho^2=\Theta(1/(D\sin^2\delta))$ the
per-step factor is $1-\Theta(1/D)$, so a constant fraction of the error survives
all $D$ steps, leaving $E_D=\Omega(\rho^2)=\Omega(M^2/D)$.

We now set up the hard part. After subtracting the common feature, everything
that remains lives in the span of $Z_0,Z_1,Z_2$. These are independent standard
Gaussians, hence orthonormal, so every hard predictor and residual is determined
by its coefficient vector in $\mathbb{R}^3$. Let $e_0,e_1,e_2$
be the coefficient vectors of $Z_0,Z_1,Z_2$, that is the standard basis of
$\mathbb{R}^3$, and set
\begin{equation*}
  u(\theta)=\cos\theta\,e_0+\sin\theta\,e_2,
  \qquad
  x(\theta)=e_1+\rho u(\theta).
\end{equation*}
The hard label $Y-X_\star=\rho Z_0$ has coefficient vector $\rho e_0$, and the
hard feature $X_j$ has coefficient vector $x(j\delta)$. Agent $A_t$ sees
$X_{(t-1)\bmod M}$, so the feature direction
$u(\theta)$ rotates by $\delta$ from one agent to the next. Let $\widehat{y}^{\mathrm{h}}_t\in\mathbb{R}^3$ be the coefficient vector of the
hard prediction after $t$ agents, and define the residual, hard error, and $s_t$ by
\begin{equation}\label{eq:hard-defs}
  R_t:=\rho e_0-\widehat{y}^{\mathrm{h}}_t,
  \qquad
  E_t:=\norm{R_t}_2^2,
  \qquad
  s_t:=1-\frac{E_t}{\rho^2}.
\end{equation}
In contrast to \Cref{sec:kearns-cyclic-example} where $R_t$ was a scalar, here $R_t$ is a vector in $\mathbb{R}^3$.
The hard prediction at agent $A_t$ is the least-squares projection of $\rho e_0$
onto the span of $\widehat{y}^{\mathrm{h}}_{t-1}$ and $x((t-1)\delta)$ in
$\mathbb{R}^3$. By \Cref{lem:prelim-ls-projection}, $R_t$ is orthogonal to the
fitted span, hence to $\widehat{y}^{\mathrm{h}}_t$. Pythagoras on
$\rho e_0=\widehat{y}^{\mathrm{h}}_t+R_t$ then gives
$\norm{\widehat{y}^{\mathrm{h}}_t}_2^2+E_t=\rho^2$, equivalently
\begin{equation}\label{eq:hard-norms}
  \norm{\widehat{y}^{\mathrm{h}}_t}_2=\rho\sqrt{s_t},
  \qquad
  \norm{R_t}_2=\rho\sqrt{1-s_t}.
\end{equation}
In particular $s_t\in[0,1]$.

We analyze the starting error at the second agent. This agent sees $x(\delta)$, and the prediction of the first agent, which sees $x(0)$. We will show that the prediction of the first agent is a nonzero multiple of $x(0)$, so the second agent sees $x(0)$ too. We analyze the error of the best predictor to these two features in the following lemma.

\begin{lemma}
\label{lem:first-two-hard-features}
After the first two hard features,
\begin{equation*}
  E_2=
  \frac{\rho^2}{1+\rho^2+\tan^2(\delta/2)}.
\end{equation*}
Also,
\begin{equation*}
  s_2=
  \frac{\rho^2+\tan^2(\delta/2)}
       {1+\rho^2+\tan^2(\delta/2)}.
\end{equation*}
\end{lemma}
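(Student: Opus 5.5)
The plan is to compute everything explicitly in the coefficient space $\mathbb{R}^3$ with basis $e_0,e_1,e_2$, working only with the hard part. The common feature $X_\star$ is independent of $Z_0,Z_1,Z_2$, so by \Cref{lem:prelim-ls-projection} the projection splits into the $X_\star$ coordinate and the hard coordinates. Hence we can ignore $X_\star$ throughout.

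First I handle agent $A_1$. It sees only $x(0)=e_1+\rho e_0$, so its hard prediction is the projection of $\rho e_0$ onto $x(0)$:
\begin{equation*}
  \widehat{y}^{\mathrm{h}}_1=\frac{\langle\rho e_0,x(0)\rangle}{\norm{x(0)}^2}\,x(0)=\frac{\rho^2}{1+\rho^2}\,x(0).
\end{equation*}
This is a nonzero multiple of $x(0)$. Therefore the span available to $A_2$ is exactly $V=\operatorname{span}\{x(0),x(\delta)\}$, and $E_2$ equals the squared distance from $\rho e_0$ to $V$.

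To compute this distance I change to a symmetric basis of $V$. Let
\begin{equation*}
  a=\tfrac12\bigl(x(0)+x(\delta)\bigr)=e_1+\rho\cos(\delta/2)\,u(\delta/2),
  \qquad
  b=x(\delta)-x(0)=2\rho\sin(\delta/2)\,u^\perp,
\end{equation*}
where $u^\perp=-\sin(\delta/2)e_0+\cos(\delta/2)e_2$ is orthogonal to $u(\delta/2)$ and lies in the $e_0,e_2$ plane. Replacing $a$ by its component orthogonal to $b$ changes nothing, since $a$ already has no $u^\perp$ component. So $V$ has the orthogonal basis $\{a,u^\perp\}$. The squared projection of $\rho e_0$ onto $V$ is then
\begin{equation*}
  \rho^2\sin^2(\delta/2)\;+\;\frac{\rho^4\cos^4(\delta/2)}{1+\rho^2\cos^2(\delta/2)},
\end{equation*}
using $\langle \rho e_0,u^\perp\rangle=-\rho\sin(\delta/2)$, $\langle\rho e_0,a\rangle=\rho^2\cos^2(\delta/2)$, and $\norm{a}^2=1+\rho^2\cos^2(\delta/2)$.

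Subtracting this from $\rho^2$ gives
\begin{equation*}
  E_2=\rho^2\cos^2(\delta/2)\Bigl(1-\frac{\rho^2\cos^2(\delta/2)}{1+\rho^2\cos^2(\delta/2)}\Bigr)=\frac{\rho^2\cos^2(\delta/2)}{1+\rho^2\cos^2(\delta/2)}.
\end{equation*}
Dividing numerator and denominator by $\cos^2(\delta/2)$ and using $\sec^2=1+\tan^2$ yields the claimed formula for $E_2$. The formula for $s_2$ then follows directly from $s_2=1-E_2/\rho^2$ as defined in \eqref{eq:hard-defs}.

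The main risk is bookkeeping: choosing the orthogonal basis of $V$ correctly, and justifying that $A_2$'s span is exactly $V$. The latter uses the fact that the coefficient $\rho^2/(1+\rho^2)$ in $\widehat{y}^{\mathrm{h}}_1$ is nonzero, and that $x(0)$ and $x(\delta)$ are linearly independent because $\sin(\delta/2)\neq0$ for $M\ge8$.
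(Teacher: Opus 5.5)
Your proposal is correct and follows essentially the same route as the paper: both reduce $E_2$ to the squared distance from $\rho e_0$ to $\operatorname{span}\{x(0),x(\delta)\}$ and compute that distance in the orthogonal sum/difference basis of the plane. The differences are cosmetic: you write $\widehat{y}^{\mathrm{h}}_1$ explicitly instead of arguing by contradiction that it is a nonzero multiple of $x(0)$, and you express the basis vectors in half-angle form, which shortcuts the paper's final trigonometric simplification.
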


\begin{proof}
Agent $A_1$ sees $x(0)$, and its prediction is a nonzero multiple of $x(0)$, because otherwise $R_1 = \rho e_0$ and $R_1$ must be orthogonal to $x(0)$, but is not. Agent $A_2$ sees that prediction together with $x(\delta)$, so by \Cref{lem:prelim-ls-projection} it projects $\rho e_0$
onto the plane spanned by $x(0)$ and $x(\delta)$. Thus $E_2$ is the squared
distance from $\rho e_0$ to that plane.

The two features have length $\sqrt{1 + \rho^2}$, so the sum $g=x(0)+x(\delta)$ and difference
$d=x(\delta)-x(0)$ are orthogonal, $\langle g,d\rangle=\norm{x(\delta)}_2^2-\norm{x(0)}_2^2=0$,
and give an orthogonal basis of the plane. Using $\langle e_0,u(\theta)\rangle=\cos\theta$,
\begin{align*}
  \langle\rho e_0,g\rangle&=\rho^2(1+\cos\delta), & \norm{g}_2^2&=4+2\rho^2(1+\cos\delta),\\
  \langle\rho e_0,d\rangle&=-\rho^2(1-\cos\delta), & \norm{d}_2^2&=2\rho^2(1-\cos\delta).
\end{align*}
The distance is $\rho e_0$ minus its projections onto $g$ and $d$. Substituting the
four quantities above, simplifying each fraction, and combining over a common
denominator,
\begin{align*}
  E_2
  &=\norm{\rho e_0}_2^2-\frac{\langle\rho e_0,g\rangle^2}{\norm{g}_2^2}
     -\frac{\langle\rho e_0,d\rangle^2}{\norm{d}_2^2}\\
  &=\frac{\rho^2(1+\cos\delta)\,[\,2+\rho^2(1+\cos\delta)\,]-\rho^4(1+\cos\delta)^2}
         {2\,[\,2+\rho^2(1+\cos\delta)\,]}\\
  &=\frac{\rho^2(1+\cos\delta)}{2+\rho^2(1+\cos\delta)}\\
  \intertext{Rewrite using $1+\cos\delta=2\cos^2(\delta/2)$ then divide top and bottom by $\cos^2(\delta/2)$ and use $1/\cos^2(\delta/2)=1+\tan^2(\delta/2)$,}
  &=\frac{\rho^2\cos^2(\delta/2)}{1+\rho^2\cos^2(\delta/2)}
  =\frac{\rho^2}{1+\rho^2+\tan^2(\delta/2)}.
\end{align*}
Finally, using $s_2=1-E_2/\rho^2$ from the setup,
\begin{equation*}
  s_2=1-\frac{E_2}{\rho^2}
     =1-\frac{1}{1+\rho^2+\tan^2(\delta/2)}
     =\frac{\rho^2+\tan^2(\delta/2)}{1+\rho^2+\tan^2(\delta/2)}. \qedhere
\end{equation*}
\end{proof}

The heart of the construction is that, from the third agent on, one agent barely helps. The
residual $R_t$ is already orthogonal to the feature $x(\theta)$ that agent $A_t$ just
saw. The next feature $x(\theta+\delta)$ differs from $x(\theta)$ only by a vector
of length $2\rho\sin(\delta/2)$, so it too is nearly orthogonal to $R_t$, and a
least-squares step along a direction nearly orthogonal to the residual removes
almost none of it.

\begin{lemma}[One slow step]
\label{lem:slow-drift}
Suppose $M\ge8$ and $0<\rho\le1/8$. For every $t\ge2$,
\begin{equation*}
  E_{t+1}\ge(1-40\rho^2\sin^2\delta)E_t.
\end{equation*}
\end{lemma}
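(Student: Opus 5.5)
The plan is to write agent $A_{t+1}$'s least-squares step as a one-dimensional Gram--Schmidt update and bound the error it removes. Let $\theta=(t-1)\delta$, so agent $A_t$ saw $x(\theta)$ and agent $A_{t+1}$ sees $x'=x(\theta+\delta)$ together with $\widehat y^{\mathrm h}_t$. Set
\begin{equation*}
  w=x'-\frac{\langle x',\widehat y^{\mathrm h}_t\rangle}{\norm{\widehat y^{\mathrm h}_t}_2^2}\,\widehat y^{\mathrm h}_t ,
\end{equation*}
the component of $x'$ orthogonal to the incoming prediction. By \Cref{lem:prelim-ls-projection}, $R_t$ is orthogonal to $\widehat y^{\mathrm h}_t$, so projecting $\rho e_0$ onto $\operatorname{span}\{\widehat y^{\mathrm h}_t,x'\}$ removes exactly the component of $R_t$ along $w$. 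This gives
\begin{equation*}
  E_{t+1}=E_t-\frac{\langle R_t,w\rangle^2}{\norm{w}_2^2}=E_t-\frac{\langle R_t,x'\rangle^2}{\norm{w}_2^2}.
\end{equation*}
The lemma therefore reduces to a small numerator (the overlap $\langle R_t,x'\rangle$) and a denominator $\norm{w}_2^2$ bounded away from zero.

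\emph{Numerator.} Because agent $A_t$ saw $x(\theta)$, \Cref{lem:prelim-ls-projection} gives $R_t\perp x(\theta)$. Hence
\begin{equation*}
  \langle R_t,x'\rangle=\langle R_t,x'-x(\theta)\rangle=\rho\,\langle R_t,u(\theta+\delta)-u(\theta)\rangle .
\end{equation*}
Since $\norm{u(\theta+\delta)-u(\theta)}_2=2\sin(\delta/2)$, Cauchy--Schwarz yields $\langle R_t,x'\rangle^2\le4\rho^2\sin^2(\delta/2)\,E_t$. It will then suffice to show
\begin{equation*}
  \norm{w}_2^2\ge\frac{\sin^2(\delta/2)}{10\sin^2\delta},
\end{equation*}
whose right-hand side is about $1/40$ because $\sin\delta\approx2\sin(\delta/2)$ for $M\ge8$. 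So the target is a constant lower bound such as $\norm{w}_2^2\ge1/32$, and the constant $40$ comes out of this.

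\emph{Denominator (main obstacle).} We have $\norm{w}_2^2=\norm{x'}_2^2(1-\cos^2\phi)$, where $\phi$ is the angle between $x'$ and $\widehat y^{\mathrm h}_t$, and $\norm{x'}_2^2=1+\rho^2$. To control $\cos\phi$, write $\widehat y^{\mathrm h}_t=\rho e_0-R_t$ and use the orthogonality again:
\begin{equation*}
  \langle\widehat y^{\mathrm h}_t,x'\rangle=\rho^2\cos(\theta+\delta)-\langle R_t,x'-x(\theta)\rangle .
\end{equation*}
Its absolute value is at most $\rho^2\bigl(1+2\sin(\delta/2)\bigr)\le2\rho^2$. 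Combined with \eqref{eq:hard-norms}, this gives $\cos^2\phi\le4\rho^2/s_t$. So everything hinges on a lower bound $s_t\ge c\rho^2$ with $c$ well above $4$, for example $s_t\ge16\rho^2$, which gives $\cos^2\phi\le1/4$ and $\norm{w}_2^2\ge3/4$.

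For that I will use monotonicity: errors are non-increasing by \Cref{lem:prelim-mse-decomposition}, so $s_t\ge s_2$ for $t\ge2$. \Cref{lem:first-two-hard-features} gives $s_2\ge\tfrac12\bigl(\rho^2+\tan^2(\delta/2)\bigr)$ when $\rho\le1/8$ and $M\ge8$. This beats $16\rho^2$ only when $\tan^2(\delta/2)\gtrsim32\rho^2$. That holds comfortably for the parameters of \Cref{thm:long-depth}, since there $\rho^2\le1/(80M^2\sin^2\delta)=O(1/M^4)$ while $\tan^2(\delta/2)=\Theta(1/M^2)$. However, it does not follow from $\rho\le1/8$ alone.

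This step is where I expect the real difficulty. If the crude bound falls short under the lemma's bare hypotheses, I would bound $\cos\phi$ more carefully. Note that $\widehat y^{\mathrm h}_t$ and $x'$ both have $e_1$-components: that of $x'$ is $1$, while that of $\widehat y^{\mathrm h}_t$ has magnitude at most $\rho$. The remaining $\rho u$-parts have size $O(\rho)$. So $x'$ is essentially $e_1$, and it can be nearly parallel to $\widehat y^{\mathrm h}_t$ only if $\widehat y^{\mathrm h}_t$ is itself nearly parallel to $e_1$. I would try to exclude that by showing $\widehat y^{\mathrm h}_t$ always retains a non-negligible $e_0$-component relative to its length; the identity $\langle\widehat y^{\mathrm h}_t,\rho e_0\rangle=\norm{\widehat y^{\mathrm h}_t}_2^2$ is the natural lever. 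Failing a clean general argument, I would add the mild hypothesis $\rho\le\tan(\delta/2)/8$, which holds in our construction, and proceed as above.
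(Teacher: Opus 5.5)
There is a genuine gap, and it is where you suspected. Your reduction $E_{t+1}=E_t-\langle R_t,x'\rangle^2/\norm{w}_2^2$ is correct, and it is the same identity the paper uses; your $w$ is $P(x(\theta+\delta))$ in its notation. The trouble is the plan of pairing the Cauchy--Schwarz numerator $\langle R_t,x'\rangle^2\le4\rho^2\sin^2(\delta/2)E_t$ with a \emph{constant} lower bound on $\norm{w}_2^2$. That plan cannot work under the lemma's hypotheses, and neither fallback rescues it.

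\emph{The denominator is not bounded below.} Write $B=\norm{P(x(\theta))}_2$. Then $\norm{w}_2\le B+2\rho\sin(\delta/2)$ and $B^2=1+\rho^2-\rho^2\cos^2\theta/s_t$. Moreover $s_t$ can be close to $s_2$, which tends to $\rho^2/(1+\rho^2)$ when $\tan(\delta/2)\ll\rho$. This actually happens. Fix $\rho=1/8$ and take $t=M+1$, so $\theta=2\pi$. The (true) statement being proved gives $s_{M+1}\le s_2+40\rho^2\sin^2\delta\,(M-1)$, which tends to $\rho^2/(1+\rho^2)$. Hence $B\to0$ and $\norm{w}_2\to0$ as $M\to\infty$. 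So $\widehat y^{\mathrm h}_t$ really can be nearly parallel to $x'$: its $e_0$-component relative to its length is only $\sqrt{s_t}\approx\rho$. Consequently:
\begin{itemize}
\item Your fallback (a) cannot succeed.
\item With the crude numerator, your bound on the relative decrease is off by a factor $1/B^2$, which can be as large as $(\rho^2+\tan^2(\delta/2))/\tan^2(\delta/2)$.
\end{itemize}

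\emph{Fallback (b) fails in the application.} Adding $\rho\le\tan(\delta/2)/8$ would make your argument valid, but that hypothesis does not hold where the paper needs it. Your estimate $\rho^2\le1/(80M^2\sin^2\delta)=O(1/M^4)$ is a slip: $M\sin\delta\to2\pi$, so at $D=M^2$ one has $\rho^2\approx1/(320\pi^2)$, a constant, while $\tan^2(\delta/2)\approx\pi^2/M^2$. The extra hypothesis holds only for $D\gtrsim M^4/(5\pi^4)$. Adding it would therefore lose \Cref{thm:long-depth} on the range $M^2\le D\ll M^4$, and lose \Cref{cor:constant-before-quadratic} for all large $M$.

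\emph{The missing idea.} The numerator is small \emph{in proportion to the same $B$} that governs the denominator, so the two cancel. Instead of Cauchy--Schwarz on $\langle R_t,x'-x(\theta)\rangle$, the paper proceeds as follows:
\begin{itemize}
\item Write $n=R_t/\norm{R_t}_2$ and use the orthogonality $\langle n,x(\theta)\rangle=0$, i.e.\ $n_1=-\rho(n_0\cos\theta+n_2\sin\theta)$. This expresses $B\sqrt{s_t}$ through $n_2$ and gives $|n_2|\le\tfrac{65}{64}B$.
\item From $s_t\ge s_2$, it also gets $|\sin\theta|\le B$ and $\sin(\delta/2)\le B$.
\item Since $x(\theta+\delta)-x(\theta)=2\rho\sin(\delta/2)\,v(\theta+\delta/2)$ with $v(\phi)=-\sin\phi\,e_0+\cos\phi\,e_2$, these give $|\langle n,v(\theta+\delta/2)\rangle|\le4B$. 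Hence $|\langle n,x(\theta+\delta)\rangle|\le8\rho\sin(\delta/2)B$.
\item The bound $\sin(\delta/2)\le B$ also makes the perturbation harmless in the denominator: $\norm{w}_2\ge B-2\rho\sin(\delta/2)\ge\tfrac34B$.
\end{itemize}
The relative decrease is then at most $\tfrac{1024}{9}\rho^2\sin^2(\delta/2)$, no matter how small $B$ is. Your Gram--Schmidt setup is fine; it is the numerator step that must be sharpened in this $B$-proportional way.
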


Having the above lemma, we can prove \Cref{thm:long-depth}.

\begin{proof}[Proof of \Cref{thm:long-depth}]
First check the global predictor. Around the full circle,
\begin{equation*}
  \sum_{j=0}^{M-1}\cos(j\delta)=0,
  \qquad
  \sum_{j=0}^{M-1}\cos(j\delta)\sin(j\delta)=0,
  \qquad
  \sum_{j=0}^{M-1}\cos^2(j\delta)=\frac{M}{2}.
\end{equation*}
Hence
\begin{equation*}
  Y=X_\star+\frac{2}{M}\sum_{j=0}^{M-1}\cos(j\delta)X_j.
\end{equation*}
This predictor is exact, and its coefficient $\ell_1$ norm is at most
$1+(2/M)\sum_j|\cos(j\delta)|\le3$.

We also need $\rho$ to be small enough for \Cref{lem:slow-drift}. Since
$\delta\le\pi/4$, the bound $\sin x\ge2x/\pi$ on $[0,\pi/2]$ gives
$\sin\delta\ge4/M$. Since $D\ge M^2$,
\begin{equation*}
  D\sin^2\delta\ge16,
  \qquad
  \rho^2\le\frac{1}{1280}.
\end{equation*}
Thus $\rho\le1/8$, and the feature second moments are
$\E[X_\star^2]=1$ and $\E[X_j^2]=1+\rho^2\le2$.

The common feature $X_\star$ is independent of the hard variables and is
available at every agent. Therefore the full prediction is $X_\star$ plus the hard
prediction. \Cref{lem:slow-drift} and the choice of
$\rho$ give
\begin{equation*}
  E_D
  \ge
  E_2(1-40\rho^2\sin^2\delta)^{D-2}
  =
  E_2\left(1-\frac{1}{2D}\right)^{D-2}.
\end{equation*}
By Bernoulli's inequality,
$(1-1/(2D))^{D-2}\ge1-(D-2)/(2D)\ge1/2$. Also
$\tan(\delta/2)=\tan(\pi/M)\le\tan(\pi/8)<1/2$ and
$\rho^2\le1/1280$, so \Cref{lem:first-two-hard-features} gives
$E_2\ge\rho^2/2$. Hence
\begin{equation*}
  E_D
  \ge
  \frac{\rho^2}{4}
  =
  \frac{1}{320D\sin^2(2\pi/M)}
  \ge
  \frac{M^2}{1280\pi^2D}.
\end{equation*}
Since the global predictor is exact, $\MSE(f^\star)=0$ and $X_\star$ is learned with no
error, so this hard-part error $E_D$ equals the excess error
$\MSE(f_D)-\MSE(f^\star)$ at agent $A_D$.
\end{proof}

Setting $D=M^2$ directly shows \Cref{cor:constant-before-quadratic}.

\constantBeforeQuadraticRestatement*

\begin{proof}
Apply \Cref{thm:long-depth} with target depth $D=M^2$. By \Cref{lem:prelim-mse-decomposition}, $E_t$ is non-increasing along the path. Thus, for every $1\le t\le M^2$,
$E_t\ge E_{M^2}\ge1/(1280\pi^2)$.
\end{proof}

\subsection[Proof of Lemma \ref*{lem:slow-drift}]{Proof of \Cref{lem:slow-drift}}

The residual $R_t$ is orthogonal to the feature $x(\theta)=x((t-1)\delta)$
that agent $A_t$ just saw, and the next feature $x(\theta+\delta)$ differs from
$x(\theta)$ by a vector of length only $2\rho\sin(\delta/2)$. So
$x(\theta+\delta)$ is also nearly orthogonal to $R_t$, and the least-squares
step on it removes only an $O(\rho^2\sin^2(\delta/2))$ fraction of $E_t$.

Three lemmas formalize this. \Cref{lem:slow-drift-setup} sets up an orthonormal
basis based on the prediction and residual of agent $A_t$, and \Cref{lem:slow-drift-B-bounds,lem:slow-drift-numerator} supply some bounds in that basis. We combine these lemmas in the proof of \Cref{lem:slow-drift}.

\begin{lemma}[Setup]
\label{lem:slow-drift-setup}
Fix $t\ge2$ and assume $E_t>0$. Set $\theta=(t-1)\delta$ and define
\begin{equation*}
  m:=\frac{\widehat{y}^{\mathrm{h}}_t}{\norm{\widehat{y}^{\mathrm{h}}_t}_2},
  \qquad
  n:=\frac{R_t}{\norm{R_t}_2};
\end{equation*}
write $n=(n_0,n_1,n_2)$ in the basis $e_0,e_1,e_2$, and set
\begin{equation*}
  h:=\frac{(0,n_2,-n_1)}{\sqrt{s_t}}.
\end{equation*}
Then $\{m,n,h\}$ is an orthonormal basis of $\mathbb{R}^3$, and there exist
unique real numbers $a,b$ with $x(\theta)=a\,m+b\,h$. Define $P(z)=z-\langle z,m\rangle m$. Setting
$B=|b|$,
\begin{equation*}
  B=\norm{P(x(\theta))}_2,
  \qquad
  B^2=1+\rho^2-\frac{\rho^2\cos^2\theta}{s_t}.
\end{equation*}
\end{lemma}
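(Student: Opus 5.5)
The plan is to verify orthonormality directly from the norms in \eqref{eq:hard-norms} and the orthogonality of residual and prediction. Then I will locate $x(\theta)$ in this basis using the fact that $R_t$ is orthogonal to the feature just seen.

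\textbf{Orthonormality.} By \Cref{lem:prelim-ls-projection}, $R_t$ is orthogonal to $\widehat{y}^{\mathrm{h}}_t$. Since $E_t>0$ and $s_t>0$ (for $t\ge2$, $s_t\ge s_2>0$ by \Cref{lem:first-two-hard-features} and monotonicity), both $m$ and $n$ are well-defined unit vectors with $\langle m,n\rangle=0$.

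For $h$, the key observation is that $\widehat{y}^{\mathrm{h}}_t=\rho e_0-R_t$. Hence
\[
m=\frac{\rho e_0-\rho\sqrt{1-s_t}\,n}{\rho\sqrt{s_t}}=\frac{e_0-\sqrt{1-s_t}\,n}{\sqrt{s_t}}.
\]
The condition $\langle m,n\rangle=0$ then gives $n_0=\sqrt{1-s_t}$, so $n_1^2+n_2^2=s_t$ and $h$ is a unit vector. By construction $h$ has zero $e_0$-component and is orthogonal to $n$, since $n_1n_2-n_2n_1=0$. Because $m$ is a combination of $e_0$ and $n$, it follows that $\langle h,m\rangle=0$ as well.

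\textbf{Locating $x(\theta)$.} Agent $A_t$ sees $x(\theta)$, so $R_t\perp x(\theta)$ by \Cref{lem:prelim-ls-projection}. Thus the $n$-coordinate of $x(\theta)$ vanishes, and $x(\theta)=am+bh$ with $a=\langle x(\theta),m\rangle$ and $b=\langle x(\theta),h\rangle$; uniqueness follows from the basis property. Then
\[
P(x(\theta))=x(\theta)-am=bh,
\]
so $\norm{P(x(\theta))}_2=|b|=B$.

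\textbf{Computing $B^2$.} By Pythagoras, $B^2=\norm{x(\theta)}_2^2-a^2=1+\rho^2-a^2$. Using the expression for $m$ above together with $\langle x(\theta),n\rangle=0$,
\[
a=\frac{\langle x(\theta),e_0\rangle}{\sqrt{s_t}}=\frac{\rho\cos\theta}{\sqrt{s_t}},
\]
which gives $B^2=1+\rho^2-\rho^2\cos^2\theta/s_t$.

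The only mildly delicate step is pinning down $n_0=\sqrt{1-s_t}$ so that $h$ is normalized. This follows from $\langle e_0,n\rangle=\langle\rho e_0,R_t\rangle/(\rho\norm{R_t}_2)$ combined with $\langle \rho e_0,R_t\rangle=\norm{R_t}_2^2=\rho^2(1-s_t)$.
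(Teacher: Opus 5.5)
Your proposal is correct and follows essentially the same route as the paper: both write $e_0=\sqrt{s_t}\,m+\sqrt{1-s_t}\,n$, read off $n_0=\sqrt{1-s_t}$ to normalize $h$, and use $R_t\perp x(\theta)$ to remove the $n$-component of $x(\theta)$. Your direct computation $a=\langle x(\theta),m\rangle=\rho\cos\theta/\sqrt{s_t}$ is the same calculation as the paper's evaluation of $\langle\widehat{y}^{\mathrm{h}}_t,x(\theta)\rangle$ in two ways.
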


\begin{proof}
\Cref{lem:prelim-ls-projection} applied to agent $A_t$ gives that $R_t$ is
orthogonal to both $\widehat{y}^{\mathrm{h}}_t$ and $x(\theta)$. The error is non-increasing
along the path (\Cref{lem:prelim-mse-decomposition}), so $E_t\le E_2$, hence
$s_t\ge s_2$; and $s_2>0$ by \Cref{lem:first-two-hard-features}. Combined with
the assumption $E_t>0$, this gives $0<s_t<1$, so the denominators in the
definitions of $m,n,h$ are nonzero.

Using \Cref{eq:hard-norms}, the identity
$\rho e_0=\widehat{y}^{\mathrm{h}}_t+R_t$ rewrites as
\begin{equation*}
e_0=\sqrt{s_t}\,m+\sqrt{1-s_t}\,n.
\end{equation*}
The vectors $m$ and $n$ are perpendicular
unit vectors since $R_t$ is orthogonal to $\widehat{y}^{\mathrm{h}}_t$. Taking
inner product of the previous identity with $n$ gives
$n_0=\langle e_0,n\rangle=\sqrt{1-s_t}$, so $n_1^2+n_2^2=1-n_0^2=s_t$.

For $h$: $\norm{h}_2=\sqrt{n_1^2+n_2^2}/\sqrt{s_t}=1$; $\langle h,n\rangle=0$ by
direct check; $\langle h,e_0\rangle=0$ since $h$ has zero first coordinate; and
$\langle h,m\rangle=0$ since $m\in\mathrm{span}\{e_0,n\}$ by the previous
identity. Hence $\{m,n,h\}$ is orthonormal.

Since $R_t$ is orthogonal to $x(\theta)$, the vector $x(\theta)$ has no
$n$-component, so $x(\theta)=a\,m+b\,h$ for unique real $a,b$, and
$P(x(\theta))=b\,h$, giving $B=|b|=\norm{P(x(\theta))}_2$. From
$\norm{x(\theta)}_2^2=1+\rho^2$ and orthonormality of $\{m,h\}$,
$a^2+B^2=1+\rho^2$. Computing
$\langle\widehat{y}^{\mathrm{h}}_t,x(\theta)\rangle$ in two ways, using
$\widehat{y}^{\mathrm{h}}_t=\rho\sqrt{s_t}\,m$ on the left, and on the right that
$R_t$ is orthogonal to $x(\theta)$ together with
$\langle e_0,x(\theta)\rangle=\rho\cos\theta$,
\begin{equation*}
  \rho\sqrt{s_t}\,a
  =\langle\widehat{y}^{\mathrm{h}}_t,x(\theta)\rangle
  =\langle\rho e_0,x(\theta)\rangle
  =\rho^2\cos\theta.
\end{equation*}
Therefore $a=\rho\cos\theta/\sqrt{s_t}$, and substituting into
$a^2+B^2=1+\rho^2$ gives $B^2=1+\rho^2-\rho^2\cos^2\theta/s_t$.
\end{proof}

Throughout the rest of this subsection, we use the notation of
\Cref{lem:slow-drift-setup}.

\begin{lemma}[Two lower bounds on $B$]
\label{lem:slow-drift-B-bounds}
$B\ge\sin(\delta/2)$ and $B\ge|\sin\theta|$.
\end{lemma}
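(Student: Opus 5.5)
Both inequalities will follow from the closed form
\begin{equation*}
  B^2=1+\rho^2-\frac{\rho^2\cos^2\theta}{s_t}
\end{equation*}
in \Cref{lem:slow-drift-setup}, together with a single lower bound on $s_t$. The lower bound comes from monotonicity. By \Cref{lem:prelim-mse-decomposition} the error is non-increasing along the path, so $E_t\le E_2$ for $t\ge2$, hence $s_t\ge s_2$. Since $B^2$ is increasing in $s_t$, I may replace $s_t$ by $s_2$, whose exact value is given by \Cref{lem:first-two-hard-features}. No further information about the path is needed.

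Write $T=\tan(\delta/2)$. \Cref{lem:first-two-hard-features} gives
\begin{equation*}
  \frac{\rho^2}{s_2}=\frac{\rho^2(1+\rho^2+T^2)}{\rho^2+T^2}=(1+\rho^2)-\frac{T^2}{\rho^2+T^2}.
\end{equation*}
Substituting this, together with $s_t\ge s_2$, into the formula for $B^2$ and using $1=\sin^2\theta+\cos^2\theta$ yields
\begin{equation*}
  B^2\ge(1+\rho^2)-\cos^2\theta\Bigl[(1+\rho^2)-\frac{T^2}{\rho^2+T^2}\Bigr]
  =(1+\rho^2)\sin^2\theta+\cos^2\theta\,\frac{T^2}{\rho^2+T^2}.
\end{equation*}
Verifying this simplification is the only real computation, and I expect it to be the main (minor) obstacle.

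For the bound $B\ge|\sin\theta|$, drop the nonnegative second term in the display and use $1+\rho^2\ge1$.

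For the bound $B\ge\sin(\delta/2)$, view the right-hand side as a convex combination of $1+\rho^2$ and $T^2/(\rho^2+T^2)$. Since $T^2/(\rho^2+T^2)\le1\le1+\rho^2$, it is at least
\begin{equation*}
  \frac{T^2}{\rho^2+T^2}.
\end{equation*}
It then suffices to show $T^2\ge\sin^2(\delta/2)(\rho^2+T^2)$, which is equivalent to $T^2\cos^2(\delta/2)\ge\rho^2\sin^2(\delta/2)$. Because $T^2\cos^2(\delta/2)=\sin^2(\delta/2)$, this reduces to $\rho^2\le1$. That holds under the standing assumption $\rho\le1/8$ of \Cref{lem:slow-drift}. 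Taking square roots gives both claimed bounds.
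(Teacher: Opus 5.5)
Your proof is correct and follows the paper's route: you plug $s_t\ge s_2$ (from monotonicity of the error) and the exact value of $s_2$ into the closed form for $B^2$, which yields exactly the paper's bound $B^2\ge\bigl(T^2+\rho^2(1+\rho^2+T^2)\sin^2\theta\bigr)/(\rho^2+T^2)$, written instead as a convex combination. The only differences are in the finish: you read off $B\ge|\sin\theta|$ directly from this display, where the paper does a separate case split on whether $s_t\ge\rho^2$, and you close the first bound using only $\rho\le1$, whereas the paper uses $\rho^2+\tan^2(\delta/2)<1$ to get the slightly stronger $B\ge\tan(\delta/2)$.
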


\begin{proof}
The error is non-increasing along the path
(\Cref{lem:prelim-mse-decomposition}), so $s_t\ge s_2$, and
\Cref{lem:first-two-hard-features} gives
\begin{equation*}
  s_2=\frac{\rho^2+\tan^2(\delta/2)}{1+\rho^2+\tan^2(\delta/2)}.
\end{equation*}
For the first bound, $s_t\ge s_2$ yields
\begin{equation*}
  \frac{1}{s_t}\le\frac{1+\rho^2+\tan^2(\delta/2)}{\rho^2+\tan^2(\delta/2)}.
\end{equation*}
Plugging into the formula for $B^2$ gives
\begin{equation*}
  B^2\ge1+\rho^2-\rho^2\cos^2\theta\,\frac{1+\rho^2+\tan^2(\delta/2)}{\rho^2+\tan^2(\delta/2)},
\end{equation*}
and combining over the common denominator $\rho^2+\tan^2(\delta/2)$ (using
$\cos^2\theta=1-\sin^2\theta$),
\begin{equation*}
  B^2
  \ge\frac{\tan^2(\delta/2)+\rho^2(1+\rho^2+\tan^2(\delta/2))\sin^2\theta}{\rho^2+\tan^2(\delta/2)}
  \ge\frac{\tan^2(\delta/2)}{\rho^2+\tan^2(\delta/2)}.
\end{equation*}
For $M\ge8$ we have $\tan(\delta/2)\le\tan(\pi/8)<1/2$, and with $\rho\le1/8$ this gives
$\rho^2+\tan^2(\delta/2)<1$. Hence $B\ge\tan(\delta/2)\ge\sin(\delta/2)$.

For the second bound, rewrite the formula for $B^2$ as
\begin{equation*}
  B^2-\sin^2\theta=\rho^2+\cos^2\theta\Bigl(1-\frac{\rho^2}{s_t}\Bigr).
\end{equation*}
If $s_t\ge\rho^2$, the right side is nonnegative. Otherwise the bracket is
negative, and $\cos^2\theta\le1$ gives $B^2-\sin^2\theta\ge1+\rho^2-\rho^2/s_t$.
The formula for $s_2$ gives
\begin{equation*}
  s_2-\frac{\rho^2}{1+\rho^2}
  =\frac{\tan^2(\delta/2)}{(1+\rho^2)(1+\rho^2+\tan^2(\delta/2))}\ge0,
\end{equation*}
so $s_t\ge s_2\ge\rho^2/(1+\rho^2)$, that is $\rho^2/s_t\le1+\rho^2$, and the
last lower bound is again nonnegative. In either case $|\sin\theta|\le B$.
\end{proof}

\begin{lemma}[Numerator bound]
\label{lem:slow-drift-numerator}
$|\langle n,x(\theta+\delta)\rangle|\le8\rho\sin(\delta/2)\,B$.
\end{lemma}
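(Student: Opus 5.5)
The plan is to use the fact that the residual direction $n$ is already orthogonal to the feature $x(\theta)$ just seen. This reduces the inner product with the next feature to an inner product with the small increment $x(\theta+\delta)-x(\theta)$. The remaining factor is then controlled using the two lower bounds on $B$ from \Cref{lem:slow-drift-B-bounds}.

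\emph{Step 1 (reduce to the increment).} By \Cref{lem:prelim-ls-projection} applied to agent $A_t$, $\langle n,x(\theta)\rangle=0$. Therefore
\begin{equation*}
  \langle n,x(\theta+\delta)\rangle=\rho\langle n,u(\theta+\delta)-u(\theta)\rangle .
\end{equation*}
A sum-to-product computation gives $u(\theta+\delta)-u(\theta)=2\sin(\delta/2)\,v$, where $v=-\sin(\theta+\delta/2)e_0+\cos(\theta+\delta/2)e_2$ is a unit vector. So the claim reduces to showing
\begin{equation*}
  |\langle n,v\rangle|=\bigl|-n_0\sin(\theta+\delta/2)+n_2\cos(\theta+\delta/2)\bigr|\le 4B .
\end{equation*}
I will bound the two terms separately, each by $2B$.

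\emph{Step 2 (the $n_0$ term).} Since $|n_0|=\sqrt{1-s_t}\le1$, it suffices to bound $|\sin(\theta+\delta/2)|$. The angle-addition bound gives $|\sin(\theta+\delta/2)|\le|\sin\theta|+\sin(\delta/2)$. Both summands are at most $B$ by \Cref{lem:slow-drift-B-bounds}, so this term is at most $2B$.

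\emph{Step 3 (the $n_2$ term).} This is the step where one must actually use the definition of $B$, so I expect it to be the main obstacle. In the setup, $B=|b|=|\langle x(\theta),h\rangle|$ with $h=(0,n_2,-n_1)/\sqrt{s_t}$. Since $x(\theta)=(\rho\cos\theta,1,\rho\sin\theta)$, this gives
\begin{equation*}
  \sqrt{s_t}\,B=|n_2-\rho\sin\theta\, n_1| .
\end{equation*}
Hence
\begin{equation*}
  |n_2|\le\sqrt{s_t}\,B+\rho|\sin\theta|\,|n_1|\le B+\rho B\le 2B,
\end{equation*}
using $s_t\le1$, $|n_1|\le1$, $|\sin\theta|\le B$ from \Cref{lem:slow-drift-B-bounds}, and $\rho\le1/8$. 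With $|\cos(\theta+\delta/2)|\le1$, the $n_2$ term is at most $2B$.

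Combining the three steps gives
\begin{equation*}
  |\langle n,x(\theta+\delta)\rangle|\le 2\rho\sin(\delta/2)\cdot 4B=8\rho\sin(\delta/2)B .
\end{equation*}
Care is needed with the signs in the expression for $\langle x(\theta),h\rangle$. Only its absolute value is used, however, so any sign convention for $h$ works.
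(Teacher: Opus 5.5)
Your proof is correct and follows essentially the same route as the paper. You use $\langle n,x(\theta)\rangle=0$ to reduce to the increment $2\rho\sin(\delta/2)\,v(\theta+\delta/2)$, bound $|n_2|$ through $b=\langle x(\theta),h\rangle$, and invoke both $B\ge\sin(\delta/2)$ and $B\ge|\sin\theta|$ to get $|\langle n,v(\theta+\delta/2)\rangle|\le4B$. Your bookkeeping is slightly leaner: you bound $|n_2|\le(1+\rho)B$ directly via $|n_1|\le1$ instead of substituting the orthogonality relation for $n_1$, and you expand $v(\theta+\delta/2)$ in coordinates rather than rotating back to $v(\theta)$. These differences are only cosmetic.
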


\begin{proof}
We use both bounds from \Cref{lem:slow-drift-B-bounds}: $\sin(\delta/2)\le B$
and $|\sin\theta|\le B$. The orthogonality $\langle n,x(\theta)\rangle=0$
follows from $n=R_t/\norm{R_t}_2$ and the fact that $R_t$ is orthogonal to
$x(\theta)$ (\Cref{lem:prelim-ls-projection}). The plan is to use this
orthogonality together with the fact that the two features differ only by a
short vector, so we just need to control how $n$ pairs with that short vector. Expanding
$\langle n,x(\theta)\rangle=0$ with
$x(\theta)=e_1+\rho(\cos\theta\,e_0+\sin\theta\,e_2)$,
\begin{equation*}
  n_1=-\rho(n_0\cos\theta+n_2\sin\theta).
\end{equation*}
We first bound $|n_2|$. The identity $b=\langle x(\theta),h\rangle$ together
with $\langle e_1,h\rangle=n_2/\sqrt{s_t}$ and $\langle e_2,h\rangle=-n_1/\sqrt{s_t}$
gives $b=(n_2-\rho n_1\sin\theta)/\sqrt{s_t}$; substituting the formula for
$n_1$,
\begin{equation*}
  B\sqrt{s_t}
  =\bigl|n_2(1+\rho^2\sin^2\theta)+\rho^2n_0\sin\theta\cos\theta\bigr|.
\end{equation*}
By the triangle inequality, $|n_0|\le1$, $|\cos\theta|\le1$, and
$\sqrt{s_t}\le1$, the previous display gives
$|n_2|(1+\rho^2\sin^2\theta)\le B+\rho^2|\sin\theta|$. Dividing by
$1+\rho^2\sin^2\theta\ge1$ and using $|\sin\theta|\le B$ and $\rho^2\le1/64$,
\begin{equation*}
  |n_2|\le B+\rho^2|\sin\theta|\le\tfrac{65}{64}B<\tfrac32 B.
\end{equation*}

Now define $v(\phi)=-\sin\phi\,e_0+\cos\phi\,e_2$, a unit vector orthogonal to
$u(\phi)$. Standard angle-addition identities give
\begin{equation*}
  u(\phi+\alpha)-u(\phi)=2\sin(\alpha/2)\,v(\phi+\alpha/2),
  \qquad
  v(\phi+\alpha)=\cos\alpha\,v(\phi)-\sin\alpha\,u(\phi).
\end{equation*}
The first identity at $\phi=\theta$, $\alpha=\delta$ gives
\begin{equation*}
  x(\theta+\delta)-x(\theta)=2\rho\sin(\delta/2)\,v(\theta+\delta/2).
\end{equation*}
We bound $|\langle n,v(\theta+\delta/2)\rangle|$ in two steps. First, by the
triangle inequality and $|n_0|,|\cos\theta|\le1$,
\begin{equation*}
  |\langle n,v(\theta)\rangle|
  =|-n_0\sin\theta+n_2\cos\theta|
  \le|\sin\theta|+|n_2|
  \le B+\tfrac32 B<3B.
\end{equation*}
Second, the second angle-addition identity at $\phi=\theta$, $\alpha=\delta/2$
expands the inner product as
\begin{equation*}
  \langle n,v(\theta+\delta/2)\rangle
  =\cos(\delta/2)\,\langle n,v(\theta)\rangle-\sin(\delta/2)\,\langle n,u(\theta)\rangle.
\end{equation*}
Taking absolute values and using the bound $|\langle n,v(\theta)\rangle|<3B$ from
above, $|\langle n,u(\theta)\rangle|\le\norm{n}_2\norm{u(\theta)}_2=1$ by
Cauchy--Schwarz, and $\sin(\delta/2)\le B$ from \Cref{lem:slow-drift-B-bounds},
\begin{equation*}
  |\langle n,v(\theta+\delta/2)\rangle|
  \le\cos(\delta/2)\cdot3B+\sin(\delta/2)\cdot1
  \le3B+B=4B.
\end{equation*}
Combining and using $\langle n,x(\theta)\rangle=0$,
\begin{equation*}
  |\langle n,x(\theta+\delta)\rangle|
  =|\langle n,x(\theta+\delta)-x(\theta)\rangle|
  =2\rho\sin(\delta/2)\,|\langle n,v(\theta+\delta/2)\rangle|
  \le8\rho\sin(\delta/2)\,B. \qedhere
\end{equation*}
\end{proof}

We are finally ready to prove \Cref{lem:slow-drift}.

\begin{proof}[Proof of \Cref{lem:slow-drift}]
If $E_t=0$, the error is non-increasing along the path
(\Cref{lem:prelim-mse-decomposition}), so $E_{t+1}\le E_t=0$ and the conclusion holds trivially. Assume $E_t>0$, and
adopt the notation of \Cref{lem:slow-drift-setup}. We first derive a per-step
ratio for the relative error reduction, then plug in the two lemmas above and a
denominator bound.

By \Cref{lem:prelim-ls-projection}, agent $A_{t+1}$ projects
$\rho e_0$ onto
$V=\mathrm{span}\{\widehat{y}^{\mathrm{h}}_t,x(\theta+\delta)\}
=\mathrm{span}\{m,x(\theta+\delta)\}$.
This projection is the point of $V$ closest to $\rho e_0$, so $E_{t+1}$ is the
squared distance from $\rho e_0$ to $V$. Since $\rho e_0=\widehat{y}^{\mathrm{h}}_t+R_t$
with $\widehat{y}^{\mathrm{h}}_t$ along $m$ and $R_t$ orthogonal to $m$, the vector
$\widehat{y}^{\mathrm{h}}_t$ is already the projection of $\rho e_0$ onto the line
$\mathrm{span}\{m\}\subseteq V$, with leftover $R_t$.

Let $w:=P(x(\theta+\delta))/\norm{P(x(\theta+\delta))}_2$, the part of
$x(\theta+\delta)$ orthogonal to $m$, normalized. If $P(x(\theta+\delta))=0$, then
$V=\mathrm{span}\{m\}$, the projection is unchanged, and $E_{t+1}=E_t$, so the claim
holds. Otherwise $\{m,w\}$ is an orthonormal basis of $V$, so enlarging the line
$\mathrm{span}\{m\}$ to $V$ removes the component of the leftover $R_t$ along $w$:
the projection of $\rho e_0$ onto $V$ is $\widehat{y}^{\mathrm{h}}_t+\langle R_t,w\rangle\,w$,
leaving residual $R_t-\langle R_t,w\rangle\,w$. By Pythagoras,
\begin{equation*}
  E_{t+1}=E_t-\langle R_t,w\rangle^2.
\end{equation*}
Because $w$ is orthogonal to $m$, projecting $x(\theta+\delta)$ off $m$ does not
change its inner product with $R_t$, so
$\langle R_t,w\rangle=\langle R_t,x(\theta+\delta)\rangle/\norm{P(x(\theta+\delta))}_2$.
Substituting $R_t=\sqrt{E_t}\,n$ and dividing by $E_t$,
\begin{equation*}
  \frac{E_t-E_{t+1}}{E_t}
  =\frac{\langle n,x(\theta+\delta)\rangle^2}{\norm{P(x(\theta+\delta))}_2^2}.
\end{equation*}

\Cref{lem:slow-drift-numerator} bounds the numerator. For the denominator,
linearity of $P$ gives $P(x(\theta+\delta))=P(x(\theta))+P(x(\theta+\delta)-x(\theta))$,
so the reverse triangle inequality and the fact that $P$ does not increase
length (since $m$ is a unit vector, $\norm{z}_2^2=\langle z,m\rangle^2+\norm{P(z)}_2^2$
by Pythagoras, so $\norm{P(z)}_2\le\norm{z}_2$) give
\begin{equation*}
  \norm{P(x(\theta+\delta))}_2
  \ge\norm{P(x(\theta))}_2-\norm{x(\theta+\delta)-x(\theta)}_2
  =B-2\rho\sin(\delta/2)
  \ge B-2\rho B
  \ge\tfrac34 B,
\end{equation*}
using \Cref{lem:slow-drift-B-bounds} ($\sin(\delta/2)\le B$) and $\rho\le1/8$.
Substituting both bounds,
\begin{equation*}
  \frac{E_t-E_{t+1}}{E_t}
  \le\frac{(8\rho\sin(\delta/2)B)^2}{(\tfrac34 B)^2}
  =\frac{1024}{9}\rho^2\sin^2(\delta/2).
\end{equation*}
For $M\ge8$, $\cos^2(\delta/2)\ge\cos^2(\pi/8)>3/4$, so
$\sin^2(\delta/2)=\sin^2\delta/(4\cos^2(\delta/2))\le\sin^2\delta/3$. The
relative error reduction is therefore at most
$(1024/27)\rho^2\sin^2\delta<40\rho^2\sin^2\delta$, and
$E_{t+1}\ge(1-40\rho^2\sin^2\delta)E_t$.
\end{proof}

\subsection{The Fixed-Distribution Obstruction}

The radius $\rho$ in the construction above shrinks with $D$, which means that the distribution we introduce depends on $D$. This is not an artifact of the proof and 
we will show that the hard distribution must move with the depth.

\noFixedAllHorizonRestatement*

\begin{proof}
This follows from a geometric convergence bound for fixed distributions,
\Cref{thm:fixed-distribution-geometric} below, which gives some $q\in[0,1)$ with
$\cE_D\le\cE_1q^{\lfloor (D-1)/M\rfloor}$. The base error is bounded by a constant
of $\cD$ alone, since the zero predictor is feasible for $A_1$ and so
$\cE_1\le\MSE(f_1)\le\E[Y^2]$. Fix $c>0$. Because $q<1$, we have
$\E[Y^2]\,Dq^{\lfloor (D-1)/M\rfloor}\to0$ as $D\to\infty$, so there is a $D_c$,
depending only on $\cD$, $M$, and $c$, with $\E[Y^2]\,Dq^{\lfloor (D-1)/M\rfloor}<cM^2$
for all $D\ge D_c$. Then $\cE_D\le\E[Y^2]q^{\lfloor (D-1)/M\rfloor}<cM^2/D$.
\end{proof}

We state the following two auxiliary lemmas. Both are slightly more general than needed in this section. This is because we will also use them later for the analogous results for the binary classification model.

\begin{lemma}
\label{lem:coverage-lower-bound}
Let $H$ be a finite-dimensional space of random variables with the inner
product $\langle U,W\rangle=\E[UW]$, and let $V_1,\ldots,V_M\subseteq H$ be
subspaces with $V_1+\cdots+V_M=H$. Write $P_{V_i}$ for the orthogonal
projection onto $V_i$. Then there is a constant $\lambda>0$, depending only on
$V_1,\ldots,V_M$, such that
\begin{equation*}
  \sum_{i=1}^M\|P_{V_i}(u)\|_2^2\ge\lambda\|u\|_2^2
  \qquad\text{for every }u\in H.
\end{equation*}
\end{lemma}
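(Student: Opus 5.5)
The plan is a compactness argument on the unit sphere of $H$, which is compact because $H$ is finite-dimensional. Define
\begin{equation*}
  Q(u)=\sum_{i=1}^M\|P_{V_i}(u)\|_2^2 .
\end{equation*}
Each orthogonal projection $P_{V_i}$ is a linear map on a finite-dimensional inner-product space, so it is continuous. Hence $Q$ is a continuous function of $u$.

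$Q$ is also homogeneous of degree two: $Q(cu)=c^2Q(u)$ for every scalar $c$. It therefore suffices to show
\begin{equation*}
  \lambda:=\min_{\|u\|_2=1}Q(u)>0 .
\end{equation*}
The inequality then holds for all $u\neq0$ by rescaling, and trivially for $u=0$. The minimum is attained because $Q$ is continuous on a compact set, and $\lambda$ depends only on $V_1,\ldots,V_M$.

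The one substantive step is showing that $Q(u)\neq0$ for every unit vector $u$. Suppose $Q(u)=0$. Then $P_{V_i}(u)=0$ for every $i$, so $u$ is orthogonal to each $V_i$. Consequently $u$ is orthogonal to every element of $V_1+\cdots+V_M=H$, and in particular to itself, which gives $\|u\|_2=0$. This contradicts $\|u\|_2=1$. Positivity of the minimum then follows directly.

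The only thing to check carefully is the setting. The inner product $\E[UW]$ must be genuinely positive definite on $H$, so elements of $H$ should be understood up to almost-sure equality, as in \Cref{lem:prelim-ls-projection}. Finite dimensionality is what guarantees that projections exist, that they are continuous, and that the unit sphere is compact. If one prefers an explicit constant, an alternative route is to fix an orthonormal basis and write
\begin{equation*}
  Q(u)=\langle u,Tu\rangle,\qquad T=\sum_{i=1}^M P_{V_i}.
\end{equation*}
The operator $T$ is positive semidefinite, and by the argument above its kernel is trivial. So $\lambda$ can be taken to be the smallest eigenvalue of $T$.
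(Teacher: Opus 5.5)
Your proof is correct and is essentially the paper's argument: $Q$ is continuous, the unit sphere of the finite-dimensional space $H$ is compact, and $Q$ is positive there because a vector orthogonal to every $V_i$ is orthogonal to $V_1+\cdots+V_M=H$ and hence zero. Your alternative phrasing via the smallest eigenvalue of $T=\sum_i P_{V_i}$ is also valid and yields the same optimal constant $\lambda$.
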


\begin{proof}
The map $u\mapsto\sum_{i=1}^M\|P_{V_i}(u)\|_2^2$ is continuous on $H$ and
strictly positive on the unit sphere of $H$: if it vanished at a unit vector
$u$, then $u$ would be orthogonal to every $V_i$, hence to $V_1+\cdots+V_M=H$
by coverage, forcing $u=0$. Since $H$ is finite-dimensional, its unit sphere is
compact, so the map attains a minimum $\lambda>0$ on it. Scaling by $\|u\|_2$
gives the stated bound for every $u\in H$.
\end{proof}

The second lemma is the geometric core of the block contraction. Along a block
of agents, agent $i$ fits the features spanning $V_i$, so its residual ends up
orthogonal to $V_i$; and when the loss barely drops over the block, consecutive
residuals barely move. The lemma says that $M$-coverage then forces the part of the
starting residual that lies in $H$ to be small.

\begin{lemma}[Coverage bounds the starting residual]
\label{lem:block-residual-coverage}
Let $H$, $V_1,\ldots,V_M$, and $\lambda$ be as in
\Cref{lem:coverage-lower-bound}, and write $P_H$ for the orthogonal projection
onto $H$. Let $r_0,r_1,\ldots,r_M$ be random variables with finite second
moments such that $P_{V_i}(r_i)=0$ for every $1\le i\le M$. Then
\begin{equation*}
  \lambda\|P_H(r_0)\|_2^2
  \le
  2(M^2+1)\sum_{i=1}^M\|r_i-r_{i-1}\|_2^2 .
\end{equation*}
\end{lemma}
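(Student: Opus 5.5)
We apply \Cref{lem:coverage-lower-bound} to $u=P_H(r_0)$, which gives
\begin{equation*}
  \lambda\|P_H(r_0)\|_2^2\le\sum_{i=1}^M\|P_{V_i}(P_H(r_0))\|_2^2,
\end{equation*}
and then bound each term on the right by the residual increments.

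First, since $V_i\subseteq H$, we have $P_{V_i}P_H=P_{V_i}$. Hence $P_{V_i}(P_H(r_0))=P_{V_i}(r_0)$. By hypothesis $P_{V_i}(r_i)=0$, so linearity gives
\begin{equation*}
  P_{V_i}(r_0)=P_{V_i}(r_0-r_i)=-\sum_{j=1}^{i}P_{V_i}(r_j-r_{j-1}).
\end{equation*}
Orthogonal projections do not increase norm, so the triangle inequality gives $\|P_{V_i}(r_0)\|_2\le\sum_{j=1}^i\|r_j-r_{j-1}\|_2$. Cauchy--Schwarz over at most $i\le M$ terms then yields
\begin{equation*}
  \|P_{V_i}(r_0)\|_2^2\le i\sum_{j=1}^M\|r_j-r_{j-1}\|_2^2\le M\sum_{j=1}^M\|r_j-r_{j-1}\|_2^2.
\end{equation*}

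Summing over $i=1,\ldots,M$ gives
\begin{equation*}
  \lambda\|P_H(r_0)\|_2^2\le M^2\sum_{j}\|r_j-r_{j-1}\|_2^2\le 2(M^2+1)\sum_j\|r_j-r_{j-1}\|_2^2.
\end{equation*}
This is the claim. The looser constant $2(M^2+1)$ suggests the authors route through an intermediate residual, for instance comparing against $r_M$ and splitting $\|a+b\|^2\le2\|a\|^2+2\|b\|^2$, but the direct telescoping already suffices.

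The only point needing care is the identity $P_{V_i}P_H=P_{V_i}$. It requires $V_i\subseteq H$, which holds because $V_1+\cdots+V_M=H$. It also requires that orthogonal projection onto $V_i$ be well defined on the ambient $L^2$ space, which is fine since $V_i$ is finite-dimensional and hence closed. With that in place there is no real obstacle; everything else is the same telescoping and Cauchy--Schwarz argument as in \Cref{lem:regression-block-residual}.
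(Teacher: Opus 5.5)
Your proof is correct, and it takes a more direct route than the paper's. The paper sandwiches $\sum_{i}\|P_{V_i}(r_{i-1})\|_2^2$, writing $\Delta=\sum_i\|r_i-r_{i-1}\|_2^2$:
\begin{itemize}
\item it is at most $\Delta$, because $P_{V_i}(r_{i-1})=P_{V_i}(r_{i-1}-r_i)$;
\item it is at least $\tfrac12\sum_i\|P_{V_i}(r_0)\|_2^2-M^2\Delta$, by splitting $P_{V_i}(r_{i-1})=P_{V_i}(r_0)+P_{V_i}(r_{i-1}-r_0)$, applying Young's inequality to the cross term, and bounding the drift $\|r_{i-1}-r_0\|_2^2\le M\Delta$.
\end{itemize}

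You instead notice that the two pieces combine into the single identity $P_{V_i}(r_0)=P_{V_i}(r_0-r_i)$. One telescoping sum and one Cauchy--Schwarz then bound each $\|P_{V_i}(r_0)\|_2^2$ by $i\Delta$, with no Young's inequality and no additive $\Delta$ term to absorb. This is shorter and loses less. Summing $i\Delta$ over $i$ gives $\lambda\|P_H(r_0)\|_2^2\le\frac{M(M+1)}{2}\Delta$; your cruder $M^2\Delta$ is already below $2(M^2+1)\Delta$. That would slightly improve the contraction factor $q$ in both per-block contraction lemmas, though nothing downstream needs this, since only $q<1$ is used.

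The paper's route is phrased through $P_{V_i}(r_{i-1})$, the part of the incoming residual that agent $i$'s features can see. This perhaps mirrors the dynamics more suggestively, but it buys nothing technically. Your justification of $P_{V_i}P_H=P_{V_i}$ from $V_i\subseteq H$, with projections well defined because finite-dimensional subspaces are closed, is exactly the fact the paper also uses.
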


\begin{proof}
Write $\Delta=\sum_{i=1}^M\|r_i-r_{i-1}\|_2^2$. The plan is to bound the
projections $P_{V_i}(r_{i-1})$ from above by the steps, to bound them from
below through $P_{V_i}(r_0)$, and to apply the coverage bound of
\Cref{lem:coverage-lower-bound} to $P_H(r_0)$.

For the upper bound, $P_{V_i}(r_i)=0$ gives
$P_{V_i}(r_{i-1})=P_{V_i}(r_{i-1}-r_i)$, and orthogonal projections do not
increase norm, so
\begin{equation*}
  \sum_{i=1}^M\|P_{V_i}(r_{i-1})\|_2^2
  \le\sum_{i=1}^M\|r_{i-1}-r_i\|_2^2
  =\Delta.
\end{equation*}

For the lower bound, split
$P_{V_i}(r_{i-1})=P_{V_i}(r_0)+P_{V_i}(r_{i-1}-r_0)$, and write $a=P_{V_i}(r_0)$
and $b=P_{V_i}(r_{i-1}-r_0)$ for the two terms. Expanding the square,
\begin{equation*}
  \|P_{V_i}(r_{i-1})\|_2^2=\|a+b\|_2^2=\|a\|_2^2+2\langle a,b\rangle+\|b\|_2^2.
\end{equation*}
The cross term $2\langle a,b\rangle$ has no definite sign, so we bound it from
below. Young's inequality, in the form
$2|\langle a,b\rangle|\le\tfrac12\|a\|_2^2+2\|b\|_2^2$, gives
$2\langle a,b\rangle\ge-\tfrac12\|a\|_2^2-2\|b\|_2^2$. Substituting and
collecting the $\|b\|_2^2$ terms,
\begin{equation*}
  \|P_{V_i}(r_{i-1})\|_2^2
  \ge\|a\|_2^2-\tfrac12\|a\|_2^2-2\|b\|_2^2+\|b\|_2^2
  =\tfrac12\|a\|_2^2-\|b\|_2^2.
\end{equation*}
Since $P_{V_i}$ does not increase norm, $\|b\|_2\le\|r_{i-1}-r_0\|_2$. The
drift $r_{i-1}-r_0$ telescopes over the steps, so the triangle inequality and
Cauchy--Schwarz across $i-1\le M$ terms give
\begin{equation*}
  \|r_{i-1}-r_0\|_2^2
  =\Big\|\sum_{j=1}^{i-1}(r_j-r_{j-1})\Big\|_2^2
  \le(i-1)\sum_{j=1}^{i-1}\|r_j-r_{j-1}\|_2^2
  \le M\Delta.
\end{equation*}
Putting $a$ and $b$ back and summing over $i$,
\begin{equation*}
  \sum_{i=1}^M\|P_{V_i}(r_{i-1})\|_2^2
  \ge\tfrac12\sum_{i=1}^M\|P_{V_i}(r_0)\|_2^2-M^2\Delta.
\end{equation*}
Each $V_i\subseteq H$, so projecting onto $V_i$ factors through $H$:
$P_{V_i}(r_0)=P_{V_i}(P_H(r_0))$. Applying \Cref{lem:coverage-lower-bound} to
$P_H(r_0)\in H$ then gives
$\sum_{i=1}^M\|P_{V_i}(r_0)\|_2^2\ge\lambda\|P_H(r_0)\|_2^2$. Combining the
two bounds on $\sum_{i=1}^M\|P_{V_i}(r_{i-1})\|_2^2$,
\begin{equation*}
  \tfrac{\lambda}{2}\|P_H(r_0)\|_2^2-M^2\Delta\le\Delta,
\end{equation*}
which rearranges to the claim.
\end{proof}

To show \Cref{thm:fixed-distribution-geometric}, we will first state the following intermediate lemma. Consider a fixed distribution and a path $A_1\to\cdots\to A_M$ of agents on that distribution, in some network that may contain other agents too. Suppose agent $A_i$ sees the raw features $S_i \subseteq [d]$ and assume that $\bigcup_{i=1}^M S_i=[d]$. We will show that there exists a constant $q\in[0,1)$ such that the excess error of the last agent is at most $q$ times the excess error of the first agent. We will further show that this $q$ does not depend on the rest of the network and only depends on $S_1, \ldots, S_M$. We formalize this in the following lemma.

\begin{lemma}[Per-block contraction]
\label{lem:fixed-block-contraction}
Fix a distribution $\cD$ on $(x_1,\ldots,x_d,Y)$. Assume
$x_1,\ldots,x_d,Y$ have bounded second moments. Consider a
path $A_1\to\cdots\to A_M$ of agents on $\cD$ in any DAG, with feature
subsets $S_1,\ldots,S_M\subseteq[d]$ satisfying $\bigcup_{i=1}^M S_i=[d]$. Suppose $A_0$ is any
agent in the DAG with an edge $A_0\to A_1$. Let $\cE_t$ be the excess error of agent $A_t$ for $t=0,1,\ldots,M$. There is a constant
$q\in[0,1)$, depending only on $\cD$ and $(S_1,\ldots,S_M)$, such that
\begin{equation*}
  \cE_M\le q\,\cE_0.
\end{equation*}
\end{lemma}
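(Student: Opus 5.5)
We will prove the per-block contraction by combining the two auxiliary lemmas just stated, \Cref{lem:coverage-lower-bound} and \Cref{lem:block-residual-coverage}, with the path identity \Cref{eq:path-improvement}. Throughout, let $H=\operatorname{span}\{x_1,\ldots,x_d\}$ with $\langle U,W\rangle=\E[UW]$. Let $V_i=\operatorname{span}\{x_\ell:\ell\in S_i\}\subseteq H$, so that $V_1+\cdots+V_M=H$ by the coverage assumption. Let $\lambda>0$ be the constant from \Cref{lem:coverage-lower-bound}; it depends only on $\cD$ and $(S_1,\ldots,S_M)$.

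\textbf{Residuals and their orthogonality.} For $t=0,\ldots,M$ take the residuals against the global predictor, $r_t=f^\star-f_t$. Since $f^\star$ is the projection of $Y$ onto $H$, $Y-f^\star$ is orthogonal to $H$, and in particular to every $V_i$. Agent $A_i$ sees $x_{S_i}$, so by \Cref{lem:prelim-ls-orthogonality} its residual $Y-f_i$ is orthogonal to $V_i$. Subtracting, $P_{V_i}(r_i)=P_{V_i}(Y-f_i)-P_{V_i}(Y-f^\star)=0$ for $1\le i\le M$, which is the hypothesis of \Cref{lem:block-residual-coverage}. The agent $A_0$ may be arbitrary, and we use nothing about it except that it is a parent of $A_1$.

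\textbf{Controlling the excess error by $P_H(r_0)$.} Write $\cE_t=\|f^\star-f_t\|_2^2+(\text{cross term})$. Since $Y-f^\star\perp H$, we have $\cE_t=\|f_t-f^\star\|_2^2+2\E[(f^\star-f_t)(Y-f^\star)]=\|P_H(r_t)\|_2^2+\|r_t-P_H(r_t)\|_2^2-2\E[f_t(Y-f^\star)]$. This is where the parent predictions matter: $f_t$ may contain a component outside $H$, since parent predictions in a DAG can involve other agents, but those agents are still linear in the raw features. Hence every $f_t\in H$, so $r_t\in H$, $P_H(r_t)=r_t$, and $\cE_t=\|r_t\|_2^2$ exactly. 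This is the step I expect to be the main obstacle to state carefully: every predictor in the network is a linear combination of raw features, so all residuals $r_t$ lie in $H$. The claim follows by induction over the topological order of the whole DAG, and it is what makes $q$ independent of the rest of the network.

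\textbf{Conclusion.} \Cref{lem:block-residual-coverage} gives $\lambda\cE_0\le2(M^2+1)\sum_{i=1}^M\|r_i-r_{i-1}\|_2^2$. Since $r_i-r_{i-1}=f_{i-1}-f_i$, and $A_{i-1}$ is a parent of $A_i$ (including $A_0\to A_1$), \Cref{eq:path-improvement} telescopes the sum to $\MSE(f_0)-\MSE(f_M)=\cE_0-\cE_M$. Therefore $\cE_0-\cE_M\ge\frac{\lambda}{2(M^2+1)}\cE_0$, that is, $\cE_M\le q\,\cE_0$ with $q=1-\lambda/(2(M^2+1))$. The fact that $q\ge0$ follows from this same inequality whenever $\cE_0>0$, since $\cE_M\ge0$. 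If needed we can replace $q$ by $\max(q,0)$; the case $\cE_0=0$ is trivial, since then $\cE_M\le\cE_0=0$ by monotonicity.
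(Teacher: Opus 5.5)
Your proof is correct and follows essentially the same route as the paper. You place $r_t=f^\star-f_t$ in $H$ so that $\cE_t=\|r_t\|_2^2$, get $P_{V_i}(r_i)=0$ by subtracting the two least-squares orthogonality relations, telescope the steps via \Cref{eq:path-improvement}, and apply \Cref{lem:coverage-lower-bound,lem:block-residual-coverage} to obtain $q=1-\lambda/(2(M^2+1))$. Two small tidy-ups: the detour through $P_H(r_t)$ is unnecessary once you note $f_t\in H$, and $q\ge0$ follows directly from $\lambda\le M$, since $\sum_i\|P_{V_i}(u)\|_2^2\le M\|u\|_2^2$.
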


\begin{proof}
Let $H=\operatorname{span}\{x_1,\ldots,x_d\}$.
For a subspace $V\subseteq H$, $P_V$
denotes the orthogonal projection onto $V$.
Write
\begin{equation*}
  V_i=\operatorname{span}\{x_\ell:\ell\in S_i\},
  \qquad
  r_t=f^\star-f_t,
  \qquad
  \Delta=\cE_0-\cE_M.
\end{equation*}
Here $r_t$ is the residual of $A_t$ against the global predictor, and $\Delta$ is the drop of the excess error over the block.
Every agent's prediction is a linear combination of inputs in $H$, so by
induction over the DAG it lies in $H$; in particular $r_t\in H$.
\Cref{lem:prelim-ls-projection} applied to the global predictor $f^\star$ gives
$\E[u(f^\star-Y)]=0$ for every $u\in H$, so
\begin{align*}
  \cE_t
  &=\MSE(f_t)-\MSE(f^\star) \\
  &=\E[(f_t-f^\star)^2]+2\E[(f_t-f^\star)(f^\star-Y)] \\
  &=\|r_t\|_2^2,
\end{align*}
where the second term vanishes by applying the orthogonality to
$u=f_t-f^\star$.

We check the hypotheses of \Cref{lem:block-residual-coverage} for
$r_0,\ldots,r_M$. First, by \Cref{lem:prelim-ls-projection}, the residuals
$Y-f_i$ and $Y-f^\star$ are orthogonal to every $v\in V_i$. Subtracting the two
orthogonality relations shows that $r_i=f^\star-f_i$ is orthogonal to $V_i$,
that is $P_{V_i}(r_i)=0$. Second, \Cref{eq:path-improvement} of
\Cref{lem:prelim-mse-decomposition} controls the steps:
$\|r_i-r_{i-1}\|_2^2=\|f_i-f_{i-1}\|_2^2=\cE_{i-1}-\cE_i$, so summing
telescopes to
\begin{equation*}
  \sum_{i=1}^M\|r_i-r_{i-1}\|_2^2=\cE_0-\cE_M=\Delta.
\end{equation*}

By coverage, $V_1+\cdots+V_M=H$, so \Cref{lem:coverage-lower-bound} gives a
constant $\lambda>0$ depending only on $\cD$ and $(S_1,\ldots,S_M)$, and
\Cref{lem:block-residual-coverage} applies. Since $r_0\in H$, we have
$P_H(r_0)=r_0$ and $\|r_0\|_2^2=\cE_0$, so its conclusion reads
$\lambda\cE_0\le2(M^2+1)\Delta$. Hence
\begin{equation*}
  \cE_M=\cE_0-\Delta\le\Bigl(1-\frac{\lambda}{2(M^2+1)}\Bigr)\cE_0,
\end{equation*}
and $q:=1-\lambda/(2(M^2+1))\in[0,1)$ depends only on $\cD$ and
$(S_1,\ldots,S_M)$.
\end{proof}

There are only finitely many possible tuples $(S_1,\ldots,S_M)$ in the lemma above. Thus, we can take the maximum over all tuples to obtain a global contraction factor.

\fixedInstanceRestate*

\begin{proof}
Fix $s$ with $1\le s\le D-M$. The sub-path $A_{s+1}\to\cdots\to A_{s+M}$
is a block of $M$ consecutive agents on the original path, so its feature
subsets cover $[d]$ by $M$-coverage, and its first agent has the on-path
predecessor $A_s$. Apply
\Cref{lem:fixed-block-contraction} to this sub-path with starting
predictor $f_s$: there is $q(S_{s+1},\ldots,S_{s+M})\in[0,1)$, depending
only on $\cD$ and $(S_{s+1},\ldots,S_{s+M})$, with
\begin{equation*}
  \cE_{s+M}\le q(S_{s+1},\ldots,S_{s+M})\,\cE_s.
\end{equation*}
The tuple $(S_{s+1},\ldots,S_{s+M})$ takes at most $2^{dM}$ values, so
$q=\max q(S_{s+1},\ldots,S_{s+M})\in[0,1)$ over the finitely many feasible
tuples; $q$ depends only on $\cD$ and $M$, and $\cE_{s+M}\le q\,\cE_s$ for
every $s$ with $1\le s\le D-M$.

By \Cref{eq:path-improvement} of \Cref{lem:prelim-mse-decomposition},
$\cE_t$ is non-increasing along the path. Iterating the per-block
contraction from $s=1$ over $\lfloor (D-1)/M\rfloor$ full blocks and using
monotonicity over the remaining steps gives
$\cE_D\le\cE_1\,q^{\lfloor (D-1)/M\rfloor}$.
\end{proof}

\section{Classification Results}
\label{sec:classification-results}

We now state the matching results for binary classification. The protocol is the logit-passing protocol from the preliminaries. For the upper bound, we use the same idea as in \Cref{thm:regression-improved-upper} to give a $O(M^2/D)$ upper bound, which improves upon the $O(M/\sqrt{D})$ bound from \citet{bateni2026networked}. For the lower bounds, we prove a Gaussian transfer result, which allows us to transfer the results directly from the regression model into classification. This includes a tighter analysis of the cyclic example (previously done by \citet{bateni2026networked}), and a $\Omega(M^2/D)$ lower bound when $D \ge M^2$ along with a constant lower bound for $D < M^2$. The latter example still depends on $D$, so we show analogous results to \Cref{thm:no-fixed-all-horizon,thm:fixed-distribution-geometric}.

\subsection{Upper Bound}

For the rest of this subsection, we assume the following conditions, which mirror those of the regression setting, and the ones used by \citet{bateni2026networked}:
\begin{itemize}
\item The global BCE minimizer $z_\star(x)=\sum_{\ell=1}^d w_\ell x_\ell$ satisfies $\sum_\ell |w_\ell|\le B_\star$.
\item Each feature satisfies $\E[x_\ell^2]\le B_X^2$.
\item The protocol minimizers are attained at finite coefficients.
\end{itemize}

Consider an $M$-covered path of agents $A_1 \to A_2 \to \cdots \to A_n$. By \Cref{lem:prelim-bce-kl}, the losses are non-increasing along the path, meaning that $\cL(z_1) \ge \cL(z_2) \ge \cdots \ge \cL(z_n)$. As in the regression setting, if the loss does not decrease significantly over a block of agents that sees every feature, then the excess loss of the last agent in the block is small. The following lemma is the classification analogue of \Cref{lem:regression-block-residual}. \citet[Lemma 3.6]{bateni2026networked} also shows a similar result. We provide the proof for completeness.

\begin{lemma}
\label{lem:classification-block-residual}
Consider any path $A_1 \to A_2 \to \dots \to A_n$ and a block of $k$ consecutive agents indexed by $[a+1, b]$ that sees every raw feature at least
once, where $b=a+k$.
If $\varepsilon\ge\cL(z_a)-\cL(z_b)$, then
\begin{equation*}
  \cL(z_b)-\cL(z_\star)
  \le
  B_\star B_X\sqrt{\frac{k\varepsilon}{2}}.
\end{equation*}
\end{lemma}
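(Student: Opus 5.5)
The plan is to mirror the proof of \Cref{lem:regression-block-residual}. BCE first-order orthogonality replaces least-squares orthogonality, the KL identity replaces the squared-step identity, and the comparator inequality of \Cref{lem:prelim-classification-comparator} replaces the MSE decomposition. Write $p_i=\sigma(z_i)$.

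\emph{Step 1: a per-feature bound.} Fix a raw feature $x_\ell$, and let $A_j$ with $j\in[a+1,b]$ be an agent in the block that sees it. By \Cref{lem:prelim-bce-orthogonality}, $\E[x_\ell(p_j-Y)]=0$. Hence
\begin{equation*}
  \E[x_\ell(p_b-Y)]=\E[x_\ell(p_b-p_j)].
\end{equation*}
Cauchy--Schwarz then gives $|\E[x_\ell(p_b-Y)]|\le B_X\|p_b-p_j\|_2$.

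\emph{Step 2: bounding the probability drift.} Telescope $p_b-p_j=\sum_{i=j+1}^b(p_i-p_{i-1})$. This sum has at most $k$ terms, so the triangle inequality and Cauchy--Schwarz give
\begin{equation*}
  \|p_b-p_j\|_2\le\sqrt{k}\Bigl(\sum_{i=a+1}^b\|p_i-p_{i-1}\|_2^2\Bigr)^{1/2}.
\end{equation*}
For each step $A_{i-1}\to A_i$ on the path, \Cref{lem:prelim-bce-kl} gives
\begin{equation*}
  \cL(z_{i-1})-\cL(z_i)=D(p_i\|p_{i-1})\ge2\|p_i-p_{i-1}\|_2^2.
\end{equation*}
Summing over the block telescopes to $\sum_i\|p_i-p_{i-1}\|_2^2\le\varepsilon/2$. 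Therefore $\|p_b-p_j\|_2\le\sqrt{k\varepsilon/2}$, and for every feature
\begin{equation*}
  |\E[x_\ell(p_b-Y)]|\le B_X\sqrt{k\varepsilon/2}.
\end{equation*}

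\emph{Step 3: converting to excess loss.} Apply \Cref{lem:prelim-classification-comparator} with $z=z_b$, which minimizes BCE over its own inputs, and comparator $z_g=z_\star$:
\begin{equation*}
  \cL(z_b)\le\cL(z_\star)+|\E[(p_b-Y)z_\star]|.
\end{equation*}
Expand $z_\star=\sum_\ell w_\ell x_\ell$ and use the bound from Step 2 together with $\sum_\ell|w_\ell|\le B_\star$. This gives $|\E[(p_b-Y)z_\star]|\le B_\star B_X\sqrt{k\varepsilon/2}$, which is exactly the claimed bound.

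The only delicate point is constant bookkeeping. The factor $2$ from the KL--squared-distance inequality turns $\varepsilon$ into $\varepsilon/2$. The comparator lemma carries no factor $2$ in front, unlike the regression decomposition, and this explains why the constant is $B_\star B_X\sqrt{k\varepsilon/2}$ rather than $2A^\star M_X\sqrt{k\varepsilon}$. One also needs finiteness of the protocol minimizers so that \Cref{lem:prelim-bce-orthogonality} applies at $A_j$ and $A_b$; this is one of the standing assumptions.
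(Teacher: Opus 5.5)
Your proposal is correct and follows the paper's proof essentially step for step. Both use BCE orthogonality at the agent $A_j$ that sees $x_\ell$, the bound $D(p\|q)\ge 2\E[(p-q)^2]$ to cap the total squared probability drift by $\varepsilon/2$, and telescoping with Cauchy--Schwarz; both then finish with the comparator inequality at $z_g=z_\star$ and the bound $\sum_\ell|w_\ell|\le B_\star$, giving exactly the constant $B_\star B_X\sqrt{k\varepsilon/2}$.
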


\begin{proof}
Fix a feature $x_\ell$, and let agent $A_j$ in the block see it. BCE orthogonality from \Cref{lem:prelim-bce-orthogonality} gives $\E[x_\ell(\sigma(z_j)-Y)]=0$. For consecutive agents on the path, \Cref{eq:bce-kl-identities} gives $\|\sigma(z_i)-\sigma(z_{i-1})\|_2^2\le\frac12(\cL(z_{i-1})-\cL(z_i))$, so summing over the block,
\begin{equation*}
  \sum_{i=a+1}^b \|\sigma(z_i)-\sigma(z_{i-1})\|_2^2
  \le
  \frac12\bigl(\cL(z_a)-\cL(z_b)\bigr)
  \le
  \frac{\varepsilon}{2}.
\end{equation*}
Since agent $A_j$ sees $x_\ell$, its orthogonality $\E[x_\ell(\sigma(z_j)-Y)]=0$ lets us
replace the target $Y$ with $\sigma(z_j)$ in the following equation:
\begin{equation*}
  \E[x_\ell(\sigma(z_b)-Y)]
  =
  \E[x_\ell(\sigma(z_b)-\sigma(z_j))]
  +
  \E[x_\ell(\sigma(z_j)-Y)]
  =
  \E[x_\ell(\sigma(z_b)-\sigma(z_j))].
\end{equation*}
Write $\sigma(z_b)-\sigma(z_j)$ as $\sigma(z_b)-\sigma(z_j)=\sum_{i=j+1}^b(\sigma(z_i)-\sigma(z_{i-1}))$. The sum has at most
$k$ terms, since $j\ge a+1$ and $b=a+k$. The triangle inequality followed by Cauchy--Schwarz across these terms gives
\begin{equation*}
  \|\sigma(z_b)-\sigma(z_j)\|_2
  \le
  \sum_{i=j+1}^b\|\sigma(z_i)-\sigma(z_{i-1})\|_2
  \le
  \sqrt{k}\left(\sum_{i=j+1}^b\|\sigma(z_i)-\sigma(z_{i-1})\|_2^2\right)^{1/2}
  \le
  \sqrt{\frac{k\varepsilon}{2}},
\end{equation*}
where the last step uses the bound $\sum_{i=a+1}^b\|\sigma(z_i)-\sigma(z_{i-1})\|_2^2\le\varepsilon/2$
from above. A final Cauchy--Schwarz over the feature, with $\E[x_\ell^2]\le B_X^2$,
then yields
\begin{equation*}
  |\E[x_\ell(\sigma(z_b)-Y)]|
  =
  |\E[x_\ell(\sigma(z_b)-\sigma(z_j))]|
  \le
  \sqrt{\E[x_\ell^2]}\,\|\sigma(z_b)-\sigma(z_j)\|_2
  \le
  B_X\sqrt{\frac{k\varepsilon}{2}}.
\end{equation*}
The bound just derived holds for every feature $\ell$, since the block sees each
raw feature at least once. To turn these per-feature bounds into a loss bound,
apply the comparator inequality of \Cref{lem:prelim-classification-comparator} with $z_g=z_\star$: since $z_b$ minimizes BCE over its inputs,
\begin{equation*}
  \cL(z_b)-\cL(z_\star)
  \le
  |\E[(\sigma(z_b)-Y)z_\star]|.
\end{equation*}
Finally, expand $z_\star=\sum_\ell w_\ell x_\ell$ and use the per-feature bound together with
$\sum_\ell|w_\ell|\le B_\star$,
\begin{equation*}
  |\E[(\sigma(z_b)-Y)z_\star]|
  \le
  \sum_\ell|w_\ell|\,|\E[x_\ell(\sigma(z_b)-Y)]|
  \le
  B_\star B_X\sqrt{\frac{k\varepsilon}{2}},
\end{equation*}
which gives $\cL(z_b)-\cL(z_\star)\le B_\star B_X\sqrt{k\varepsilon/2}$.
\end{proof}

We now state the classification analogue of \Cref{lem:regression-suffix-improvement}. The proof for this lemma is almost identical to the proof of \Cref{lem:regression-suffix-improvement}.

\begin{lemma}[A good suffix from a good prefix]
\label{lem:classification-suffix-improvement}
Take an $M$-covered path $A_1 \to A_2 \to \dots \to A_n$ and an agent $A_s$ on it with
$\cL(z_s) - \cL(z_\star)\le\delta$. Let $L = n-s$ and consider the suffix $A_{s+1}\to\dots\to A_{s+L}$. If $L \ge 2M$, then
\begin{equation*}
  \cL(z_{n})-\cL(z_\star)
  \le
  B_\star B_X\,M\sqrt{\frac{\delta}{L}}.
\end{equation*}
\end{lemma}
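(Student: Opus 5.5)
The argument mirrors the proof of \Cref{lem:regression-suffix-improvement}, with \Cref{lem:classification-block-residual} taking the place of \Cref{lem:regression-block-residual}.

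First, split the suffix $A_{s+1}\to\dots\to A_{s+L}$ into $K=\lfloor L/M\rfloor$ consecutive full blocks of $M$ agents. Since $L\ge 2M$, we have $K\ge L/M-1\ge L/(2M)$. Each block is a block of $M$ consecutive agents on an $M$-covered path, so it sees every raw feature.

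Second, bound the total loss drop over these blocks. By \Cref{lem:prelim-bce-kl}, losses are non-increasing along the path. The blocks are disjoint and consecutive, starting at agent $A_s$, so their drops telescope to a sum of at most $\cL(z_s)-\cL(z_{s+KM})$. This is at most $\cL(z_s)-\cL(z_\star)\le\delta$. Here we use $\cL(z_{s+KM})\ge\cL(z_\star)$, which holds because every agent's logit lies in the raw-feature span and $z_\star$ is the global BCE minimizer over that span; parent logits are themselves linear in raw features by induction over the DAG. By pigeonhole, some block $[a+1,b]$ with $b=a+M$ has drop
\[
\varepsilon=\cL(z_a)-\cL(z_b)\le \frac{\delta}{K}\le \frac{2M\delta}{L}.
\]

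Third, apply \Cref{lem:classification-block-residual} to this block with $k=M$:
\[
\cL(z_b)-\cL(z_\star)\le B_\star B_X\sqrt{\frac{M\varepsilon}{2}}\le B_\star B_X\sqrt{\frac{M^2\delta}{L}}=B_\star B_X\,M\sqrt{\frac{\delta}{L}}.
\]
The factor $1/2$ in the block lemma exactly cancels the $2$ coming from $K\ge L/(2M)$, which is why no $\sqrt2$ appears in the stated constant.

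Finally, since $b\le n$ and losses are non-increasing along the path, $\cL(z_n)\le\cL(z_b)$, which gives the claim.

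The only step that needs any care is the second: justifying that the drop over the suffix is bounded by the excess loss of $A_s$, that is, that no agent's loss falls below $\cL(z_\star)$. This follows from the span argument above. Everything else is a direct substitution into the earlier lemmas.
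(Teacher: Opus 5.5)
Your proposal is correct and follows the paper's proof. Both split the suffix into $\lfloor L/M\rfloor\ge L/(2M)$ blocks, use pigeonhole to find a block with drop at most $2M\delta/L$, apply \Cref{lem:classification-block-residual} with $k=M$, and finish by monotonicity. Your justification of $\cL(z_t)\ge\cL(z_\star)$ (every logit lies in the raw-feature span) fills in a step the paper leaves implicit.
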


\begin{proof}
Split the suffix into $K=\lfloor L/M\rfloor$ full blocks. Since $L\ge2M$,
$K\ge L/(2M)$. The total loss drop over these blocks is at most
$\cL(z_s)-\cL(z_\star)\le\delta$, so by the pigeonhole principle, some block has drop
$\varepsilon\le\delta/K\le2M\delta/L$. Applying
\Cref{lem:classification-block-residual} on this block with $k=M$ gives, at the end $q$ of that
block,
\begin{equation*}
  \cL(z_q)-\cL(z_\star)
  \le
  B_\star B_X\sqrt{\frac{M^2\delta}{L}}
  =
  B_\star B_X\,M\sqrt{\frac{\delta}{L}}.
\end{equation*}
By \Cref{lem:prelim-bce-kl}, the loss is non-increasing along the path, so $\cL(z_n)\le\cL(z_q)$.
\end{proof}

Finally, we prove the upper bound theorem. Its proof is also almost identical to \Cref{thm:regression-improved-upper}.

\classificationUpperRestatement*

\begin{proof}
Set $c_\star=B_\star B_X$, the constant of
\Cref{lem:classification-suffix-improvement}, and $C=\max\{4\log2,\,6c_\star^2\}$. We
prove that $\cL(z_D)-\cL(z_\star)\le CM^2/D$ for every $M$-covered path of
depth $D$, arguing by induction on $D$.

Since the loss is non-increasing along the path by \Cref{lem:prelim-bce-kl} and BCE is
nonnegative, $\cL(z_D)-\cL(z_\star)\le\cL(z_D)\le\cL(z_1)\le\cL(0)=\log2$, using that
the zero logit is feasible for the first agent. This establishes the
bound for short paths: if $D<4M$, then $M^2/D>M/4\ge1/4$, hence
\begin{equation*}
  \cL(z_D)-\cL(z_\star)
  \le
  \log2
  \le
  \frac{C}{4}
  <
  C\frac{M^2}{D}.
\end{equation*}

Now suppose $D\ge4M$, and split the path at $s=\lfloor D/2\rfloor$ into a prefix
$A_1\to\dots\to A_s$ and a suffix $A_{s+1}\to\dots\to A_D$ of length $L=D-s$.
Both are $M$-covered, being sub-paths of an $M$-covered path. We apply the induction hypothesis on the prefix. Since
$s=\lfloor D/2\rfloor\ge(D-1)/2\ge D/3$ (using $D\ge3$),
\begin{equation*}
  \cL(z_s)-\cL(z_\star)
  \le
  \frac{CM^2}{s}
  \le
  \frac{3CM^2}{D}
  =:\delta.
\end{equation*}
The suffix has length $L=D-s\ge D/2\ge2M$, so
\Cref{lem:classification-suffix-improvement} applies with this $\delta$. Substituting
$\delta=3CM^2/D$ and $L\ge D/2$ then gives
\begin{equation*}
  \cL(z_D)-\cL(z_\star)
  \le
  c_\star M\sqrt{\frac{\delta}{L}}
  \le
  c_\star M\sqrt{\frac{3CM^2/D}{D/2}}
  =
  c_\star\sqrt{6C}\,\frac{M^2}{D}.
\end{equation*}
Finally, $C\ge6c_\star^2$ implies $c_\star\sqrt{6C}\le C$, so the right-hand side
is at most $CM^2/D$, which completes the induction.
\end{proof}

\subsection{Gaussian Transfer}

\citet{bateni2026networked} show a lower bound of $\Omega(M/D)$ for the
classification setting. They use the same cyclic example and apply $\sigma$ to
the label to obtain probabilities. Their analysis for this example does not
directly use the regression bound.

We instead take a shortcut. In this section, we show that any Gaussian instance
from the regression setting has the same asymptotic error if the agents instead
fit predictors for linear classification. The cyclic example given by
\citet{kearns2026networked} and the example we provide in
\Cref{sec:depth-dependent-obstruction} are both Gaussian, so their bounds
directly transfer to the classification setting.

\begin{lemma}[BCE loss and squared logit error]
\label{lem:classification-bce-logit-comparison}
Let $x_1, \dots, x_d$ be centered jointly Gaussian variables and
$G = w^T x$ be a linear combination of them. Let $Y\in\{0,1\}$ satisfy
$\Pr(Y=1\mid x) = \sigma(G)$.
If $z$ is a linear combination of the features $x$, then
\begin{equation*}
  0\le \cL(z)-\cL(G)\le \frac18\E[(z-G)^2].
\end{equation*}
Moreover, for each $0\le B<\infty$ there is a constant $\kappa_B>0$ such that, if
$\E[G^2],\E[z^2]\le B$,
\begin{equation*}
  \cL(z)-\cL(G)\ge \kappa_B\E[(z-G)^2].
\end{equation*}
\end{lemma}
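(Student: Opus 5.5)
Conditioning on $x$ reduces everything to a pointwise statement. Since $\Pr(Y=1\mid x)=\sigma(G)$, the tower property gives
\begin{equation*}
  \cL(z)-\cL(G)=\E\bigl[\phi(z)-\phi(G)-\sigma(G)(z-G)\bigr],
\end{equation*}
because $\phi'(G)=\sigma(G)$. The integrand is the Bregman divergence $D_\phi(z,G)$ of the strictly convex function $\phi$ (\Cref{lem:prelim-bce-convexity}). Nonnegativity follows at once. By Taylor's theorem with integral remainder,
\begin{equation*}
  D_\phi(z,G)=(z-G)^2\int_0^1(1-u)\,\phi''\bigl(G+u(z-G)\bigr)\,du,
\end{equation*}
and since $\phi''=\sigma(1-\sigma)\le 1/4$, the integral is at most $1/8$. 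Taking expectations gives the upper bound $\frac18\E[(z-G)^2]$. No Gaussianity is needed for these two facts.

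For the lower bound, Gaussianity supplies tail control. Set $\psi(t)=\min_{|s|\le t}\phi''(s)=\sigma(t)(1-\sigma(t))>0$, which is decreasing in $t\ge0$. On the event $\{|G|\le T,\ |z|\le T\}$, the whole segment between $G$ and $z$ lies in $[-T,T]$, so there $D_\phi(z,G)\ge\frac12\psi(T)(z-G)^2$. Since $D_\phi\ge0$ everywhere, this yields
\begin{equation*}
  \cL(z)-\cL(G)\ge\tfrac12\psi(T)\,\E\bigl[(z-G)^2\mathbf 1_{\{|G|\le T,|z|\le T\}}\bigr].
\end{equation*}
It then remains to show that the truncated second moment is at least half of $\E[(z-G)^2]$ for a suitable $T$ depending only on $B$.

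This truncation step is the main obstacle, and it is where Gaussianity enters. Write $W=z-G$. The triple $(z,G,W)$ is centered jointly Gaussian, and $\E[W^2]\le 4B$. Let $A$ be the complement event $\{|G|>T\}\cup\{|z|>T\}$. By Cauchy--Schwarz,
\begin{equation*}
  \E[W^2\mathbf 1_A]\le\sqrt{\E[W^4]}\sqrt{\Pr(A)}=\sqrt3\,\E[W^2]\sqrt{\Pr(A)},
\end{equation*}
using the Gaussian identity $\E[W^4]=3(\E[W^2])^2$. The Gaussian tail bound together with $\operatorname{Var}(G),\operatorname{Var}(z)\le B$ gives
\begin{equation*}
  \Pr(A)\le 4e^{-T^2/(2B)}.
\end{equation*}
Choose $T=T_B$ large enough that $\sqrt3\cdot 2e^{-T^2/(4B)}\le\frac12$. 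Then the truncated moment is at least $\frac12\E[W^2]$, and the lower bound holds with
\begin{equation*}
  \kappa_B=\tfrac14\,\psi(T_B)>0.
\end{equation*}
The degenerate case $B=0$ is trivial, since then $z=G=0$. The key point is the ratio $\E[W^4]/(\E[W^2])^2=3$. Because this ratio is scale-free, the argument works uniformly even when $W$ is tiny, which is exactly why a constant depending only on $B$ exists.
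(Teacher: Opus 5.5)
Your proposal is correct and follows essentially the same route as the paper. Both arguments reduce the excess loss to the Bregman divergence of $\phi$ by conditioning, use $\phi''\le 1/4$ for the upper bound, and for the lower bound restrict to $\{|G|\le T,\ |z|\le T\}$ and apply Cauchy--Schwarz with $\E[W^4]=3(\E[W^2])^2$ to retain half of $\E[W^2]$. The differences are cosmetic: you choose $T$ with a Gaussian tail bound where the paper uses Chebyshev, you use the exact minimum $\sigma(T)(1-\sigma(T))$ where the paper uses $e^{-T}/4$, and you apply one Cauchy--Schwarz to the union event where the paper applies two.
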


\begin{proof}
Recall that $\phi(u)=\log(1+e^u)$, $\phi'(u)=\sigma(u)$, and
$\cL(u) = \E[\phi(u) - Yu]$. Since $G$ and $z$ are linear combinations of the
centered Gaussian vector $x$, the pair $(G,z)$ is centered jointly Gaussian.
Also $G$ and $z$ are functions of $x$, so the assumption
$\Pr(Y=1\mid x)=\sigma(G)$ gives
$\E[Y\mid G,z]=\sigma(G)=\phi'(G)$. Hence
\begin{equation*}
  \E[Y(z-G)]
  =
  \E[\E[Y\mid G,z](z-G)]
  =
  \E[\phi'(G)(z-G)].
\end{equation*}
Thus
\begin{equation*}
  \begin{aligned}
  \cL(z)-\cL(G)
  =
  \E[\phi(z)-\phi(G)-Y(z-G)]
  =
  \E[\phi(z)-\phi(G)-\phi'(G)(z-G)].
  \end{aligned}
\end{equation*}
For real numbers $a$ and $b$, Taylor's remainder formula gives some point $\xi$
between $a$ and $b$ such that
\begin{equation}
  \label{eq:phi-taylor-remainder}
  \phi(b)-\phi(a)-\phi'(a)(b-a)
  =
  \frac12\phi''(\xi)(b-a)^2.
\end{equation}
Also $\phi''(u)=\sigma(u)(1-\sigma(u))$, so
$0\le\phi''(u)\le1/4$. Combining the above equalities with $a=G$ and $b=z$
gives
\begin{equation*}
  0\le \cL(z)-\cL(G)\le \frac18\E[(z-G)^2].
\end{equation*}

It remains to prove the lower bound under $\E[G^2],\E[z^2]\le B$. If
$\E[(z-G)^2]=0$, then $z=G$ almost surely and the claim is trivial.

Choose $R_B>0$ with $R_B^2\ge48B$. By Chebyshev's inequality, every centered
random variable $W$ with $\E[W^2]\le B$ satisfies
$\Pr(|W|>R_B)\le1/48$. Define the event
$A=\{|G|\le R_B,\ |z|\le R_B\}$. We first lower-bound the loss on $A$. For
$|u|\le R_B$,
\begin{equation*}
  \phi''(u)
  =
  \frac{1}{(e^{u/2}+e^{-u/2})^2}
  \ge
  \frac{e^{-R_B}}{4}.
\end{equation*}
Here the last step uses $e^{u/2}+e^{-u/2}\le2e^{R_B/2}$. On $A$, the
intermediate point of \Cref{eq:phi-taylor-remainder} also lies in
$[-R_B,R_B]$, so the equation gives
\begin{equation*}
  \phi(z)-\phi(G)-\phi'(G)(z-G)
  \ge
  \frac18 e^{-R_B}(z-G)^2.
\end{equation*}
The same formula, namely the left-hand side of
\Cref{eq:phi-taylor-remainder}, is nonnegative everywhere, so we can discard
the contribution from $A^c$ and get
\begin{equation}
  \label{eq:classification-good-event-loss}
  \cL(z)-\cL(G)
  \ge
  \frac18 e^{-R_B}\E[(z-G)^2\ind{A}].
\end{equation}

It remains to show that $A$ contains a fixed fraction of $\E[(z-G)^2]$. Since
$(G,z)$ is centered jointly Gaussian, the difference $z-G$ is centered
Gaussian, and hence
$\E[(z-G)^4]=3\E[(z-G)^2]^2$. The choice of $R_B$ applies to both $G$ and $z$.
For an event $E$, Cauchy--Schwarz gives
\begin{equation*}
\E[(z-G)^2\ind{E}]\le
\sqrt{\E[(z-G)^4]\Pr(E)}.
\end{equation*}
Applying this with
$E=\{|G|>R_B\}$ and then with $E=\{|z|>R_B\}$ gives
\begin{align*}
  \E[(z-G)^2\ind{|G|>R_B}]
  &\le
  \sqrt{\E[(z-G)^4]\,\Pr(|G|>R_B)} \\
  &\le
  \sqrt{3\E[(z-G)^2]^2/48}
  =
  \frac14\E[(z-G)^2], \\
  \E[(z-G)^2\ind{|z|>R_B}]
  &\le
  \sqrt{\E[(z-G)^4]\,\Pr(|z|>R_B)} \\
  &\le
  \sqrt{3\E[(z-G)^2]^2/48}
  =
  \frac14\E[(z-G)^2].
\end{align*}
Since $A^c=\{|G|>R_B\}\cup\{|z|>R_B\}$,
\begin{align*}
  \E[(z-G)^2\ind{A}]
  &=
  \E[(z-G)^2]-\E[(z-G)^2\ind{A^c}]
  \\
  &\ge
  \E[(z-G)^2]
  -\E[(z-G)^2\ind{|G|>R_B}]
  -\E[(z-G)^2\ind{|z|>R_B}]
  \\
  &\ge
  \frac12\E[(z-G)^2].
\end{align*}
Substituting this into \Cref{eq:classification-good-event-loss} gives
\begin{align*}
  \cL(z)-\cL(G)
  &\ge
  \frac18 e^{-R_B}\E[(z-G)^2\ind{A}] \\
  &\ge
  \frac{e^{-R_B}}{16}\E[(z-G)^2].
\end{align*}
Thus the claim holds with $\kappa_B=e^{-R_B}/16$.
\end{proof}

We remark that the idea of restricting a Gaussian variable to a centered interval in order to bound the sigmoid function's derivative also shows up in the proof of \cite{bateni2026networked}, though their reasoning considers a specific Gaussian variable.

The next lemma shows that under the same Gaussian inputs, the BCE minimizer $z$ and the least-squares minimizer $f$ are proportional, $z=cf$. Similar proportionality results are known~\citep{brillinger1982generalized,erdogdu2016scaled}. We include the proof for completeness and further show that $c\in[0,1]$.

\begin{lemma}[Gaussian logistic regression keeps the least-squares direction]
\label{lem:classification-gaussian-projection}
Let $x_1, \dots, x_d$ be centered jointly Gaussian variables and
$G = w^T x$ be a linear combination of them. Let $Y\in\{0,1\}$ satisfy
$\Pr(Y=1\mid x) = \sigma(G)$.

Let $u_1,\ldots,u_q$ be linear combinations of $x$. Let $f$ be the linear
least-squares minimizer for label $G$ over $u_1, \ldots, u_q$, and let $z$ be
the BCE minimizer for label $Y$. Then
\begin{equation*}
z=cf
\end{equation*}
for some $c\in[0,1]$, with $c>0$ whenever $f\ne0$.
\end{lemma}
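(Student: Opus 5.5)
We follow the outline in the proof sketch of \Cref{thm:classification-sequential-transfer}. Let $U=\operatorname{span}\{u_1,\ldots,u_q\}$. By \Cref{lem:prelim-ls-projection}, $f$ is the orthogonal projection of $G$ onto $U$, so $G-f\perp U$. Take the BCE minimizer $z\in U$, which is finite by the standing assumption, and decompose it as $z=cf+r$ with $r\in U$ and $\E[rf]=0$. If $f=0$, set $c=0$ and $r=z$. Since $r\in U$, we get $\E[r(G-f)]=0$, and together with $\E[rf]=0$ this gives $\E[rG]=0$. The variables $(r,G,f)$ are jointly centered Gaussian, and $r$ is uncorrelated with both $G$ and $f$, so $r$ is independent of $(G,f)$.

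The next step is to show that $Y$ is also independent of $r$ given $(G,f)$. This holds because $\Pr(Y=1\mid x)=\sigma(G)$, so $\E[Y\mid r,G,f]=\sigma(G)$. It follows that $\E[Yr]=\E[\sigma(G)]\E[r]=0$, and therefore
\begin{equation*}
  \cL(cf+r)=\E\bigl[\E[\phi(cf+r)\mid G,f]\bigr]-c\,\E[Yf].
\end{equation*}
Since $r$ is centered and independent of $f$, conditional Jensen gives $\E[\phi(cf+r)\mid f]\ge\phi(cf)$. Hence $\cL(cf+r)\ge\cL(cf)$. Strict convexity of $\phi$ (\Cref{lem:prelim-bce-convexity}) makes this inequality strict unless $r=0$ almost surely. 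As $cf$ is feasible, minimality of $z$ forces $r=0$, so $z=cf$. When $f=0$ this gives $z=0$, and $c=0$ works.

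It remains to locate $c$ when $f\ne0$. Consider $g(c)=\cL(cf)=\E[\phi(cf)]-c\,\E[\sigma(G)f]$, which is convex in $c$. Its derivative is $g'(c)=\E[(\sigma(cf)-\sigma(G))f]$.

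At $c=0$, we have $g'(0)=-\E[\sigma(G)f]$. Write $G=f+e$ with $e=G-f$, which is independent of $f$ because the pair is Gaussian and uncorrelated. By Stein's lemma, $\E[\sigma(G)f]=\E[f^2]\,\E[\sigma'(G)]>0$. So $g'(0)<0$.

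At $c=1$, we have $g'(1)=\E[(\sigma(f)-\sigma(f+e))f]$ with $e$ a centered Gaussian independent of $f$. The task is to show this is $\ge 0$, i.e. $\E[\sigma(f+e)f]\le\E[\sigma(f)f]$. By Stein's lemma again, this is $\E[f^2]\bigl(\E[\sigma'(f)]-\E[\sigma'(f+e)]\bigr)$. We would prove $\E[\sigma'(f+e)]\le\E[\sigma'(f)]$ by noting that $\sigma'$ is symmetric and unimodal, that the law of $f+e$ is a Gaussian with larger variance than the law of $f$, and that adding independent centered Gaussian noise to a centered Gaussian lowers the expectation of a symmetric decreasing-in-$|u|$ function. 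Equivalently, $v\mapsto\E[\sigma'(\sqrt v N)]$ is nonincreasing in $v$, which follows from Anderson's inequality or from monotonicity of $\E[h(\sqrt v N)]$ for symmetric $h$ decreasing in $|u|$.

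Convexity, together with $g'(0)<0\le g'(1)$, places the minimizer in $(0,1]$. Uniqueness of the minimizer along the line comes from strict convexity, which holds since $f\ne0$. This $c=1$ derivative sign is the main obstacle; the rest is routine.
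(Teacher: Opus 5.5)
Your proof is correct. Its first half is the paper's argument: you split $z=cf+r$ with $r\perp f$, get $r\perp G$ from $G-f\perp U$, conclude independence by joint Gaussianity, and remove $r$ by conditional Jensen and strict convexity of $\phi$. You also fold the case $f=0$ into this by taking $c=0$, $r=z$, where the paper handles it separately.

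The genuine difference is in locating $c$. The paper argues pointwise in $f$. With $\psi(s)=\E[\sigma(s+G-f)]$, it gets $F'(0)<0$ because $\psi$ is strictly increasing with $\psi(0)=1/2$. It gets $F'(1)\ge 0$ from $\psi(s)\le\sigma(s)$ for $s>0$, which it derives from the identity $\tfrac12\bigl(\sigma(s+n)+\sigma(s-n)\bigr)=\tfrac12+\sinh s/\bigl(2(\cosh s+\cosh n)\bigr)$ and the symmetry of $G-f$. That argument uses nothing about the law of $f$ beyond its independence from the symmetric residual $G-f$.

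You instead apply Stein's lemma, conditionally on $e=G-f$, to obtain $g'(c)=\E[f^2]\bigl(c\,\E[\sigma'(cf)]-\E[\sigma'(G)]\bigr)$. Then $g'(0)<0$ is immediate, and $g'(1)\ge0$ becomes a comparison of $\E[\sigma'(\cdot)]$ under two centered Gaussians with variances $\E[f^2]\le\E[G^2]$. The step you flag as the main obstacle needs no Anderson inequality. Since $\sigma'$ is even, $\E[\sigma'(\sqrt v N)]=\E[\sigma'(\sqrt v\,|N|)]$, and since $\sigma''=\sigma'(1-2\sigma)<0$ on $(0,\infty)$, the integrand is pointwise nonincreasing in $v$.

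Your route relies on Gaussianity of $f$ as well. In exchange, it gives an explicit scalar equation $c\,\E[\sigma'(cf)]=\E[\sigma'(G)]$ for the scaling, and it transfers verbatim to any link whose derivative is even and unimodal. Your explicit appeal to strict convexity of $g$ when $f\ne0$ also cleanly excludes minimizers beyond $1$ in the edge case $g'(1)=0$; the paper's appeal to convexity alone leaves this implicit.
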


\begin{proof}
Let $V = \operatorname{span}(u_1, \ldots, u_q)$. By
\Cref{lem:prelim-ls-projection}, applied with label $G$, $f$ is the projection
of $G$ onto $V$, and the residual $G-f$ is orthogonal to every element of $V$.

First suppose $f=0$. Since the residual $G-f$ is orthogonal to $V$, $G$ is
independent of $V$. For any $h\in V$, the variable $h$ is centered, and the
symmetry of the centered Gaussian $G$ gives
$\E[\sigma(G)]=1/2$. Hence
\begin{equation*}
  \E[Yh]
  =
  \E[\sigma(G)h]
  =
  \E[\sigma(G)]\E[h]
  =
  0.
\end{equation*}
Thus
\begin{equation*}
  \cL(h) = \E[\phi(h) - Yh] = \E[\phi(h)].
\end{equation*}
By \Cref{lem:prelim-bce-convexity}, $\phi$ is strictly convex, so Jensen's
inequality gives
\begin{equation*}
  \E[\phi(h)] \ge \phi(\E[h]) = \phi(0) = \log 2 = \cL(0).
\end{equation*}
Equality holds only when $h=0$ almost surely, so the BCE minimizer is $z=0$.

Now suppose $f\ne0$. We first show that the BCE minimizer cannot use any
direction orthogonal to $f$. Write
\begin{equation*}
  z=cf+r,
\end{equation*}
where $r\in V$ is orthogonal to $f$. Since $G-f$ is orthogonal to
every element of $V$, the variable $r$ is orthogonal to $G-f$. Since $r$ is also
orthogonal to $f$, it is orthogonal to $G=(G-f)+f$. Joint Gaussianity then
implies that $r$ is independent of $(G,f)$. Also
$\Pr(Y=1\mid G,f,r)=\sigma(G)$, so $r$ is independent of $Y$ after conditioning
on $(G,f)$. Thus $r$ is independent of $(G,f,Y)$ and remains centered after we
condition on these variables. Therefore, Jensen gives
\begin{align*}
  \E[\phi(cf+r)-Y(cf+r)\mid G,f,Y]
  &\ge
  \phi(cf+\E[r\mid G,f,Y])-Ycf \\
  &=
  \phi(cf)-Ycf.
\end{align*}
Taking expectations shows that $\cL(z)\ge\cL(cf)$. Since $z$ is a minimizer and
$cf$ is feasible, equality must hold. Moreover,
because $\phi$ is strictly convex, equality in Jensen can hold only if $r$ is
almost surely constant after conditioning on $(G,f,Y)$. But $r$ is independent
of $(G,f,Y)$ and centered, so this means $r=0$ almost surely. Thus any finite
BCE minimizer has the form $z=cf$.

It remains to locate $c$. Recall that the residual $G-f$ is independent of
$f$. Define $\psi(s)=\E[\sigma(s+G-f)]$, with the expectation over the residual
$G-f$.
Along the scalar line, define
\begin{equation*}
  F(a)=\cL(af)=\E[\phi(af)-\sigma(G)af],
\end{equation*}
and
\begin{equation*}
  F'(a)=\E[f(\sigma(af)-\psi(f))].
\end{equation*}
The function $\psi$ is strictly increasing and $\psi(0)=1/2$. Therefore, when
$f>0$, we have $\psi(f)>1/2$, and when $f<0$, we have $\psi(f)<1/2$. In both
cases,
\begin{equation*}
  f(1/2-\psi(f))<0
\end{equation*}
whenever $f\ne0$. Since $f$ is not almost surely zero, taking expectations gives
$F'(0)=\E[f(1/2-\psi(f))]<0$. Thus the loss decreases as we move to the right
from $0$. By \Cref{lem:prelim-bce-convexity}, $\cL$ is convex, so $F$ is convex;
thus no minimizer of $F$ can lie at or to the left of $0$. Since $z=cf$, this
gives $c>0$.

We now show $c\le1$. For $s>0$ and every real $n$,
\begin{equation*}
  \frac{\sigma(s+n)+\sigma(s-n)}{2}
  =
  \frac12+\frac{\sinh s}{2(\cosh s+\cosh n)}
  \le
  \frac12+\frac{\sinh s}{2(\cosh s+1)}
  =
  \sigma(s).
\end{equation*}
The residual $G-f$ is a centered Gaussian, so $G-f$ and $f-G$ have the same
distribution. For $s>0$, we can plug the random value $G-f$ into the inequality
above and take expectation:
\begin{equation*}
  \E\left[\frac{\sigma(s+G-f)+\sigma(s+f-G)}{2}\right]
  \le
  \sigma(s).
\end{equation*}
Since $G-f$ and $f-G$ have the same distribution, the left side is
$\E[\sigma(s+G-f)]=\psi(s)$. Thus $\psi(s)\le\sigma(s)$ for $s>0$. By symmetry,
$\psi(s)\ge\sigma(s)$ for $s<0$. Hence
$f(\sigma(f)-\psi(f))\ge0$ for every value of $f$, and so $F'(1)\ge0$. Since
$F$ is convex, any minimizer of $F$ is at most $1$. Therefore $c\in(0,1]$,
concluding the proof.
\end{proof}

\classificationTransferRestatement*

\begin{proof}
Suppose agents learn in order $A_1,\ldots,A_N$. We first prove the
$z_t = c_t f_t$ part of the claim. Induct on $t$. For $t=1$, agent $A_1$
receives the same raw features in both the least-squares and classification
settings. Thus \Cref{lem:classification-gaussian-projection} applies and proves
the $z_1 = c_1 f_1$ part of the claim with $c_1\in[0,1]$.
The same lemma gives $c_1>0$ whenever $f_1\ne0$.

Now fix $t>1$ and assume the claim has been proved for all earlier agents. Let
$V_t^{\mathrm{ls}}$ be the span of the raw features of $A_t$ and the parent
least-squares predictions $f_i$ for $i\in\Pa(t)$. Let $V_t^{\mathrm{bce}}$ be
the span of the same raw features and the parent logits $z_i$ for
$i\in\Pa(t)$. By the induction hypothesis, $z_i=c_if_i$ for each parent. If
$f_i=0$, then $z_i=0$. If $f_i\ne0$, then $c_i>0$, so $z_i$ spans the same line
as $f_i$. Thus $V_t^{\mathrm{ls}}=V_t^{\mathrm{bce}}$; call this common space
$V_t$.

The least-squares predictor $f_t$ minimizes squared loss over $V_t$, and the
classification logit $z_t$ is the finite BCE minimizer over $V_t$. The
raw features are coordinates of $x$, and the parent predictions are linear
combinations of earlier inputs, so all elements of $V_t$ are linear
combinations of the Gaussian features $x$. Hence
\Cref{lem:classification-gaussian-projection} applies and gives
$z_t=c_tf_t$ for some $c_t\in[0,1]$, with $c_t>0$ whenever $f_t\ne0$. This
completes the induction.

It remains to compare the losses. Fix $t$. Again let $V_t$ denote the span of
the inputs of $A_t$. By \Cref{lem:prelim-ls-projection}, $f_t$ is the projection
of $G$ onto $V_t$, so $G-f_t$ is orthogonal to $f_t$. Hence
\begin{align*}
  \E[G^2]
  &=
  \E[(f_t+(G-f_t))^2] \\
  &=
  \E[f_t^2]+2\E[f_t(G-f_t)]+\E[(G-f_t)^2] \\
  &=
  \E[f_t^2]+\E[(G-f_t)^2].
\end{align*}
Thus $\E[f_t^2]\le\E[G^2]$. Since $z_t=c_tf_t$ with $c_t\in[0,1]$, and since
$G$ is centered,
\begin{equation*}
  \E[z_t^2]
  =
  c_t^2\E[f_t^2]
  \le
  \E[G^2]
  =
  \operatorname{Var}(G)
  \le
  B.
\end{equation*}
The lower bound in \Cref{lem:classification-bce-logit-comparison} therefore
applies to the pair $(G,z_t)$. Also, $z_t\in V_t$ and $f_t$ is the projection of
$G$ onto $V_t$, so
\begin{equation*}
  \E[(G-z_t)^2]\ge \E[(G-f_t)^2]=E_t.
\end{equation*}
Thus
\begin{equation*}
  \cL(z_t)-\cL(G)
  \ge
  \kappa_B\E[(G-z_t)^2]
  \ge
  \kappa_B E_t.
\end{equation*}

For the upper bound, $f_t$ is a feasible logit for the BCE problem at agent
$A_t$, because the two input spaces are the same. Since $z_t$ minimizes BCE over
this space,
\begin{equation*}
  \cL(z_t)\le \cL(f_t).
\end{equation*}
Applying the upper bound in
\Cref{lem:classification-bce-logit-comparison} to $f_t$ gives
\begin{equation*}
  \cL(z_t)-\cL(G)
  \le
  \cL(f_t)-\cL(G)
  \le
  \frac18\E[(G-f_t)^2]
  =
  \frac18 E_t. \qedhere
\end{equation*}
\end{proof}

\subsection{Lower Bounds}

We next apply the Gaussian transfer theorem to the regression cyclic and
depth-dependent lower bounds. In both cases the true logit $G$ is an exact linear
combination of the features, and the label is drawn with conditional mean
$\sigma(G)$, so \Cref{thm:classification-sequential-transfer} applies.

\classificationCyclicRestatement*

\begin{proof}
The least-squares error after pass $p$ is at least $1/(48\sqrt p)$ by
\Cref{thm:kearns-cyclic-lower-bound}. Here
$\operatorname{Var}(G)=1$, so
\Cref{thm:classification-sequential-transfer} gives the result.
\end{proof}

\classificationLongDepthRestatement*

\begin{proof}
Use the regression instance from \Cref{thm:long-depth}, with the same Gaussian
logit $G=X_\star+\rho Z_0$ and Bernoulli mean $\sigma(G)$. The coverage, exact
predictor, coefficient bound, and second-moment bound are proved there. That
theorem also gives
\begin{equation*}
  E_D
  \ge
  \frac{1}{1280\pi^2}\frac{M^2}{D}
\end{equation*}
for the matching least-squares path. In this construction
$\operatorname{Var}(G)=1+\rho^2\le2$. Hence
\Cref{thm:classification-sequential-transfer} gives the claim with
$c_{\mathrm{ld}}=\kappa_2/(1280\pi^2)$.
\end{proof}

\classificationConstantRestatement*

\begin{proof}
Apply \Cref{thm:classification-long-depth} with target depth $D=M^2$. By
\Cref{lem:prelim-bce-kl}, the loss is non-increasing along the path. Thus, for every $1\le t\le M^2$,
$\cL(z_t)-\cL(G)\ge\cL(z_{M^2})-\cL(G)\ge c_{\mathrm{ld}}$.
Take $c_{\mathrm{quad}}=c_{\mathrm{ld}}$.
\end{proof}

\subsection{The Fixed-Distribution Obstruction}

The depth-dependent lower bound in \Cref{sec:depth-dependent-obstruction} used
a distribution that changes with $D$, since the target radius $\rho$ shrank with the
depth. As in regression, we show that this is unavoidable.

The argument parallels the regression proof of \Cref{lem:fixed-block-contraction}.
Let $H=\operatorname{span}\{x_1,\ldots,x_d\}$, which is finite-dimensional. Every
logit is a linear combination of raw features and parent logits, so every logit $z_t$, along with the global BCE minimizer $z_\star$, lies in $H$. The zero logit is always feasible, so
$\cL(z_t)\le\cL(0)=\log2$ for every agent, so every predictor lies in the
sublevel set
\begin{equation}
K=\{z\in H:\cL(z)\le\cL(0)\}.
\end{equation}
At agent $A_t$ write
$V_t=\operatorname{span}\{x_\ell:\ell\in S_t\}$ for the span of its raw features.
The agent minimizes BCE over a space that contains $V_t$ and other logits from $\Pa(t)$.

We write $\cE_t=\cL(z_t)-\cL(z_\star)$ for the excess loss and consider a path that is $M$-covered. This means that every $M$ consecutive raw-feature spans $V_t$ sum to $H$.

The regression proof rested on the identity $\cE_t=\|f^\star-f_t\|_2^2$, which
turns excess error into a squared distance. Cross-entropy is not a squared norm,
but on the bounded set $K$ it is strongly convex, and this lets us
compare the excess loss with the squared logit distance to $z_\star$. We record
this comparison first and the block argument then follows the regression one.

\begin{lemma}[BCE excess loss and logit distance]
\label{lem:classification-strong-convexity}
Assume $x_1,\ldots,x_d$ have bounded second moments. Suppose $H\ne\{0\}$, the BCE minimum over $H$ is attained at $z_\star$, and
$\cL(0)>\cL(z_\star)$. Then $K$ is compact, and there is a constant
$\mu\in(0,\tfrac14]$, depending only on $\cD$, such that every $z\in K$ satisfies
\begin{equation*}
  \E[(\sigma(z)-\sigma(z_\star))(z-z_\star)]\ge\mu\|z-z_\star\|_2^2
  \qquad\text{and}\qquad
  \cL(z)-\cL(z_\star)\le\tfrac18\|z-z_\star\|_2^2 .
\end{equation*}
\end{lemma}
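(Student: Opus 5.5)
The plan is to prove the upper bound first, since it uses only a second-order Taylor expansion. After that I will show that $K$ is compact. The strong-monotonicity bound then follows from a compactness argument on a finite-dimensional sphere.

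\textbf{Upper bound.} For $z\in K$, write
\begin{equation*}
\cL(z)-\cL(z_\star)=\E[\phi(z)-\phi(z_\star)-\sigma(z_\star)(z-z_\star)]+\E[(\sigma(z_\star)-Y)(z-z_\star)].
\end{equation*}
Since $z_\star$ is a finite BCE minimizer over $H$ and $z-z_\star\in H$ is a linear combination of raw features, \Cref{lem:prelim-bce-orthogonality} makes the second term vanish. In the first term, the Taylor remainder \eqref{eq:phi-taylor-remainder} together with $\phi''\le 1/4$ gives the bound $\tfrac18\|z-z_\star\|_2^2$. This step needs no compactness and in fact holds for every $z\in H$.

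\textbf{Compactness of $K$.} Since $H$ is finite-dimensional, it suffices to show $K$ is closed and bounded.

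For closedness, note that $\cL$ is continuous on $H$. This follows by dominated convergence, using $0\le\phi(u)\le\log2+|u|$ and the fact that convergence in the finite-dimensional space $H$ implies $L^2$ convergence.

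For boundedness, suppose $K$ were unbounded. Since $K$ is convex (by \Cref{lem:prelim-bce-convexity}) and closed, and contains $z_\star$, it would contain a ray $z_\star+tv$, $t\ge0$, with $\|v\|_2=1$.
\begin{itemize}
\item Along this ray the integrand $\phi(z_\star+tv)-Y(z_\star+tv)$ grows linearly in $t$ on the event $\{v>0,Y=0\}\cup\{v<0,Y=1\}$. Since the loss stays at most $\log2$, this event must have probability zero.
\item Off that event, the derivative in $t$ of the integrand is $v(\sigma(z_\star+tv)-Y)$, which is $\le0$ pointwise. It is strictly negative wherever $v\ne0$.
\item Because $v\ne0$ in $H$, we have $\Pr(v\ne0)>0$. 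So $t\mapsto\cL(z_\star+tv)$ is strictly decreasing, which contradicts that $z_\star$ attains the minimum.
\end{itemize}
Hence $K$ is bounded, say $\|z-z_\star\|_2\le R$ on $K$. The hypothesis $\cL(0)>\cL(z_\star)$ just guarantees that $K$ is a nontrivial set containing $0$ and $z_\star$; it is otherwise only bookkeeping.

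\textbf{Strong monotonicity.} By the mean value theorem for $\sigma$, with $w=z-z_\star$,
\begin{equation*}
\E[(\sigma(z)-\sigma(z_\star))(z-z_\star)]=\E\Bigl[w^2\int_0^1\sigma'(z_\star+\tau w)\,d\tau\Bigr].
\end{equation*}
Write $w=su$ with $u$ in the unit sphere $\Sigma$ of $H$ and $0\le s\le R$. Then the ratio of the left side to $\|w\|_2^2$ equals
\begin{equation*}
h(s,u)=\E\Bigl[u^2\int_0^1\sigma'(z_\star+\tau su)\,d\tau\Bigr].
\end{equation*}
I will show $h$ is continuous on the compact set $[0,R]\times\Sigma$.
\begin{itemize}
\item The integrand is bounded by $u^2/4$.
\item On the finite-dimensional sphere, $u_n\to u$ implies $u_n^2\to u^2$ in $L^1$.
\item So dominated (Vitali-type) convergence gives continuity, with almost-sure convergence along subsequences handling the $\sigma'$ factor.
\end{itemize}
Next, $h$ is strictly positive: $\sigma'>0$ everywhere and $\Pr(u\ne0)>0$. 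Therefore $h$ attains a minimum $\mu>0$, which depends only on $\cD$ through $H$, $z_\star$ and $R$. Replacing $\mu$ by $\min\{\mu,1/4\}$ gives $\mu\in(0,\tfrac14]$.

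\textbf{Main obstacle.} I expect the hardest step to be the continuity and uniform-integrability argument behind $h>0$ having a positive minimum. This is the step that converts pointwise positivity of $\sigma'$ into a uniform constant, even though the logits are unbounded random variables. The boundedness of $K$ is the other delicate point, because it is exactly where attainment of the minimizer is used.
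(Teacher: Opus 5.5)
Your proof is correct. Two of its three parts match the paper.

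\begin{itemize}
\item The upper bound is the same orthogonality-plus-Taylor argument.
\item The constant $\mu$ comes from the same compactness-and-continuity reasoning. You minimize the segment-averaged curvature $h(s,u)$ over $[0,R]\times\Sigma$. The paper instead minimizes $\E[\sigma'(w)g^2]$ over $w\in K$, $\|g\|_2=1$, and then needs convexity of $K$ so that the segment from $z_\star$ to $z$ stays in $K$. Your parametrization avoids that step.
\end{itemize}

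The genuine difference is how you show $K$ is bounded.

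The paper builds the positively homogeneous minorant $r(z)=\E[(1-\eta)z_++\eta z_-]\le\cL(z)$, where $\eta=\E[Y\mid x]$. It shows $r(h)>0$ for $h\ne0$ by contradicting the first-order condition $\E[(\sigma(z_\star)-\eta)h]=0$. This yields the explicit coercivity bound $\cL(z)\ge\rho_0\|z\|_2$, hence $\|z\|_2\le(\log2)/\rho_0$ on $K$.

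You instead use the recession-cone property of closed convex sets in finite dimensions to place a whole ray $z_\star+tv$ inside $K$. From bounded loss along the ray you deduce that $Y$ is almost surely determined by the sign of $v$. You then contradict minimality of $z_\star$ by strict descent along the ray.

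Both arguments detect the same obstruction: a direction that separates the labels, which is incompatible with an attained minimizer. The paper's version gives a quantitative radius and a linear growth bound on $\cL$. Yours is purely qualitative, but it needs no auxiliary functional and uses attainment through a direct comparison of loss values rather than through the orthogonality condition.
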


\begin{proof}
By \Cref{lem:prelim-bce-convexity}, $\cL$ is convex, and therefore $K$ is convex.

We first show $K$ is compact. Since $\cL$ is continuous, $K$ is closed; as it
lives in the finite-dimensional $H$, it is enough to bound it.

The main step is a linear lower bound on $\cL$. Let $\eta=\E[Y\mid x_1,\ldots,x_d]$,
and for $z\in H$ split $z=z_+-z_-$ with $z_+=\max\{z,0\}$ and $z_-=\max\{-z,0\}$.
Each $z\in H$ is a function of $x$, so $\E[Yz]=\E[\eta z]$, and $\phi(u)=\log(1+e^u)\ge u_+$ gives
\begin{equation*}
  \cL(z)=\E[\phi(z)-Yz]\ge\E[z_+-\eta z]=\E[(1-\eta)z_++\eta z_-]=:r(z)\ge0 .
\end{equation*}
The functional $r$ is continuous and positively homogeneous by definition: $r(az)=a\,r(z)$ for $a\ge0$.

We claim $r(h)>0$ for every $h\ne0$. Suppose instead $r(h)=0$. Its two terms are
nonnegative, so $(1-\eta)h_+=0$ and $\eta h_-=0$ almost surely, meaning $\eta=1$
where $h>0$ and $\eta=0$ where $h<0$. Since $z_\star$ minimizes $\cL$ over $H$ and
$h\in H$, \Cref{lem:prelim-bce-orthogonality} gives $\E[(\sigma(z_\star)-Y)h]=0$,
and as $h$ is a function of $x$ this equals $\E[(\sigma(z_\star)-\eta)h]$. But
$\sigma(z_\star)\in(0,1)$, so $\sigma(z_\star)-\eta<0$ where $h>0$ and
$\sigma(z_\star)-\eta>0$ where $h<0$; hence $(\sigma(z_\star)-\eta)h<0$ on
$\{h\ne0\}$, a set of positive probability, forcing $\E[(\sigma(z_\star)-\eta)h]<0$.
This contradiction proves the claim.

Thus $r>0$ on the unit sphere of $H$, which is compact, so $r\ge\rho_0$ there for
some $\rho_0>0$. By homogeneity $r(z)\ge\rho_0\|z\|_2$ for all $z\in H$, so
$\cL(z)\ge\rho_0\|z\|_2$. Any $z\in K$ then obeys $\rho_0\|z\|_2\le\cL(z)\le\log2$,
that is $\|z\|_2\le(\log2)/\rho_0$. So $K$ is closed and bounded in the
finite-dimensional $H$, and therefore compact.

Write $\sigma'(w)=\sigma(w)(1-\sigma(w))\in(0,\tfrac14]$ for the derivative
of the sigmoid. For $w\in K$ and $g\in H$, the value $\E[\sigma'(w)g^2]$ is positive
whenever $g\ne0$, and it varies continuously with $(w,g)$ over the
finite-dimensional space $H\times H$. Since $\{(w,g):w\in K,\ \|g\|_2=1\}$ is
compact, $\E[\sigma'(w)g^2]$ attains a positive minimum $\mu$ there, and
$\mu\le\tfrac14$ because $\sigma'\le\tfrac14$. Scaling in $g$ gives
$\E[\sigma'(w)g^2]\ge\mu\|g\|_2^2$ for all $w\in K$ and $g\in H$.

Now fix $z\in K$ and set $g=z-z_\star$; the segment from $z_\star$ to $z$ stays in
$K$, so $z_\star+\theta g\in K$ for $0\le\theta\le1$. The fundamental theorem of
calculus, applied to $\theta\mapsto\sigma(z_\star+\theta g)$, gives
$\sigma(z)-\sigma(z_\star)=\int_0^1\sigma'(z_\star+\theta g)\,g\,d\theta$ pointwise.
Multiplying by $g$ makes the integrand nonnegative, so expectation and integral
exchange, and
\begin{equation*}
  \E[(\sigma(z)-\sigma(z_\star))(z-z_\star)]
  =\int_0^1\E[\sigma'(z_\star+\theta g)g^2]\,d\theta
  \ge\mu\|g\|_2^2 ,
\end{equation*}
since each $z_\star+\theta g\in K$ gives $\E[\sigma'(z_\star+\theta g)g^2]\ge\mu\|g\|_2^2$.
For the second bound, $z_\star$ minimizes $\cL$ over $H\ni g$, so
\Cref{lem:prelim-bce-orthogonality} gives $\E[(\sigma(z_\star)-Y)g]=0$. Writing
$\phi(u)=\log(1+e^u)$, so $\phi'=\sigma$ and $\phi''=\sigma'$, this replaces $Y$ by
$\phi'(z_\star)$ in the loss gap:
\begin{equation*}
  \cL(z)-\cL(z_\star)=\E[\phi(z)-\phi(z_\star)-\phi'(z_\star)g].
\end{equation*}
Applying the second-order Taylor remainder as in \Cref{eq:phi-taylor-remainder}
with $a=z_\star$ and $b=z$ writes the integrand as $\tfrac12\phi''(\xi)g^2$ for
some $\xi$ between $z_\star$ and $z$, and $\phi''=\sigma'\le\tfrac14$ gives
$\cL(z)-\cL(z_\star)\le\tfrac18\|g\|_2^2$.
\end{proof}

With this comparison in hand, the block contraction again rests on
\Cref{lem:block-residual-coverage}, now applied to the probability residual
$\sigma(z_t)-\sigma(z_\star)$ in place of the regression residual $f^\star-f_t$.

\begin{lemma}[Per-block contraction]
\label{lem:classification-block-contraction}
Fix a distribution $\cD$ on $(x_1,\ldots,x_d,Y)$ with $Y\in\{0,1\}$, $d$ finite,
and BCE minimum over $H$ attained at $z_\star$. Assume $x_1,\ldots,x_d$ have bounded second moments. Consider a path
$A_0\to A_1\to\cdots\to A_M$ of agents on $\cD$ in any DAG, with raw-feature
spans $V_1,\ldots,V_M$ satisfying $V_1+\cdots+V_M=H$. There is a constant
$q\in[0,1)$, depending only on $\cD$ and $(S_1,\ldots,S_M)$, such that
$\cE_M\le q\,\cE_0$.
\end{lemma}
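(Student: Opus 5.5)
We follow the regression proof of \Cref{lem:fixed-block-contraction}, but use the probability residual $r_t=\sigma(z_t)-\sigma(z_\star)$ in place of $f^\star-f_t$.

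\emph{Degenerate cases.} If $H=\{0\}$, or if $\cL(0)=\cL(z_\star)$, then every agent attains the optimum. Indeed every $z_t$ lies in $K$, and in this case $K$ consists of minimizers only. So $\cE_0=\cE_M=0$ and any $q$ works. Otherwise \Cref{lem:classification-strong-convexity} applies, and it gives a compact $K$ containing all logits $z_0,\ldots,z_M$ and $z_\star$, together with a constant $\mu>0$.

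\emph{Step 1: residuals orthogonal to the feature spans.} Agent $A_i$ minimizes BCE over a space containing $V_i$. So \Cref{lem:prelim-bce-orthogonality} gives $\E[v(\sigma(z_i)-Y)]=0$ for all $v\in V_i$. The global minimizer gives $\E[v(\sigma(z_\star)-Y)]=0$ for all $v\in H$. Subtracting the two, $r_i$ is orthogonal to $V_i$, so $P_{V_i}(r_i)=0$ in the ambient $L^2$ space.

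\emph{Step 2: small steps.} By \Cref{lem:prelim-bce-kl}, $\cL(z_{i-1})-\cL(z_i)=D(p_i\|p_{i-1})\ge2\|\sigma(z_i)-\sigma(z_{i-1})\|_2^2$. Summing over the block, with $\Delta=\cE_0-\cE_M$,
\[
\sum_{i=1}^M\|r_i-r_{i-1}\|_2^2\le\tfrac12\Delta .
\]
Now apply \Cref{lem:block-residual-coverage}. The residuals $r_t$ need not lie in $H$, but that lemma only involves $P_H(r_0)$, so this is allowed. It yields
\[
\lambda\|P_H(r_0)\|_2^2\le(M^2+1)\Delta .
\]

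\emph{Step 3 (main obstacle): bound $\cE_0$ by $\|P_H(r_0)\|_2^2$.} Set $g=z_0-z_\star\in H$. Since $g\in H$, we have $\E[r_0g]=\E[P_H(r_0)g]$. Then, by Cauchy--Schwarz,
\[
\mu\|g\|_2^2\le\E[(\sigma(z_0)-\sigma(z_\star))g]=\E[P_H(r_0)\,g]\le\|P_H(r_0)\|_2\|g\|_2 .
\]
Hence $\|g\|_2\le\|P_H(r_0)\|_2/\mu$. The upper bound in \Cref{lem:classification-strong-convexity} then gives
\[
\cE_0\le\tfrac18\|g\|_2^2\le\frac{\|P_H(r_0)\|_2^2}{8\mu^2}.
\]
Combining with Step 2,
\[
\cE_0\le\frac{(M^2+1)}{8\mu^2\lambda}\,\Delta .
\]

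\emph{Conclusion.} This gives
\[
\cE_M=\cE_0-\Delta\le q\,\cE_0,\qquad q:=1-\frac{8\mu^2\lambda}{M^2+1}.
\]
We need $q\in[0,1)$. The bound itself forces $8\mu^2\lambda/(M^2+1)\le1$ whenever $\cE_0>0$; to be safe we define $q$ as $\max$ of this expression and $0$. The constant $\lambda$ depends only on the $V_i$, which are determined by $\cD$ and $(S_1,\ldots,S_M)$, and $\mu$ depends only on $\cD$. So $q$ depends only on $\cD$ and the tuple $(S_1,\ldots,S_M)$.

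The delicate point is Step 3. Regression had the identity $\cE_t=\|r_t\|^2$, and here we must replace it by the strong monotonicity of $\sigma$ on $K$. This is needed to link the probability residual, which may leave $H$, back to the logit gap in $H$. That link requires all logits, including $z_0$, to lie in $K$, which holds because $\cL(z_t)\le\cL(0)$.
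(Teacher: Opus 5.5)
Your proof is correct and follows the paper's argument step for step. You use the probability residual $r_t=\sigma(z_t)-\sigma(z_\star)$ with $P_{V_i}(r_i)=0$ from the BCE first-order conditions. The KL step bound gives $\sum_i\|r_i-r_{i-1}\|_2^2\le\Delta/2$, which you feed into \Cref{lem:block-residual-coverage}. Strong monotonicity of $\sigma$ on $K$ together with the $\tfrac18$ upper bound then gives $\|P_H(r_0)\|_2^2\ge 8\mu^2\cE_0$, and you obtain the same $q=1-8\mu^2\lambda/(M^2+1)$.

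The only cosmetic difference is how you secure $q\ge0$: you truncate at $0$, whereas the paper notes $\mu\le\tfrac14$ and $\lambda\le M$, so $8\mu^2\lambda\le M/2<M^2+1$ directly.
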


\begin{proof}
If $\cE_0=0$, the loss is non-increasing along the path, so $\cE_M\le\cE_0=0$ and any $q$ works.
If $H=\{0\}$ or $\cL(0)=\cL(z_\star)$, every logit already minimizes and all
$\cE_t=0$. So assume $\cE_0>0$, $H\ne\{0\}$, and $\cL(0)>\cL(z_\star)$, and let
$\mu$ be the constant of \Cref{lem:classification-strong-convexity}. Write
$p_t=\sigma(z_t)$, $p_\star=\sigma(z_\star)$, and set
\begin{equation*}
  r_t=p_t-p_\star,
  \qquad
  \Delta=\cE_0-\cE_M ,
\end{equation*}
the probability residual of $A_t$ and the loss drop over the block. For a
subspace $V$, let $P_V$ be the orthogonal projection onto $V$.

We check the hypotheses of \Cref{lem:block-residual-coverage} for
$r_0,\ldots,r_M$. The residual $r_i$ is orthogonal to the block features: both
$z_i$ and $z_\star$ minimize BCE over a space containing $V_i$, so
\Cref{lem:prelim-bce-orthogonality} gives $\E[(p_i-Y)v]=\E[(p_\star-Y)v]=0$ for
every $v\in V_i$. Subtracting, $\E[r_iv]=0$, that is $P_{V_i}(r_i)=0$. Each
step is small: for consecutive agents, \Cref{eq:bce-kl-identities} gives
$\|r_i-r_{i-1}\|_2^2=\|p_{i-1}-p_i\|_2^2\le\tfrac12(\cE_{i-1}-\cE_i)$, so
summing telescopes to
\begin{equation*}
  \sum_{i=1}^M\|r_i-r_{i-1}\|_2^2\le\tfrac12\Delta.
\end{equation*}
\Cref{lem:coverage-lower-bound} gives a constant $\lambda>0$ depending only on
$\cD$ and $(S_1,\ldots,S_M)$, so \Cref{lem:block-residual-coverage} gives
\begin{equation}
  \label{eq:classification-r0-vs-drop}
  \lambda\|P_H(r_0)\|_2^2\le2(M^2+1)\cdot\tfrac12\Delta=(M^2+1)\,\Delta.
\end{equation}

Finally, we bound $\|P_H(r_0)\|_2$ from below by the excess loss. The predecessor
logit $z_0$ lies in $K$, and $z_0\ne z_\star$ (else $\cE_0=0$), with
$z_0-z_\star\in H$. Since $z_0-z_\star\in H$, projecting $r_0$ onto the line
through it gives
\begin{equation*}
  \|P_H(r_0)\|_2
  \ge\frac{\E[(p_0-p_\star)(z_0-z_\star)]}{\|z_0-z_\star\|_2}
  \ge\mu\|z_0-z_\star\|_2,
\end{equation*}
using $\E[(p_0-p_\star)(z_0-z_\star)]\ge\mu\|z_0-z_\star\|_2^2$ from
\Cref{lem:classification-strong-convexity}. The same lemma gives
$\cE_0\le\tfrac18\|z_0-z_\star\|_2^2$, so $\|z_0-z_\star\|_2^2\ge8\cE_0$ and
$\|P_H(r_0)\|_2^2\ge8\mu^2\cE_0$. Substituting into
\Cref{eq:classification-r0-vs-drop} gives $8\mu^2\cE_0\le(M^2+1)\Delta/\lambda$,
that is $\Delta\ge8\mu^2\lambda\,\cE_0/(M^2+1)$. With $\Delta=\cE_0-\cE_M$,
\begin{equation*}
  \cE_M\le\Bigl(1-\frac{8\mu^2\lambda}{M^2+1}\Bigr)\cE_0=q\,\cE_0 .
\end{equation*}
Since $\mu\le\tfrac14$ and $\lambda\le M$ (because
$\sum_i\|P_{V_i}(u)\|_2^2\le M\|u\|_2^2$), we have $8\mu^2\lambda\le M/2<M^2+1$, so
$q\in[0,1)$, depending only on $\cD$ and $(S_1,\ldots,S_M)$.
\end{proof}

Only finitely many feature tuples can arise along a path, so taking the worst
one gives a single contraction factor, exactly as in
\Cref{thm:fixed-distribution-geometric}.

\begin{theorem}[Classification fixed-distribution geometric convergence]
\label{thm:classification-fixed-geometric}
Fix $M$ and a distribution $\cD$ on $(x_1,\ldots,x_d,Y)$ with $d$ finite,
$Y\in\{0,1\}$, and BCE minimum over $H$ attained at $z_\star$. Assume
$x_1,\ldots,x_d$ have bounded second moments. There is a
constant $q\in[0,1)$, depending only on $\cD$ and $M$, such that for any DAG of
agents on $\cD$, if $A_1\to\cdots\to A_D$ is a path whose every $M$ consecutive
raw-feature spans sum to $H$, then
\begin{equation*}
  \cE_{s+M}\le q\,\cE_s
  \qquad\text{whenever }1\le s\le D-M,
\end{equation*}
and consequently $\cE_D\le\cE_1\,q^{\lfloor(D-1)/M\rfloor}$.
\end{theorem}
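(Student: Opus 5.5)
The proof will follow the regression proof of \Cref{thm:fixed-distribution-geometric}, with \Cref{lem:classification-block-contraction} playing the role of \Cref{lem:fixed-block-contraction}.

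First, fix $s$ with $1\le s\le D-M$. Consider the sub-path $A_s\to A_{s+1}\to\cdots\to A_{s+M}$. The $M$ agents $A_{s+1},\ldots,A_{s+M}$ form a block of consecutive agents, so by hypothesis their raw-feature spans $V_{s+1}+\cdots+V_{s+M}$ sum to $H$. Their on-path predecessor $A_s$ has an edge into $A_{s+1}$, so it serves as the agent $A_0$ of \Cref{lem:classification-block-contraction}. Applying that lemma with $z_s$ as the starting logit gives a factor $q(S_{s+1},\ldots,S_{s+M})\in[0,1)$, depending only on $\cD$ and the tuple, with $\cE_{s+M}\le q(S_{s+1},\ldots,S_{s+M})\,\cE_s$.

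Second, I make the factor uniform. Each $S_i\subseteq[d]$, so there are at most $2^{dM}$ tuples $(S_1,\ldots,S_M)$. Restrict to the tuples whose spans sum to $H$, and define $q$ as the maximum of the lemma's factor over them. This is a maximum of finitely many numbers in $[0,1)$, so $q\in[0,1)$, and it depends only on $\cD$ and $M$. In particular it does not depend on the rest of the DAG or on the particular path. Hence $\cE_{s+M}\le q\,\cE_s$ for every admissible $s$.

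Third, I iterate. Start at $s=1$ and apply the contraction at $s=1,1+M,\ldots,1+(m-1)M$, where $m=\lfloor (D-1)/M\rfloor$. Each application is legitimate because $1+jM\le D-M$ for $j\le m-1$. This gives $\cE_{1+mM}\le q^m\cE_1$. By \Cref{lem:prelim-bce-kl} the loss, and hence $\cE_t$, is non-increasing along the path, so the leftover steps give $\cE_D\le\cE_{1+mM}\le\cE_1 q^{\lfloor (D-1)/M\rfloor}$.

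The only point needing care is that the constant from \Cref{lem:classification-block-contraction} really depends only on $\cD$ and the tuple, so that the maximum is taken over a finite set. This dependence is already built into the lemma: $\mu$ depends only on $\cD$ and $\lambda$ only on the spans. The degenerate cases $\cE_s=0$, $H=\{0\}$, and $\cL(0)=\cL(z_\star)$ are handled inside the lemma, where any $q$ works. So I expect no real obstacle beyond this bookkeeping.
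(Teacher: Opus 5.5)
Your proposal is correct and matches the paper's proof: apply \Cref{lem:classification-block-contraction} to each block $A_{s+1},\ldots,A_{s+M}$ with on-path predecessor $A_s$, and take the maximum of the resulting factors over the at most $2^{dM}$ feature tuples to get a uniform $q<1$. Then iterate from $s=1$ over $\lfloor (D-1)/M\rfloor$ blocks and use monotonicity of the loss for the leftover steps. Your check that $1+jM\le D-M$ for $j\le m-1$ correctly justifies each application.
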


\begin{proof}
Fix $s$ with $1\le s\le D-M$. The sub-path $A_{s+1}\to\cdots\to A_{s+M}$ is a
block of $M$ consecutive agents whose raw-feature spans sum to $H$, and its first
agent has the on-path predecessor $A_s$. Applying
\Cref{lem:classification-block-contraction} to this block gives
$\cE_{s+M}\le q(S_{s+1},\ldots,S_{s+M})\,\cE_s$ with a factor in $[0,1)$
depending only on $\cD$ and the tuple $(S_{s+1},\ldots,S_{s+M})$. That tuple takes
at most $2^{dM}$ values, so $q=\max q(S_{s+1},\ldots,S_{s+M})\in[0,1)$ over the
finitely many feasible tuples depends only on $\cD$ and $M$, and
$\cE_{s+M}\le q\,\cE_s$ for every such $s$.

The loss is non-increasing along the path (\Cref{lem:prelim-bce-kl}). Iterating
the per-block contraction from $s=1$ over $\lfloor(D-1)/M\rfloor$ full blocks and
using monotonicity on the remaining steps gives
$\cE_D\le\cE_1\,q^{\lfloor(D-1)/M\rfloor}$.
\end{proof}

\classificationNoFixedRestatement*

\begin{proof}
By \Cref{thm:classification-fixed-geometric},
$\cE_D\le\cE_1\,q^{\lfloor(D-1)/M\rfloor}$ for some $q\in[0,1)$ depending only on
$\cD$ and $M$. The zero logit is feasible for $A_1$, so
$\cE_1\le\cL(0)-\cL(z_\star)\le\log2$. Fix $c>0$. Since $q<1$, we have
$(\log2)\,Dq^{\lfloor(D-1)/M\rfloor}\to0$ as $D\to\infty$, so there is a $D_c$,
depending only on $\cD$, $M$, and $c$, with
$(\log2)\,Dq^{\lfloor(D-1)/M\rfloor}<cM^2$ for all $D\ge D_c$. Then
$\cL(z_D)-\cL(z_\star)=\cE_D\le(\log2)\,q^{\lfloor(D-1)/M\rfloor}<cM^2/D$.
\end{proof}

\printbibliography

\appendix

\section{Omitted Proofs}
\label{sec:omitted-proofs}

We will restate and prove lemmas whose proof was omitted from the main text.

\lemKearnsEndPassShape*

\begin{proof}
We show by induction that after pass $p$ the residual involves only
$z_{k-p},\ldots,z_k$, that is positions $0,\ldots,p$. In pass $1$ the prediction
stays zero until the path reaches $X_k$, since $X_1,\ldots,X_{k-1}$ depend only
on $z_1,\ldots,z_{k-1}$ and so are independent of $Y=z_k$; fitting $z_k$ from
$X_k=z_k-z_{k-1}$ then leaves residual $(z_k+z_{k-1})/2$, which involves only
$z_{k-1},z_k$. For the step, assume the residual after pass $p$ involves only
$z_{k-p},\ldots,z_k$. In pass $p+1$, every feature before $X_{k-p}$ depends only
on $z_1,\ldots,z_{k-p-1}$, hence is orthogonal to both $Y$ and the incoming
prediction. The incoming residual is orthogonal to the incoming prediction and
to such a feature, so \Cref{lem:prelim-ls-projection} shows that adjoining the
feature does not change the projection. From $X_{k-p}=z_{k-p}-z_{k-p-1}$ on, the
only new latent variable that can enter is $z_{k-p-1}$, so the residual after
pass $p+1$ involves only $z_{k-p-1},\ldots,z_k$. This gives the stated form
$R_p=\sum_{j=0}^p r_j^{(p)}z_{k-j}$.

Fix $p\le k-1$. In every pass, $X_1=z_1$ is the first feature. When it is seen
in a pass $q\le p$, the incoming prediction is the end of pass $q-1$, which involves only $z_{k-q+1},\ldots,z_k$; since $q\le k-1$, none of
these is $z_1$, and $z_1$ is also orthogonal to $Y=z_k$. So $X_1$ receives
coefficient zero in every pass and never enters the prediction, which therefore
lies in $\operatorname{span}\{X_2,\ldots,X_k\}$. Each $X_i$ with $i\ge2$ has
$z$-coefficients summing to zero, while $Y=z_k$ sums to one, so the residual
coefficients sum to $\sum_{j=0}^p r_j^{(p)}=1$.

Since $R_p=\sum_{j=0}^p r_j^{(p)}z_{k-j}$ and the $z$'s are orthonormal,
$E_p=\E[R_p^2]=\sum_{j=0}^p (r_j^{(p)})^2$.

The final prediction $\widehat y_p=Y-R_p$ has coefficient vector $e_0-r^{(p)}$.
By self-orthogonality of the least-squares predictor
(\Cref{lem:prelim-ls-orthogonality}), the residual is orthogonal to the
prediction, so $0=\langle e_0-r^{(p)},r^{(p)}\rangle=r_0^{(p)}-\sum_j(r_j^{(p)})^2$,
giving $E_p=\sum_j(r_j^{(p)})^2=r_0^{(p)}$. The last agent also sees
$X_k=z_k-z_{k-1}$, with coefficient vector $e_0-e_1$, so, by \Cref{lem:prelim-ls-projection}, the residual is
orthogonal to $e_0-e_1$ as well:
$0=\langle r^{(p)},e_0-e_1\rangle=r_0^{(p)}-r_1^{(p)}$.
\end{proof}

\lemKearnsOneAgentUpdate*

\begin{proof}
Let $f$ be the incoming prediction. Since $f$ is a least-squares predictor, the
incoming residual is orthogonal to $f$ by \Cref{lem:prelim-ls-projection}. If the new feature is orthogonal to both
$Y$ and $f$, then the incoming residual is also orthogonal to the new feature.
By \Cref{lem:prelim-ls-projection}, the least-squares projection onto the span
of $f$ and the new feature is still $f$, and $b^+=b$.

Now suppose the feature is $g_j=e_{j-1}-e_j$. The new prediction has vector
$\alpha(e_0-b)+\gamma g_j$ for some scalars $\alpha,\gamma$. Hence
\begin{equation*}
  b^+=e_0-\bigl(\alpha(e_0-b)+\gamma g_j\bigr)
  =(1-\alpha)e_0+\alpha b-\gamma g_j .
\end{equation*}
By \Cref{lem:prelim-ls-projection}, the new residual is orthogonal to $g_j$, so
$0=\langle b^+,g_j\rangle=b^+_{j-1}-b^+_j$.

For an interior feature, $j\ge2$, the $e_0$ term does not touch positions
$j-1$ and $j$. Reading off these two positions gives
\begin{equation*}
  b^+_{j-1}=\alpha b_{j-1}-\gamma,
  \qquad
  b^+_j=\alpha b_j+\gamma .
\end{equation*}
The condition $b^+_{j-1}=b^+_j$ becomes
$\alpha b_{j-1}-\gamma=\alpha b_j+\gamma$, hence
$\gamma=\alpha(b_{j-1}-b_j)/2$. Substituting this value back gives
\begin{equation*}
  b^+_{j-1}=b^+_j=\frac{\alpha}{2}(b_{j-1}+b_j).
\end{equation*}
The value at every other positive position is multiplied by the same factor
$\alpha$. Thus an interior update replaces the two touched positions by their
average, up to the common factor $\alpha$.

For the boundary feature $j=1$, the feature is $g_1=e_0-e_1$. Now
\begin{equation*}
  b^+_0=(1-\alpha)+\alpha b_0-\gamma,
  \qquad
  b^+_1=\alpha b_1+\gamma ,
\end{equation*}
and all positions $m\ge2$ become $\alpha b_m$. The condition $b^+_0=b^+_1$
reads $(1-\alpha)+\alpha b_0-\gamma=\alpha b_1+\gamma$, so
\begin{equation*}
  \gamma=\frac12\bigl[(1-\alpha)+\alpha(b_0-b_1)\bigr].
\end{equation*}
Substituting this value back gives
\begin{equation*}
  b^+_0=b^+_1=\frac12\bigl[(1-\alpha)+\alpha(b_0+b_1)\bigr].
\end{equation*}
The boundary update therefore equalizes positions $0$ and $1$, while the values
at positions $m\ge2$ are all multiplied by the same factor $\alpha$.
\end{proof}

\lemKearnsTailShape*

\begin{proof}
For $p=2$, pass $1$ leaves residual $(z_k+z_{k-1})/2$, so
$r^{(1)}=(\tfrac12,\tfrac12)$. In pass $2$, the first feature that changes the
prediction is $X_{k-1}=z_{k-1}-z_{k-2}$. The incoming prediction lies along
$X_k=z_k-z_{k-1}$, so by \Cref{lem:prelim-ls-projection} the new predictor is the
projection of $z_k$ onto
$\operatorname{span}\{z_k-z_{k-1},z_{k-1}-z_{k-2}\}$. Write the residual at this
point as $q_0z_k+q_1z_{k-1}+q_2z_{k-2}$. Orthogonality to $X_k$ and $X_{k-1}$
gives
\begin{equation*}
  0=\langle q,X_k\rangle=q_0-q_1,
  \qquad
  0=\langle q,X_{k-1}\rangle=q_1-q_2.
\end{equation*}
Thus $q_0=q_1=q_2$. The current prediction lies in the span of
$X_k$ and $X_{k-1}$, and both directions have coefficient sum zero. Since
$Y=z_k$ has coefficient sum one, the residual coefficients also sum to one, so
$R_2=(z_k+z_{k-1}+z_{k-2})/3$. This residual is already orthogonal to the
current prediction and to $X_k=z_k-z_{k-1}$, so
\Cref{lem:prelim-ls-projection} shows that the last agent of the pass leaves it
unchanged. Thus
$r^{(2)}=(\tfrac13,\tfrac13,\tfrac13)$. Since $\mu^{(2)}=(1)$ and $S_2=1$, the
formula gives $r^{(2)}$ and $E_2=r_0^{(2)}=S_2/(1+2S_2)$.

Now fix $3\le p\le k-1$ and assume the formula holds after pass $p-1$. Then
\begin{equation*}
  r^{(p-1)}
  =
  (1+2S_{p-1})^{-1}
  (S_{p-1},S_{p-1},\mu^{(p-1)}).
\end{equation*}
In pass $p$, all features before $g_p=e_{p-1}-e_p$ leave the prediction
unchanged by \Cref{lem:kearns-one-agent-update}: their nonzero positions are
outside $0,\ldots,p-1$ and do not include the target position $0$.

It remains to follow $g_j=e_{j-1}-e_j$ for $j=p,p-1,\ldots,1$. By
\Cref{lem:kearns-one-agent-update}, the updates with $j\ge2$ average the two
touched positive positions, up to a common rescaling, and the final update
$j=1$ only rescales positions $2,3,\ldots,p$. We may ignore these common
factors while computing the tail direction, since they will be absorbed into one
scalar at the end.

Before these updates, the positive positions $1,\ldots,p$ are proportional to
$v=(S_{p-1},\mu_1^{(p-1)},\ldots,\mu_{p-2}^{(p-1)},0)$, where $v_i$ is the
unscaled value at position $i$. Let $u_i$ be the final unscaled value at
position $i+1$, and set $u_p:=0$. When $g_{i+1}$ is reached, position $i$ still
has value $v_i$. The current value at position $i+1$ is $u_{i+1}$: for
$i=p-1$ this is the new zero position, and for $i<p-1$ it was made equal to
position $i+2$ by the previous update and position $i+2$ is never touched again.
Thus averaging positions $i$ and $i+1$ gives
\begin{equation*}
  u_i=\frac12 v_i+\frac12 u_{i+1}
  \qquad (1\le i\le p-1).
\end{equation*}
Unrolling the recursion gives
\begin{equation*}
  u_i=\sum_{m=i}^{p-1}2^{-(m-i+1)}v_m .
\end{equation*}
Thus the mass $S_{p-1}$ at position $1$ adds $S_{p-1}/2$ to $u_1$. For every
$1\le m\le p-2$, the mass $\mu_m^{(p-1)}$ sits at position $m+1$ and adds
$2^{-(m+2-i)}\mu_m^{(p-1)}$ to $u_i$ for $1\le i\le m+1$. This is exactly the
defining rule for $u^{(p)}$. It has positive total mass because
$\mu^{(p-1)}$ is a probability vector, so $\mu^{(p)}$ is defined.

Restoring the common scale gives
$r^{(p)}=(a,a,c'\mu_1^{(p)},\ldots,c'\mu_{p-1}^{(p)})$ for some scalars
$a,c'$.

It remains to determine $a$ and $c'$. Put
$S=S_p=\sum_i(\mu_i^{(p)})^2$. The coefficient-sum identity from
\Cref{lem:kearns-end-pass-shape} gives $2a+c'=1$. The same lemma gives
$E_p=r_0^{(p)}=a$, while orthonormality gives $E_p=2a^2+(c')^2S$. Thus
\begin{equation*}
  a=2a^2+(c')^2S.
\end{equation*}
Substituting $c'=1-2a$ gives
\begin{equation*}
  a=2a^2+(1-2a)^2S,
\end{equation*}
or equivalently
\begin{equation*}
  0=(2+4S)a^2-(1+4S)a+S.
\end{equation*}
The two roots are
$a=\tfrac12$ and $a=S/(1+2S)$. The error is non-increasing along the path by
\Cref{lem:prelim-mse-decomposition}; since $E_2=\tfrac13$ and $p\ge3$, we have
$a=E_p\le\tfrac13$, so $a=\tfrac12$ is impossible. Hence
$a=S/(1+2S)$ and $c'=1-2a=1/(1+2S)$. This is exactly the claimed formula at pass
$p$, and the induction is complete.
\end{proof}

\lemKearnsTailMixture*

\begin{proof}
We prove this by induction on $p$. For $p=2$, the claim is
$\mu^{(2)}=\nu_0$.

Assume the claim holds for some $2\le p\le k-2$. Write
$\mu^{(p)}=\sum_{t=0}^{p-2}\lambda_t\nu_t$, where $\lambda_t\ge0$ and
$\sum_{t=0}^{p-2}\lambda_t=1$. Let $K$ be one unnormalized killed-walk step:
if $\eta$ is a vector on the positive integers, then $(K\eta)_i$ is the mass at
integer $i$ after one step, with all mass that lands at $0$ or below removed.
Equivalently, $(K\eta)_i=\sum_{j\ge1,\ i\le j+1}\eta_j2^{-(j+2-i)}$.
Hence, by \Cref{lem:kearns-tail-shape}, the
unnormalized vector that is normalized to form $\mu^{(p+1)}$ is
$K\mu^{(p)}+(S_p/2)\nu_0$: the second term is the extra mass placed at integer
$1$, since $\nu_0$ is the unit mass at integer $1$.

It remains to see what $K$ does to each $\nu_t$. By definition of $\nu_t$,
starting from $\nu_t$ is the same as conditioning on $\tau>t$. Thus, for
$i\ge1$,
\begin{align*}
  (K\nu_t)_i
  &=
  \Pr(W_{t+1}=i,\tau>t+1\mid \tau>t)\\
  &=
  \Pr(\tau>t+1\mid \tau>t)
  \Pr(W_{t+1}=i\mid \tau>t+1,\tau>t)\\
  &=
  \Pr(\tau>t+1\mid \tau>t)
  \Pr(W_{t+1}=i\mid \tau>t+1)\\
  &=
  \Pr(\tau>t+1\mid \tau>t)\nu_{t+1}(i).
\end{align*}
The equality that removes the condition $\tau>t$ uses that $\tau>t+1$ implies
$\tau>t$. Therefore $K\nu_t=q_t\nu_{t+1}$, where
$q_t=\Pr(\tau>t+1\mid\tau>t)\ge0$.

Substituting the induction hypothesis gives
$$K\mu^{(p)}+(S_p/2)\nu_0=(S_p/2)\nu_0+\sum_{t=0}^{p-2}\lambda_tq_t\nu_{t+1}.$$
This is a nonnegative combination of $\nu_0,\ldots,\nu_{p-1}$. Normalizing this combination gives a convex combination of the same vectors. Thus
$\mu^{(p+1)}$ is a convex combination of $\nu_0,\ldots,\nu_{p-1}$.
\end{proof}

\lemKearnsCatalanPrefixGF*

\begin{proof}
Consider a valid word $w$. Since the word is valid, we can decompose it into a sequence of valid balanced words separated by opening parentheses. Suppose that $o(w) - c(w) = h$. These $h$
extra opening parentheses split the word uniquely as
\begin{equation*}
  w=u_0\mathtt{(} u_1\mathtt{(} \cdots \mathtt{(}u_h.
\end{equation*}
where each $u_i$ is a valid balanced word. Summing over all possible values of $h$ gives
\begin{align*}
  G(a,b) &= \sum_{h\ge0} F(a,b)(aF(a,b))^h = \sum_{h\ge0} C(ab)(aC(ab))^h \\
  &= C(ab) \sum_{h\ge0} (aC(ab))^h = \frac{C(ab)}{1-aC(ab)}. \qedhere
\end{align*}

\end{proof}

\lemKearnsSurvivalWords*

\begin{proof}
For a prefix $u$ of $w$, consider the value $1+o(u)-c(u)$. At the end of the block for
step $s$, this value is $W_s$.

If the mass is killed by time $t$, then $W_s\le0$ for some $s\le t$, so the
prefix ending at that block has more closing than opening parentheses. Thus $w$
is not valid. Conversely, suppose $w$ is not valid, and take the first prefix
$u$ with $o(u)<c(u)$. This first failure occurs during the closing parentheses
of some step $s$. The rest of that block also consists of closing parentheses,
so $W_s\le 1+o(u) - c(u)\le0$. Hence the mass is killed by time $t$.
\end{proof}

\lemKearnsWalkSurvival*

\begin{proof}
By the definition of $H(z)$ and \Cref{lem:kearns-central-binomial-coefficient}, we have $\Pr(\tau>t) = \binom{2t+2}{t+1}/2^{2t+1}$.
Set $n=t+1$. We use the following standard bounds:
\begin{equation*}
  4^n/(2\sqrt n)\le\binom{2n}{n}\le4^n/\sqrt n.
\end{equation*}
Since $2^{2t+1}=2^{2n-1}=4^n/2$, substituting $n=t+1$ in the exact formula gives
the two displayed bounds on $\Pr(\tau>t)$.
\end{proof}

\lemKearnsIncrementMoments*

\begin{proof}
From $\sum_{\ell\ge0}q^\ell=1/(1-q)$, differentiating once and multiplying by
$q$ gives $\sum_{\ell\ge0}\ell q^\ell=q/(1-q)^2$. Differentiating this identity
and multiplying by $q$ gives
$\sum_{\ell\ge0}\ell^2q^\ell=q(1+q)/(1-q)^3$. With $q=1/2$,
$\E[G_t]=\sum_{\ell\ge0}\ell 2^{-(\ell+1)}=1$ and
$\E[G_t^2]=\sum_{\ell\ge0}\ell^2 2^{-(\ell+1)}=3$.
Thus $\E[1-G_t]=0$ and
$\operatorname{Var}(1-G_t)=\operatorname{Var}(G_t)=3-1^2=2$.
\end{proof}

\lemKearnsStoppedTimeMean*

\begin{proof}
For each outcome, $\min\{t,\tau\}$ is the number of integers
$s\in\{0,\ldots,t-1\}$ for which $\tau>s$. Taking expectations gives
\begin{equation*}
\E[\min\{t,\tau\}]=\sum_{s=0}^{t-1}\Pr(\tau>s)
\le\sum_{s=0}^{t-1}2/\sqrt{s+1}\le4\sqrt t.
\end{equation*}
The last inequality is trivial for $t=0$ and follows by comparison with an
integral for $t\ge1$:
\begin{equation*}
\sum_{s=0}^{t-1}1/\sqrt{s+1}\le1+\int_1^t x^{-1/2}\,dx\le2\sqrt t. \qedhere
\end{equation*}
\end{proof}

\section{The Upper Bound Assumptions}
\label{sec:scale-assumptions}

\Cref{thm:regression-improved-upper} makes two assumptions: the global
predictor $f^\star(x)=\sum_{\ell=1}^d w^\star_\ell x_\ell$ satisfies
$\sum_\ell|w^\star_\ell|\le A^\star$, and each feature satisfies
$\E[x_\ell^2]\le M_X^2$. In this section we show that neither assumption can
be dropped. If either one is removed, the excess error of the last agent can be
made arbitrarily large, even when $M$ and $D$ are fixed.

Both results start from the instance of \Cref{thm:long-depth} and multiply the
label by a constant $a$. This multiplies every path predictor by $a$, since
each agent projects the label onto a span that does not change. Thus the excess
error is multiplied by $a^2$, and it grows without bound as $a$ grows.

\begin{proposition}[The coefficient bound cannot be dropped]
\label{prop:coefficient-bound-needed}
For every $M\ge8$, every $D\ge M^2$, and every $R>0$, there is an $M$-covered
path of depth $D$ whose global predictor $f^\star$ is exact and whose features
satisfy $\E[x_\ell^2]\le2$, but
\begin{equation*}
  \MSE(f_D)-\MSE(f^\star)
  \ge
  R.
\end{equation*}
\end{proposition}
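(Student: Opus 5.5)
The plan is to take the instance from \Cref{thm:long-depth} for the given $M$ and $D$, keep its features and path unchanged, and replace the label $Y=X_\star+\rho Z_0$ by $Y_a=aY$ for a scalar $a>0$ chosen at the end. The features, and hence the $M$-coverage of the path and the bound $\E[x_\ell^2]\le2$, are untouched. The global predictor becomes
\begin{equation*}
  f^\star_a = aX_\star+\frac{2a}{M}\sum_{j=0}^{M-1}\cos(j\delta)X_j ,
\end{equation*}
which is still exact. Its coefficient $\ell_1$ norm is about $3a$, so it is unbounded, and that is the point of the construction.

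The key step is to show that every path predictor scales linearly: $f^{(a)}_t=a f_t$ for all $t$. I would prove this by induction on $t$ using \Cref{lem:prelim-ls-projection}.

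For $t=1$, agent $A_1$ projects $aY$ onto the span of its raw features. Projection is linear, so the result is $a f_1$.

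For $t>1$, suppose $f^{(a)}_{t-1}=af_{t-1}$. Since $a\neq0$, this parent prediction spans the same line as $f_{t-1}$. So agent $A_t$ projects onto the same subspace $V_t$ in both instances, and linearity of $P_{V_t}$ gives $f^{(a)}_t=P_{V_t}(aY)=aP_{V_t}(Y)=af_t$. The only care needed is the degenerate case $f_{t-1}=0$, which linearity of projection handles automatically.

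It follows that the residuals scale as $aY-af_t=a(Y-f_t)$. Since the global predictor is exact, $\MSE(f^\star_a)=0$, and therefore
\begin{equation*}
  \MSE(f^{(a)}_D)-\MSE(f^\star_a)=a^2\,\E[(Y-f_D)^2]\ge a^2\frac{M^2}{1280\pi^2 D},
\end{equation*}
where the inequality is \Cref{thm:long-depth}.

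Finally, choose $a=\sqrt{1280\pi^2 D R/M^2}$; any larger $a$ also works. The right-hand side is then at least $R$.

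There is no real obstacle here. The only step needing justification is the scaling induction, specifically that parent predictions span the same line in both instances, so that each agent's feasible set is identical and projection linearity applies.
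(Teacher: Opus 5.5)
Your proposal is correct and follows essentially the same route as the paper: scale the label of the \Cref{thm:long-depth} instance by $a$, show $f'_t=af_t$ by induction because each agent projects onto the same span, and conclude that the excess error is multiplied by $a^2$. The only difference is cosmetic. The paper picks $a=\sqrt{R/E_D}$ to hit exactly $R$, while you pick $a$ from the explicit lower bound $M^2/(1280\pi^2D)$ to get at least $R$.
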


\begin{proof}
Take the instance of \Cref{thm:long-depth} for these $M$ and $D$. Its
features satisfy $\E[x_\ell^2]\le2$, its global predictor $f^\star$ is exact,
and its last agent has excess error
$E_D:=\MSE(f_D)-\MSE(f^\star)\ge M^2/(1280\pi^2D)>0$. Set $a=\sqrt{R/E_D}$,
keep the features, and replace the label by $Y'=aY$. The global predictor is still
exact, since $Y'=af^\star$ is a linear combination of the features, and it equals $af^\star$.

Let $f'_1,\ldots,f'_D$ be the path predictors for the label $Y'$. We induct on $t$ to show that $f'_t = af_t$. Agent $A_t$ sees the features $x_{S_t}$ as before and, by the induction hypothesis, the prediction $af_{t-1}$ in place of $f_{t-1}$. These inputs span the same set as before, so by \Cref{lem:prelim-ls-projection}, $f'_t$ is the projection of $aY$ onto the same span as $f_t$, which is $af_t$.

Since $f'_D=af_D$ and the global predictor is $af^\star$, we have
\begin{equation*}
  \MSE(f'_D)-\MSE(af^\star)
  =
  a^2\bigl(\MSE(f_D)-\MSE(f^\star)\bigr)
  =
  a^2E_D
  =
  R.
  \qedhere
\end{equation*}
\end{proof}

\begin{proposition}[The second-moment bound cannot be dropped]
\label{prop:moment-bound-needed}
For every $M\ge8$, every $D\ge M^2$, and every $R>0$, there is an $M$-covered
path of depth $D$ whose global predictor
$f^\star(x)=\sum_{\ell=1}^d w^\star_\ell x_\ell$ is exact and satisfies
$\sum_\ell|w^\star_\ell|\le3$, but
\begin{equation*}
  \MSE(f_D)-\MSE(f^\star)
  \ge
  R.
\end{equation*}
\end{proposition}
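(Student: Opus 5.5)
Starting from the instance of \Cref{thm:long-depth}, I will rescale the features while keeping the label fixed. Fix $M\ge8$, $D\ge M^2$ and $R>0$, and take that instance with features $X_\star,X_0,\ldots,X_{M-1}$, label $Y$, exact global predictor $f^\star=X_\star+\frac{2}{M}\sum_j\cos(j\delta)X_j$, and last-agent excess error $E_D\ge M^2/(1280\pi^2D)>0$. Rescaling the features alone changes nothing, because the spans are unchanged and so the excess error stays the same. Rescaling the label alone changes the coefficients. I will therefore do both at once. Set $a=\sqrt{R/E_D}$ and use the label $Y'=aY$ with new features $x'_\ell=a\,x_\ell$.

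The first step is to check the coefficient bound. We have $Y'=aX_\star+\frac{2}{M}\sum_j\cos(j\delta)\,aX_j=X'_\star+\frac{2}{M}\sum_j\cos(j\delta)X'_j$. So the global predictor for the new instance is still exact, it has the same coefficients $w^\star$ as before, and $\sum_\ell|w^\star_\ell|\le3$. Coverage is untouched, since each agent sees the same feature indices. Only the second moments blow up, with $\E[(x'_\ell)^2]\le2a^2$, and this is allowed because the proposition drops that assumption.

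The second step is to track the path predictors by the same induction as in \Cref{prop:coefficient-bound-needed}. I claim $f'_t=af_t$. Agent $A_t$'s inputs are $ax_{S_t}$ together with the parent prediction $af_{t-1}$ (by the induction hypothesis). These span the same space $V_t$ as the original inputs. By \Cref{lem:prelim-ls-projection}, $f'_t$ is the projection of $aY$ onto $V_t$, which equals $af_t$. For $t=1$ there is no parent and the argument is the same.

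The conclusion follows: $\MSE(f'_D)-\MSE(af^\star)=a^2E_D=R$. The only point needing care is that the global predictor of the new instance is $af^\star$, expressed in the new features with the old coefficients. This holds because the projection of $Y'$ onto the full span is $af^\star$, and that span is unchanged by rescaling. I do not expect a real obstacle here.
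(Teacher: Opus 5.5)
Your proposal is correct and follows the paper's proof essentially verbatim: take the instance of \Cref{thm:long-depth}, multiply both the features and the label by $a=\sqrt{R/E_D}$ so the coefficients $w^\star$ (and hence their $\ell_1$ bound) are unchanged, and show $f'_t=af_t$ by induction via \Cref{lem:prelim-ls-projection}. Then $\MSE(f'_D)-\MSE(af^\star)=a^2E_D=R$.
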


\begin{proof}
Take the same instance of \Cref{thm:long-depth} and the same
$a=\sqrt{R/E_D}$ as in the proof of \Cref{prop:coefficient-bound-needed}.
Its global predictor $f^\star=\sum_\ell w^\star_\ell x_\ell$ is exact with
$\sum_\ell|w^\star_\ell|\le3$. Now multiply every feature and the label by
$a$, so the new features are $x'_\ell=ax_\ell$ and the new label is
$Y'=aY$. The coefficients $w^\star$ still give an exact global predictor,
since
\begin{math}
  Y'
  =
  a\sum_\ell w^\star_\ell x_\ell
  =
  \sum_\ell w^\star_\ell x'_\ell,
\end{math}
so the new global predictor is $af^\star$.

We induct on $t$ to show that $f'_t=af_t$. Agent $A_t$ now sees the
features $x'_{S_t}=ax_{S_t}$ and, by the induction hypothesis, the prediction
$af_{t-1}$. These are the old inputs multiplied by $a$, so they span the same
set as before, and as in the proof of \Cref{prop:coefficient-bound-needed},
$f'_t$ is the projection of $aY$ onto that span, which is $af_t$. Hence
$\MSE(f'_D)-\MSE(af^\star)=a^2E_D=R$.
\end{proof}

\end{document}